\newcommand{\z}{\textbf{z}}
\newcommand{\x}{\textbf{x}}
\setlist[enumerate]{itemsep=0pt, topsep=0pt} 
\setlist[itemize]{itemsep=0pt, topsep=0pt}
\newtheorem{theorem}{Theorem}
\newtheorem{lemma}{Lemma}
\newtheorem{corollary}{Corollary}
\newtheorem{proposition}{Proposition}
\newtheorem{definition}{Definition}
\newcommand{\ar}[1]{\overrightarrow{#1}}
\newcommand{\ttt}[1]{\texttt{#1}}
\newcommand*{\addFileDependency}[1]{
\typeout{(#1)}
%
%
\@addtofilelist{#1}
%
\IfFileExists{#1}{}{\typeout{No file #1.}}
}\makeatother
\newcommand*{\myexternaldocument}[1]{%
\externaldocument{#1}%
\addFileDependency{#1.tex}%
\addFileDependency{#1.aux}%
}
\newcommand\myeq{\mathrel{\stackrel{\makebox[0pt]{\mbox{\normalfont\tiny def}}}{=}}}
\newtheorem{claim}{Claim}
\newtheorem{property}{Property}
\newtheorem*{unumberedtheorem}{Theorem}
\newtheorem*{unumberedproposition}{Proposition}
\newtheorem*{unumberedlemma}{Lemma}
\newtheorem*{unumberedclaim}{Claim}
\begin{document}

%

%

\twocolumn[

\aistatstitle{On the Computational Tractability of the (Many) Shapley Values}


\aistatsauthor{Reda Marzouk*\textsuperscript{1} \And Shahaf Bassan*\textsuperscript{2} \And Guy Katz$^\dagger$\textsuperscript{2} \And Colin de la Higuera$^\dagger$\textsuperscript{1}}\begingroup
\renewcommand\thefootnote{}
\endgroup

\aistatsaddress{\textsuperscript{1} LS2N, Université de Nantes, France \And \textsuperscript{2} The Hebrew University of Jerusalem, Israel} ]

\begin{abstract}
Recent studies have examined the computational complexity of computing Shapley additive explanations (also known as SHAP) across various models and distributions, revealing their tractability or intractability in different settings. However, these studies primarily focused on a specific variant called Conditional SHAP, though many other variants exist and address different limitations. In this work, we analyze the complexity of computing a much broader range of such variants, including Conditional, Interventional, and Baseline SHAP, while exploring both local and global computations. We show that both local and global Interventional and Baseline SHAP can be computed in polynomial time for various ML models under Hidden Markov Model distributions, extending popular algorithms such as TreeSHAP beyond empirical distributions. On the downside, we prove intractability results for these variants over a wide range of neural networks and tree ensembles. We believe that our results emphasize the intricate diversity of computing Shapley values, demonstrating how their complexity is substantially shaped by both the specific SHAP variant, the model type, and the distribution.
\end{abstract}


\begin{table*}[ht]
\caption{Summary of complexity results for Baseline, Interventional, and Conditional SHAP (Base, Interv, Cond) applied to decision trees (\texttt{DT}), tree ensembles for regression and classification ($\texttt{ENS-DT}_{\texttt{R}}$, $\texttt{ENS-DT}_{\texttt{C}}$), linear regression ($\texttt{LIN}_{\texttt{R}}$), weighted automata (\texttt{WA}), and neural networks (\texttt{NN-SIGMOID}, \texttt{RNN-ReLU}). Results are analyzed under independent (\texttt{IND}), empirical (\texttt{EMP}), or hidden Markov model (\texttt{HMM}) distributions, with novel findings of this work highlighted in blue. L and G denote local and global SHAP, respectively (L* or G* if only one is covered). While most settings are either tractable (PTIME) or intractable (NP-H, coNP-H, NTIME-H, \#P-H) for all SHAP variants, we indicate cases where a \emph{strict complexity gap} between SHAP variants exists ($\downarrow$ notation).}
	\setlength{\tabcolsep}{0.8em} 
\centering
\begin{tabular}
{|c|c|c|c|c|}
\hline
\centering
& 
& \texttt{IND} & \texttt{EMP} & \texttt{HMM} \\ \hline

 & \textbf{Base}  & \cellcolor{blue!10}  PTIME (L,G) & 
 \hspace{3.5mm}PTIME (L,G)\hspace{3.5mm}\tikzmark{foo3}
 &  \cellcolor{blue!10} \hspace{3mm}PTIME (L,G)\hspace{3mm}\tikzmark{foo}
 \\ 
 {\texttt{DT}, {$\texttt{ENS-DT}_{\texttt{R}}$}, {$\texttt{LIN}_{\texttt{R}}$}} & \textbf{Interv} & \cellcolor{blue!10} PTIME (L,G) & 
 \hspace{3.5mm}PTIME (L,G)\hspace{3.5mm}  
 & \cellcolor{blue!10} \hspace{3mm}PTIME (L,G)\hspace{3mm} 
 \\
  & \textbf{Cond} & \cellcolor{blue!10} PTIME (L,G*) 
  &  
  \hspace{6.85mm}NP-H (L)\hspace{6.85mm}\tikzmark{bar3}
  & \cellcolor{blue!10} \hspace{6.25mm}\#P-H (L)\hspace{6.25mm}\tikzmark{bar} \\ \hline

 & \textbf{Base}  & \cellcolor{blue!10} PTIME (L,G) 
 & \cellcolor{blue!10} \hspace{3mm}PTIME (L,G)\hspace{3mm}\tikzmark{foo4} 
 &   \cellcolor{blue!10} \hspace{3mm}PTIME (L,G)\hspace{3mm}\tikzmark{foo2} 
 \\ 
{\texttt{WA}} & \textbf{Interv} & 
\cellcolor{blue!10} PTIME (L,G) 
& \cellcolor{blue!10} PTIME (L,G)  
& \cellcolor{blue!10} \hspace{3mm}PTIME (L,G)\hspace{3mm} 
\\ 
  & \textbf{Cond} & \cellcolor{blue!10} PTIME (L,G*) 
  & \cellcolor{blue!10} \hspace{6.35mm}NP-H (L)\hspace{6.35mm}\tikzmark{bar4} 
  & \cellcolor{blue!10} \hspace{6.25mm}\#P-H (L)\hspace{6.25mm}\tikzmark{bar2} 
  \\ 
 \hline

 & \textbf{Base}  & \textbf{---} & \cellcolor{blue!10} NP-H (L,G)  
 & \cellcolor{blue!10} NP-H (L,G) 
 \\ 
 {$\texttt{ENS-DT}_{\texttt{C}}$} & \textbf{Interv} & \#P-H   & \cellcolor{blue!10} NP-H (L) 
 & \cellcolor{blue!10} \#P-H (L) 
 \\
  & \textbf{Cond} &  \#P-H    & \cellcolor{blue!10} NP-H (L)   & \cellcolor{blue!10} \#P-H (L)  \\ \hline

 & \textbf{Base}  & \textbf{---} & \cellcolor{blue!10} NTIME-H (L,G) 
 & \cellcolor{blue!10} NTIME-H (L,G) 
 \\ 
 {\texttt{NN-SIGMOID}} & \textbf{Interv} & NP- H (L)   & \cellcolor{blue!10} NTIME-H (L) 
 & \cellcolor{blue!10} NP-H (L) 
 \\
  & \textbf{Cond} & NP-H (L)   & \textbf{---} 
  & \cellcolor{blue!10} NP-H (L) 
  \\ \hline

 & \textbf{Base}  & \textbf{---} & \cellcolor{blue!10} coNP-H (L,G) 
 & \cellcolor{blue!10} coNP-H (L,G) 
 \\ 
{\texttt{RNN-ReLU}} & \textbf{Interv} & \cellcolor{blue!10} NP-H (L) 
& \cellcolor{blue!10} coNP-H (L) 
& \cellcolor{blue!10} NP-H (L) 
\\
  & \textbf{Cond} & \cellcolor{blue!10} NP-H (L) 
  & \textbf{---} 
  & \cellcolor{blue!10} NP-H (L) 
  \\ \hline
\end{tabular}
 \label{fig:summaryresults}
  \begin{tikzpicture}[overlay,remember picture]
\draw[->,black,thick] (pic cs:foo) -- (pic cs:bar);
\end{tikzpicture}

  \begin{tikzpicture}[overlay,remember picture]
\draw[->,black,thick] (pic cs:foo2) -- (pic cs:bar2);
\end{tikzpicture}

  \begin{tikzpicture}[overlay,remember picture]
\draw[->,black,thick] (pic cs:foo3) -- (pic cs:bar3);
\end{tikzpicture}

  \begin{tikzpicture}[overlay,remember picture]
\draw[->,black,thick] (pic cs:foo4) -- (pic cs:bar4);
\end{tikzpicture}
\vspace{-10.3mm}  
\end{table*}

A prominent method for providing post-hoc explanations for ML models is via Shapley additive explanations (SHAP)~\citep{lundberg2017}. However, a major limitation of SHAP is the significantly high computational complexity of computing these explanations~\citep{bertossi2020causality}. Practical methods --- like those in the popular SHAP library~\citep{lundberg2017} --- typically address this computational burden in one of two ways. The first is through approximation techniques, such as KernelSHAP~\citep{lundberg2017}, which offer greater scalability but lack formal guarantees for the resulting explanations. The second approach involves designing algorithms tailored to specific, simpler model types (e.g., tree-based models or linear models), which are more computationally feasible. However, these methods also typically rely on underlying assumptions. For example, the popular TreeSHAP algorithm~\citep{lundbergnature} assumes explanations are based on empirical distributions, while LinearSHAP~\citep{lundberg2017} assumes feature independence.

These model-specific algorithms have sparked interest in developing a deeper theoretical understanding of the computational complexity involved in calculating Shapley values for various types of models and under different distributions. One of the early works by~\cite{vander21} presented tractable results for a range of models while also establishing NP-hardness for relatively simple settings, such as computing SHAP for decision trees with naive Bayes modeled distributions. Additionally,~\cite{arenas23} demonstrated that computing SHAP is tractable for Decomposable Deterministic Boolean Circuits under independent distributions, while \cite{marzouk24a} reported similar positive complexity results for Weighted Automata under Markovian distributions.

{\renewcommand{\thefootnote}{}%
\footnotetext{*Equal contribution (first authors), $^{\dagger}$Equal contribution (last authors).}}


However, a key limitation of these previous computational complexity works is that they have primarily focused on a specific SHAP variant known as Conditional SHAP~\citep{sundararajan20b}. The explainable AI community has explored a variety of SHAP variants, including Conditional, Interventional~\citep{janzing20a}, and Baseline~\citep{sundararajan20b} SHAP, among many others~\citep{frye20, albini22}. These variants were introduced mainly to address the axiomatic limitations of the original Conditional SHAP formulation \citep{sundararajan20b, Huang2023TheIO}. Analyzing these variants is vital as many popular SHAP algorithms incorporate them. For example, KernelSHAP~\citep{lundberg2017} and TreeSHAP~\citep{lundbergnature} typically compute \emph{interventional} SHAP values, and not conditional ones.

 The aim of this paper is to provide a comprehensive, multi-dimensional perspective on the problem of SHAP computation, analyzed through the lens of formal computational theory~\citep{arora2009computational}. This analysis spans four key dimensions: \begin{inparaenum}[(i)] \item the variants of the Shapley value, including Baseline, Interventional, and Conditional SHAP; \item the class of models to be interpreted; \item the distributional assumptions regarding the input data generation process; and \item the scope of the explanatory analysis, which can be either \emph{local} or \emph{global}, with the global scope measured as an aggregate of \emph{local} SHAP values relative to the input data-generating distribution~\citep{frye20}.\end{inparaenum}


\paragraph{Our contributions.} The complexity results of this work, summarized in Table \ref{fig:summaryresults}, are:


\begin{itemize}
    \item \emph{On the positive side,} in Section \ref{sec:tractable} we prove that both local and global Interventional and Baseline SHAP can be computed in polynomial time for the family of weighted automata, decision trees, tree ensembles in regression tasks, and linear regression models when input instances are assumed to be generated from a distribution modeled by a Hidden Markov model.
    \item \emph{On the negative side,} in Section \ref{sec:intractable} we establish intractability results, including NP-hardness, coNP-hardness, etc., for computing not only Conditional SHAP but also Interventional and Baseline SHAP across a range of neural networks (e.g., ReLU, Sigmoid, RNN-ReLU) and tree ensemble classifiers (e.g., Random Forests, XGBoost). These results hold even under strict conditions, such as uniform distributions or feature independence.
    \item Finally, in Section~\ref{sec:generalized}, we present generalized computational complexity relationships between SHAP variants, demonstrating that some are more or less tractable than others in certain distributional scenarios. Using these findings, we establish new complexity results for computing various SHAP variants across different models.
\end{itemize}

\subsection*{Key Takeaways}

The obtained complexity results provide both theoretical and practical  insights regarding the problem of SHAP computation:

\begin{enumerate}
\item \textbf{There are substantial complexity differences between SHAP variants.} Particularly, we show \emph{stark complexity gaps} between obtaining Conditional SHAP, which is NP-Hard, and both Interventional SHAP and Baseline SHAP, which can be solved in polynomial time --- demonstrating substantial deviations among the SHAP variants. Our findings additionaly pinpoint the specific settings where these gaps occur and where they do not.
\item \textbf{The distributional assumptions made by TreeSHAP and LinearSHAP can be extended to cover substantially more expressive classes of distributions.} Specifically, we demonstrate that local and global Interventional and Baseline SHAP can be solved efficiently in polynomial time for certain model families, including XGBoost trees, while surpassing the distributional scope of the widely used TreeSHAP algorithm, limited to empirical distributions. In addition, our results also relax the feature independence requirement in LinearSHAP. These results could significantly improve SHAP value distribution modeling, which is essential for computing faithful explanations~\citep{aas2021explaining}.
\item \textbf{Obtaining SHAP is strictly easier in soft-voting tree ensembles compared to hard-voting tree ensembles.} Many of our findings highlight significant differences in the complexity of obtaining SHAP for tree ensembles used in \emph{regression} versus \emph{classification}. These findings extend some of the conclusions made in~\citep{huangupdates} to a more general setting, and underline the feasibility of obtaining SHAP explanations for soft-voting ensembles as compared to hard-voting ensembles, where these prove to be intractable.
\item \textbf{Obtaining SHAP for neural networks is hard, even in highly simplified settings.} We prove various intractbility results (e.g., NP, $\#$P-Hardness, etc.) for different neural networks, demonstrating that this hardness persists in different settings, and even for \emph{baseline} SHAP, the simplest SHAP variant.
\item \textbf{Obtaining Global SHAP is often tractable, despite the additional expectation factor.} Interestingly, we prove that in many scenarios, the tractability of local SHAP extends to the global SHAP variant explored in~\citep{frye20}, despite the added complexity of computing an expectation over the entire data distribution.\end{enumerate}

Due to space limitations, we provide only a brief summary of the proofs for our claims in the paper, with the full detailed proofs included in the appendix.

\section{Background} \label{sec:background}

\subsection{Model Types}

We examine the following types of models: \begin{inparaenum}[(i)] \item linear regression models ($\texttt{LIN}_{\texttt{R}}$); \item decision trees (\texttt{DT}); \item tree ensembles (including both majority voting ensembles such as Random Forests and weighted voting ensembles like XGBoost) used for regression or classification ($\texttt{ENS-DT}_{\texttt{R}}$, $\texttt{ENS-DT}_{\texttt{C}}$); \item (feed-forward/recurrent) neural networks with ReLU/Sigmoid activations (\texttt{NN-SIGMOID}, \texttt{RNN-ReLU}); and \item Weighted Automata (\texttt{WA}). \end{inparaenum} A complete formalization of all models is provided in Appendix~\ref{app:sec:terminology}.

Although WAs may be considered a niche model family within the broader ML community, a significant portion of our paper focuses on establishing tractability results for them. From the perspective of Explainable AI, our interest in WAs stems from two factors:  Firstly, WAs have been proposed as abstractions of neural networks ~\citep{okudono2020weighted, eyraud2024distillation, weiss2019learning, lacroce2021extracting}, offering enhanced transparency. Secondly, and importantly, WAs can be reduced to various other popular ML models like decision trees, linear regression, and tree ensembles, making tractability results for WAs applicable to a wide range of models. We begin by defining N-Alphabet WAs, a generalization of WAs: 


\begin{definition}[N-Alphabet Weighted Automata] \label{def:nletterwa}

For $n,N \in \mathbb{N}$, and $\{\Sigma_{i}\}_{i \in [N]}$, a collection of finite alphabets, an N-Alphabet Weighted Automaton $A$ over the product $\Sigma_{1} \times \ldots \times \Sigma_{N}$ is defined by the tuple $\langle\alpha, {A_{\sigma_{1}, \ldots ,\sigma_{N}}}, \beta\rangle$, where $(\alpha, \beta) \in \mathbb{R}^{n}\times \mathbb{R}^{n}$ are the initial and final state vectors, and $A_{\sigma_{1}, \ldots ,\sigma_{N}} \in \mathbb{R}^{n \times n}$ are transition matrices. The N-Alphabet WA $A$ computes a function over $\Sigma_{1}^{} \times \ldots \times \Sigma_{N}^{}$ as: 
$$f_{A}(w^{(1)}, \ldots , w^{(N)}) \myeq \alpha^{T} \cdot \prod\limits_{i=1}^{L}
   A_{w_{i}^{(1)} \ldots w_{i}^{(N)}} \cdot \beta$$ where $(w^{(1)}, \ldots , w^{(N)}) \in \Sigma_{1}^{} \times \ldots \times \Sigma_{N}^{}$ and $|w^{(1)}| = \ldots = |w^{(N)}| = L$.


\end{definition}

The integer $n$ denotes the size of the WA $A$, denoted as $\texttt{size}(A)$. For $N = 1$, 1-Alphabet WAs match the classical definition of WA~\citep{droste10}, so we use ``WA'' and ``1-letter WA'' interchangeably.


\subsection{Distributions}

Our analysis briefly touches on several types of distributions, primarily for comparison purposes. These include \begin{inparaenum}[(i)]
\item \emph{independent distributions} (\texttt{IND}), where all features are assumed to be independent of one another; \item \emph{empirical distributions} (\texttt{EMP}), i.e., the family of distributions induced from finite datasets; and 
\item \emph{Markovian distributions} (\texttt{MARKOV}), i.e., distributions where the future state depends only on the current state, independent of past states\end{inparaenum}. A full formalization of these distribution families is in Appendix~\ref{app:sec:terminology}.

Previous studies explored the complexity of these three distributions for the conditional SHAP variant~\citep{arenas23, vander21, marzouk24a, huangupdates}. However, the tractability results here apply to a broader class of distributions, specifically those modeled by \emph{Hidden Markov Models} (\texttt{HMMs}). HMMs are more expressive than standard Markovian distributions, as they incorporate hidden states to model sequences influenced by latent variables. In Appendix~\ref{app:reductiontree}, we prove that HMMs include independent, Markovian, and empirical distributions.

\textbf{HMM distributions.} HMMs~\citep{rabiner1986introduction} are a popular class of sequential latent probabilistic models used in various applications~\citep{knill1997hidden, de2007hidden}. For an alphabet $\Sigma$ (also referred to as the observation space), an HMM defines a probabilistic function over $\Sigma^{\infty}$. Formally, an HMM of size $n$ over $\Sigma$ is a tuple $\langle\alpha, T, O\rangle$, where: \begin{inparaenum}[(i)] \item $\alpha \in \mathbb{R}^{n}$, the initial state vector, represents a probability distribution over $[n]$; and \item $T \in \mathbb{R}^{n \times n}$, $O \in \mathbb{R}^{n \times |\Sigma|}$ are stochastic matrices, with each row encoding a probability distribution.\end{inparaenum}

    
\textbf{HMMs and WAs}. The WA formalism in Definition \ref{def:nletterwa}  suffices to cover HMMs, up to reparametrization.  Indeed, it has been proven that the probability that an HMM $M=\langle\alpha, T, O\rangle$ generates a prefix $w \in \Sigma^{*}$ is: $\mathbf{1}^{T} \cdot \prod\limits_{i=1}^{|w|} A_{w_{i}} \cdot \alpha$~\citep{hsu12}, where $\mathbf{1}$ is a row vector with all $1$'s, and for any $\sigma \in \Sigma$, $A_{\sigma} \myeq T \cdot \text{Diag}(O[:,\sigma])$. The matrix $\text{Diag}(O[:, \sigma])$ is the diagonal matrix formed from the column vector in $O$ indexed by $\sigma$. We follow this parameterization of HMMs and assume they are parameterized by the 1-Alphabet WA formalism in Definition~\ref{def:nletterwa}. For \emph{non-sequential models}, we assume the family of HMMs, denoted $\overrightarrow{\text{HMM}}$, represents latent variable models describing probability distributions over random vectors in a finite domain.

\begin{definition} {($\overrightarrow{\emph{HMM}}$)}\label{def:hmmnonsequentialdata}
Let $(n,N) \in \mathbb{N}^{2}$ be two integers, and $\mathbb{D}$ a finite set. An $\overrightarrow{\emph{HMM}}$ over $\mathbb{D}^{n}$ is parameterized by the tuple $\langle\pi, \alpha, \{T_{i}\}_{i \in [n]}, \{O_{i}\}_{i \in [n]}\rangle$, where $\pi$ is a permutation on $[n]$, and for each $i \in [n]$, $T_{i}$ and $O_{i}$ are stochastic matrices in $\mathbb{R}^{N}$ and $\mathbb{R}^{N \times |\mathbb{D}|}$, respectively. A model $M$ in $\overrightarrow{\emph{HMM}}$ computes the following probability distribution over $\mathbb{D}^{n}$:

$$
      P_{M}(x_{1}, \ldots, x_{n}) := \mathbf{1}^{T} \cdot \prod\limits_{i=1}^{n} A_{i,x_{\pi(i)}} \cdot \alpha
      $$
  where: 
  $A_{i,x} \myeq T_{i} \cdot \text{Diag}(O_{i}[:,x])$.
\end{definition}

In essence, models in the family $\overrightarrow{\text{HMM}}$ are non-stationary HMMs where observations are ordered by a permutation $\pi$. They include a stopping probability mechanism, terminating after the $n$-th symbol with probability 1. Like HMMs, $\overrightarrow{\text{HMM}}$ includes independent, empirical, and Markovian distributions (see proof in  Appendix~\ref{app:reductiontree}).


\subsection{The (Many) Shapley Values}


\textbf{Local Shapley values.} 
Let there be a discrete input space $\mathcal{X} = \mathcal{X}_{1} \times \ldots \times \mathcal{X}_{n}$ and a model $f$, which can be either a regression model $f:\mathcal{X}\to \mathbb{R}$ or a classification model $f:\mathcal{X}\to[c]$ for a certain set of classes $[c]$ ($c\in \mathbb{N}$), along with a specific \emph{local} instance $\x \in \mathcal{X}$. Then, the (local) Shapley value attribution for a feature $i \in [n]$ with respect to $\langle f,\x\rangle$ is defined as:

\begin{equation}
\begin{aligned}
\phi(f,\x,i)\myeq\sum_{S \subseteq [n] \setminus \{i\}} \frac{|S|!\cdot (n - |S| - 1)!}{n!} \nonumber \cdot \\
\!\!\!\!\!\! \left[ v(f,\x, S \cup \{i\}) - v(f,\x,S) \right] \label{eq:genericshap}
\end{aligned}
\end{equation}

where $v$ is referred to as the \emph{value function} of $\phi$. The primary versatility of Shapley values lies in the various ways $v$ can be defined. Typically, Shapley values are computed with respect to a distribution $\mathcal{D}_p$ over $\mathcal{X}$, meaning $v$ is determined by this distribution, i.e., $v(f,\x,S,\mathcal{D}_p)$. A common definition of $v$ is through conditional expectation, referred to here as \emph{Conditional SHAP}, also known as Conditional Expectation SHAP (CES)~\citep{sundararajan20b}:


\begin{equation}
    v_c(f,\x,S, \mathcal{D}_p) \myeq \mathbb{E}_{\z \sim \mathcal{D}_p} \left[ f(\z) | \z_S=\x_S \right]
\end{equation}

where $\z_S=\x_S$ indicates that the values of the features $S$ in $\z$ are set to those in $\x$. Another approach for computing the value function is \emph{Interventional SHAP}~\citep{janzing20a}, also known as Random-Baseline SHAP~\citep{sundararajan20b}, used in practical algorithms like KernelSHAP, LinearSHAP, and TreeSHAP~\citep{lundberg2017, lundbergnature}. In interventional SHAP, when a feature $j \in \overline{S}$ is missing, it is replaced with a reference value independently drawn from a predefined distribution, breaking dependencies with other features. Formally:


\begin{equation}
    v_i(f,\x,S, \mathcal{D}_p) \myeq \mathbb{E}_{\z \sim \mathcal{D}_p} \left[ f(\x_{S}; \z_{\Bar{S}}) \right]
\end{equation}

where $(\x_{S}; \z_{\Bar{S}})$ represents a vector in which the features in $S$ are fixed to the values in $\x$, and the features in $\overline{S}$ are fixed to the values in $\z$. When the distribution $\mathcal{D}_p$ assumes feature independence, interventional and conditional SHAP coincide, i.e., $v_i(f,\x,S, \mathcal{D}_p)=v_c(f,\x,S, \mathcal{D}_p)$~\citep{sundararajan20b}. However, this alignment does not typically hold in many real-world distributions.

Finally, instead of defining Shapley values with respect to a distribution $\mathcal{D}_p$, they can be defined using an auxiliary baseline $\z^{\text{ref}}$ that captures the ``missingness'' of features in $\overline{S}$. \emph{Baseline SHAP}~\citep{sundararajan20b} is defined as follows:


\begin{equation}
    v_b(f,\x,S, \z^{\text{ref}}) \myeq f(\x_{S}; \z^{\text{ref}}_{\Bar{S}}) 
\end{equation}


By substituting these value function definitions into the generic Shapley value formula (Equation~\ref{eq:genericshap}), we obtain $\phi_c(f,i,\mathcal{D}_p)$, $\phi_i(f,i, \mathcal{D}_p)$, and $\phi_b(f,i, \z^{\text{ref}})$, corresponding to the (local) conditional, interventional, and baseline Shapley values, respectively.


\textbf{Global Shapley values.} Shapley values $\phi(f,\x,i)$ offer \emph{local} explainability for the model's prediction on a specific data point $\x$. One approach to deriving a global importance indicator for input features from their local Shapley values consists at aggregating these values over the input space, weighted by the target data generating distribution~\citep{frye20}:\footnote{Several other methods exist for computing global feature importance using SHAP~\citep{covert2020understanding, lundberg2017, lundbergnature}, but they fall outside the scope of this work.}

\begin{equation}
  \label{eq:VBshap}
   \Phi(f,i, \mathcal{D}_p) \myeq \mathbb{E}_{\x \sim \mathcal{D}_p}[\phi(f,\x,i)]
\end{equation}



Note that the global Shapley value is always computed with respect to a distribution $\mathcal{D}_p$, as the inputs $\x$ are aggregated over it. This gives rise to $\Phi_c(f,i,\mathcal{D}_p)$, $\Phi_i(f,i,\mathcal{D}_p)$, and $\Phi_b(f,i,\mathcal{D}_p,\z^{\text{ref}})$, representing the (global) conditional, interventional, and baseline Shapley values, respectively. For \emph{sequential models}, $f$ accepts input vectors of arbitrary length $n$, i.e., $|\x|=n$. While local Shapley values are computed for a specific input $\x$, global Shapley values pose a challenge as they are input-independent. One approach is to fix the feature space length $n$ and compute the global Shapley value for an input of that size. In this case, the global Shapley value $\Phi$ incorporates $n$ as part of its input: $\Phi(f,i,\mathcal{D}_p,n) \myeq \mathbb{E}_{\x \sim \mathcal{D}_p^{(n)}}[\phi(f,\x,i)]$, where $i \in [n]$, $\mathcal{D}_p$ is a probability distribution over an infinite alphabet $\Sigma^{\infty}$, and $\mathcal{D}_p^{(n)}$ is the probability of generating an infinite sequence prefixed by $\x \in \Sigma^{n}$.

\textbf{Shapley values for sequential models.} For complexity results on sequential models (WAs and RNNs), we build on prior work~\citep{marzouk24a}, which uses the \emph{pattern} formalism to analyze the complexity of obtaining Shapley values for these models. Formally, let $\Sigma$ be a finite alphabet, with its elements called symbols. The set of all finite (and infinite) sequences from $\Sigma$ is denoted by $\Sigma^{}$ (and $\Sigma^{\infty}$). For any integer $n > 0$, $\Sigma^{n}$ represents sequences of length $n$. For a sequence $w \in \Sigma^{}$, $|w|$ is its length, $w_{i:j}$ is the subsequence from the $i$-th to the $j$-th symbol, and $w_{i}$ is its $i$-th symbol. A pattern $p$ over $\Sigma$ is a regular expression in the form $\#^{i_{1}}w_{1} \ldots \#^{i_{n}}w_{n}\#^{i_{n+1}}$, where $\#$ is a placeholder symbol (i.e., $\# = \Sigma$), $\{i_{k}\}_{k \in [n+1]}$ are integers, and $\{w_{k}\}_{k \in [n+1]}$ are sequences over $\Sigma^{*}$. The extended alphabet $\Sigma \cup \#$ is $\Sigma{\#}$. The language of a pattern $p$ is $L{p}$, with $|p|$ as its length and $|p|_{\#}$ indicating the number of $\#$ symbols. 

We define two operators on patterns: \begin{inparaenum}[(i)] \item The \emph{swap} operator, which takes a tuple $(p,\sigma,i) \in \Sigma_{\#}^{*} \times \Sigma \times \mathbb{N}$ with $i \leq |p|$, and returns a pattern where the i-th symbol of $p$ is replaced by $\sigma$. For example, with $\Sigma = \{0,1\}$: $\texttt{swap}(0\#0\#, 1,2) = 0\mathbf{1}0\#$; \item The \emph{do} operator, which takes a tuple $(p,w',w) \in \Sigma_{\#}^{*} \times \Sigma^{*} \times \Sigma^{*}$ with $|w|$ = $|w'|$ = $|p|$, and returns a sequence $u$ where $u_{i}$ = $w'_{i}$ if $p_{i}$=$\#$, and $u_{i}$ = $w_{i}$ otherwise. For example, with $\Sigma = \{0,1\}$: $\texttt{do}(0\#0\#,1100,1111) = 1110$ \end{inparaenum}. We represent a coalition $S$ in the SHAP formulation using patterns. For instance, with the alphabet $\Sigma = \{0,1\}$ and the sequence $w = 0011$, the coalition of the first and third symbols is represented by the pattern $0\#1\#$.

\textbf{SHAP as a computational problem.} As outlined in the introduction, this work aims to provide a comprehensive computational analysis of the SHAP problem across several dimensions: \begin{inparaenum}[(i)] \item the class of models being interpreted; \item the underlying data-generating distributions; \item the specific SHAP variant; and \item its scope (global or local)\end{inparaenum}. Each combination of these dimensions gives rise to a distinct formal computational problem. To navigate this multi-dimensional landscape of computational problems, we adopt the following notation:
A (variant) of the SHAP computational problem shall be denoted as \texttt{(LOC|GLOB)-(I|B|C)-SHAP}($\mathcal{M}$,$\mathcal{P}$), where \texttt{LOC} and \texttt{GLOB} refer to local and global SHAP, respectively, while \texttt{I}, \texttt{B}, and \texttt{C} correspond to the interventional, baseline, and conditional SHAP variants. The symbols $\mathcal{M}$ and $\mathcal{P}$ represent the class of models and the class of feature distributions, respectively. Under this notation, \texttt{LOC-I-SHAP}(\texttt{WA}, \texttt{HMM}) refers to the problem of computing local interventional SHAP for the family of weighted automata under Hidden Markov Model distributions.

 A variant of the SHAP computational problem takes as input instance a model $M \in \mathcal{M}$, a data-generating distribution 
$\text{P} \in \mathcal{P}$ \footnote{Note that in the formulation of Local Baseline SHAP variants of the SHAP problem, the data-generating distribution $P$ is replaced by a reference input instance.}, an index specifying the input feature of interest, and, in the case of local SHAP variants, the  model's input undergoing explanatory analysis. The computational complexity of the problem is measured with respect to the size of $M$, the size of $P$, and the dimensionality of the input space\footnote{For sequential models, where inputs are sequences, we assume the input space's dimensionality equals the sequence length under analysis.}. A variant is considered tractable if it can be solved in polynomial time with respect to these parameters.

For completeness, a summary of all complexity classes discussed in this article (PTIME, NP, coNP, NTIME, and \#P) is provided in Appendix~\ref{app:sec:terminology}.

\section{Positive Complexity Results} \label{sec:tractable}

This section highlights configurations that allow polynomial-time computation of various Shapley value variants. We first show that both local and global interventional and baseline SHAP values for WAs under HMM-modeled distributions can be computed in polynomial time (Theorem~\ref{thm:shapwa}). By reductions, this result extends to various other ML models (Theorem~\ref{cor:reductions}).

\subsection{Tractability for WAs} \label{subsec:shapwa}
This main result of this subsection is given in the following theorem:
\begin{theorem} \label{thm:shapwa}
    The following computational problems, which include \emph{\texttt{LOC-I-SHAP}}\emph{\texttt{(WA}},\emph{\texttt{HMM)}}, \emph{\texttt{GLO-I-SHAP}}\emph{\texttt{(WA}},\emph{\texttt{HMM)}}, \emph{\texttt{LOC-B-SHAP}}\emph{\texttt{(WA)}}, as well as \emph{\texttt{GLO-B-SHAP}}\emph{\texttt{(WA}},\emph{\texttt{HMM)}} are poly-time computable with respect to the size of the \text{WA}, the size of the \text{HMM}, the sequence length and the size of the alphabet.
\end{theorem}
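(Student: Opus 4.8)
The plan is to reduce all four value functions to a common \emph{position-factorized} form and then collapse the exponential Shapley sum into the extraction of coefficients of a univariate polynomial computed by a left-to-right product of matrices with polynomial entries.

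First I would exploit that both the model and the HMM are products of position-indexed matrices. Writing the model WA as $f_A(w)=\alpha_A^{T}\prod_{j}A_{w_j}\beta_A$ and the HMM as $P(w)=\mathbf 1^{T}\prod_{j}H_{w_j}\alpha_H$, the key observation is that a product of two scalar matrix-products is the scalar of the Kronecker product, so the local interventional value $v_i(S)=\sum_{\z}P(\z)f_A(\x_S;\z_{\bar S})$ factorizes position-wise. Summing over the HMM symbol $z_j$ at each position yields, for $j\in S$, the transfer matrix $A_{x_j}\otimes\bar H$ (where $\bar H=\sum_\sigma H_\sigma$ marginalizes the unused observation), and for $j\notin S$ the coupled matrix $C=\sum_\sigma A_\sigma\otimes H_\sigma$. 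Hence
\[
v_i(S)=(\alpha_A\otimes\mathbf 1)^{T}\Big[\textstyle\prod_{j=1}^{n}M_j\Big](\beta_A\otimes\alpha_H),\quad M_j=\begin{cases}A_{x_j}\otimes\bar H & j\in S\\ C & j\notin S.\end{cases}
\]
Baseline SHAP is the same picture with the HMM dropped entirely and the absent positions frozen to the reference symbol ($M_j=A_{z^{\text{ref}}_j}$), which is precisely why \texttt{LOC-B-SHAP(WA)} carries no distribution parameter. The two \emph{global} variants add one more independent copy of the HMM to average over the explicand $\x$: the value function then lives in the triple (resp.\ double) Kronecker product $A\otimes H^{(x)}\otimes H^{(z)}$ (resp.\ $A\otimes H^{(x)}$), whose state dimension is still polynomial in the sizes of the WA and the HMM.

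Second, with the value function written as a matrix product whose $j$-th factor $M_j$ is one of two fixed matrices $M_j^{\text{pres}},M_j^{\text{abs}}$ according to whether $j\in S$, I would handle the Shapley aggregation with a \emph{marking variable} $t$ recording $|S|$. Replacing each factor $j\neq i$ by $\tilde M_j(t)=t\,M_j^{\text{pres}}+M_j^{\text{abs}}$ and the factor at position $i$ by $M_i^{\text{pres}}-M_i^{\text{abs}}$, the scalar $G(t)=\mathbf u^{T}\big[\prod_{j<i}\tilde M_j(t)\big]\big(M_i^{\text{pres}}-M_i^{\text{abs}}\big)\big[\prod_{j>i}\tilde M_j(t)\big]\mathbf w$ (with $\mathbf u,\mathbf w$ the appropriate boundary vectors) is a polynomial of degree $\le n-1$ whose coefficient of $t^{k}$ equals $\sum_{|S|=k,\,S\subseteq[n]\setminus\{i\}}\big[v(S\cup\{i\})-v(S)\big]$. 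The Shapley value is then $\phi_i=\sum_k\tfrac{k!(n-k-1)!}{n!}[t^k]G(t)$. I would compute $G(t)$ by carrying matrices whose entries are polynomials of degree $\le n$ and multiplying them left to right; each of the $O(n)$ multiplications costs a polynomial number of scalar operations in the combined state dimension and in $n$, so the whole procedure runs in polynomial time in all four parameters. The alphabet enters only through the finite sums $\bar H$ and $C$, which are cheap.

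The main obstacle is establishing the factorizations correctly rather than the running-time bookkeeping. The delicate point is the global interventional case, where one must verify that the double expectation over an independently drawn explicand $\x$ and intervention reference $\z$ is faithfully captured by two \emph{separate} HMM tensor factors --- coupling the model to $H^{(x)}$ at present positions and to $H^{(z)}$ at absent positions while marginalizing the other. One must also check that freezing a present position to $x_j$ correctly decouples it from the sampled features (the interventional ``breaking of dependencies''), which is exactly the structural property that fails for conditional SHAP and is the reason \texttt{LOC-C-SHAP(WA,HMM)} is intractable while the interventional and baseline variants remain tractable. Finally, I would confirm that the marking-variable expansion commutes with the matrix product --- it does, since the entries are formal polynomials and the identity $\prod_j(X_j\otimes Y_j)=(\prod_j X_j)\otimes(\prod_j Y_j)$ holds --- completing the argument.
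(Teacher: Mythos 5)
Your proof is correct, and it reaches the theorem by a genuinely different route than the paper's. You share the same linear-algebraic engine: coupling the WA to the HMM position-by-position through Kronecker products (your coupled matrix $C=\sum_\sigma A_\sigma\otimes H_\sigma$ is exactly what the paper's projection operation $\Pi_i$ computes via the mixed-product property, and your factorization of $v_i(S)$ is that argument specialized to a fixed coalition). Where you diverge is in collapsing the exponential sum over coalitions. The paper stays entirely inside the automata formalism: it represents coalitions as patterns over $\Sigma_{\#}$, encodes the Shapley kernel $\mathcal{P}_i^w$ itself as a WA $A_{w,i}$ (a weighted sum, over coalition sizes $k$, of normalized DFAs accepting patterns with exactly $k$ placeholders), encodes the \texttt{do}/\texttt{swap} semantics as $3$- and $4$-alphabet WAs ($T_w$, $T_{w,i}$, $T$, $T_i$), and then writes each of the four SHAP quantities as a composition of projections and Kronecker products of these machines (Lemma~\ref{lemma:shapasoperations}), with separate poly-time constructions for each machine (Proposition~\ref{prop:nletterwaconstruction}). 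You instead dispense with patterns altogether and handle the coalition combinatorics with a marking variable: since the Shapley weight depends only on $|S|$, the coefficient of $t^k$ in your polynomial $G(t)$ aggregates exactly the size-$k$ marginal contributions, and coefficient extraction from a degree-$(n-1)$ polynomial computed by a transfer-matrix product does the rest. The two devices play the same role---your $[t^k]G(t)$ is the analogue of the paper's size-indexed components $\bar{A}^{(w)}_{i,k}$ of $A_{w,i}$---but your version is more elementary and self-contained (no multi-alphabet automata, pattern operators, or auxiliary machine constructions needed) and arguably yields a leaner polynomial bound, while the paper's compositional operator toolkit buys reusability: the same machinery is reused verbatim across the local/global and interventional/baseline variants, and it is what the reductions from decision trees, tree ensembles, and linear models to WAs (Theorem~\ref{cor:reductions}) plug into.
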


The remainder of this section provides a proof sketch for Theorem \ref{thm:shapwa}, with the complete proof available in Appendix~\ref{app:shapwa}. The proof is constructive and draws heavily on techniques from the theory of rational languages \citep{berstel88}. We begin by linking the previously defined swap and do operators for patterns to the computation of (interventional) SHAP values. The corresponding relations for baseline SHAP are provided in Appendix~\ref{app:shapwa} due to space constraints.

\begin{lemma}
For a sequential model $f$, a string $w$ (representing an input $\x\in\mathcal{X}$), a pattern $p$ (representing a coalition $S\subseteq [n]$), and a distribution $\mathcal{D}_p$ over $\mathcal{X}$, then the following relations hold:
\begin{equation}
\begin{aligned}
        v_{i}(f,w,p,\mathcal{D}_p) = \mathbb{E}_{\z \sim \mathcal{D}_p^{|w|}} \left[ f(\texttt{do}(p,\z,w)) \right]; \\
        \phi_i(f,w,i,\mathcal{D}_p) = \mathbb{E}_{p \sim \mathcal{P}_i^{\x}} [v_{i}(f,w,\texttt{swap}(p,w_{i},i),\mathcal{D}_p) \nonumber 
 \\ - v_{i}(f,w,p,\mathcal{D}_p)] \quad\quad\quad\quad\quad\quad
\end{aligned}
\end{equation}
where:
\begin{equation*}
\begin{aligned}
\mathcal{P}_{i}^{w}(p)  \myeq \begin{cases}
\frac{(|p|_{\#}-1)! \cdot (|w| - |p|_{\#})!}{|w|!} & \text{if}~~ w \in L_{p} \\ 
0 & \text{otherwise}
\end{cases}
\end{aligned}
\end{equation*}
\end{lemma}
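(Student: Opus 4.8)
The goal is to establish the two identities in the lemma: a reformulation of the interventional value function $v_i$ in terms of the $\texttt{do}$ operator on patterns, and a reformulation of the Shapley attribution $\phi_i$ as an expectation over patterns drawn according to the measure $\mathcal{P}_i^{\x}$, involving the $\texttt{swap}$ operator. Both are essentially bookkeeping identities that translate the set-theoretic coalition language of Equation~\ref{eq:genericshap} into the pattern language introduced for sequential models, so I expect the proof to be a careful but routine verification rather than a deep argument.

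\paragraph{First identity.} The plan is to unfold the definition of $v_i(f,\x,S,\mathcal{D}_p) = \mathbb{E}_{\z\sim\mathcal{D}_p}[f(\x_S;\z_{\bar S})]$ and match it term-by-term with $\mathbb{E}_{\z\sim\mathcal{D}_p^{|w|}}[f(\texttt{do}(p,\z,w))]$. The key step is to observe that under the pattern encoding of a coalition, the pattern $p$ has $\#$ in exactly the positions of $\bar S$ (the "missing" features) and the literal symbol $w_j$ in the positions of $S$. By the definition of $\texttt{do}(p,w',w)$, the output string agrees with $w'$ on the $\#$-positions and with $w$ elsewhere; so setting $w'=\z$ and $w=\x$ gives precisely the vector $(\x_S;\z_{\bar S})$. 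Taking expectations over $\z$ on both sides then yields the claim. I would make explicit that $\mathcal{D}_p^{|w|}$ denotes the marginal of $\mathcal{D}_p$ on sequences of length $|w|$, so the expectation is well-defined and the independence structure of the interventional definition (features in $\bar S$ drawn from $\mathcal{D}_p$) is respected.

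\paragraph{Second identity.} Here the plan is to start from the generic Shapley formula and perform a change of variables from coalitions $S\subseteq[n]\setminus\{i\}$ to patterns. Given $\x$ (i.e.\ $w$), each such $S$ corresponds bijectively to a pattern $p$ with $w\in L_p$, where the number of $\#$'s is $|p|_\# = n-|S|$ (the complement positions, including $i$). The marginal term in Equation~\ref{eq:genericshap} for feature $i$ is $v_i(f,\x,S\cup\{i\}) - v_i(f,\x,S)$; I would show that adding $i$ to the coalition corresponds exactly to $\texttt{swap}(p,w_i,i)$ (pinning the $i$-th coordinate, which was a $\#$, to the observed symbol $w_i$), while $S$ itself corresponds to $p$. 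Then I must verify that the combinatorial Shapley weight $\frac{|S|!(n-|S|-1)!}{n!}$ equals $\mathcal{P}_i^{w}(p) = \frac{(|p|_\#-1)!(n-|p|_\#)!}{n!}$. Substituting $|p|_\# = n-|S|$ gives $(|p|_\#-1)! = (n-|S|-1)!$ and $(n-|p|_\#)! = |S|!$, which matches exactly; this is the crux of the weight bookkeeping. Writing the sum as an expectation over $p\sim\mathcal{P}_i^{w}$ then gives the stated formula.

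\paragraph{Main obstacle.} I expect the only genuine subtlety to be ensuring the coalition-to-pattern correspondence is a clean bijection and that the weights line up with the correct convention, since the pattern $p$ encodes the complement $\bar S$ via $\#$ while $S$ is encoded by literals; one must be careful that feature $i$ is always a $\#$-position in $p$ (so that $w\in L_p$ constrains only positions in $S$) and becomes fixed under $\texttt{swap}$. Verifying that $\mathcal{P}_i^{w}$ is a genuine probability measure over the relevant patterns (summing to $1$ over patterns $p$ with $w\in L_p$ and $p_i=\#$, grouped by $|p|_\#$) is a worthwhile sanity check that mirrors the classical fact that the Shapley weights sum to $1$. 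Everything else reduces to substituting the definitions of $\texttt{do}$ and $\texttt{swap}$ and reading off the equalities.
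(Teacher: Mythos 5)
Your proof is correct and coincides with the paper's intended argument: the paper treats this lemma as a definitional bridge (its appendix proof of the subsequent operational lemma simply starts from this pattern-based reformulation without further justification), and the verification it relies on is exactly your coalition-to-pattern bijection together with the weight identity $\frac{|S|!\,(n-|S|-1)!}{n!} = \frac{(|p|_{\#}-1)!\,(n-|p|_{\#})!}{n!}$ under $|p|_{\#} = n-|S|$. Your flagged subtlety — that feature $i$ must be a $\#$-position of $p$ for $\mathcal{P}_i^{w}$ to be a genuine probability measure — matches the paper's own convention, which in the appendix defines this distribution via the sets $\mathcal{L}_{i,k}^{(w)}$ requiring $w \in L_{p}$, $|p|_{\#}=k$, and $p_{i}=\#$.
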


To develop an algorithm for computing $v_i$ and $\phi_i$ in polynomial time for the class of WA under HMM distributions, we utilize two operations on N-Alphabet WA, parameterized by the input instance:

\begin{definition} \label{def:projectionoperation}
  Let $N > 0$ be an integer and $\{\Sigma_{i}\}_{i \in [N]}$ be a collection of $N$ alphabets.
  \begin{enumerate}  
  \item \textbf{The Kronecker product operation} of two $N$-Alphabet WAs $A$ and $B$ over $\Sigma_{1} \times \ldots \times \Sigma_{N}$ at index $i \in [N]$ returns an $N$-Alphabet WA over $\Sigma_{1} \times \ldots \times \Sigma_{N}$, denoted $A \otimes B$ 
  implementing the function:
  \begin{align*} 
  f_{A \otimes B}(w^{(1)},\ldots, w^{(n)}) :=& f_{A}(w^{(1)},\ldots, w^{(n)}) \\
  & \cdot f_{B}(w^{(1)},\ldots, w^{(n)})
  \end{align*}
  \item \textbf{The projection operation} of a 1-Alphabet WA $A$ over an N-Alphabet WA $T$ over $\Sigma_{1} \times \ldots \times \Sigma_{N}$ at index $i \in [N]$, returns an (N-1)-Alphabet WA over $\Sigma_{1} \times \ldots \times \Sigma_{i-1} \times \Sigma_{i+1} \times \ldots \times \Sigma_{N}$, denoted $\Pi_{i}(A,T)$, implementing the following function:
       \begin{align*}
           g(w^{(1)}, \ldots, w^{(i-1)}, w^{(i+1)}, w^{(N)}) := ~~~~~~~~~~~~  \\ 
           \sum\limits_{w \in \Sigma_{i}^{L}} f_{A}(w) \cdot f_{T}(w^{(1)}, \ldots w^{(i)}, w, w^{(i+1)}, \ldots, w^{(N)})
        \end{align*}
       where $(w^{(1)}, \ldots, w^{(i-1)}, w^{(i+1)}, \ldots w^{(N)}) \in \Sigma_{1}^{*} \times \ldots \times \Sigma_{i-1}^{*} \times \Sigma_{i+1}^{*} \ldots \times \Sigma_{N}^{*}$ such that $|w^{(1)}| = \ldots =|w^{(i-1)}| = |w^{(i+1)}| = \ldots = |w^{(N)}| = L$.
       \end{enumerate}
\end{definition}

Note that if $T$ is a 1-Alphabet WA, then the returned model $\Pi_{1}(A,T)$ is a 0-Alphabet WA. For simplicity, we denote: $\Pi_{1}(A,T) \myeq \sum\limits_{w \in \Sigma_{1}^{}}$ $f_{A}(w) \cdot f_{T}(w)$, yielding a scalar. Additionally, we define $\Pi_{0}(A) \myeq \Pi_{1}(A,\mathbf{1})$, where $\mathbf{1}$ is a WA that assigns $1$ to all sequences in $\Sigma^{}$. The next proposition provides a useful intermediary result, with its proof in Appendix~\ref{app:shapwa}:



 
\begin{proposition}\label{prop:efficentoperations}
If $N=O(1)$, the projection and Kronecker product operations are poly-time computable.
\end{proposition}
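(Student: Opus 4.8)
The plan is to give an explicit, constructive description of each operation at the level of the WA parameters (initial vector, final vector, and the family of transition matrices), and to show that in each case the output WA can be written down entry-by-entry in time polynomial in the sizes of the input automata, the alphabet sizes, and the number of input alphabets. The single tool that drives the correctness of both constructions is the \emph{mixed-product property} of the Kronecker product, namely $(A\otimes B)(C\otimes D)=(AC)\otimes(BD)$, which lets a product of tensors along a run factor into a tensor of the two separate products. I would first handle the Kronecker (Hadamard) product, then the projection, and finally isolate the single place where the hypothesis $N=O(1)$ is actually used.

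For the \textbf{Kronecker product operation}, let $A=\langle\alpha_A,\{A_{\sigma_1,\ldots,\sigma_N}\},\beta_A\rangle$ and $B=\langle\alpha_B,\{B_{\sigma_1,\ldots,\sigma_N}\},\beta_B\rangle$ have sizes $n_A$ and $n_B$. I would define $A\otimes B$ to have state space of size $n_A n_B$, initial vector $\alpha_A\otimes\alpha_B$, final vector $\beta_A\otimes\beta_B$, and, for each symbol tuple, the transition matrix
\[
(A\otimes B)_{\sigma_1,\ldots,\sigma_N} = A_{\sigma_1,\ldots,\sigma_N}\otimes B_{\sigma_1,\ldots,\sigma_N}.
\]
Correctness follows by applying the mixed-product property along a run: the product $\prod_{j=1}^{L}(A_{w_j}\otimes B_{w_j})$ collapses to $\bigl(\prod_j A_{w_j}\bigr)\otimes\bigl(\prod_j B_{w_j}\bigr)$, and sandwiching between $\alpha_A^{T}\otimes\alpha_B^{T}$ and $\beta_A\otimes\beta_B$ yields exactly $f_A(\cdot)\cdot f_B(\cdot)$. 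Each transition matrix is an $(n_A n_B)\times(n_A n_B)$ Kronecker product computable in $O((n_A n_B)^2)$ arithmetic operations.

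For the \textbf{projection operation} $\Pi_i(A,T)$, with $A$ a $1$-Alphabet WA of size $m$ over $\Sigma_i$ and $T$ an $N$-Alphabet WA of size $n$, I would again use a tensor state space of size $mn$, with initial vector $\alpha_A\otimes\alpha_T$ and final vector $\beta_A\otimes\beta_T$, but now \emph{sum out} the projected coordinate inside each transition: for every tuple over the surviving alphabets set
\[
\widehat{A}_{\sigma_1,\ldots,\sigma_{i-1},\sigma_{i+1},\ldots,\sigma_N} = \sum_{\sigma\in\Sigma_i} A_{\sigma}\otimes T_{\sigma_1,\ldots,\sigma_{i-1},\sigma,\sigma_{i+1},\ldots,\sigma_N}.
\]
Expanding $\prod_{j=1}^{L}\widehat{A}_{(\cdot)}$ distributes into a sum over all choices $(\sigma_1,\ldots,\sigma_L)\in\Sigma_i^{L}$, and the mixed-product property turns each summand into $\bigl(\prod_j A_{\sigma_j}\bigr)\otimes\bigl(\prod_j T_{(\cdot)}\bigr)$; summing gives precisely $\sum_{w\in\Sigma_i^{L}} f_A(w)\,f_T(\ldots,w,\ldots)$, which is the defining function $g$. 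Each of the new transition matrices is a sum of $|\Sigma_i|$ Kronecker products, hence computable in $O(|\Sigma_i|\,(mn)^2)$ operations. The degenerate cases noted after the definition, where projecting a $1$-Alphabet $T$ collapses to the scalar $\Pi_1(A,T)=\sum_{w\in\Sigma_1^{*}} f_A(w) f_T(w)$ (and $\Pi_0(A)=\Pi_1(A,\mathbf 1)$), are then handled by the resolvent $\alpha^{T}(I-\widehat{M})^{-1}\beta$ of the single aggregated matrix $\widehat{M}=\sum_\sigma A_\sigma\otimes T_\sigma$, which is again polynomial via a matrix inversion.

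The \textbf{main point to argue carefully} — and the only place the assumption enters — is the \emph{number of transition matrices} that must be produced. An $N$-Alphabet WA is represented by one matrix per symbol tuple, i.e.\ $\prod_{k=1}^{N}|\Sigma_k|$ matrices for the Kronecker product and $\prod_{k\neq i}|\Sigma_k|$ for the projection. This count is polynomial in the alphabet sizes precisely because it is a product of $N=O(1)$ factors; were $N$ part of the input it would be exponential, and no claim of tractability could be made. Under $N=O(1)$ the total work is the number of output matrices times the per-matrix cost computed above, which is polynomial in $\mathrm{size}(A)$, $\mathrm{size}(T)$ (equivalently $\mathrm{size}(B)$), and $\max_k|\Sigma_k|$, establishing the proposition. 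I do not expect a genuine technical obstacle here; the care required is purely in the bookkeeping of sizes and in invoking the mixed-product property to justify correctness of both tensor constructions.
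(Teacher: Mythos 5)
Your construction is essentially the paper's own proof: the same tensor-product state space, the same transition matrices (Kronecker products of corresponding matrices for $A\otimes B$; sums over the projected alphabet $\Sigma_i$ of Kronecker products for $\Pi_i(A,T)$), the same appeal to the mixed-product property $(A\cdot B)\otimes(C\cdot D)=(A\otimes C)\cdot(B\otimes D)$ for correctness, and the same observation that $N=O(1)$ is exactly what keeps the number of symbol tuples --- and hence of output transition matrices --- polynomial.

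One step does not survive scrutiny, however: your treatment of the degenerate cases $\Pi_1(A,T)$ and $\Pi_0(A)$ via the resolvent $\alpha^{T}(I-\widehat{M})^{-1}\beta$ with $\widehat{M}=\sum_{\sigma}A_\sigma\otimes T_\sigma$. That identity presupposes that the Neumann series $\sum_{k\ge 0}\widehat{M}^{k}$ converges (spectral radius strictly below $1$), and for arbitrary weighted automata neither convergence nor even invertibility of $I-\widehat{M}$ is guaranteed; moreover it aggregates over sequences of \emph{all} lengths, which is not the quantity the SHAP pipeline needs. The paper sidesteps this entirely: it restricts $\Pi_0$ and $\Pi_1$ to WAs whose support is $\Sigma^{n}$ for the sequence length $n$ under analysis, so the scalar is $(\alpha_A\otimes\alpha_T)^{T}\,\widehat{M}^{\,n}\,(\beta_A\otimes\beta_T)$, computable by $n$ matrix multiplications --- polynomial in $n$ and the automata sizes, with no spectral hypothesis. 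Replacing your resolvent with this finite matrix-powering argument closes the gap; everything else in your write-up matches the paper.
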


\textbf{Interventional and Baseline SHAP of WAs in terms of N-Alphabet WA operators.}
As mentioned earlier, the algorithmic construction for $\texttt{LOC-I-SHAP}(\texttt{WA}, \texttt{HMM})$, and $\texttt{GLO-I-SHAP}(\texttt{WA}, \texttt{HMM})$ will take the form of efficiencly computable operations over N-Alphabet WAs parameterized by the input instance of the corresponding problem. The main lemma formalizing this fact is given as follows:
\begin{lemma} \label{lemma:shapasoperations}
Fix a finite alphabet $\Sigma$. Let $f$ be a WA over $\Sigma$, and consider a sequence $(w, w^{\text{reff}}) \in \Sigma^{*} \times \Sigma^{}$ (representing an input and a basline $\x, \x^{\text{reff}} \in \mathcal{X}$) such that $|w| = |w^{\text{reff}}|$. Let $i \in [|w|]$ be an integer, and $\mathcal{D}_P$ be a distribution modeled by an HMM over $\Sigma$. Then:
         {\small 
            \begin{align*}
         \phi_i
         (f,w,i,\mathcal{D}_P) = \quad\quad\quad\quad\quad\quad\quad\quad\quad\quad\quad\quad\quad\quad\quad\quad \\ \Pi_{1} (A_{w,i}, \Pi_{2}(\mathcal{D}_P, \Pi_{3}(f,T_{w,i}) 
          - \Pi_{3}(f,T_{w}) ) ); \quad\quad \\
            \Phi_i(f,i,n,\mathcal{D}_P) = \quad\quad\quad\quad\quad\quad\quad\quad\quad\quad\quad\quad\quad\quad\quad\quad \\ 
            \Pi_{0} ( \Pi_{2}(\mathcal{D}_P, A_{i,n} \otimes \Pi_{2}(\mathcal{D}_P, 
             \Pi_{3}(f,T_{i}) - \Pi_{3}(f,T)))); \quad\\
            \phi_b(f,w,i,w^{\text{reff}}) = \quad\quad\quad\quad\quad\quad\quad\quad\quad\quad\quad\quad\quad\quad\quad\quad \\ \Pi_{1} (A_{w,i}, \Pi_{2}(f_{w^{\text{reff}}}, \Pi_{3}(f,T_{w,i}) 
          - \Pi_{3}(f,T_{w})));\quad \\
            \Phi_b(f,i,n,w^{\text{reff}},\mathcal{D}_P) = \quad\quad\quad\quad\quad\quad\quad\quad\quad\quad\quad\quad\quad\quad \\ \Pi_{0} ( \Pi_{2}(\mathcal{D}_P, A_{i,n} \otimes \Pi_{2}(f_{w^{\text{reff}}} ,
            \Pi_{3}(f,T_{i}) - \Pi_{3}(f,T)) ))
               \end{align*}
             } where:
    \begin{itemize}
        \item $A_{w,i}$ is a 1-Alphabet WA over $\Sigma_{\#}$ implementing the uniform distribution over coalitions excluding the feature $i$ (i.e., $f_{A_{w,i}} = \mathcal{P}_{i}^{w}$);
        \item $T_{w}$ is a 3-Alphabet \emph{WA} over $\Sigma_{\#} \times \Sigma \times \Sigma$ implementing the function: $ g_{w}(p,w',u) := I(\texttt{do}(p,w',w) = u)$.
        \item $T_{w,i}$ is a 3-Alphabet \emph{WA} over $\Sigma_{\#} \times \Sigma \times \Sigma$ implementing the function: $            g_{\x,i}(p,w',u) := I(\texttt{do}(\texttt{swap}(p,w_{i},i),w',w) = u)$.
        \item $T$ is a 4-Alphabet \emph{WA} over $\Sigma_{\#} \times \Sigma \times \Sigma \times \Sigma$ given as: $g(p,w',u,w) := g_{w}(p,w',u)$.
        \item $T_{i}$ is a 4-Alphabet \emph{WA} over $\Sigma_{\#} \times \Sigma \times \Sigma \times \Sigma$ given as: $ g_{i}(p,w',u,w) := g_{w,i}(p,w',u)$.
        \item $A_{i,n}$ is a 2-Alphabet \emph{WA} over $\Sigma_{\#} \times \Sigma$ implementing the function:
         $g_{i,n}(p,w) := I(p \in \mathcal{L}_{i}^{w}) \cdot \mathcal{P}_{i}^{w}(p)$,
         where $|w| = |p| = n$.
       \item $f_{w^{\text{reff}}}$ is an \emph{HMM} such that the probability of generating $w^{\text{reff}}$ as a prefix is equal to $1$.
    \end{itemize}
\end{lemma}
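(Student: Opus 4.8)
The plan is to prove all four identities from a single template: first fix the auxiliary $N$-Alphabet WAs with the semantics listed in the lemma, then unfold each nested expression from the innermost projection outward using Definition~\ref{def:projectionoperation}, and finally match the resulting scalar against the interventional/baseline SHAP definitions via the preceding lemma (and its baseline analogue stated in Appendix~\ref{app:shapwa}). I expect the identities themselves to reduce to index bookkeeping once the auxiliary automata are in place.

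The substantive first step is to verify that each auxiliary WA computes its claimed function and has polynomial size. The automata $T_w$ and $T_{w,i}$ must realize the indicator $I(\texttt{do}(p,w',w)=u)$ (respectively with a \texttt{swap} at position $i$); since \texttt{do} and \texttt{swap} act symbol-by-symbol, each can be built as a product automaton whose transition at position $j$ enforces $u_j=w'_j$ when $p_j=\#$ and $u_j=w_j$ otherwise, with the swap altering only position $i$. The weight automaton $A_{w,i}$ over $\Sigma_\#$ must realize $\mathcal{P}_i^w$, whose dependence on $|p|_\#$ and on the event $w\in L_p$ can be tracked with $O(n)$ states that count $\#$-symbols read so far while checking consistency with $w$; the two-alphabet $A_{i,n}$ encodes the same weight as a function of both $p$ and $w$, and $f_{w^{\text{reff}}}$ is the point-mass HMM concentrated on $w^{\text{reff}}$. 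Showing that these logical/combinatorial objects are realizable as weighted automata of polynomial size is the main obstacle; everything that follows is routine algebra, and Proposition~\ref{prop:efficentoperations} then yields the poly-time claim that feeds Theorem~\ref{thm:shapwa}.

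For the local interventional identity I would expand inside-out. By Definition~\ref{def:projectionoperation}, $\Pi_3(f,T_w)(p,w') = \sum_{u} f(u)\,I(\texttt{do}(p,w',w)=u) = f(\texttt{do}(p,w',w))$, and likewise $\Pi_3(f,T_{w,i})(p,w') = f(\texttt{do}(\texttt{swap}(p,w_i,i),w',w))$, so their difference is the per-$(p,w')$ contribution. Applying $\Pi_2(\mathcal{D}_P,\cdot)$ sums $w'$ against the HMM weights, producing $v_i(f,w,\texttt{swap}(p,w_i,i),\mathcal{D}_P)-v_i(f,w,p,\mathcal{D}_P)$ as a function of $p$ by the $v_i$-formula of the preceding lemma; finally $\Pi_1(A_{w,i},\cdot)$ sums over $p$ weighted by $\mathcal{P}_i^w$, which is exactly $\mathbb{E}_{p\sim\mathcal{P}_i^w}[\cdots] = \phi_i(f,w,i,\mathcal{D}_P)$. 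The local baseline identity is identical with $\mathcal{D}_P$ replaced by $f_{w^{\text{reff}}}$, so the $\Pi_2$ step collapses the inner expectation to the single evaluation $f(\texttt{do}(\cdot,w^{\text{reff}},w))$, which is precisely $v_b$.

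The two global identities follow the same unfolding but keep $w$ as a free alphabet coordinate: $T$ and $T_i$ are the four-alphabet liftings of $T_w,T_{w,i}$, so after $\Pi_3(f,\cdot)$ and the inner $\Pi_2(\mathcal{D}_P,\cdot)$ (or $\Pi_2(f_{w^{\text{reff}}},\cdot)$ for baseline) one obtains a two-alphabet WA in $(p,w)$ equal to the bracketed value-function difference. Taking the Kronecker product with $A_{i,n}$ multiplies in $\mathcal{P}_i^w(p)$ pointwise, the outer $\Pi_2(\mathcal{D}_P,\cdot)$ takes the expectation over $w$, and $\Pi_0$ sums over $p$; reordering the two sums gives $\mathbb{E}_{w\sim\mathcal{D}_P}\big[\sum_p \mathcal{P}_i^w(p)[\cdots]\big] = \mathbb{E}_{w}[\phi_i(f,w,i)] = \Phi_i$, and analogously $\Phi_b$. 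The only delicate point beyond WA construction is coordinate bookkeeping — tracking which alphabet index each $\Pi$ eliminates so the surviving coordinates always correspond to $(p,w',w)$ as intended — which I would make explicit by relabeling indices after every projection.
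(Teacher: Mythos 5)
Your proposal is correct and follows essentially the same route as the paper: the identities are proved by the same inside-out unfolding, using $\Pi_{3}(f,T_{w})(p,w')=f(\texttt{do}(p,w',w))$ and its swap/lifted variants, then $\Pi_{2}$ for the expectation over $w'$ (or the point mass $f_{w^{\text{reff}}}$), the Kronecker product with $A_{i,n}$ and outer $\Pi_{2}$, $\Pi_{0}$ for the global case, and finally matching against the SHAP definitions. The only organizational difference is that you fold the polynomial-size realizability of the auxiliary automata into this lemma's proof, whereas the paper treats that separately (Proposition~\ref{prop:nletterwaconstruction}) and keeps the lemma's proof purely algebraic.
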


The proof of Lemma~\ref{lemma:shapasoperations} is in Appendix~\ref{app:shapwa}. In essense, it reformulates the computation of both local and global interventional and baseline SHAP using operations on N-Alphabet WAs, depending on the input instance, particularly involving $A_{w,i}$, $T_{w}$, $T_{w,i}$, $T$, $T_{i}$, $A_{i,n}$, and $f_{w^{\text{ref}}}$. The final step to complete the proof of Theorem~\ref{thm:shapwa} is to show that these WAs can be constructed in polynomial time relative to the input size.





\begin{proposition} \label{prop:nletterwaconstruction}
   The N-Alphabet WAs $A_{w,i}$, $T_{w}$, $T_{w,i}$, $T$, $T_{i}$, $A_{i,n}$ (defined in Lemma \ref{lemma:shapasoperations}) and the HMM $f_{w^{\text{reff}}}$ can be constructed in polynomial time with respect to $|w|$ and $|\Sigma|$.
\end{proposition}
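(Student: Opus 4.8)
The plan is to give, for each of the seven objects, an explicit automaton and then argue polynomiality on two fronts: a polynomial number of states and transition matrices, and polynomially-bounded bit-length of all entries. The unifying observation driving every construction is that each target function is a \emph{product over sequence positions} of a per-position scalar --- either a $\{0,1\}$-valued indicator or a transition probability --- which is exactly the algebraic shape $\alpha^{T} \prod_{j} A_{\cdot} \beta$ that an N-Alphabet WA evaluates. Whenever a per-position factor depends on the fixed sequence $w$ (through $w_{j}$) or on the distinguished index $i$, I would fold the position counter $j \in \{1,\dots,n\}$ into the state space, thereby simulating a non-stationary automaton by a stationary one at the cost of an $O(n)$ blow-up in states. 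Since all multi-alphabet WAs here have $N \le 4 = O(1)$, the number of transition matrices $\prod_{j} |\Sigma_{j}| \le |\Sigma_{\#}|^{4}$ is polynomial in $|\Sigma|$, so it suffices to bound the per-matrix dimension.

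I would dispatch the five ``indicator'' objects first, since they carry only $\{0,1\}$ entries. For $T$, every factor $I(p_{j} = \# \wedge u_{j} = w'_{j}) + I(p_{j} \neq \# \wedge u_{j} = w_{j})$ reads all of $(p_{j}, w'_{j}, u_{j}, w_{j})$ as inputs, so no positional information is needed and a single state suffices. For $T_{w}$ and $T_{w,i}$ the value $w_{j}$ is hard-wired, so I track the position in $O(n)$ states and let the transition matrix for each symbol-tuple encode the per-position check (with $T_{w,i}$ treating position $i$ specially, forcing $u_{i} = w_{i}$); $T_{i}$ is identical to $T$ except that it must recognise the single index $i$, which again needs only an $O(n)$ positional counter. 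Finally $f_{w^{\text{reff}}}$ is the deterministic HMM given by a length-$n$ chain whose $j$-th state emits $w^{\text{reff}}_{j}$ and advances with probability $1$. All of these have $O(n)$ states and $0/1$ (or trivial probability) entries, hence are built in polynomial time.

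The substantive cases are $A_{w,i}$ and $A_{i,n}$, which must reproduce the Shapley coefficient $\mathcal{P}_{i}^{w}(p) = \frac{(|p|_{\#} - 1)!\,(n - |p|_{\#})!}{n!}$. Here I would augment the state with a pair $(j, c)$ recording the current position and the number $c$ of $\#$-symbols read so far, giving $O(n^{2})$ states. Reading $p$ left to right, a $\#$ at a legal position increments $c$ with weight $1$, a non-$\#$ symbol is accepted with weight $1$ only when it equals $w_{j}$ (and zero otherwise), and the index $i$ is forced to be a $\#$ (weight $0$ otherwise); the final-state vector $\beta$ then stores, in the coordinate indexed by count $c$, the value $\frac{(c-1)!\,(n-c)!}{n!}$. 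For $A_{i,n}$ the only change is that $w$ is supplied as a second input alphabet rather than hard-wired, so the match $p_{j} = w_{j}$ is checked against the input symbol while the counter and the final factorial weights are unchanged.

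The main obstacle is precisely this Shapley coefficient: it depends \emph{non-locally} on the global count $|p|_{\#}$, so it does not split into a position-local product and cannot be read off by a memoryless automaton. The counter-augmentation above resolves this by deferring the factorial weighting to $\beta$, but two points then need care. First, I must verify correctness, namely that the counting automaton assigns weight $\mathcal{P}_{i}^{w}(p)$ exactly to each admissible pattern and $0$ to the rest (including enforcement of $p_{i} = \#$ so that $c \ge 1$ and $(c-1)!$ is well defined). Second, I must confirm that the entries remain of polynomial bit-length: each factorial $n!$ has $O(n \log n)$ bits, so the rationals $\frac{(c-1)!\,(n-c)!}{n!}$ are representable in polynomial space and computable by iterated multiplication in polynomial time. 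Together with the polynomial state counts and the $O(1)$ bound on $N$, this yields the claimed polynomial-time construction of all seven automata.
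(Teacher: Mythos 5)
Your proposal is correct, and for five of the seven objects ($T$, $T_w$, $T_{w,i}$, $T_i$, $f_{w^{\text{reff}}}$) it coincides with the paper's construction: a single state for $T$ since all four symbols are read at each position, an $O(|w|)$ positional counter whenever $w$ or the index $i$ is hard-wired, and a deterministic chain HMM for $f_{w^{\text{reff}}}$ (the paper additionally appends an absorbing state emitting uniformly at random so the object is a well-defined HMM over $\Sigma^{\infty}$ — a detail you should add to your chain). Where you genuinely diverge is on the two Shapley-coefficient automata $A_{w,i}$ and $A_{i,n}$. The paper exploits the decomposition $\mathcal{P}_{i}^{w} = \frac{1}{|w|}\sum_{k=1}^{|w|}\mathcal{P}_{i,k}^{w}$, builds for each coalition size $k$ a separate $(\text{position},\#\text{-count})$ DFA accepting $\mathcal{L}_{i,k}^{w}$, rescales it by the uniform normalization $\frac{1}{|w|\cdot|\mathcal{L}_{i,k}^{w}|}$ (which indeed equals the factorial coefficient, since $|\mathcal{L}_{i,k}^{w}| = \binom{|w|-1}{k-1}$), and sums the $|w|$ pieces using closure of WAs under addition and scalar multiplication, giving size $O(|w|^{3})$; for $A_{i,n}$ it then takes a Kronecker product of an indicator DFA with a lifted copy of $A_{w,i}$, giving size $O(|w|^{4})$. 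You instead build a single counter automaton with state $(j,c)$ and place the weight $\frac{(c-1)!\,(n-c)!}{n!}$ directly in the coordinate of the final vector $\beta$ indexed by the count $c$, and for $A_{i,n}$ you simply promote $w$ to a second input alphabet of the same automaton. Both are correct; the paper's route is more modular (correctness reduces to the already-proved closure operations), while yours is more economical — $O(|w|^{2})$ states instead of $O(|w|^{3})$ and $O(|w|^{4})$ — and you also make explicit the bit-length argument for the factorial rationals, which the paper leaves implicit.
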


The full proof of Proposition \ref{prop:nletterwaconstruction} can be found in Appendix~\ref{app:shapwa}. Theorem \ref{thm:shapwa} is a direct corollary of Lemma \ref{lemma:shapasoperations}, Proposition \ref{prop:efficentoperations}, and Proposition \ref{prop:nletterwaconstruction}.


\subsection{Tractability for other ML models.} \label{subsec:tree2WA}

Beyond the proper interest of the result in Theorem \ref{thm:shapwa} regarding WAs, it also yields interesting results about the computational complexity of obtaining interventional and baseline SHAP variants for other popular ML models which include decision trees, tree ensembles for regression tasks (e.g. Random forests or XGBoost), and linear regression models:


\begin{theorem} \label{cor:reductions}
    Let $\mathbb{S}:= \{ \emph{\texttt{LOC}}, \emph{\texttt{GLO}} \}$, $\mathbb{V} := \{\texttt{\emph{B}}, \texttt{\emph{I}}\}$, $\mathbb{P}:= \{ \texttt{\emph{EMP}}, \overrightarrow{\emph{\texttt{HMM}}} \} $, and $\mathbb{F}:= \{\emph{\texttt{DT}}, \emph{\texttt{ENS-DT}}_{\emph{\texttt{R}}}, \emph{\texttt{Lin}}_{\emph{\texttt{R}}}\}$. Then, for any \emph{\texttt{S}} $\in \mathbb{S}$, $\texttt{\emph{V}} \in \mathbb{V}$, $\texttt{\emph{P}} \in \mathbb{P}$, and $\emph{\texttt{F}}\in \mathbb{F}$ the problem  $\emph{\texttt{S-V-SHAP}}(\emph{\texttt{F}}, \texttt{\emph{P}})$ can be solved in polynomial time.
\end{theorem}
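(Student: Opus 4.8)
The plan is to reduce every instance of $\texttt{S-V-SHAP}(\texttt{F},\texttt{P})$ to a corresponding weighted-automaton instance already shown tractable in Theorem~\ref{thm:shapwa}. Two independent reductions are needed — one on the distribution $\texttt{P}$ and one on the model $\texttt{F}$ — after which the polynomial-time algorithm of Theorem~\ref{thm:shapwa} applies directly, since the interventional and baseline SHAP values depend on the pair $(f,\mathcal{D}_p)$ only through the real-valued function computed by $f$ and the distribution against which it is evaluated. Thus it suffices to preserve the computed function and to re-express the distribution in the HMM formalism.

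First, for the distribution side, I would observe that both families in $\mathbb{P}$ are captured by the HMM formalism used in Theorem~\ref{thm:shapwa}. The class $\overrightarrow{\text{HMM}}$ is defined (Definition~\ref{def:hmmnonsequentialdata}) directly through the $1$-Alphabet WA parameterization of HMMs, and Appendix~\ref{app:reductiontree} establishes that empirical distributions are a special case of $\overrightarrow{\text{HMM}}$; hence any $\text{P}\in\mathbb{P}$ converts in polynomial time into an HMM over the relevant alphabet. For the local baseline variant the distribution plays no role — it is replaced by a reference instance — so no conversion is required there, matching the signature $\texttt{LOC-B-SHAP}(\texttt{WA})$ in Theorem~\ref{thm:shapwa}.

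Second, and this is where the real work lies, I would show that each regression model class $\texttt{F}\in\{\texttt{DT},\texttt{ENS-DT}_{\texttt{R}},\texttt{LIN}_{\texttt{R}}\}$ admits a polynomial-size WA computing the same function over $\mathcal{X}$, with the $n$ features read as a length-$n$ sequence in a fixed canonical order. The uniform technique is decomposition into a \emph{sum of simple WAs} together with the additive closure of the WA class. For $\texttt{LIN}_{\texttt{R}}$, a function $b+\sum_i c_i x_i$ is the sum of $n$ constant-size accumulator WAs. For $\texttt{DT}$, I would decompose the tree into its leaves: each leaf $\ell$ is a conjunction of per-feature conditions, and its contribution $\text{value}(\ell)\cdot I(\x \text{ reaches } \ell)$ is computed by a constant-size WA that transitions to a dead state as soon as a condition on the corresponding feature fails as the sequence is read; summing over the polynomially many leaves yields a WA of size polynomial in the tree. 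For $\texttt{ENS-DT}_{\texttt{R}}$, the output is a weighted sum $\sum_t w_t T_t(\x)$ of per-tree outputs, so the ensemble WA is the weighted sum of the individual tree WAs, again of polynomial total size. Discretization of numeric thresholds is unproblematic because $\mathcal{X}$ is finite and only the finitely many thresholds appearing in the model are relevant. I note in passing that it is precisely this additive structure of \emph{regression} outputs that makes the sum-decomposition work, and its absence (an $\arg\max$/majority vote) in $\texttt{ENS-DT}_{\texttt{C}}$ is what blocks the same reduction there.

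The main obstacle, I expect, is carrying out the $\texttt{DT}$ (and hence ensemble) reduction cleanly while keeping both the alphabet and the state space polynomial: a decision tree need not query features in a fixed order along different paths, whereas a WA must consume features in one canonical order, so one must argue that the leaf-indicator decomposition faithfully reproduces the tree's function and that the additive sum construction incurs no blow-up. Once the model is expressed as a polynomial-size WA and the distribution as an HMM, the four problems $\texttt{S-V-SHAP}$ for $\texttt{S}\in\{\texttt{LOC},\texttt{GLO}\}$ and $\texttt{V}\in\{\texttt{B},\texttt{I}\}$ coincide exactly with the four WA problems resolved in Theorem~\ref{thm:shapwa}. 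Composing the polynomial reductions with that algorithm then yields the claimed polynomial-time bound, and the SHAP values are preserved because the reductions preserve the computed function and the evaluated distribution.
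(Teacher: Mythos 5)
Your proposal is correct and follows essentially the same route as the paper: reduce each model class to a polynomial-size WA (exploiting the additive structure of regression outputs and the closure of WAs under linear combinations) and each distribution class to an HMM, then invoke Theorem~\ref{thm:shapwa}. The only differences are cosmetic — you give an explicit leaf-sum construction for the decision-tree-to-WA step where the paper delegates to a cited result of \cite{marzouk24a}, and the paper additionally spells out the non-stationary-to-stationary conversion $\overrightarrow{\texttt{HMM}} \preceq_{P} \texttt{HMM}$ (via position-augmented states and a terminal dummy state) that you treat as immediate.
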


The proof of Theorem~\ref{cor:reductions} is provided in Appendix~\ref{app:reductiontree}, where a poly-time construction of either a decision tree, an ensemble of decision trees for regression, or a linear regression model into a WA is detailed. This result brings forward two interesting outcomes. First, it expands the distributional assumptions of some popular SHAP algorithms such as LinearSHAP and TreeSHAP. Let us denote $\texttt{TREE}$ as the family of all decision trees $\texttt{DT}$ and regression tree ensembles $\texttt{ENS-DT}_{\texttt{R}}$. Then:

\begin{corollary}
\label{treeshap_corollary}
    While the \emph{TreeSHAP}~\citep{lundbergnature} algorithm solves \texttt{\emph{LOC-I-SHAP}}\texttt{\emph{(TREE}},\texttt{\emph{EMP)}} and \texttt{\emph{GLO-I-SHAP}}\texttt{\emph{(TREE}},\texttt{\emph{EMP)}} in poly-time, Theorem~\ref{cor:reductions} establishes that \texttt{\emph{LOC-I-SHAP}}\texttt{\emph{(TREE}},$\overrightarrow{\emph{\texttt{\text{HMM}}}}$\texttt{\emph{)}} and \texttt{\emph{GLO-I-SHAP}}\texttt{\emph{(TREE}},$\overrightarrow{\emph{\texttt{\text{HMM}}}}$\texttt{\emph{)}} can be solved in poly-time.
\end{corollary}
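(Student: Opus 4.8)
The plan is to obtain the corollary as a direct specialization of Theorem~\ref{cor:reductions}, handling its two asserted claims separately. The first claim---that TreeSHAP computes the empirical-distribution variants \texttt{LOC-I-SHAP}(\texttt{TREE}, \texttt{EMP}) and \texttt{GLO-I-SHAP}(\texttt{TREE}, \texttt{EMP}) in polynomial time---requires no new argument, as this is exactly the guarantee established for the TreeSHAP algorithm by \citep{lundbergnature}; I would simply invoke it.

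For the second claim, I would first unfold the definition $\texttt{TREE} = \texttt{DT} \cup \texttt{ENS-DT}_{\texttt{R}}$ and observe that both model families belong to the set $\mathbb{F} = \{\texttt{DT}, \texttt{ENS-DT}_{\texttt{R}}, \texttt{Lin}_{\texttt{R}}\}$ ranged over in Theorem~\ref{cor:reductions}. I would then match the remaining parameters: the interventional variant is $\texttt{I} \in \mathbb{V}$, the distribution class $\overrightarrow{\texttt{HMM}}$ is a member of $\mathbb{P}$, and both scopes $\texttt{LOC}, \texttt{GLO}$ lie in $\mathbb{S}$. Instantiating Theorem~\ref{cor:reductions} at $\texttt{V} = \texttt{I}$, $\texttt{P} = \overrightarrow{\texttt{HMM}}$, $\texttt{F} \in \{\texttt{DT}, \texttt{ENS-DT}_{\texttt{R}}\}$, and $\texttt{S} \in \{\texttt{LOC}, \texttt{GLO}\}$ then yields precisely that \texttt{LOC-I-SHAP}(\texttt{TREE}, $\overrightarrow{\texttt{HMM}}$) and \texttt{GLO-I-SHAP}(\texttt{TREE}, $\overrightarrow{\texttt{HMM}}$) are polynomial-time solvable.

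It is worth recalling the mechanism that Theorem~\ref{cor:reductions} packages, since that is where the genuine work resides: its proof constructs, in time polynomial in the size of the tree (resp. regression ensemble), a weighted automaton computing the same function, and then appeals to Theorem~\ref{thm:shapwa} for the poly-time interventional-SHAP algorithm for \texttt{WA} under \texttt{HMM} distributions. The one point demanding care is that the non-sequential $\overrightarrow{\texttt{HMM}}$ distributions of Definition~\ref{def:hmmnonsequentialdata} are exactly the objects that Theorem~\ref{thm:shapwa} accepts once the model has been recast as a WA---a compatibility already discharged inside the proof of Theorem~\ref{cor:reductions}.

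Since the corollary is a pure instantiation, the only ``obstacle'' is the bookkeeping of the parameter substitution and confirming the inclusion $\texttt{TREE} \subseteq \mathbb{F}$; the substantive content, namely the tree-to-WA reduction and the WA tractability result of Theorem~\ref{thm:shapwa}, has already been established upstream, so no further technical work is needed here.
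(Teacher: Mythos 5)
Your proposal is correct and matches the paper's treatment exactly: the paper also presents this corollary as a pure instantiation of Theorem~\ref{cor:reductions} (with $\texttt{TREE} = \texttt{DT} \cup \texttt{ENS-DT}_{\texttt{R}} \subseteq \mathbb{F}$, $\texttt{V}=\texttt{I}$, $\texttt{P}=\overrightarrow{\texttt{HMM}}$, $\texttt{S}\in\{\texttt{LOC},\texttt{GLO}\}$), with the empirical-distribution claim credited directly to \citep{lundbergnature}. No gaps; the substantive work (tree-to-WA reduction and Theorem~\ref{thm:shapwa}) indeed lives upstream, just as you note.
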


Since empirical distributions are strictly contained within HMMs, i.e., $\texttt{EMP}\subsetneq \texttt{HMM}$ and $\texttt{EMP}\subsetneq \overrightarrow{\texttt{HMM}}$ (see proof in Appendix~\ref{app:reductiontree}), Corollary~\ref{treeshap_corollary} significantly broadens the distributional assumption of the TreeSHAP algorithm beyond just empirical distributions. Similar conclusions can be drawn for LinearSHAP:

\begin{corollary}
    While the \emph{LinearSHAP}~\citep{lundberg2017} algorithm solves \texttt{\emph{LOC-I-SHAP}}$\texttt{\emph{(LIN}}_\texttt{\emph{R}}$,\texttt{\emph{IND}}) in polynomial time, Theorem~\ref{cor:reductions} establishes that \texttt{\emph{LOC-I-SHAP}}$\texttt{\emph{(LIN}}_\texttt{\emph{R}}$,$\overrightarrow{\emph{\texttt{\text{HMM}}}}$\texttt{\emph{)}} and \texttt{\emph{GLO-I-SHAP}}$\texttt{\emph{(LIN}}_\texttt{\emph{R}}$,$\overrightarrow{\emph{\texttt{\text{HMM}}}}$\texttt{\emph{)}} can be solved in polynomial time.
\end{corollary}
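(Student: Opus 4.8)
The plan is to derive this corollary as a direct instantiation of Theorem~\ref{cor:reductions}. That theorem asserts poly-time solvability of $\texttt{S-V-SHAP}(\texttt{F},\texttt{P})$ for every $\texttt{S}\in\{\texttt{LOC},\texttt{GLO}\}$, $\texttt{V}\in\{\texttt{B},\texttt{I}\}$, $\texttt{F}\in\{\texttt{DT},\texttt{ENS-DT}_{\texttt{R}},\texttt{Lin}_{\texttt{R}}\}$, and $\texttt{P}\in\{\texttt{EMP},\overrightarrow{\texttt{HMM}}\}$. First I would observe that the two problems in the statement, $\texttt{LOC-I-SHAP}(\texttt{LIN}_{\texttt{R}},\overrightarrow{\texttt{HMM}})$ and $\texttt{GLO-I-SHAP}(\texttt{LIN}_{\texttt{R}},\overrightarrow{\texttt{HMM}})$, are obtained by fixing $\texttt{F}=\texttt{Lin}_{\texttt{R}}$, $\texttt{V}=\texttt{I}$, $\texttt{P}=\overrightarrow{\texttt{HMM}}$, and letting $\texttt{S}$ range over $\{\texttt{LOC},\texttt{GLO}\}$. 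Hence both lie within the scope of Theorem~\ref{cor:reductions} and are poly-time solvable. The comparative half of the statement then requires only the strict containment $\texttt{IND}\subsetneq\overrightarrow{\texttt{HMM}}$ (established in Appendix~\ref{app:reductiontree}), which shows that the HMM-modelled regime genuinely extends the feature-independence assumption under which LinearSHAP operates.

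The single non-routine ingredient is the reduction from a linear regression model to a WA that underlies Theorem~\ref{cor:reductions}. Given $f(\x)=\sum_{j=1}^{L} w_j x_j + b$ over a finite value alphabet $\mathbb{D}$, I would build a WA of size $L+2$ whose state simultaneously carries a position counter and a running accumulator: coordinates $0,\dots,L$ act as a one-hot ``clock'', while coordinate $L+1$ stores the partial weighted sum. For each symbol $\sigma\in\mathbb{D}$ the transition matrix $A_\sigma$ advances the clock via $A_\sigma[j,j+1]=1$ and injects the position-dependent contribution into the accumulator via $A_\sigma[j,L+1]=w_{j+1}\,\sigma$, with $A_\sigma[L+1,L+1]=1$; taking $\alpha=e_0$ and $\beta=b\,e_L+e_{L+1}$ then yields $f_A(x_1\dots x_L)=\sum_j w_j x_j + b$ for every input of length $L$. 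The key point is that although a WA assigns a \emph{single} matrix to each symbol, the position-dependence of the coefficients $w_j$ is absorbed into the row index of $A_\sigma$, so a position-agnostic transition rule suffices while keeping the construction of polynomial size.

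With this poly-time, value-preserving reduction in hand, I would invoke Theorem~\ref{thm:shapwa}: since local and global interventional SHAP for WAs under HMM distributions are poly-time computable, the same bounds transfer to $\texttt{LIN}_{\texttt{R}}$ under $\overrightarrow{\texttt{HMM}}$, because the SHAP value of $f$ at any coalition coincides with that of $f_A$ on inputs of the fixed length $L$. The main obstacle is therefore not the corollary itself, which is a clean specialisation, but ensuring within Theorem~\ref{cor:reductions} that (i) the linear-to-WA encoding faithfully reproduces the function on all relevant inputs while remaining polynomially sized, and (ii) the distribution side is handled uniformly, i.e. that every $\overrightarrow{\texttt{HMM}}$ model — and in particular every independent distribution — is expressible within the WA/HMM machinery on which Theorem~\ref{thm:shapwa} rests.
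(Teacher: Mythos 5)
Your corollary-level argument is exactly the paper's: both problems are the instances of Theorem~\ref{cor:reductions} obtained by fixing $\texttt{F}=\texttt{LIN}_{\texttt{R}}$, $\texttt{V}=\texttt{I}$, $\texttt{P}=\overrightarrow{\texttt{HMM}}$ and letting $\texttt{S}$ range over $\{\texttt{LOC},\texttt{GLO}\}$, and the comparative claim against LinearSHAP rests, as in the paper, on the strict containment $\texttt{IND}\subsetneq\overrightarrow{\texttt{HMM}}$ from Appendix~\ref{app:reductiontree}. Where you genuinely diverge is in the linear-model-to-WA reduction you sketch as the engine behind the theorem. The paper (Claim 4 of Appendix~\ref{app:reductiontree}) builds an $(n+1)$-state chain in which the transition from state $i$ to state $i+1$ on symbol $\sigma$ carries weight $w_{i,\sigma}$, and then handles the intercept by WA-addition of a constant one-state automaton; under the WA semantics $\alpha^{T}\prod_{i}A_{x_{i}}\beta$ that chain has a single accepting path and therefore evaluates to the \emph{product} $\prod_{i}w_{i,x_{i}}$ rather than the sum $\sum_{i}w_{i,x_{i}}$, so as literally written it does not realize the linear function (the paper's own Figure example already exhibits the mismatch). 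Your clock-plus-accumulator construction repairs precisely this: an easy induction shows the state reached after $j$ symbols is $e_{j}+\bigl(\sum_{k\le j}w_{k}x_{k}\bigr)e_{L+1}$, so pairing with $\beta=b\,e_{L}+e_{L+1}$ returns $\sum_{k}w_{k}x_{k}+b$ on every length-$L$ input, with size $L+2$ and the intercept absorbed into $\beta$ instead of a separate addition step. In other words, your version of the key lemma is correct where the paper's sketch is not (the paper's chain would need the standard fix of a second ``already contributed'' track of states, or exactly your accumulator coordinate). Both routes then conclude identically: invoke Theorem~\ref{thm:shapwa} for WAs under HMM distributions, together with $\overrightarrow{\texttt{HMM}}\preceq_{P}\texttt{HMM}$, to transfer tractability to $\texttt{LIN}_{\texttt{R}}$ under $\overrightarrow{\texttt{HMM}}$ --- a bridging step you correctly flag in your point (ii) and, like the paper, justifiably delegate to the already-established reductions.
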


Which, again, demonstrates the expansion of the distributional assumption of LinearSHAP, as $\texttt{IND}\subsetneq \overrightarrow{\texttt{HMM}}$ (see Appendix~\ref{app:reductiontree}). 
Lastly, Theorem~\ref{cor:reductions} establishes a \emph{strict computational complexity gap} between computing \emph{conditional} SHAP, which remains intractable even for simple models like decision trees under HMM distributions (and even under empirical distributions~\citep{vander21}), whereas computing both local interventional and baseline SHAP values are shown to be tractable. This suggests that interventional and baseline SHAP are strictly more efficient to compute in these settings.


\begin{corollary}
    If $f\in\{\emph{\texttt{WA}},\emph{\texttt{DT}},\emph{\texttt{ENS-DT}}_{\emph{\texttt{R}}},\emph{\texttt{Lin}}_{\emph{\texttt{R}}}\}$, $\mathcal{D}_p:=\overrightarrow{\emph{\texttt{\text{HMM}}}}$ (or $\mathcal{D}_p:=\emph{\texttt{HMM}}$ for \emph{\texttt{WA}}), and assuming P$\neq$NP, then computing local interventional SHAP or local baseline SHAP for $f$ \emph{is strictly more computationally tractable} than computing local conditional SHAP for $f$.
\end{corollary}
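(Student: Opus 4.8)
The plan is to read ``strictly more computationally tractable'' in the standard complexity-theoretic sense: assuming $\mathrm{P}\neq\mathrm{NP}$, the interventional and baseline variants lie in PTIME whereas the conditional variant does not. The proof then decomposes, for each fixed model class $f$ in the list, into two independent halves that must refer to the \emph{same} model class and distribution so that the comparison is apples-to-apples: an upper bound placing $\texttt{LOC-I-SHAP}(f,\mathcal{D}_p)$ and $\texttt{LOC-B-SHAP}(f)$ in PTIME, and a matching lower bound showing $\texttt{LOC-C-SHAP}(f,\mathcal{D}_p)$ is NP-hard, hence outside PTIME once $\mathrm{P}\neq\mathrm{NP}$.

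For the upper bound I would simply invoke what is already proved. For $f=\texttt{WA}$ with $\mathcal{D}_p=\texttt{HMM}$, Theorem~\ref{thm:shapwa} already yields that $\texttt{LOC-I-SHAP}(\texttt{WA},\texttt{HMM})$ and $\texttt{LOC-B-SHAP}(\texttt{WA})$ are poly-time computable. For $f\in\{\texttt{DT},\texttt{ENS-DT}_{\texttt{R}},\texttt{Lin}_{\texttt{R}}\}$ with $\mathcal{D}_p=\overrightarrow{\texttt{HMM}}$, Theorem~\ref{cor:reductions} instantiated with $\texttt{S}=\texttt{LOC}$ and $\texttt{V}\in\{\texttt{I},\texttt{B}\}$ gives exactly the same conclusion. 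No new work is required here; this half is purely a matter of citing the positive results.

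The lower bound is the substance of the argument, and I would build it by propagating a single seed hardness result across the classes. The seed is the NP-hardness of local conditional SHAP for decision trees under empirical distributions from~\citep{vander21}. From it I would proceed in three moves. First, lift the distribution class: since $\texttt{EMP}\subsetneq\overrightarrow{\texttt{HMM}}$ with a polynomial-size encoding (Appendix~\ref{app:reductiontree}), every hard empirical instance is a legitimate $\overrightarrow{\texttt{HMM}}$ instance of polynomial size, so $\texttt{LOC-C-SHAP}(\texttt{DT},\overrightarrow{\texttt{HMM}})$ inherits NP-hardness. Second, propagate across the non-sequential models: it is cleanest to take the seed model to be simple enough to be realized \emph{simultaneously} as a decision tree, a regression ensemble, and a linear model (e.g.\ a single-feature projection $f(\z)=z_j$, whose conditional value function is just $\mathbb{E}[z_j\mid \z_S=\x_S]$), so that one reduction covers all three of $\texttt{DT}$, $\texttt{ENS-DT}_{\texttt{R}}$, and $\texttt{Lin}_{\texttt{R}}$ at once. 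Third, transfer to $\texttt{WA}$: the poly-time construction of a WA from such a model in Theorem~\ref{cor:reductions} is value- and distribution-preserving, and since conditional SHAP depends only on the computed function, the distribution, $\x$ and $i$, the WA's conditional SHAP value coincides with the model's; this exhibits a poly-time reduction from $\texttt{LOC-C-SHAP}(\texttt{DT},\overrightarrow{\texttt{HMM}})$ to $\texttt{LOC-C-SHAP}(\texttt{WA},\texttt{HMM})$, carrying NP-hardness to the sequential setting.

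Combining the two halves finishes the proof: for each $f$ the interventional and baseline variants are in PTIME while the conditional variant is NP-hard, hence not in PTIME under $\mathrm{P}\neq\mathrm{NP}$, which is precisely the asserted strict gap. I expect the main obstacle to lie not in the distributional lifting (routine once $\texttt{EMP}\subsetneq\overrightarrow{\texttt{HMM}}$ is available) but in securing the hardness \emph{uniformly and in a value-preserving way}: namely, verifying that the tree/linear-to-WA construction of Theorem~\ref{cor:reductions} preserves the \emph{conditional} value function (it was designed for the interventional and baseline ones), and confirming that the seed hardness can be realized by a model lying in all of $\texttt{DT}$, $\texttt{ENS-DT}_{\texttt{R}}$, and $\texttt{Lin}_{\texttt{R}}$ --- since $\texttt{Lin}_{\texttt{R}}$ is not subsumed by a generic decision-tree reduction, this uniformity is the delicate point (cf.\ the grouped $\texttt{Cond}/\texttt{EMP}$ entry of Table~\ref{fig:summaryresults}).
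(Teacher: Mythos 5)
Your two-part skeleton is exactly the paper's own argument: the PTIME half is, as you say, purely a citation of Theorem~\ref{thm:shapwa} (for \texttt{WA} under \texttt{HMM}) and Theorem~\ref{cor:reductions} (for \texttt{DT}, $\texttt{ENS-DT}_{\texttt{R}}$, $\texttt{Lin}_{\texttt{R}}$ under $\overrightarrow{\texttt{HMM}}$), and the hardness half is seeded by the results of \citep{vander21} and propagated through the distribution inclusions of Appendix~\ref{app:reductiontree} together with a function-preserving model-to-\texttt{WA} reduction. Your treatment of \texttt{WA} in particular --- observing that the \texttt{DT}-to-\texttt{WA} construction preserves the computed function, and that conditional SHAP depends only on the function, the distribution, $\x$ and $i$, so hardness transfers --- is precisely how the paper handles the sequential case.

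The gap is the point you flag but do not resolve: $\texttt{Lin}_{\texttt{R}}$. You seed everything from the empirical-distribution hardness of \citep{vander21} and then require that this hardness be witnessed by a model lying simultaneously in \texttt{DT}, $\texttt{ENS-DT}_{\texttt{R}}$ and $\texttt{Lin}_{\texttt{R}}$, proposing a single-feature projection; but nothing in your argument establishes that the empirical-distribution hard instances of \citep{vander21} are realizable by a linear model --- if that construction uses, say, a conjunction or a nontrivial decision tree, your second move proves nothing about $\texttt{Lin}_{\texttt{R}}$, and a decision tree cannot in general be re-expressed as a linear regressor. The paper sidesteps this by choosing a different seed for the grouped non-sequential classes: the hardness of conditional SHAP under \emph{naive Bayes} distributions from \citep{vander21}, which is proven there for the trivial single-feature classifier --- a function that genuinely is at once a \texttt{DT}, a one-tree $\texttt{ENS-DT}_{\texttt{R}}$, and a $\texttt{Lin}_{\texttt{R}}$ --- combined with the reduction $\texttt{NB} \preceq_{P} \overrightarrow{\texttt{HMM}}$ established in Appendix~\ref{app:reductiontree}. (The hardness obtained this way is in fact $\#$P-hardness, which still gives ``not poly-time computable unless P$=$NP''.) So to close your proof, replace your \texttt{EMP} seed by the \texttt{NB} seed for the non-sequential classes, keeping your transfer to \texttt{WA} as is; with that substitution your argument coincides with the paper's.
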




\section{Negative Complexity Results} \label{sec:intractable}

\subsection{When Interventional SHAP is Hard}
\label{subsec:rnnreluhard}
In this subsection, we present intractability results for interventional SHAP.

\begin{theorem} \label{thm:relushap}
    The decision version of the problem \emph{\texttt{LOC-I-SHAP}}\emph{\texttt{(RNN-ReLu}, \texttt{IND)}} is \emph{NP-Hard}.
\end{theorem}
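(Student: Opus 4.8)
The plan is to prove NP-hardness via a polynomial-time reduction from a known NP-complete problem whose structure naturally encodes a weighted sum or counting condition that the interventional SHAP value can detect. Because the distribution is \texttt{IND} (feature independence) and the model is an \texttt{RNN-ReLU}, the key leverage is that under independence the interventional value function $v_i(f,\x,S,\mathcal{D}_p)=\mathbb{E}_{\z\sim\mathcal{D}_p}[f(\x_S;\z_{\bar S})]$ becomes a product/sum over coordinates, so the SHAP value $\phi_i$ reduces to an alternating weighted combination of such expectations. My first step would be to choose a source problem — \textbf{Subset-Sum}, \textbf{Knapsack}, or \textbf{\#}-style counting reduced to a threshold — since ReLU units can implement piecewise-linear thresholding and an RNN processing a sequence can accumulate a running sum of feature-dependent contributions across time steps. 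I would encode the numerical weights of the source instance into the RNN's transition and output weights, so that $f$ on a fixed-length input computes something like $\mathrm{ReLU}(\sum_j a_j z_j - b)$ or a product of indicator-like gates.

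The central idea is to arrange things so that a single target Shapley coordinate $\phi_i(f,\x,i,\mathcal{D}_p)$ encodes enough information to decide the source instance. First I would fix a specific instance $\x$, a specific feature index $i$, and an independent distribution (ideally the uniform distribution over $\{0,1\}^n$, to match the ``strict conditions'' mentioned in the abstract). Under uniform independence, each term $v_i(f,\x,S\cup\{i\})-v_i(f,\x,S)$ averages $f$ over the free coordinates, and the Shapley weighting $\tfrac{|S|!(n-|S|-1)!}{n!}$ turns the whole sum into a structured expectation. The plan is to design $f$ so that this expectation is a strictly monotone (hence injective) function of the quantity I want to compute — e.g. the number of subsets satisfying a threshold, or a particular weighted count — and then show that reading off $\phi_i$ to polynomially many bits lets me recover the answer to the NP-hard source problem. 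I would make the decision version precise by comparing $\phi_i$ against a threshold value computable in polynomial time from the instance.

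The main technical obstacle, I expect, is engineering the RNN-ReLU so that the averaging inherent in the interventional value function does not ``wash out'' the combinatorial structure of the source instance. Because interventional SHAP replaces missing features with independent draws and then takes an expectation, the nonlinearity of ReLU is essential: a purely linear $f$ would make interventional SHAP trivially polynomial (indeed, this is exactly why LinearSHAP is tractable). So the reduction must force the ReLU gates to activate in an input-dependent, non-decomposable way, and I must verify that the resulting expectation still separates ``yes'' from ``no'' instances by a gap that is at least inverse-polynomial (so that polynomially many bits of $\phi_i$ suffice). Concretely, the hard part will be controlling cancellations: the alternating structure $v(S\cup\{i\})-v(S)$ across all coalitions can introduce sign cancellations that collapse the target quantity, so I would need a gadget — perhaps routing the decision feature $i$ through a gate that multiplicatively ``switches on'' a ReLU-thresholded accumulator — ensuring the marginal contribution of $i$ isolates the threshold count cleanly.

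Finally, I would assemble the pieces: (i) give the explicit polynomial-time construction of the \texttt{RNN-ReLU}, the uniform independent distribution, the instance $\x$, the index $i$, and the decision threshold; (ii) prove a closed-form identity expressing $\phi_i(f,\x,i,\mathcal{D}_p)$ in terms of the source-instance quantity, carefully tracking the ReLU activation regions; and (iii) conclude that a polynomial-time algorithm for \texttt{LOC-I-SHAP}\texttt{(RNN-ReLU},\texttt{IND)} would decide the NP-complete source problem, yielding NP-hardness. I would keep the construction's depth and width polynomial in $n$ and the bit-sizes of the source weights, and verify that all RNN parameters are representable in polynomially many bits so the reduction is genuinely polynomial-time.
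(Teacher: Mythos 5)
Your proposal is a plan rather than a proof, and the plan's central step is exactly the part you leave unconstructed. Everything hinges on engineering an \texttt{RNN-ReLU} and an index $i$ such that the single value $\phi_i(f,\x,i,\mathcal{D}_p)$ is a strictly monotone, closed-form function of a Subset-Sum-type count, with the threshold separation surviving both the interventional averaging over $\z_{\bar S}$ and the Shapley mixture over coalition sizes. You correctly identify the obstacle (cancellations in the alternating sum $v(S\cup\{i\})-v(S)$, ReLU activation regions shifting as the coalition varies), but you only gesture at a gadget (``perhaps routing the decision feature $i$ through a gate\dots'') without showing it exists or that its Shapley value admits a tractable closed form. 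Deriving such a closed form is genuinely hard: the Shapley weights $\frac{|S|!(n-|S|-1)!}{n!}$ mix expectations of a nonlinear accumulator at every coalition size, and nothing in your sketch shows the yes/no instances stay separated by a threshold you can compute in polynomial time. As written, steps (ii) and (iii) of your assembly cannot be carried out, so the argument does not go through.

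The paper's proof avoids this difficulty entirely by never analyzing any individual Shapley value. It invokes the \emph{efficiency axiom}: for a binary sequential classifier under the uniform distribution, $\sum_{i=1}^{n} \phi_i(f,\x,i,P_{unif}) = f(\x) - P_{unif}^{(n)}(f(\x)=1)$. Since $f(\x)$ is evaluable by a forward pass, $n$ calls to a \texttt{LOC-I-SHAP} oracle reveal $P_{unif}^{(n)}(f=1)$, which is zero iff $f$ is empty on $\Sigma^n$; this reduces the emptiness problem \texttt{EMPTY}$(\mathcal{F})$ to \texttt{LOC-I-SHAP}$(\mathcal{F},\texttt{IND})$ for \emph{any} class of sequential binary classifiers (Proposition~\ref{prop:modelcountingshap}). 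The model-specific work is then confined to showing \texttt{EMPTY}\texttt{(RNN-ReLu)} is NP-hard, which the paper does by reducing from the closest string problem: an RNN cell per reference string accumulates the Hamming distance in a designated neuron and thresholds it at $k$, so the concatenated network accepts exactly the solution strings, and nonemptiness coincides with a ``yes'' CSP instance. This two-stage structure --- efficiency axiom first, then a purely combinatorial emptiness reduction --- is the missing idea that lets the hardness argument bypass the cancellation analysis your approach would require.
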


Recall that an RNN-ReLu model $R$ over $\Sigma$ is parametrized as: $\langle h_{init}, W, \{v_{\sigma}\}_{\sigma \in \Sigma}, O\rangle$ and processes a sequence sequentially from left-to-right such that $h_{\epsilon} = h_{init}$, $h_{w'\sigma} = \emph{ReLu}(W \cdot h_{w'} + v_{\sigma} )$, and $f_{R}(w) = I(O^{T} \cdot h_{w} \geq 0)$. The proof of Theorem~\ref{thm:relushap} leverages the efficiency axiom of Shapley values. Specifically, for a sequential binary classifier $f$ over alphabet $\Sigma$, an integer $n$, and $i \in [n]$, the efficiency property can be expressed as~\citep{arenas23}:
\begin{equation} \label{eq:efficiency}
\sum\limits_{i=1}^{n} \phi_i(f,\x,i,P_{unif}) =  f(\x) - 
 P_{unif}^{(n)} (f(\x) = 1) \end{equation}


where $P_{unif}$ denotes the uniform distribution over $\Sigma^{\infty}$. This property will be used to reduce the $\texttt{EMPTY}$ problem to the computation of interventional SHAP in our context. The $\texttt{EMPTY}$ problem is defined as follows: Given a set of models $\mathcal{F}$, $\texttt{EMPTY}$ takes as input some $f \in \mathcal{F}$ and an integer $n > 0$ and asks if $f$ is empty on the support $\Sigma^{n}$ (i.e., is the set $\{ \x \in \Sigma^{n}:~f(\x) = 1 \}$ empty?). The connection between local interventional SHAP and the emptiness problem is outlined in the following proposition:




\begin{proposition} \label{prop:modelcountingshap}
Let $\mathcal{F}$ be a class of sequential binary classifiers. Then, \emph{\texttt{EMPTY($\mathcal{F}$)}} can be reduced in polynomial time to the problem $\emph{\texttt{LOC-I-SHAP}}(\mathcal{F}, \emph{\texttt{IND}})$.
\end{proposition}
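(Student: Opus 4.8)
The plan is to reduce the \texttt{EMPTY} problem for a class $\mathcal{F}$ of sequential binary classifiers to computing local interventional SHAP under the independent (specifically uniform) distribution. The key observation is Equation~\ref{eq:efficiency}: summing the local interventional SHAP values over all features $i \in [n]$ telescopes to $f(\x) - P_{unif}^{(n)}(f(\x)=1)$, where the second term is exactly (up to the normalizing factor $|\Sigma|^n$) the count of accepting inputs in $\Sigma^n$. Thus if we could compute each $\phi_i(f,\x,i,P_{unif})$, we could sum them and recover information about whether the accepting set is empty.

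First I would fix an arbitrary reference input $\x \in \Sigma^n$ on which to evaluate the local SHAP values; the natural choice is any concrete string, since the efficiency identity holds for every $\x$. The term $f(\x)$ is computable in polynomial time by simply running the classifier forward. Therefore, given oracle (or poly-time) access to each interventional SHAP value, one computes $f(\x) - \sum_{i=1}^{n}\phi_i(f,\x,i,P_{unif})$, which by Equation~\ref{eq:efficiency} equals $P_{unif}^{(n)}(f(\x)=1) = |\{\x \in \Sigma^n : f(\x)=1\}| / |\Sigma|^n$. Since this probability is zero if and only if the accepting set over $\Sigma^n$ is empty, a single comparison of this quantity against $0$ decides \texttt{EMPTY}. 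The whole reduction makes $n$ calls to the local interventional SHAP routine (one per feature) plus a polynomial amount of bookkeeping, so it is a polynomial-time Turing reduction as required.

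The main subtlety I anticipate concerns the \emph{decision} versus \emph{functional} framing: the statement reduces \texttt{EMPTY} to \texttt{LOC-I-SHAP}, so I must be careful that the reduction only invokes the SHAP problem in its decision (or poly-time-computable) form and that summing $n$ rational outputs does not blow up bit-complexity. Because $P_{unif}$ is the uniform distribution, each SHAP value is a rational number with polynomially-bounded denominators (powers of $|\Sigma|$ and factorials bounded by $n!$), so the sum of $n$ such values, and its comparison with $f(\x)$, stays within polynomial bit-size. I would also verify that the uniform distribution genuinely lies in the \texttt{IND} class (it factorizes as a product of identical per-feature uniform marginals), so that the constructed instance is a legal input to \texttt{LOC-I-SHAP}$(\mathcal{F}, \texttt{IND})$.

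The hardest part of the argument is ensuring the efficiency identity applies to interventional SHAP under the uniform distribution in exactly the form of Equation~\ref{eq:efficiency}, rather than merely to conditional SHAP. Since the distribution is independent, interventional and conditional value functions coincide (as noted earlier in the excerpt, $v_i = v_c$ under feature independence), so the efficiency axiom transfers directly and the baseline term is precisely the uniform acceptance probability. Once this alignment is established, the reduction is essentially immediate, and the proposition follows. Combining it with Theorem~\ref{thm:relushap}'s source hardness would then require exhibiting a class $\mathcal{F}$ (e.g.\ \texttt{RNN-ReLU}) for which \texttt{EMPTY} is itself NP-Hard, which is the role this proposition plays in the larger proof.
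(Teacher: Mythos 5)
Your proposal is correct and follows essentially the same route as the paper: invoke the efficiency identity (Equation~\ref{eq:efficiency}) at an arbitrary input $\x$, make $n$ calls to the \texttt{LOC-I-SHAP} oracle under the uniform distribution (a legal \texttt{IND} instance), and observe that $f(\x)-\sum_{i=1}^{n}\phi_i(f,\x,i,P_{unif})$ equals the uniform acceptance probability, which is zero iff $f$ is empty on $\Sigma^n$. Your additional remarks on bit-complexity and on why efficiency transfers to interventional SHAP under independence are sound refinements of the paper's argument rather than departures from it.
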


Proposition \ref{prop:modelcountingshap} suggests that proving the NP-Hardness of the problem $\texttt{EMPTY}(\texttt{RNN-ReLu})$ is a sufficient condition to yield the result of Theorem \ref{thm:relushap}. This condition is asserted in the following lemma:
\begin{lemma} \label{lemma:emptyrnnrelu}
\emph{\texttt{EMPTY}}\emph{\texttt{(RNN-ReLu)}} is NP-Hard. 
\end{lemma}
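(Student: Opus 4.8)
The plan is to prove Lemma~\ref{lemma:emptyrnnrelu} by a polynomial-time reduction from \texttt{SUBSET-SUM}, a canonical NP-complete problem: given nonnegative integers $a_1,\dots,a_m$ (written in binary) and a target $t$, decide whether some $I\subseteq[m]$ satisfies $\sum_{j\in I}a_j=t$. I would fix the binary alphabet $\Sigma=\{0,1\}$ and set the sequence length to $n=m+O(1)$, so that the first $m$ symbols of an input $w\in\{0,1\}^n$ encode the indicator vector of a candidate subset $I=\{j:w_j=1\}$. The goal is to build, in time polynomial in the bit-length of the instance, an \texttt{RNN-ReLu} $R=\langle h_{init},W,\{v_\sigma\},O\rangle$ whose hidden state simulates the computation of $\mathrm{sum}=\sum_{j:w_j=1}a_j$ and then tests $\mathrm{sum}=t$, so that $\{w\in\Sigma^n:f_R(w)=1\}$ is nonempty exactly when the \texttt{SUBSET-SUM} instance is solvable.

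The construction rests on three gadgets, all expressible with a single linear map followed by $\mathrm{ReLU}$. First, a \emph{shift register}: I initialize a block of $h_{init}$ to $(a_1,\dots,a_m)$ and let $W$ shift it by one coordinate per step, so that a designated ``front'' coordinate holds $a_j$ precisely when the $j$-th symbol is read; since all $a_j\ge 0$ and shifting preserves nonnegativity, $\mathrm{ReLU}$ acts as the identity on this block and faithfully implements the queue. This is the mechanism that supplies position-dependent weights even though $W$ and the embeddings $v_\sigma$ are position-independent. Second, a \emph{masking trick} for the multiplication $w_j a_j$: I route the embeddings so that one transient coordinate receives a large constant $M>\sum_j a_j$ exactly when $w_j=0$ (inject $M$ via $v_0$ and $0$ via $v_1$), and a coordinate then computes $\mathrm{ReLU}\bigl(\mathrm{front}-M\cdot I(w_j=0)\bigr)$, which equals $a_j$ if $w_j=1$ and $0$ otherwise. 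A separate accumulator coordinate adds this gated value on the following step by a purely linear (hence $\mathrm{ReLU}$-transparent, as the running sum stays nonnegative) update; a constant number of padding symbols flushes this one-step pipeline. Third, an \emph{exact-equality test} compatible with the single threshold $f_R(w)=I(O^\top h_w\ge 0)$: after the $m$ weight symbols are read the accumulator holds $\mathrm{sum}$, and in the padding steps I compute $g=\mathrm{ReLU}(\mathrm{sum}-t)+\mathrm{ReLU}(t-\mathrm{sum})=|\mathrm{sum}-t|$ and set $O^\top h=-g$, so that $f_R(w)=I(-g\ge 0)=I(\mathrm{sum}=t)$ because $g\ge 0$ with equality iff $\mathrm{sum}=t$.

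The two crux points are exactly where the gadgets are tailored: the symbol embeddings $v_\sigma$ and transition matrix $W$ are position-independent, so $R$ must carry positional information itself (handled by the shift register delivering $a_j$ at step $j$), and a single $\mathrm{ReLU}$ step cannot directly multiply an input bit by a state value (handled by the big-$M$ masking that converts $w_j a_j$ into a thresholded difference). The remaining work---checking that $M$, the $a_j$, and $t$ have polynomially many bits, that the one-step pipeline with $O(1)$ padding preserves the count, and that the final threshold reads $-g$---is routine but is where care is needed. The construction shows that $R$ is nonempty on $\Sigma^n$ iff the \texttt{SUBSET-SUM} instance is a yes-instance; since \texttt{SUBSET-SUM} is NP-complete and the reduction of Proposition~\ref{prop:modelcountingshap} is a polynomial-time (Turing) reduction from \texttt{EMPTY} to \texttt{LOC-I-SHAP}, this establishes the NP-hardness of \texttt{EMPTY(RNN-ReLu)} and hence, via Proposition~\ref{prop:modelcountingshap}, of Theorem~\ref{thm:relushap}.
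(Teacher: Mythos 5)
Your proposal is correct, but it takes a genuinely different route from the paper. The paper proves Lemma~\ref{lemma:emptyrnnrelu} by reducing from the \emph{closest string problem}: for each reference string $w \in S$ it builds an RNN cell whose hidden coordinates track the running Hamming distance $d_H(w_{1:s}, w'_{1:s})$ (the per-position mismatch indicators are baked directly into the symbol embeddings $v_\sigma$, so no multiplication gadget is needed), caps it against $k$ with a single $\mathrm{ReLU}(d_H - k)$, concatenates the $|S|$ cells, and accepts iff $\tfrac{1}{2} - \sum_i \mathrm{ReLU}(d_H(w^{(i)}, w') - k) \ge 0$, i.e., iff all distances are at most $k$. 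You instead reduce from \texttt{SUBSET-SUM}, which forces you to solve two problems the paper's choice of source problem avoids: multiplying an input bit $w_j$ by a stored weight $a_j$ (your big-$M$ gating gadget) and testing \emph{exact} equality with the target $t$ (your $|{\rm sum}-t| = \mathrm{ReLU}({\rm sum}-t)+\mathrm{ReLU}(t-{\rm sum})$ trick, read negatively at the output threshold), whereas Hamming-distance counting needs only $0/1$ increments and a one-sided threshold. Both constructions hinge on the same core insight --- since $W$ and $v_\sigma$ are position-independent, positional information must be carried by the hidden state via a shift-register --- and both establish hardness by reducing an NP-complete problem to \emph{non}-emptiness, matching the paper's (slightly loose) convention for calling \texttt{EMPTY} NP-hard. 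What each buys: your reduction starts from a more canonical NP-complete problem and yields a single cell of dimension $O(m)$, while the paper's yields structurally simpler gadgets (no gating, no equality test) at the cost of $|S|$ concatenated cells and reliance on the NP-hardness of \texttt{CSP}. One point you flag as routine that genuinely needs the check: the padding bits are adversarial (they are part of the input on $\Sigma^n$), but your gating saves you --- after $m$ shifts the register front is $0$, so the gated increment is $\mathrm{ReLU}(0 - M \cdot I(\sigma = 0)) = 0$ for either padding symbol, and the accumulator is preserved.
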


The proof of Lemma~\ref{lemma:emptyrnnrelu} is done via a reduction from the \emph{closest string problem} (CSP), which is NP-Hard~\citep{li2000closest}. CSP takes as input a set of strings $S := \{w_{i}\}_{i \in [m]}$ of length $n$ and an integer $k > 0$. The goal is to determine if there exists a string $w' \in \Sigma^{n}$ such that for all $w_{i} \in S$, $d_{H}(w_{i}, w') \leq k$, where $d_{H}(.,.)$ is the Hamming distance: $d_{H}(w,w') := \sum\limits_{i=1}^{ [|w|]} \mathrm{1}_{w_{j}}(w'_{j})$. We conclude our reduction by proving the following proposition, with the proof in Appendix~\ref{app:ISHAPRNN}:


\begin{proposition}
    \emph{\texttt{CSP}} can be reduced in polynomial time to \emph{\texttt{EMPTY}}\emph{\texttt{(RNN-ReLu)}}.
\end{proposition}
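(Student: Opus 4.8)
The goal is to reduce the closest string problem (CSP) to $\texttt{EMPTY}(\texttt{RNN-ReLu})$. Given a CSP instance with strings $S=\{w_1,\ldots,w_m\}\subseteq\Sigma^n$ and a threshold $k$, I would construct an RNN-ReLU $R$ over $\Sigma$ and an integer $N$ such that the acceptance set $\{\x\in\Sigma^N : f_R(\x)=1\}$ is nonempty if and only if there exists a center string $w'$ with $d_H(w_i,w')\le k$ for all $i$. The plan is to make the RNN, on input $w'$, simultaneously track the Hamming distance $d_H(w_i,w')$ to every one of the $m$ reference strings, and then accept iff the maximum of these distances is at most $k$.

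The central construction is encoding $m$ running Hamming counters in the hidden state. As the RNN reads $w'$ symbol by symbol from left to right, at position $j$ it must increment the $i$-th counter exactly when $w'_j\neq (w_i)_j$. The hidden state $h_{w'_{1:j}}$ should hold the vector of partial distances $\big(d_H((w_i)_{1:j},w'_{1:j})\big)_{i\in[m]}$. I would realize the per-symbol increment using the position-dependent information baked into the input-symbol vectors $v_\sigma$ together with the recurrent matrix $W$. Since the comparison at position $j$ depends on $j$, I would augment the hidden state with a position counter (or a one-hot/phase register advanced by $W$) so that the matrix $W$ can route the correct ``mismatch'' increment for each reference string at each step; equivalently, one can pad the alphabet or the hidden dimension to carry the index $j$ and the $m$ reference symbols $(w_i)_j$. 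The key point is that incrementing a counter by $1$ whenever symbols disagree, and by $0$ otherwise, is an affine operation that ReLU preserves on nonnegative counts, so the recurrence $h_{w'\sigma}=\text{ReLu}(W\cdot h_{w'}+v_\sigma)$ can implement it faithfully with polynomially many hidden units.

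After reading all $n$ symbols, the final hidden state holds the $m$ distances $d_H(w_i,w')$. The output gate must then check whether all of them are $\le k$, i.e. whether $\max_i d_H(w_i,w')\le k$. Since the output is a single threshold $I(O^T h_w\ge 0)$, I would add a short sequence of ReLU layers (unrolled over a few extra ``padding'' symbols that do not change the counters) to compute $\max_i d_H(w_i,w') - k$ via iterated $\max(a,b)=a+\text{ReLu}(b-a)$, and then set the output gate to accept iff this quantity is $\le 0$. This makes $N=n+O(\log m)$ or $N=n+O(m)$ depending on how the max is aggregated, still polynomial. Correctness is immediate: $f_R$ accepts some string of length $N$ iff some center $w'$ satisfies all distance constraints, which is exactly the CSP instance being a yes-instance.

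The main obstacle is handling the position-dependence of the comparisons within the stationary recurrence. A standard RNN-ReLU has a single fixed matrix $W$ and fixed vectors $v_\sigma$ reused at every step, so the comparison data ``$(w_i)_j$ for all $i$'' at step $j$ cannot simply be looked up unless the hidden state itself carries an evolving clock that $W$ can exploit. I expect the bulk of the technical work to be in designing this clock register and verifying that a single $W$ and the symbol vectors $\{v_\sigma\}$ can, step after step, select the right increments for all $m$ counters simultaneously while keeping all quantities nonnegative so that ReLU acts as the identity on the counter coordinates. Once the clocked counter gadget is in place, the final $\max$-vs-$k$ comparison and the reduction's correctness and polynomial-time bounds follow routinely.
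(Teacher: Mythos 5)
Your reduction has the same skeleton as the paper's: construct an RNN-ReLU that accepts exactly the solutions of the \texttt{CSP} instance, so that the acceptance set over $\Sigma^{N}$ is nonempty iff a center string exists. That part is fine. The genuine gap is that the one mechanism the whole proof hinges on --- making the position-dependent mismatch test available to a \emph{stationary} recurrence --- is exactly what you postpone as ``the bulk of the technical work,'' and the one concrete claim you make about it is false. The per-step increment you need is $1_{(w_i)_j \neq w'_j}$, i.e.\ the XOR of the current input symbol with a reference bit that must be selected by your clock; this is a bilinear interaction between the hidden state (the position register) and the input, not ``an affine operation that ReLU preserves.'' One step of the recurrence $h' = \mathrm{ReLu}(W h + v_\sigma)$ is a ReLU of an affine function of $(h,\sigma)$, and XOR is not expressible this way. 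Your clock-routing idea can be rescued, but only by adding a product gadget (e.g.\ $ab = \mathrm{ReLu}(a+b-1)$ for bits, or $\mathrm{XOR}(a,b)=\mathrm{ReLu}(a-b)+\mathrm{ReLu}(b-a)$ spread over auxiliary neurons), a one-step pipeline delay between detector neurons and accumulator neurons, and a final flush step; none of this appears in the proposal, and it is precisely the content of the theorem.

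The paper sidesteps the clock entirely with a shift-register construction, which is the idea your proposal is missing. For each reference string $w$ it builds a cell of dimension $n+1$ whose transition matrix shifts coordinate $l$ into coordinate $l+1$, and whose input vectors statically encode the \emph{entire} mismatch profile: $v_\sigma[l]=1$ iff $w_l \neq \sigma$. The running distance then travels along the coordinates: after $s$ symbols it sits at coordinate $s$, and the update $h_{w'\sigma}[s+1]=\mathrm{ReLu}\left(h_{w'}[s]+v_\sigma[s+1]\right)$ automatically picks out the correct entry of $v_\sigma$, because the counter has moved to a fresh coordinate --- position-dependence is achieved by moving the data, not by gating it. A constant-$1$ neuron injects $-k$ at the last shift, so coordinate $n$ ends at $\mathrm{ReLu}\left(d_{H}(w,w')-k\right)$; block-concatenating the $m$ cells and thresholding $\sum_{i} -h_i[n] + \tfrac{1}{2} \geq 0$ accepts exactly when all $m$ distances are at most $k$. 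Note also that your final max-aggregation over extra padding steps is unnecessary: since $\max_i d_i \leq k$ iff $\sum_i \mathrm{ReLu}(d_i - k) = 0$, a linear read-out of the per-cell ReLUs suffices, and the support length can stay equal to $n$.
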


\subsection{When Baseline SHAP is Hard} \label{subsec:baseline}
In this subsection, we turn our focus to Baseline SHAP, which can be seen as a special case of Interventional SHAP when confined to empirical distributions induced by a reference input $\x^{\text{reff}}$. Hence, it is expected to be computationally simpler to compute than Interventional SHAP. However, we identify specific scenarios where calculating Baseline SHAP remains computationally challenging:
\begin{theorem} \label{thm:intractable} \begin{inparaenum}[(i)] \item Unless P=co-NP, the problem \texttt{\emph{LOC-B-SHAP}}\texttt{\emph{(NN-SIGMOID)}} can not be solved in polynomial time; \item The decision versions of the problems $\emph{\texttt{LOC-B-SHAP}\texttt{(RNN-ReLu)}}$ and \emph{\texttt{LOC-B-SHAP}$\texttt{(ENS-DT}_{\texttt{C}}\texttt{)}$} 
are co-NP-Hard and NP-Hard respectively.
    \end{inparaenum}
\end{theorem}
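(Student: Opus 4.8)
The plan is to exploit the fact that baseline SHAP only ever evaluates $f$ on inputs of the form $(\x_{S};\z^{\text{ref}}_{\bar S})$, i.e., on the $2^{n}$ vertices of the hypercube spanned by $\x$ and $\z^{\text{ref}}$. Encoding a coalition $S$ by its indicator $b=\mathbf{1}_{S}\in\{0,1\}^{n}$ and writing $g(b)\myeq f(\x_{S};\z^{\text{ref}}_{\bar S})$, the quantity $\phi_b(f,\x,i,\z^{\text{ref}})$ is exactly the classical binary Shapley value of the pseudo-Boolean function $g$. This lets me argue entirely over $\{0,1\}^{n}$ and reduce Boolean satisfiability-type problems to a single SHAP evaluation.

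Second, I would set up a \emph{selector gadget}. Given a formula $\varphi$ over $m$ variables, add one extra ``selector'' feature, and fix $\x=\mathbf{1}$ and $\z^{\text{ref}}=\mathbf{0}$, so that $g(b)=f(b)$ and a coalition of the $m$ variable-features directly indexes a truth assignment. Building a model that computes $f(s,x)=s\wedge\varphi(x)$, the marginal contribution of the selector over a coalition $S$ collapses to $\varphi(\mathbf{1}_{S})$, whence
\[
\phi_b(f,\x,\text{selector},\z^{\text{ref}})=\sum_{S\subseteq[m]} w_{|S|}\,\varphi(\mathbf{1}_{S}),\qquad w_{k}=\tfrac{k!\,(m-k)!}{(m+1)!}>0 .
\]
Since all weights are strictly positive, this value is positive iff $\varphi$ is satisfiable and zero iff $\varphi$ is unsatisfiable; computing (or thresholding) a single baseline SHAP value thus decides SAT / UNSAT.

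Third, I would realize the gadget $s\wedge\varphi$ (or $s\wedge\neg\varphi$) inside each model class, choosing the direction of the reduction to match the claimed class. For $\texttt{ENS-DT}_{\texttt{C}}$ I would reduce from SAT, encoding a CNF $\varphi$ by one shallow tree per clause and tuning the voting/threshold aggregation so that the ensemble outputs $1$ iff the selector is on and every clause is satisfied; then ``$\phi_b>0$'' decides satisfiability, giving NP-hardness. For \texttt{RNN-ReLu} I would reduce from a coNP-complete problem (validity/UNSAT): the network reads the assignment symbol by symbol while its ReLU state maintains a running count of violated clauses, accepting iff no clause is violated, so that the vanishing of the selector's SHAP value decides validity and yields coNP-hardness. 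For \texttt{NN-SIGMOID}, a regression model whose sigmoid outputs never attain exact Boolean values, I would encode the same CNF check with sigmoids forced close to $0/1$ and argue that an exact polynomial-time value would decide the universal (coNP) property, which is the source of the ``unless P$=$coNP'' conditional.

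The combinatorial core (the hypercube reformulation and the selector identity) is routine; the substance lies in the three encodings. The hardest steps will be making the hard/weighted vote of $\texttt{ENS-DT}_{\texttt{C}}$ simulate a conjunction while keeping the SHAP threshold intact, tracking CNF satisfaction purely through ReLU recurrences with polynomially bounded state magnitudes, and---most delicately---the sigmoid case, where I must bound the approximation error accumulated across the exponentially many terms of the Shapley sum so that the sign (resp. vanishing) of the \emph{exact} value still faithfully encodes validity; this numerical-robustness argument is the crux of the conditional hardness statement.
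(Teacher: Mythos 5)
Your proposal is correct in spirit, and for $\texttt{ENS-DT}_{\texttt{C}}$ it is essentially the paper's own proof: the paper also adds a selector feature $x_{n+1}$, sets $\x=\mathbf{1}$, $\x^{\text{ref}}=\mathbf{0}$, builds one tree per clause computing $C_j \land x_{n+1}$, and pads with $m-1$ null trees so that majority voting realizes the conjunction; its Lemma~\ref{lemma:sat2bshap} is exactly your positive-weights identity showing $\phi_b>0$ iff $\Psi$ is satisfiable. Where you genuinely diverge is in the two neural-network cases. The paper does \emph{not} reduce from SAT/UNSAT there; it reduces from the Dummy Player problem for Weighted Majority Games (co-NP-complete). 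This choice buys a lot of simplicity: a WMG value function is a single linear threshold of the coalition indicator, so the \texttt{NN-SIGMOID} instance is just one sigmoid neuron $\sigma\bigl(c\cdot(\sum_j n_j x_j - q + \tfrac12)\bigr)$, and the \texttt{RNN-ReLu} instance merely accumulates the running weighted sum of votes in its hidden state and thresholds it at the output --- no Boolean circuitry inside the recurrence at all. The quantitative core of the paper's sigmoid argument is the same one you identify as your crux: a threshold $\epsilon = 1/(1+C_N)$ calibrated against the smallest Shapley weight, so that ``dummy'' forces every marginal contribution (hence $\phi_b$) below $\epsilon$, while ``not dummy'' contributes a single term of weight at least $1/C_N$ and value at least $1-\epsilon$. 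Your CNF-based plan can be pushed through with the same calibration (the SAT/UNSAT gap is exponentially small but has polynomial bit-length, and sigmoid saturation error can be made $2^{-\mathrm{poly}}$ with polynomial-size weights), and P$=$NP iff P$=$coNP makes the conditional statements interchangeable, so there is no logical obstruction --- it is simply more work than the paper's route.

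One point you should not gloss over if you pursue your version: in the paper's \texttt{RNN-ReLu} model the update is $h_{w'\sigma}=\mathrm{ReLU}(W h_{w'} + v_\sigma)$ with a \emph{position-independent} embedding $v_\sigma$ and a \emph{single} linear threshold at the output. Your ``running count of violated clauses'' therefore cannot be implemented naively: knowing which variable the current symbol instantiates requires the shift-matrix trick (as in the paper's closest-string construction for Theorem~\ref{thm:relushap}), one counter chain per clause, and then an AND over $m$ clause indicators --- which a single output threshold cannot express directly, forcing an extra ReLU step (e.g., reading the selector last and computing $\mathrm{ReLU}(1-\text{count}_j)$ before summing violations). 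This is all realizable in polynomial size, but it is precisely the machinery the paper's weighted-majority-game reduction avoids, since that game \emph{is} a linear threshold function to begin with.
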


We begin with results for \texttt{LOC-B-SHAP}(\texttt{NN-SIGMOID}) and \texttt{LOC-B-SHAP}(\texttt{RNN-ReLu}). Our proofs reduce from the Dummy Player problem of Weighted Majority Games~\citep{freixas2011complexity}:  

 
\begin{definition} \label{def:wmg}
A Weighted Majority Game (WMG) $G$ is a coalitional game defined by the tuple $  \langle N,\{n_{i}\}_{i \in [N]}, q\rangle$, where: \begin{inparaenum}[(i)] \item $N$ is the number of players; \item $n_{i}$ is the voting power of player $i$, for $i \in [N]$; \item $q$ is the winning quota. \end{inparaenum} The value function $v_{G}(S)$ is 1 if $\sum_{i \in S} n_{i} \geq q$, and 0 otherwise.

\end{definition}


A \emph{dummy player} in a WMG $G$ is one whose voting power adds no value to any coalition. Formally, $i$ is a dummy in $G$ if $\forall S \subseteq [N] \setminus \{i\}$, it holds that $v_{G}(S \cup \{i\}) = v_{G}(S)$. Determining if a player $i$ is a dummy in $G$ is co-NP-Complete \citep{freixas2011complexity}.

\textbf{Reducing the dummy problem in WMG to both of the problems \texttt{LOC-B-SHAP}(\texttt{NN-SIGMOID}) and \texttt{LOC-B-SHAP}(\texttt{RNN-ReLu}).} 
Informally, given a WMG $G := \langle N, \{n_{j}\}_{j \in [N]}, q\rangle$, the reduction constructs a model $f_{G}$ over the input set $\{0,1\}^{N}$ from the target model family to simulate $G$'s value function using chosen input instances $\x,\x^{\text{reff}} \in \{0,1\}^{n}$, i.e., $f_{G}(\x_{S}; \x_{\bar{S}}^{\text{reff}}) \approx v(S)$. The properties of this reduction for both model families are formally stated as follows:

\begin{proposition} \label{prop:reductionsigmoid} 

There are poly-time algorithms that: \begin{inparaenum}[(i)] \item Given a WMG $G$ and player $i$, return a sigmoidal neural network $f_{G}$ over $\{0,1\}^{N}$, $\x,\x^{\text{reff}} \in \{0,1\}^{N}$ and $\epsilon \in \mathbb{R}$ such that $i$ is not dummy iff $\phi_{b}(f_{G},i,\x,\x^{\text{reff}}) > \epsilon$; \item Given a WMG $G$ and player $i$, return an RNN-ReLU $f_{G}$ over $\{0,1\}^{N}$, $\x,\x^{\textbf{reff}} \in \{0,1\}^{N}$ such that $i$ is not dummy iff $\phi_{b}(f_{G},i,\x,\x^{\text{reff}}) > 0$. \end{inparaenum}

\end{proposition}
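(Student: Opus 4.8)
The plan is to reduce the co-NP-complete Dummy Player problem for WMGs (Definition~\ref{def:wmg}) to each baseline-SHAP problem through one unified gadget, exploiting the fact that Baseline SHAP with an all-ones instance and an all-zeros reference coincides \emph{exactly} with the game-theoretic Shapley value of the underlying game. Concretely, in both constructions I would set $\x = \mathbf{1}$ and $\x^{\text{reff}} = \mathbf{0}$, so that for every $S \subseteq [N]$ the masked input $(\x_S; \x^{\text{reff}}_{\bar S})$ is precisely the $0/1$ indicator vector of $S$. If the constructed model satisfies $f_G(\text{indicator of } S) = v_G(S)$ for all $S$, then unfolding the definitions of $v_b$ and $\phi_b$ shows that $\phi_b(f_G,\x,i,\x^{\text{reff}})$ equals $\sum_{S \subseteq [N]\setminus\{i\}} \frac{|S|!(N-|S|-1)!}{N!}\,[v_G(S\cup\{i\}) - v_G(S)]$, i.e. the Shapley value of player $i$ in $G$. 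Since $v_G$ is monotone, every marginal contribution lies in $\{0,1\}$ and is nonnegative, so this sum is nonnegative and vanishes precisely when all marginals vanish --- that is, exactly when $i$ is a dummy. Hence $i$ is \emph{not} dummy iff $\phi_b > 0$, and in the non-dummy case the value is at least $c_N := \min_{0 \le s \le N-1}\frac{s!(N-1-s)!}{N!} > 0$ (one marginal equals $1$ and every weight is positive). Because the Dummy problem is co-NP-complete, this equivalence is exactly what makes the \emph{computation} of $\phi_b$ co-NP-hard.

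The next step, settling part (ii) with $\epsilon = 0$, is to realize $f_G(\x) = I(\sum_j n_j x_j \ge q)$ \emph{exactly} as an RNN-ReLU. The difficulty is that this is a \emph{position-dependent} weighted accumulation, which a tied-weight recurrence cannot perform through a multiplicative interaction between state and input. The trick is that ReLU can \emph{gate}: for a sufficiently large constant $M$, one has $\mathrm{ReLU}(n_j + M(x_j - 1)) = n_j x_j$, since the bias cancels when $x_j = 1$ and the argument clips to $0$ when $x_j = 0$. I would therefore preload the weight queue $(n_1,\ldots,n_N)$ into $h_{init}$, let $W$ shift this queue forward one slot per step so that $n_j$ sits at the front when symbol $x_j$ is read, compute the gated increment $n_j x_j$ in a dedicated coordinate, and accumulate it into a running-sum coordinate (a nonnegative linear combination that passes through ReLU unchanged). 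The output neuron then tests $\sum_j n_j x_j - q \ge 0$. The one-step lag between gating and accumulation is absorbed by an extra padding step and careful indexing, which is routine.

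For part (i) I would implement $f_G$ as a sigmoidal network, which cannot realize the hard threshold exactly, so $v_G$ is only approximated and $\phi_b$ must be separated from $0$ by a positive $\epsilon$. The preactivation $\lambda(\sum_j n_j x_j - q + \tfrac{1}{2})$ separates at-quota inputs (value $\ge \lambda/2$) from below-quota inputs (value $\le -\lambda/2$) by the integer gap, so a single sigmoid matches $v_G(S)$ within $\delta \le e^{-\lambda/2}$ uniformly over all $2^N$ inputs. Crucially, because the Shapley weights sum to $1$, the error propagated into $\phi_b$ is at most $2\delta$ \emph{regardless} of the exponentially many coalitions; thus the dummy case gives $\phi_b \le 2\delta$ and the non-dummy case gives $\phi_b \ge c_N - 2\delta$, so any $\epsilon \in (2\delta,\, c_N - 2\delta)$ yields the desired equivalence. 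Polynomiality is preserved since $c_N \ge 2^{-N}/N$, so it suffices to enforce $\delta < c_N/4$, which requires only $\lambda = O(N)$ --- a polynomially large scaling drives the approximation to the needed exponentially small precision.

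I expect the main obstacle to be this error analysis for the sigmoid construction: one must verify that a polynomial-size, polynomial-bit network pushes the per-input approximation error below the exponentially small minimal Shapley weight $c_N$, and the observation that rescues polynomiality is that the weights normalize to $1$, so the aggregate error stays $O(\delta)$ rather than scaling with the number of coalitions. The position-dependent gating in the RNN is the other delicate point, but once the identity $\mathrm{ReLU}(n_j + M(x_j-1)) = n_j x_j$ is in place, the remaining construction is mechanical.
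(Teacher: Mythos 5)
Your proposal is correct and follows essentially the same route as the paper: the same choice $\x=\mathbf{1}$, $\x^{\text{reff}}=\mathbf{0}$ so that baseline SHAP coincides with the game-theoretic Shapley value of the (monotone) WMG, the same single-sigmoid network $\sigma\left(\lambda\cdot(\sum_j n_j x_j - q + \tfrac{1}{2})\right)$ with the same weights-sum-to-one error analysis and a threshold $\epsilon$ calibrated against the minimal Shapley kernel weight, and an exact poly-size RNN-ReLU simulation of $v_G$ for part (ii). The only divergence is in the RNN gadget's internals: the paper propagates the running sum along position-indexed coordinates via a shift matrix, with the stake $n_j$ supplied directly by the embedding vector $v_1$ (so no gating is needed), whereas you preload the weight queue into $h_{init}$ and gate it through $\mathrm{ReLU}(n_j + M(x_j-1)) = n_j x_j$ --- both constructions are valid.
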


The complexity of computing \texttt{LOC-B-SHAP}(\texttt{SIGMOID}) and \texttt{LOC-B-SHAP}(\texttt{RNN-ReLu}) from Theorem \ref{thm:intractable} are corollaries of Proposition \ref{prop:reductionsigmoid}.



\textbf{The problem \texttt{LOC-B-SHAP}($\texttt{ENS-DT}_{\texttt{C}}$) is NP-Hard.} 
This segment is dedicated to proving the remaining point of Theorem \ref{thm:intractable} stating the NP-Hardness of \texttt{LOC-B-SHAP}($\texttt{ENS-DT}_{\texttt{C}}$). We prove this by reducing from the classical NP-Complete 3-SAT problem. The reduction strategy is illustrated in Algorithm \ref{alg:SAT2b-SHAP}.

\begin{algorithm}
\caption{\texttt{3-SAT} to \texttt{LOC-B-SHAP}($\texttt{ENS-DT}_{\texttt{C}}$)}
\label{alg:SAT2b-SHAP}
\begin{algorithmic}[1]
\REQUIRE A CNF Formula $\Psi$ of $m$ clauses 
\ENSURE An instance of \texttt{LOC-B-SHAP}($\texttt{ENS-DT}_{\texttt{C}}$)
\STATE $\x \leftarrow [1, \ldots , 1]$
\STATE $\x^{\text{reff}} \leftarrow [0, \ldots , 0]$
\STATE $i \leftarrow n+1$
\STATE $\mathcal{T} \leftarrow \emptyset$
\FOR{$j \in [1,m]$}
 \STATE Construct a DT $T_{j}$ that assigns $1$ to variable assignments satisfying the formula: $C_{j} \land \x_{n+1}$.
 \STATE $\mathcal{T} \leftarrow \mathcal{T} \cup \{T\}_{j}$
\ENDFOR
 \STATE Construct a null decision tree $T_{\text{null}}$ that assigns a label $0$ to all variable assignments
 \STATE Add $m-1$ copies of $T_{\text{null}}$ to $\mathcal{T}$
\RETURN $\langle\mathcal{T},i,\x,\x^{\text{reff}}\rangle$
\end{algorithmic}
\end{algorithm}

The next proposition establishes a property of $\texttt{ENS-DT}_{\texttt{C}}$ used in Lemma~\ref{lemma:sat2bshap} to derive our complexity results:

\begin{proposition} \label{prop:dnf2bshaprf}
    Let $\Psi$ be an arbitrary \emph{CNF} formula over $n$ boolean variables, and  $\mathcal{T}$ be the ensemble of decision trees outputted by Algorithm \ref{alg:SAT2b-SHAP} for the input $\Psi$. 
    We have that $f_{\mathcal{T}}(\x_{1}, \ldots, \x_{n}, \x_{n+1}) = 1$ if $\x_{n+1} = 1 \land \x \models \Psi
          $, and $f_{\mathcal{T}}(\x_{1}, \ldots, \x_{n}, \x_{n+1})=0$, otherwise.
\end{proposition}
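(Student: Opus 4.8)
The plan is to prove the proposition by directly analyzing the majority vote computed by $\mathcal{T}$, since $\texttt{ENS-DT}_{\texttt{C}}$ aggregates its constituent trees by hard (unweighted majority) voting. First I would record the exact composition of $\mathcal{T}$ produced by Algorithm~\ref{alg:SAT2b-SHAP}: it contains the $m$ clause trees $T_1,\ldots,T_m$ together with $m-1$ identical copies of $T_{\text{null}}$, for a total of $2m-1$ trees. Since $2m-1$ is odd, no ties can occur, and the ensemble outputs the label $1$ exactly when at least $m$ of its trees output $1$. The whole argument then reduces to counting, for a fixed input $(\x_1,\ldots,\x_{n+1})$, how many trees vote for the class $1$.

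Next I would characterize the output of each individual tree. By construction $f_{T_{\text{null}}}\equiv 0$, so the $m-1$ null trees never contribute to the count for class $1$. Each clause tree $T_j$ is built so that $f_{T_j}(\x_1,\ldots,\x_{n+1}) = 1$ iff the assignment satisfies the clause $C_j$ and $\x_{n+1}=1$; I would note in passing that this target function is realizable by a small decision tree, as $C_j\land \x_{n+1}$ is a disjunction of at most three literals conjoined with a single test on $\x_{n+1}$. The counting then splits on the value of the pivot feature $\x_{n+1}$. If $\x_{n+1}=0$, every $T_j$ outputs $0$ (as does each $T_{\text{null}}$), giving $0 < m$ votes for $1$, hence $f_{\mathcal{T}}=0$, consistent with the claim. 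If $\x_{n+1}=1$, the number of votes for class $1$ equals exactly the number of clauses of $\Psi$ satisfied by $(\x_1,\ldots,\x_n)$, a quantity that is at most $m$.

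Finally I would close the argument using the majority threshold. When $\x_{n+1}=1$, the ensemble outputs $1$ iff the number of satisfied clauses is at least $m$, which --- since $\Psi$ has exactly $m$ clauses --- holds iff \emph{all} clauses are satisfied, i.e. iff $(\x_1,\ldots,\x_n)\models\Psi$. Combining the two cases yields $f_{\mathcal{T}}=1$ iff $\x_{n+1}=1 \land \x\models\Psi$, and $f_{\mathcal{T}}=0$ otherwise, as required. I expect the only genuinely delicate point to be verifying that the threshold lines up precisely: the number $m-1$ of padding trees is chosen exactly so that ``a strict majority votes $1$'' coincides with ``all $m$ clauses are satisfied,'' which is what lets the reduction encode the satisfiability of $\Psi$ through the single pivot feature $\x_{n+1}$. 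The correctness of the individual trees $T_j$ and $T_{\text{null}}$ is routine and follows immediately from their construction in Algorithm~\ref{alg:SAT2b-SHAP}.
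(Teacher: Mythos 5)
Your proof is correct and follows essentially the same route as the paper's: a case split on the pivot feature $\x_{n+1}$, the constructed behavior of the clause trees and null trees, and a majority-vote count over the $2m-1$ trees. Your explicit counting of votes for class $1$ (satisfied clauses versus the threshold $m$) is a slightly more careful phrasing of the same argument the paper gives by exhibiting at least $m$ trees voting $0$ in the unsatisfied case.
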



Using the result of Proposition~\ref{prop:dnf2bshaprf}, the following lemma directly establishes the NP-hardness of the decision problem for \texttt{LOC-B-SHAP}($\texttt{ENS-DT}_{\texttt{C}}$):

\begin{lemma} \label{lemma:sat2bshap}
    Let $\Psi$ be an arbitrary \emph{CNF} formula of $n$ variables, and  $\langle\mathcal{T},n+1,\x,\x^{\text{reff}}\rangle$ be the output of Algorithm \ref{alg:SAT2b-SHAP} for the input $\Psi$. We have that 
    $\phi_{b}(f_{\mathcal{T}}, n+1, \x, \x^{\text{reff}}) > 0$ iff $\exists \x \in \{0,1\}^{n}: ~ \x \models \Psi$. 
\end{lemma}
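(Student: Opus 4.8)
The plan is to establish the equivalence in Lemma~\ref{lemma:sat2bshap} by computing the baseline Shapley value $\phi_b(f_{\mathcal{T}}, n+1, \x, \x^{\text{reff}})$ explicitly and showing its sign is governed entirely by the satisfiability of $\Psi$. First I would expand the baseline SHAP definition for feature $i = n+1$ using Equations~(\ref{eq:genericshap}) and the baseline value function $v_b(f,\x,S,\z^{\text{ref}}) = f(\x_S; \z^{\text{ref}}_{\bar S})$. Since $\x = [1,\ldots,1]$ and $\x^{\text{reff}} = [0,\ldots,0]$, for any coalition $S \subseteq [n+1]\setminus\{n+1\}$ the vector $(\x_S; \x^{\text{reff}}_{\bar S})$ is simply the indicator vector of $S$ restricted to the first $n$ coordinates, with the $(n+1)$-th coordinate set to $0$ (when $n+1\notin S$) or $1$ (in the marginal term where we add feature $n+1$).

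Next I would invoke Proposition~\ref{prop:dnf2bshaprf}, which tells us $f_{\mathcal{T}}(\x_1,\ldots,\x_n,\x_{n+1}) = 1$ precisely when $\x_{n+1}=1$ and the first-$n$ assignment satisfies $\Psi$, and $0$ otherwise. The key consequence is that the marginal contribution of the $(n+1)$-th feature to a coalition $S$ equals
\begin{equation*}
v_b(f_{\mathcal{T}}, \x, S\cup\{n+1\}, \x^{\text{reff}}) - v_b(f_{\mathcal{T}}, \x, S, \x^{\text{reff}}) = I(\x^{S} \models \Psi),
\end{equation*}
where $\x^{S}$ is the truth assignment setting the variables in $S$ to true and the rest to false: adding feature $n+1$ flips the $(n+1)$-th coordinate from $0$ to $1$, and the resulting output is $1$ exactly when $\x^S$ satisfies $\Psi$, while the output without feature $n+1$ is always $0$. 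Substituting this into the Shapley formula gives
\begin{equation*}
\phi_b(f_{\mathcal{T}}, n+1, \x, \x^{\text{reff}}) = \sum_{S \subseteq [n]} \frac{|S|!\,(n-|S|)!}{(n+1)!}\, I(\x^{S}\models\Psi).
\end{equation*}
Every coefficient in this sum is strictly positive, and the indicator terms are nonnegative, so the entire sum is strictly positive if and only if at least one term's indicator is $1$, i.e.\ if and only if some subset $S \subseteq [n]$ yields a satisfying assignment $\x^S$. Because every truth assignment over $\{0,1\}^n$ arises as $\x^S$ for exactly one $S$, this is equivalent to $\exists \x\in\{0,1\}^n : \x \models \Psi$, which establishes the claimed equivalence.

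The main obstacle I anticipate is not the algebra but getting the bookkeeping exactly right: one must verify that under the baseline encoding the coalition $S$ corresponds bijectively to a truth assignment (true on $S$, false off $S$), and that the $(n+1)$-th ``gating'' variable behaves as intended so that only the marginal $v_b(\cdots S\cup\{n+1\}) - v_b(\cdots S)$ picks up the satisfiability indicator while all terms without feature $n+1$ vanish by Proposition~\ref{prop:dnf2bshaprf}. Combined with Proposition~\ref{prop:dnf2bshaprf} and the NP-completeness of 3-SAT, the lemma immediately yields the NP-hardness of the decision version of \texttt{LOC-B-SHAP}($\texttt{ENS-DT}_{\texttt{C}}$); I would close by noting that Algorithm~\ref{alg:SAT2b-SHAP} runs in polynomial time, so this is a valid polynomial-time reduction.
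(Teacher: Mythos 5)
Your proof is correct and takes essentially the same approach as the paper's: both use Proposition~\ref{prop:dnf2bshaprf} to show that every marginal contribution $f_{\mathcal{T}}(\mathbf{1}_S,1)-f_{\mathcal{T}}(\mathbf{1}_S,0)$ equals the nonnegative indicator $I(\mathbf{1}_S\models\Psi)$, and then conclude that the Shapley value, being a positive-weighted linear combination of these terms over all $S\subseteq[n]$, is strictly positive iff some assignment satisfies $\Psi$. Your write-up merely makes the Shapley kernel weights $\frac{|S|!\,(n-|S|)!}{(n+1)!}$ and the coalition-to-assignment bijection explicit, which the paper leaves implicit.
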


\section{Generalized Complexity Relations of SHAP Variants} \label{sec:generalized}


While the previous sections presented specific results on the complexity of generating various SHAP variants for different models and distributions, this section aims to establish \emph{general} relationships concerning the complexity of different SHAP variants. We will then leverage these insights to derive corollaries for other SHAP contexts not explicitly covered in the paper.

\begin{proposition} \label{prop:hardnessrelation}
  \begin{inparaenum}[(i)]
    Let $\mathcal{M}$ be a class of models and $\mathcal{P}$ a class of probability distributions such that $\texttt{\emph{EMP}} \preceq_{P}  \mathcal{P}$. 
    Then, \texttt{\emph{LOC-B-SHAP}}($\mathcal{M}$)  $\preceq_{P}$ \texttt{\emph{GLO-B-SHAP}}($\mathcal{M}$,$\mathcal{P}$) and \texttt{\emph{LOC-B-SHAP}}($\mathcal{M}$)  $\preceq_{P}$   \texttt{\emph{LOC-I-SHAP}}($\mathcal{M},\mathcal{P}$).
    \end{inparaenum}
\end{proposition}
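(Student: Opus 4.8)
The plan is to observe that \emph{Baseline SHAP is precisely Interventional SHAP evaluated against a point-mass (Dirac) distribution}, and that a point mass is a legitimate empirical distribution (a dataset consisting of a single repeated sample). Both reductions then follow by instantiating the richer problem on a carefully chosen degenerate empirical distribution and invoking the hypothesis $\texttt{EMP} \preceq_{P} \mathcal{P}$ to re-encode that empirical distribution inside the target class $\mathcal{P}$ in polynomial time, so that the answer to the constructed instance equals the answer to the original local baseline query.

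For the reduction $\texttt{LOC-B-SHAP}(\mathcal{M}) \preceq_{P} \texttt{LOC-I-SHAP}(\mathcal{M},\mathcal{P})$, I would take a local baseline instance $(f,\x,i,\z^{\text{ref}})$ and build the empirical distribution $\mathcal{D}^{\star}$ supported entirely on the single point $\z^{\text{ref}}$. For every coalition $S$ one has $v_i(f,\x,S,\mathcal{D}^{\star}) = \mathbb{E}_{\z \sim \mathcal{D}^{\star}}[f(\x_S;\z_{\bar{S}})] = f(\x_S;\z^{\text{ref}}_{\bar{S}}) = v_b(f,\x,S,\z^{\text{ref}})$, so the two value functions agree term-by-term in the Shapley sum and hence $\phi_i(f,\x,i,\mathcal{D}^{\star}) = \phi_b(f,\x,i,\z^{\text{ref}})$. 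Since $\mathcal{D}^{\star}\in\texttt{EMP}$ and $\texttt{EMP}\preceq_{P}\mathcal{P}$, I can compute in polynomial time a representation $P^{\star}\in\mathcal{P}$ of $\mathcal{D}^{\star}$ and return $(f,\x,i,P^{\star})$; solving $\texttt{LOC-I-SHAP}$ on this instance answers the original query.

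For the reduction $\texttt{LOC-B-SHAP}(\mathcal{M}) \preceq_{P} \texttt{GLO-B-SHAP}(\mathcal{M},\mathcal{P})$, I keep the reference $\z^{\text{ref}}$ unchanged but instead concentrate the \emph{aggregating} distribution on the explained input. Given $(f,\x,i,\z^{\text{ref}})$, I form the empirical distribution $\mathcal{E}^{\star}$ equal to the point mass at $\x$, pass it through $\texttt{EMP}\preceq_{P}\mathcal{P}$ to obtain $P^{\star}\in\mathcal{P}$ in polynomial time, and output the global instance $(f,i,P^{\star},\z^{\text{ref}})$. By the definition of global baseline SHAP (Equation~\ref{eq:VBshap}), $\Phi_b(f,i,\mathcal{E}^{\star},\z^{\text{ref}}) = \mathbb{E}_{\x'\sim\mathcal{E}^{\star}}[\phi_b(f,\x',i,\z^{\text{ref}})] = \phi_b(f,\x,i,\z^{\text{ref}})$, since all mass sits on $\x'=\x$; this is exactly the desired value.

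Both maps are polynomial-time: the only work is producing a constant-size degenerate dataset and applying the assumed reduction $\texttt{EMP}\preceq_{P}\mathcal{P}$. The main point requiring care is the formal justification that a single-point dataset is admissible as an empirical distribution in the paper's formalism, and that composing our trivial ``collapse to a point'' step with the hypothesized $\texttt{EMP}\preceq_{P}\mathcal{P}$ preserves the SHAP value \emph{exactly} (not merely approximately) while staying polynomial; once the value-function identities above are verified termwise in the Shapley expansion, the remainder is routine bookkeeping.
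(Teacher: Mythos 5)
Your proposal is correct and follows essentially the same route as the paper's proof: the paper likewise reduces local baseline SHAP to local interventional SHAP by taking the empirical (point-mass) distribution induced by the reference instance $\z^{\text{ref}}$, and to global baseline SHAP by taking the empirical point mass at the explained input $\x$, then invoking $\texttt{EMP} \preceq_{P} \mathcal{P}$. Your write-up merely makes explicit the termwise value-function identities and the polynomial-time composition that the paper's two-line proof leaves implicit.
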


In other words, assume a class of probability distributions $\mathcal{P}$ is "harder" (under polynomial reductions) than the class of empirical distributions \texttt{EMP}, i.e., any $P \in \mathcal{P}$ can be reduced in poly-time to some $P' \in$ \texttt{EMP}. Then, global baseline SHAP is at least as hard to compute as local baseline SHAP and local interventional SHAP is at least as hard to compute as local baseline SHAP (both under polynomial reductions). Since $\texttt{EMP} \preceq_{P} \texttt{HMM}$ (proof in Appendix~\ref{app:reductiontree}), these corollaries follow from Theorem~\ref{thm:intractable} and proposition~\ref{prop:hardnessrelation}:



\begin{corollary} \label{cor:globshaphard}
    \begin{inparaenum}[(i)]
    Let there be some $\texttt{\emph{P}} \in \{ \texttt{\emph{EMP}}, \texttt{\emph{HMM}}\}$, and $\overrightarrow{\texttt{\emph{P}}} \in \{\texttt{\emph{EMP}}, \overrightarrow{\texttt{\emph{HMM}}} \}$. Then it holds that:
        \item
        Unless P=co-NP, the problems \texttt{\emph{GLO-B-SHAP}}\texttt{\emph{(SIGMOID}},$\overrightarrow{\texttt{\emph{P}}}$\texttt{\emph{)}} and \texttt{\emph{LOC-I-SHAP}}\texttt{\emph{(NN-SIGMOID}},$\overrightarrow{\texttt{\emph{P}}}$\texttt{\emph{)}} can not be computed exactly in polynomial time;
        \item 
        The decision version of the problems $\texttt{\emph{GLO-B-SHAP}}(\texttt{\emph{RNN-ReLu}}, \texttt{\emph{P}})$ and $\texttt{\emph{LOC-I-SHAP}}(\texttt{\emph{RNN-ReLu}}, \texttt{\emph{P}})$ are co-NP-Hard;
        \item 
        The decision version of the problems $\texttt{\emph{GLO-B-SHAP}\texttt{\emph{(M}},\texttt{\emph{P}}})$ and $\texttt{\emph{LOC-I-SHAP}}\texttt{\emph{(ENS-DT}}_{\texttt{\emph{C}}},\texttt{\emph{P)}}$ are NP-Hard. 
    \end{inparaenum} 
\end{corollary}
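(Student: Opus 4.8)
The plan is to obtain the corollary as a direct composition of two results already in hand: the local-baseline hardness of Theorem~\ref{thm:intractable} and the generic reductions of Proposition~\ref{prop:hardnessrelation}, which transport any hardness of \texttt{LOC-B-SHAP}$(\mathcal{M})$ to both \texttt{GLO-B-SHAP}$(\mathcal{M},\mathcal{P})$ and \texttt{LOC-I-SHAP}$(\mathcal{M},\mathcal{P})$, provided the target distribution class dominates the empirical one, i.e. $\texttt{EMP} \preceq_{P} \mathcal{P}$.

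First I would verify that this domination hypothesis holds for every distribution class named in the statement. For $\mathcal{P} = \texttt{EMP}$ (and $\overrightarrow{\texttt{P}} = \texttt{EMP}$) it is immediate via the identity reduction. For $\mathcal{P} = \texttt{HMM}$ and $\overrightarrow{\texttt{P}} = \overrightarrow{\texttt{HMM}}$ it is precisely the containments $\texttt{EMP} \preceq_{P} \texttt{HMM}$ and $\texttt{EMP} \preceq_{P} \overrightarrow{\texttt{HMM}}$ established in Appendix~\ref{app:reductiontree}. Thus in all four parameter combinations the hypothesis of Proposition~\ref{prop:hardnessrelation} is met, so both reductions $\texttt{LOC-B-SHAP}(\mathcal{M}) \preceq_{P} \texttt{GLO-B-SHAP}(\mathcal{M},\mathcal{P})$ and $\texttt{LOC-B-SHAP}(\mathcal{M}) \preceq_{P} \texttt{LOC-I-SHAP}(\mathcal{M},\mathcal{P})$ become available for the relevant model families.

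Next I would instantiate $\mathcal{M}$ with each model class and pull in the corresponding line of Theorem~\ref{thm:intractable}: \texttt{LOC-B-SHAP}(\texttt{NN-SIGMOID}) admits no exact poly-time algorithm unless $\text{P}=\text{co-NP}$, \texttt{LOC-B-SHAP}(\texttt{RNN-ReLu}) is co-NP-hard, and \texttt{LOC-B-SHAP}($\texttt{ENS-DT}_{\texttt{C}}$) is NP-hard. Since hardness propagates forward along polynomial many-one reductions, composing each of these with the two reductions from the previous step yields exactly the three items: the sigmoid problems inherit the conditional ``unless $\text{P}=\text{co-NP}$'' barrier, the \texttt{RNN-ReLu} problems inherit co-NP-hardness, and the $\texttt{ENS-DT}_{\texttt{C}}$ problem inherits NP-hardness. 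Ranging over $\texttt{P}\in\{\texttt{EMP},\texttt{HMM}\}$ and $\overrightarrow{\texttt{P}}\in\{\texttt{EMP},\overrightarrow{\texttt{HMM}}\}$ then sweeps every case listed in the corollary.

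The only point requiring genuine care --- and I do not expect it to be a real obstacle --- is aligning the flavor of hardness with the reduction direction. For the decision-version statements (\texttt{RNN-ReLu}, $\texttt{ENS-DT}_{\texttt{C}}$) this is routine transitivity of $\preceq_{P}$. For the sigmoid case, which is phrased as impossibility of \emph{exact} poly-time computation under $\text{P}\neq\text{co-NP}$, I would observe that the reduction supplied by Proposition~\ref{prop:hardnessrelation} is value-preserving and poly-time, so a hypothetical exact poly-time solver for the target problem would, through the reduction, solve \texttt{LOC-B-SHAP}(\texttt{NN-SIGMOID}) exactly in poly-time, contradicting Theorem~\ref{thm:intractable}. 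Hence the corollary reduces entirely to bookkeeping over the cited results, with no new technical content.
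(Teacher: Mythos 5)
Your proposal is correct and follows exactly the paper's own route: the paper derives this corollary precisely by combining Theorem~\ref{thm:intractable} with Proposition~\ref{prop:hardnessrelation}, using the containments $\texttt{EMP} \preceq_{P} \texttt{HMM}$ and $\texttt{EMP} \preceq_{P} \overrightarrow{\texttt{HMM}}$ established in Appendix~\ref{app:reductiontree}. Your additional care in checking the ``unless P$=$co-NP'' flavor for the sigmoid case (a value-preserving poly-time reduction transports the impossibility of exact poly-time computation) is sound and merely makes explicit what the paper leaves implicit.
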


\section*{Conclusion}
This paper aims to enhance our understanding of the computational complexity of computing various Shapley value variants. We found that for various ML models --- including decision trees, regression tree ensembles, weighted automata, and linear regression --- both local and global interventional and baseline SHAP can be computed in polynomial time under HMM modeled distributions. This extends popular algorithms, such as TreeSHAP, beyond their empirical distributional scope. We also establish strict complexity gaps between the various SHAP variants (baseline, interventional, and conditional) and prove the intractability of computing SHAP for tree ensembles and neural networks in simplified scenarios. Overall, we present SHAP as a versatile framework whose complexity depends on four key factors: \begin{inparaenum}[(i)] \item model type, \item SHAP variant, \item distribution modeling approach, \item and local vs. global explanations\end{inparaenum}. We believe this perspective provides deeper insight into the computational complexity of SHAP, paving the way for future work.


Our work opens promising directions for future research. First, expanding our computational analysis to other SHAP-related metrics, such as asymmetric SHAP~\citep{frye20} and SAGE~\citep{covert2020understanding}, would be valuable. Additionally, we aim to explore more expressive distribution classes and relaxed assumptions beyond those in Section \ref{sec:tractable} while maintaining tractable SHAP computation. Finally, when exact computation is intractable (Section \ref{sec:intractable}), investigating the approximability of SHAP metrics through approximation and parameterized complexity theory~\citep{downey2012parameterized} is an important direction.

\section*{Acknowledgments}
This work was partially funded by the European Union (ERC,
VeriDeL, 101112713). Views and opinions expressed are however
those of the author(s) only and do not necessarily reflect those of the
European Union or the European Research Council Executive Agency.
Neither the European Union nor the granting authority can be held
responsible for them.


\bibliography{bibliography}

\section*{Checklist}



 \begin{enumerate}

 \item For all models and algorithms presented, check if you include:
 \begin{enumerate}
   \item A clear description of the mathematical setting, assumptions, algorithm, and/or model. [Yes] Mainly in Section 1 and the appendix.
   \item An analysis of the properties and complexity (time, space, sample size) of any algorithm. [Yes] See Section 2, Section 3, Section 4, and the appendix.
   \item (Optional) Anonymized source code, with specification of all dependencies, including external libraries. [Not Applicable]
 \end{enumerate}

 \item For any theoretical claim, check if you include:
 \begin{enumerate}
   \item Statements of the full set of assumptions of all theoretical results. [Yes] Mainly in Section 1 and the appendix.
   \item Complete proofs of all theoretical results. [Yes] Because of space constraints, we provide only a brief summary of the proofs for our claims in the paper, with the complete detailed proofs available in the appendix.
   \item Clear explanations of any assumptions. [Yes]     
 \end{enumerate}

 \item For all figures and tables that present empirical results, check if you include:
 \begin{enumerate}
   \item The code, data, and instructions needed to reproduce the main experimental results (either in the supplemental material or as a URL). [Not Applicable]
   \item All the training details (e.g., data splits, hyperparameters, how they were chosen). [Not Applicable]
         \item A clear definition of the specific measure or statistics and error bars (e.g., with respect to the random seed after running experiments multiple times). [Not Applicable]
         \item A description of the computing infrastructure used. (e.g., type of GPUs, internal cluster, or cloud provider). [Not Applicable]
 \end{enumerate}

 \item If you are using existing assets (e.g., code, data, models) or curating/releasing new assets, check if you include:
 \begin{enumerate}
   \item Citations of the creator If your work uses existing assets. [Not Applicable]
   \item The license information of the assets, if applicable. [Not Applicable]
   \item New assets either in the supplemental material or as a URL, if applicable. [Not Applicable]
   \item Information about consent from data providers/curators. [Not Applicable]
   \item Discussion of sensible content if applicable, e.g., personally identifiable information or offensive content. [Not Applicable]
 \end{enumerate}

 \item If you used crowdsourcing or conducted research with human subjects, check if you include:
 \begin{enumerate}
   \item The full text of instructions given to participants and screenshots. [Not Applicable]
   \item Descriptions of potential participant risks, with links to Institutional Review Board (IRB) approvals if applicable. [Not Applicable]
   \item The estimated hourly wage paid to participants and the total amount spent on participant compensation. [Not Applicable]
 \end{enumerate}

 \end{enumerate}

\setcounter{definition}{0}
\setcounter{section}{0}
\setcounter{proposition}{0}
\setcounter{theorem}{0}
\setcounter{lemma}{0}

\onecolumn
\aistatstitle{
Appendix}

\vspace{-2.5em}

The appendix provides formalizations, supplementary background, and gathers the proofs of several mathematical statements referenced either implicitly or explicitly throughout the main article. It is structured as follows:

\begin{itemize} 
\item Appendix~\ref{app:limitations} discusses the limitations of this work and outlines potential directions for future research.
\item Appendix \ref{app:sec:terminology} provides the technical background and preliminary results that will be referenced throughout the appendix.
\item Appendix \ref{app:shapwa} includes the proofs of intermediate mathematical statements that demonstrate the tractability of computing both local and global interventional SHAP and baseline SHAP variants (Theorem \ref{thm:shapwa}). 
\item 
Appendix~\ref{app:reductiontree} elaborates on the reduction strategy from WAs to decision trees, linear regression models, and tree ensembles employed for regression (Theorem~\ref{cor:reductions}). Moreover, it presents a polynomial-time reduction between the distribution families discussed throughout the main paper, along with corollaries that can be derived from these relationships.

\item Appendix~\ref{app:ISHAPRNN} provides proofs of the intermediary results that validate the reduction strategy from the closest string problem to $\texttt{LOC-I-SHAP}(\texttt{RNN-ReLu}, \texttt{IND})$ (Theorem \ref{thm:relushap}).
\item Appendix~\ref{app:BSHAPSIGMOID} is focused on providing the complete proofs of intermediate results related to the complexity of computing local Baseline SHAP for Sigmoidal Neural Networks, RNN-ReLUs, and tree ensemble classifiers (Theorem \ref{thm:intractable}).
\item Appendix~\ref{app:sec:generalized},  presents the proof for the main result discussed in the section on generalized relations of SHAP variants (Section~\ref{sec:generalized}) of the main article, specifically Proposition~\ref{prop:hardnessrelation}.
\end{itemize}

\section{Limitations and Future Work} \label{app:limitations}
First, while our work presents novel complexity results for the computation of various Shapley value variants, there are many other variants that we did not address, such as asymmetric Shapley values~\citep{frye20}, counterfactual Shapley values~\citep{albini22}, and others. Additionally, in Section~\ref{sec:generalized}, we provided new complexity relationships between SHAP variants, but many more connections remain unexplored. Extensions of our work could also consider different model types and distributional assumptions beyond those discussed here.

A second limitation is that, while we thoroughly examine theoretical strict complexity gaps between polynomial-time and non-polynomial-time (i.e., NP-Hard) computations of SHAP variants, we did not focus on techincal optimizations of the specific poly-time algorithms proposed in this paper. Some of the polynomial factors of these algorithms, though constant, may be improved (see Section~\ref{sec:generalized} for more details), and we believe improving them, as well as exploring potential parallelization techniques for solving them presents an exciting direction for future research.

Finally, we adopt the common convention used in all previous works on the computational complexity of computing Shapley values~\citep{arenas23, vander21, marzouk24a, huangupdates}, assuming a discrete input space to simplify the technical aspects of the proofs. However, we emphasize that many of our findings also extend to continuous domains. Specifically, for both decision trees and tree ensembles, the complexity results remain consistent across both discrete and continuous domains, as the tractability of the underlying models is unaffected. On the other hand, while this assumption does not generally apply to linear models and neural networks, we stress that the computational \emph{hardness} results we present for these models continue to hold in continuous settings. The same, however, cannot be said for membership proofs. Extending the membership proofs for linear models and neural networks, as presented in this work, to continuous domains and exploring other computational complexity frameworks for these models offers a promising direction for future research.

\section{Extended Related Work}

In this section, we include a more elaborate discussion of related work and key complexity results examined in prior research. 

\textbf{SHAP values.} Building on the initial SHAP framework introduced by~\citep{lundberg2017} for deriving explanations of ML models, various subsequent works extensively investigated the application of SHAP across various contexts within the XAI literature. Many efforts have concentrated on developing numerous other SHAP variants beyond Conditional SHAP \citep{sundararajan20b, janzing20a, heskes2020causal}, aligning SHAP with the distribution manifold \citep{frye2020shapley, taufiq2023manifold}, and enhancing the approximation of its calculation \citep{fumagalli2024shap, sundararajan2020shapley, burgess2021approximating, kwon2021efficient}. Additionally, the literature has explored various limitations of SHAP in different contexts \citep{fryer2021shapley, huang2024failings, kumar2020problems, marques2024explainability}.


\textbf{The Complexity of SHAP.} Notably, \citep{vander21} investigates Conditional SHAP, presenting a range of tractability and intractability results for different ML models, with a key insight being that computing Conditional SHAP under independent distributions is as complex as computing the conditional expectation. Later, \cite{arenas23} generalizes these findings, showing that the tractability results for Conditional SHAP align with the class of Decomposable Deterministic Boolean Circuits and establishing that both the Decomposability and Determinism properties are necessary for tractability. More recently, \cite{marzouk24a} moves beyond the independent distribution assumption, extending the analysis to \emph{Markovian} distributions, which are significantly less expressive than the HMM-modeled distributions considered in our work. Additionally, \cite{huangupdates} introduces distinctions between regression and classification in tree ensembles for Conditional SHAP and extends previous results to diverse input and output settings. Other relevant, but less direct extensions of these complexity results are extensions to the domain of databse tuples~\citep{deutch2022computing, livshits2021shapley, bertossi2023shapley, kara2024shapley, karmakar2024expected}, as well as obtaining the complexity of other interaraction values other than Shapley values~\citep{abramovich2024banzhaf, barcelo2025computation}. Our work provides a novel analysis of all three major SHAP variants (baseline, interventional, and conditional), broadening distributional assumptions and offering new insights into local and global SHAP values. We frame SHAP computation as a multifaceted process shaped by \begin{inparaenum}[(i)] \item model type; \item SHAP variant; \item distributional assumptions; and \item the local-global distinction.\end{inparaenum}

\textbf{Formal XAI.} More broadly, our work falls within the subdomain of interest known as \emph{formal XAI}~\citep{marques2023logic}, which aims to generate explanations for ML models with formal guarantees~\citep{ignatiev2020towards, bassan2023towards, darwiche2020reasons, darwiche2022computation, ignatiev2019abduction, audemard2022preferred}. These explanations are often derived using formal reasoning tools, such as SMT solvers~\citep{barrett2018satisfiability} (e.g., for explaining tree ensembles~\citep{audemard2022trading}) or neural network verifiers~\citep{katz2017reluplex, wu2024marabou, wang2021beta} (e.g., for explaining neural networks~\citep{izza2024distance, bassan2023formally}). A key focus within formal XAI is analyzing the computational complexity of obtaining such explanations~\citep{barcelo2020model, waldchen2021computational, cooper2023tractability, bassanlocal, blanc2021provably, amir2024hard, bass2025exp, adolfi2024computational, barcelo2025explaining, calautti2025complexity, ordyniak2023parameterized}.


\section{Complexity Classes, Models, and Distributions} \label{app:sec:terminology}

In this section, we introduce the general preliminary notations used throughout our paper, including those related to model types, distributional assumptions, and complexity classes.


\subsection{Computational Complexity Classes}

In this work, we assume that readers are familiar with standard complexity classes, including polynomial time (PTIME) and non-deterministic polynomial time (NP and coNP). We also discuss the complexity class NTIME, which refers to the set of decision problems solvable by a non-deterministic Turing machine within a specified time bound, such as polynomial time. Additionally, we cover the class $\#$P, which counts the number of accepting paths of a non-deterministic Turing machine and can be seen as the ``counting'' counterpart of NP. It is known that PTIME is contained within NP, coNP, NTIME, and $\#$P, but it is widely believed that these containments are strict, i.e., PTIME $\subsetneq$ NP, coNP, NTIME, $\#$P~\citep{arora2009computational}. We use the standard notation $L_1 \preceq_{P} L_2$ to indicate that a polynomial-time reduction exists from the computational problem $L_1$ to $L_2$.

\subsection{Models}

In this subsection, we describe the families of model types used throughout the paper, including decision trees (\texttt{DT}), tree ensembles for classification ($\texttt{ENS-DT}_{\texttt{C}}$) and regression ($\texttt{ENS-DT}_{\texttt{R}}$), linear regression models ($\texttt{LIN}_{\texttt{R}}$), as well as neural networks (\texttt{NN-SIGMOID} and \texttt{RNN-ReLU}).

\textbf{Decision trees (\texttt{DT}).} We define a \emph{decision tree} (DT) as a directed acyclic graph representing a graphical model for a discrete function $f: \mathcal{X} \to \mathbb{R}$ in regression tasks, or $f: \mathcal{X} \to [c]$ for classification tasks (where $c \in \mathbb{N}$ is the number of classes). We assume that each input $\x_i$ can take a bounded number of discrete values, limited by some $k$. This graph encodes the function as follows:

\begin{enumerate}
	\item Each internal node $v$ is associated with a unique binary input feature from the set $\{1,\ldots,n\}$;
	\item Every internal node $v$ has up to $k$ outgoing edges, corresponding to the values $[k]$ which are assigned to $v$;
	\item In the DT, each variable is encountered no more than once on any given path $\alpha$;
 	\item Each leaf node is labeled as one of the classes in $[c]$ (for classification tasks) or some $c\in\mathbb{R}$ for regression tasks.
\end{enumerate}

Thus, assigning a value to the inputs $\x\in\mathcal{X}$ uniquely determines a specific path $\alpha$ from the root to a leaf in the DT. The function $f(\x)$ is assigned either some $i\in [c]$ or some $i\in \mathbb{R}$ (depending on classification or regression tasks). The size of the DT, denoted as $\vert f \vert$, is measured by the total number of edges in its graph. To allow flexibility of the modeling of the DT, we permit different varying orderings of the input variables $\{1,\ldots,n \}$ across any two distinct paths, $\alpha$ and $\alpha'$. This ensures that no two nodes along any single path $\alpha$ share the same label. We note that in some of the proofs provided in this work, we simplify them by assuming that $k:=2$ or in other words that each feature is defined over binary feature assignments rather than discrete ones. However, this assumption is only for the sake of simplifying the proofs, and our proofs hold also for a general $k$ as well.

\textbf{Decision tree ensembles ($\texttt{ENS-DT}_{\texttt{R}}$, $\texttt{ENS-DT}_{\texttt{C}}$).} There are various well-known architectures for tree ensembles. While these models typically differ in their training processes, our work focuses on post-hoc interpretation, so we emphasize the distinctions in the inference phase rather than the training phase. Specifically, our analysis targets ensemble families that use weighted-voting methods during inference. This includes tree ensembles based on boosting, such as XGBoost. Additionally, in cases where all weights are equal, our formalization also covers majority-voting tree ensembles, such as those used in common bagging techniques like Random Forests. Our approach applies to both tree ensembles for regression tasks (denoted as $\texttt{ENS-DT}_{\texttt{R}}$) and classification tasks (denoted as $\texttt{ENS-DT}_{\texttt{C}}$). We make this distinction explicitly, as we will demonstrate that the complexity results differ for classification and regression ensembles.




Conceptually, an ensemble tree model $\mathcal{T} \in  \texttt{ENS-DT}_{\texttt{R}}$ is constructed as a linear combination of DTs. Formally, $\mathcal{T}$ is parametrized by the tuple $\{T_{i}\}_{i \in [m]}, \{ w_{i}\}_{i \in [m]}$ where  $\{T_{i}\}_{i \in [m]}$ is a collection of DTs (referred to as an ensemble), and $\{w_{i}\}_{i \in [m]}$ is a set of real numbers. The model $\mathcal{T}$ is used for regression tasks and the function that it computes is given as:

\begin{equation}
\label{tree_ensemble_formulation_appendix}
    f_{\mathcal{T}}(x_{1}, \ldots, x_{n}) := \sum\limits_{i=1}^{m} w_{i} \cdot f_{T_{i}}(x_{1}, \ldots, x_{n})
\end{equation}

Aside from regression tasks, ensemble trees are also commonly used for classification tasks. In this context, each decision tree $\mathcal{T}$ is a classification tree (as defined earlier for DTs) that assigns a class label $i \in [c]$. By using the formulation from Equation~\ref{tree_ensemble_formulation_appendix}, we can limit the following sum to only those decision trees relevant to class $i$, and obtain a corresponding weight $f^i_{\mathcal{T}}$. The class assigned by the ensemble will be the one with the highest value of $f^i_{\mathcal{T}}$. It’s worth noting that if all weights are taken to be equal, this mirrors majority voting classification, as seen in random forest classifiers. Furthermore, since our proofs in this context only address \emph{hardness}, for simplicity, we assume the number of classes $c = 2$, meaning the random forest classifier is binary. Clearly, the hardness results for this case also apply to the more general multiclass scenario. In this specific case, the given formalization can be restated as:


 \begin{equation}
\label{tree_ensemble_formulation_appendix}
    f_{\mathcal{T}}(x_{1}, \ldots, x_{n}) := \text{step}(\sum\limits_{i=1}^{m} w_{i} \cdot f_{T_{i}}(x_{1}, \ldots, x_{n}))
\end{equation}

where the step function is defined as: $\text{step}(\x) = 1 \iff \x \geq 0$. We refer to the class of classification tree ensembles as $\texttt{ENS-DT}_{\texttt{C}}$.


\paragraph{Neural Networks (\texttt{NN-SIGMOID}, \texttt{RNN-ReLU}).} 
Here, we define the general neural network architecture used throughout this work, followed by a specific definition of the (non-recurrent) sigmoidal neural network (\texttt{NN-SIGMOID}) referenced in the paper. We note that the definition of recurrent neural networks with ReLU activations (\texttt{RNN-ReLU}) mentioned in the paper was provided in the main text, and hence it is not explicitly redefined here. We denote a neural network $f$ with $t-1$ hidden layers ($f^{(j)}$, where $j$ ranges from $1$ to $t-1$), and a single output layer ($f^{(t)}$). We observe that we can assume a single output layer since $f^{(t)}$ will yield a value in $\mathbb{R}$ for regression tasks, or it will be applied for binary classification (we will later justify why assuming binary classification is sufficient, and the correctness results will extend to multi-class classification as well). The layers of $f$ are defined recursively — each layer $f^{(j)}$ is computed by applying the activation function $\sigma^{(j)}$ to the linear combination of the outputs from the previous layer $f^{(j-1)}$, the corresponding weight matrix $W^{(j)}$, and the bias vector $b^{(j)}$. This is represented as:

\begin{equation}
    f^{(j)} := \sigma^{(j)}(f^{(j-1)}W^{(j)} + b^{(j)})
\end{equation}

Where $f^{(j)}$ is computed for each $j$ in $\{1,\ldots,t\}$. The neural network includes $t$ weight matrices ($W^{(1)},\ldots,W^{(t)}$), $t$ bias vectors ($b^{(1)},\ldots,b^{(t)}$), and $t$ activation functions ($\sigma^{(1)},\ldots,\sigma^{(t)}$). The function $f$ is defined to output $f := f^{(t)}$. The initial input layer $f^{(0)}$ serves as the model's input. The dimensions of the biases and weight matrices are specified by the sequence of positive integers ${d_{0}, \ldots, d_{t}}$. We specifically focus on weights and biases that are rational numbers, represented as $W^{(j)}\in \mathbb{Q}^{d_{j-1}\times d_{j}}$ and $b^{(j)}\in \mathbb{Q}^{d_{j}}$, which are parameters optimized during training. Clearly, it holds that $d_0=n$. For regression tasks, the output of $f$ (i.e., $d_t$) will be in $\mathbb{R}$. For classification tasks, we assume multiple outputs, where typically a softmax function is applied to choose the class $i\in[c]$ with the highest value. However, since the proofs for both \texttt{NN-SIGMOID} and \texttt{RNN-ReLU} are \emph{hardness} proofs, we assume for simplicity that in classification cases, we use a binary classifier, i.e., $c=2$. Thus, hardness results will clearly apply to the multi-class setting as well. Therefore, for binary classification, it follows that $d_{0} = n$ and $d_{t} = 1$.

The activation functions $\sigma^{(i)}$ that we focus on in this work are either the ReLU activation function, defined as $\text{ReLU}(x) := \max(0, x)$ (used for the RNN-ReLU model), or the sigmoid activation function, defined as $\text{Sigmoid}:=\frac{1}{1+e^{-x}}$. In the case of our binary classification assumption, we assume that the last output layer can either use a sigmoid function or (without loss of generality) a step function for the final layer activation. Here, we denote $\text{step}(x) = 1 \iff x \geq 0$.

\paragraph{Linear Regression Models ($\texttt{\text{LIN}}_{\texttt{\text{R}}}$).} 
A linear regression model corresponds to a single-layer neural network in the regression context, as defined in the formalization above, where $t=1$. The linear model is described by the function $f(\x) := (\mathbf{w} \cdot \x) + b$, with $b \in \mathbb{Q}$ and $\mathbf{W} \in \mathbb{Q}^{n \times d_1}$. Furthermore, we introduce an alternative renormalization of linear regression models, which will be helpful for the technical developments in Section~\ref{app:reductiontree}. Specifically, a linear regression model over the finite set $\mathbb{D} := [m_{1}] \times \ldots [m_{n}]$, where $\{m_{i}\}_{i \in [n]}$ represents a set of integers, can be parametrized as $\langle \{w_{i,d}\}_{i \in [n], d \in \mathbb{D}_{i}}, b\rangle$, where $\{w_{i,d}\}_{i \in [n], d \in m_{i}}$ is a collection of rational numbers and $b \in \mathbb{Q}$. The function computed by $f$ is given as:


$$f(x_{1}, \ldots , x_{n}) = \sum\limits_{i=1}^{n} \sum\limits_{d \in \mathbb{D}_{i}} w_{i,d} \cdot I(x_{i}=d) + b$$


\subsection{Distributions}
In this subsection, we will formally define various distributions relevant to this work, which have been referenced throughout the main paper or in the appendix. We note that the definition of hidden Markov models (\texttt{HMM}), the most significant class of distributions examined in this study, is not included here as it was already provided in the main text.

\paragraph{Independent Distributions (\texttt{IND}).} The family \texttt{IND} is the most elementary family of distributions based on the assumption of probabilistic independence of all random variables (RVs) involved in the model. Formally, given some set of discrete values $[k]$, we can describe a probability function $p:[n]\times [k]\to [0,1]$. For example, $p(1,2)=\frac{1}{2}$, implies that the probability of feature $i=1$, to be set to the value $k=2$ is $\frac{1}{2}$. Then we can define $\mathcal{D}_p$ as an independent distribution over $\mathcal{X}$ iff:

\begin{equation}
    \label{eq:explanation}
    \mathcal{D}_p(\x):=\Big({\displaystyle \prod_{i\in [n], \x_i=j} p(i, j)}\Big)
\end{equation}

It is evident that the uniform distribution is a specific instance of $\mathcal{D}_p$, obtained by setting $p(i,j):=\frac{1}{|k|}$ for every $i \in [n]$ and $j \in [k]$.


\paragraph{Empirical Distributions (\texttt{EMP}).} The empirical distribution provides a practical way of estimating probabilities from a finite dataset. Given a set of $M$ samples, each represented as a vector in $\{0,1\}^{N}$, the empirical distribution assigns a probability to each possible vector $x$ in the space. This probability is simply the proportion of samples in the dataset that are equal to $x$. Formally, for a dataset $\mathcal{D} = \{x_{1}, \ldots, x_{M}\}$, the empirical distribution $P_{\mathcal{D}}(x)$ is defined as the frequency of occurrences of the vector $x$ in the dataset, normalized by the total number of samples, i.e.:
$$P_{\mathcal{D}}(x) = \frac{1}{|\mathcal{D}|} \sum\limits_{i=1}^{|\mathcal{D}|}  I(x_{i} = x)$$

\paragraph{Naive Bayes Model (\texttt{NB}).} A naive Bayes model is a latent probabilistic model that involves $n+1$ random variables (RVs), denoted as $(X_{1}, \ldots , X_{n}, Y)$, where $\{X_{i}\}_{i \in [n]}$ represent the $n$ observed RVs, and $Y$ is an unobserved (latent) RV. The key probabilistic assumption of naive Bayes models is that the observed RVs are conditionally independent given the value of the latent variable $Y$. More formally, a model $M \in \ar{\texttt{NB}}$ over $n$ RVs is specified by the parameters $\langle\pi, \{P_{i}\}_{i \in [n]}\rangle$, where:

\begin{itemize}
    \item $\pi$ is a probability distribution over the domain value of the latent variable $Y$ ($dom(Y)$),
    \item For $i \in [n]$, $P_{i} \in \mathbb{R}^{n \times \text{dom}(Y)}$ is a stochastic matrix. 
\end{itemize}
The marginal probability distribution computed by $M$ is given as:
$$P_{M}(x_{1}, \ldots x_{n}, y) = \pi(y) \prod\limits_{i=1}^{n} P_{i}[x_{i}, y]$$

\paragraph{Markovian Distributions (\texttt{MARKOV}).} A (stationary) Markovian distribution $M \in \texttt{MARKOV}$ over an alphabet $\Sigma$ is represented by the tuple $\alpha, T$ where $\pi$ is a probability distribution over $\Sigma$ and $T$ is a stochastic matrix in $\mathbb{R}^{|\Sigma| \times |\Sigma|}$ \footnote{A matrix $A \in \mathbb{R}^{n \times m}$ is said to be stochastic if each row vector corresponds to a probability distribution over $[m]$.}. A Markovian model $M$ computes a probability distribution over $\Sigma^{\infty}$. The probability of generating a given sequence $w \in \Sigma^{*}$ as a prefix by a Markovian model $M = \pi, T$ is given as:
$$P_{M}^{(|w|)}(w) = \pi[w_{1}] \cdot \prod\limits_{i=2}{|w|} T[w_{i-1}, w_{i}]$$
where for a given integer  $n$, $P_{M}^{(n)}$ designates the probability distribution over $\Sigma^{n}$ interpreted as the probability of generating a prefix of length $n$. 

Analogous to HMMs, one can define a family of models representing the non-sequential counterpart of Markovian models, which we'll refer to as $\ar{\texttt{\text{MARKOV}}}$. A model $\ar{\texttt{M}}  \in \ar{\texttt{\text{MARKOV}}}$ defines a probability distribution over $\Sigma^{n}$ for $n \geq 1$, and parameterized by the tuple $\pi, \alpha, \{T_{i}\}_{i \in [n]}$, where:
\begin{itemize}
    \item $\pi$ is a permutation from $[n]$ to $[n]$.
    \item $\alpha$ defines a probability distribution over $[n]$ (also called the initial state vector),
    \item For each $i \in [n]$, $T_{i}$ is a stochastic matrix over $\mathbb{R}^{|\Sigma| \times |\Sigma|}$
\end{itemize}

The procedure of generating the tuple $(x_{1}, \ldots, x_{n})$ (where $x_{i} \in \Sigma$) by $\ar{\texttt{M}}$ can be described recursively as follows:
 \begin{enumerate}
     \item \textbf{Generation of the first element of the sequence.} Generate $x_{\pi(1)}$ with probability $\alpha[x_{\pi(1)}]$.
     \item \textbf{Generation of the (i+1)-th element.} For $i \in [n-1]$, the probability of generating the element $x_{\pi(i+1)}$ given that $x_{\pi(i)}$ is generated is equal to $T[x_{\pi(i)}, x_{\pi(i+1)}]$
 \end{enumerate}

$$
      $$
\section{The Tractability of computing SHAP for the class of WAs: Proofs of Intermerdiary Results} \label{app:shapwa}

In this section of the appendix, we provide detailed proofs of intermediary mathematical statements to prove the tractability of computing different SHAP variants on the class of WAs (Theorem \ref{thm:shapwa}). Specifically, we shall provide proofs of three intermediary results: 
\begin{enumerate}
    \item Proposition \ref{prop:efficentoperations} that states the computational efficiency of implementing the projection operation. 
    \item Lemma~\ref{lemma:shapasoperations}, which demonstrates how the computation of both local and global Interventional and Basleline SHAP for the family of WAs under distributions modeled by HMMs can be reduced to performing operations over N-Alphabet WAs.
    \item Proposition \ref{prop:nletterwaconstruction}, which asserts that the construction of N-Alphabet WAs can be achieved in polynomial time, thus enabling the polynomial-time algorithmic construction for both $\texttt{LOC-I-SHAP}(\texttt{WA}, \texttt{HMM})$ and $\texttt{GLO-I-SHAP}(\texttt{WA}, \texttt{HMM})$.
\end{enumerate}

\subsection{Terminology and Technical Background}
The proof of Theorem \ref{thm:shapwa} will rely on certain technical tools that were not introduced in the main paper. This initial section is devoted to providing the technical background upon which the proofs of various results presented in the rest of this section rely.

\paragraph{The Kronecker product.} \label{app:sec:ter} 
The Kronecker product between $A \in \mathbb{R}^{n \times m}$ and $B \in \mathbb{R}^{k \times l}$, denoted $A \otimes B$, is a matrix in $\mathbb{R}^{(n \cdot k) \times (m \cdot l)}$  
constructed as follows 
 $$A \otimes B = \begin{bmatrix}
     a_{1,1} \cdot B & a_{1,2} \cdot B & \dots & a_{1,m} \cdot B] \\ 
     a_{2,1} \cdot B & a_{2,2} \cdot B & \dots &  a_{2,m} \cdot B] \\
     \vdots & \vdots & \vdots & \vdots \\ 
     a_{n,1} \cdot B & a_{n,2} \cdot B & \dots & a_{n,m} \cdot B
 \end{bmatrix}$$
 where, for $(i,j) \in [n] \times [m]$, $a_{i,j}$ corresponds to the element in the $i$-th row and the $j$-th column of $A$. A property of the Kronecker product of matrices that will be utilized in several proofs in the appendix is the \emph{mixed-product} property:

 \begin{property} \label{app:eq:mixedproduct}
  Let $A, B, C, D$ be four matrices with compatible dimensions. We have that:
  $$(A \cdot B) \otimes (C \cdot D) = (A \otimes C) \cdot (B \otimes D)$$
 \end{property}

\paragraph{N-Alphabet Deterministic Finite Automata (N-Alphabet DFAs).} In subsection \ref{prop:nletterwaconstruction}, we shall employ a sub-class of N-Alphabet WAs more adapted to model binary functions (i.e. functions whose output domain is $\{0,1\}$)in the proof of Proposition \ref{prop:nletterwaconstruction}. The class of N-Alphabet DFAs can be seen as a generalization of the classical family of Deterministic Finite Automata for the multi-alphabet case. N-Alphabet DFAs are formally defined as follows:
\begin{definition} \label{app:def:naldfa}
A N-Alphabet DFA $A$ is represented by a tuple $\langle Q, q_{init}, \delta, F \rangle$ where:
\begin{itemize}
    \item $Q$ is a finite set corresponding to the state space,
    \item $q_{init} \in Q$ is called the initial state.
    \item $\delta$, called the transition function, is a partial map from $Q \times \Sigma_{1} \times \ldots \times \Sigma_{N}$ to $Q$
    \item $F \subseteq Q$ is called the final state set
\end{itemize}
\end{definition}

Figure \ref{app:fig:nalphabetdfa} illustrates the graphical representation of some N-Alphabet DFAs. Analgous to N-Alphabet WAs, we shall use the terminology \textit{DFA}, instead of 1-Alphabet DFA for $N=1$.

To show how N-Alphabet DFAs compute (binary) functions, we need to introduce the notion of a \textit{path}. For a N-Alphabet DFA $A = \langle Q, q_{init}, \delta, F \rangle$ over $\Sigma_{1} \times \ldots \times \Sigma_{N}$, a valid path in $A$ is a sequence $P = (q_{1} ,\sigma_{1}^{(1)}, \ldots \sigma_{1}^{(N)}) \ldots (q_{L} ,\sigma_{1}^{(L)}, \ldots \sigma_{L}^{(N)}) q_{L+1}$ in $ (Q \times \Sigma_{1} \times \ldots \times  \Sigma_{N})^{*} \times Q$ such that for any $i \in [L]$, $\delta(q_{i} ,(\sigma_{i}^{(1)}, \ldots \sigma_{i}^{(N)})) = q_{i+1}$. Given this definition of a valid path, the N-Alphabet $A$ for a given tuple of sequences $(w^{(1)}, \ldots , w^{(N)}) \in \Sigma_{1}^{*} \times \ldots \times \Sigma_{N}^{*}$ such that $|w^{(1)}| = \ldots = |w^{(N)}|= L $ if and only if there exists a valid path  $(q_{1}, w_{1}^{(1)}, \ldots w_{1}^{(N)}) (q_{2}, w_{2}^{(1)}, \ldots , w_{2}^{(N)}) \ldots (q_{L}, w_{L}^{(1)}, \ldots, w_{L}^{(N)}) q_{L+1}$ such that $q_{1} = 1$ and $q_{L+1} \in F$. For instance, the sequence $abab$ of the 1-Alphabet DFA in Figure \ref{app:fig:nalphabetdfa} is labeled by $1$. Indeed, the valid path $(q_{init}, a) (q_{init},b)(q_{1},a)(q_{2},b)q_{1}$ satisfies these conditions.

\begin{figure}
  \begin{minipage}[t]{0.3\linewidth}
        \centering
        
        \begin{tikzpicture}[shorten >=1pt, node distance=2cm, on grid, auto]
            \node[state]   (q0)                {$q_{init}$};
            \node[state, accepting]            (q1) [right=of q0]  {$q_{1}$};
            \node[state] (q2) [right=of q1]  {$q_{2}$};
            
            \path[->]
                (q0) edge [loop above]   node {$a $}
                (q0)
                     edge []    node {$b $} (q1)
                 (q1) edge [loop above]   node {$c$}
                (q1) edge [bend left]    node {$a$} 
                (q2)
                     
                (q2) edge [loop above]   node {$a $} (q2)
                     edge [bend left]    node {$b$} (q1);
        \end{tikzpicture} \\
        \textbf{(a) A 1-Alphabet DFA: $\Sigma_{1} = \{a,b,c\}$}
    \end{minipage}%
    \hfill
    \begin{minipage}[t]{0.3\linewidth}
        \centering 
        \begin{tikzpicture}[shorten >=1pt, node distance=2cm, on grid, auto]
            \node[state]   (q0)                {$q_{init}$};
            \node[state,accepting]            (q1) [right=of q0]  {$q_{1}$};
            \node[state] (q2) [right=of q1]  {$q_{2}$};
            
            \path[->]
                (q0) edge [loop above]   node {$(a,0)$} (q0)
                     edge []    node {$(b,1)$} (q1)
                (q1) edge [bend left]    node {$(a,0)$} (q2)
                     edge [loop above]   node {$(b,1)$} (q1)
                (q2) edge [loop above]   node {$(c,1)$} (q2)
                     edge [bend left]    node {$(b,1)$} (q1);
        \end{tikzpicture} \\
         \textbf{(b) A 2-Alphabet DFA: $\Sigma_{1} = \{a,b,c\},~\Sigma_{2} = \{0,1\}$}
    \end{minipage}
    \hfill
    \begin{minipage}[t]{0.3\linewidth}
        \centering 
        \begin{tikzpicture}[shorten >=1pt, node distance=2cm, on grid, auto]
            \node[state]   (q0)                {$q_{init}$};
            \node[state, accepting]            (q1) [right=of q0]  {$q_{1}$};
            \node[state] (q2) [right=of q1]  {$q_{2}$};
            
            \path[->]
                (q0) edge [loop above]   node {$(a,0,x)$} (q0)
                     edge [bend left]    node {$(b,1,y)$} (q1)
                (q0)
                     edge [bend right, below]    node {$(b,1,x)$} (q1)
                (q1) edge [bend left]    node {$(a,0,x) $} (q2)
                     edge [loop above]   node {$(b,1,y)$} (q1)
                (q2) edge [loop above]   node {$(c,1,x)$} (q2)
                     edge [bend left]    node {$(b,1,x) $} (q1);
        \end{tikzpicture} \\
         \textbf{(b) A 3-Alphabet DFA : $\Sigma_{1} = \{a,b,c\},~\Sigma_{2} = \{0,1\},~\Sigma_{2} = \{x,y\}$}
    \end{minipage}
    \caption{A graphical representation of N-Alphabet DFAs. Nodes corresponding to final states are represented by double circles.}
  \label{app:fig:nalphabetdfa}
\end{figure}
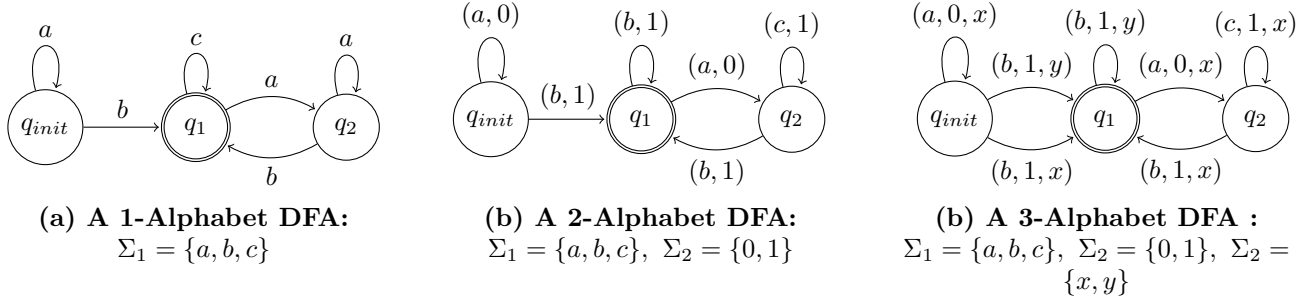
 
 \paragraph{Linear Algebra operations over N-Alphabet WAs.} 
 The development of polynomial-time algorithms for computing various SHAP variants for the class of WAs in section \ref{sec:tractable} of the main paper is based on two operations: the projection and the Kronecker product operations. In the following section of the appendix, we will provide a proof of the tractability of their construction.
 

In addition to these two operators, linear algebra operations over N-Alphabet WAs have also been implicitly utilized in the construction. The closure of 1-Letter WAs under linear algebra operations is a well-established result in the WA literature \citep{droste10}. For completeness, we offer a brief discussion below on how linear algebra operations can be extended to handle multiple alphabets, including their construction and the associated complexity results:

     \begin{itemize}
         \item \emph{The addition operation:} Given two N-Alpphabet WAs $T = \langle\alpha, \{A_{\sigma_{1},\ldots, \sigma_{N}}\}_{(\sigma_{1}, \ldots, \sigma_{N}) \in \Sigma_{1} \times \ldots \times \Sigma_{N}}, \beta\rangle$ and $T' = \langle\alpha', \{A'_{\sigma_{1},\ldots, \sigma_{N}}\}_{(\sigma_{1}, \ldots, \sigma_{N}) \in \Sigma_{1} \times \ldots \times \Sigma_{N}}, \beta'\rangle$  over $\Sigma_{1} \times \ldots \times \Sigma_{N}$, the N-Alphabet WA, denoted $T + T'$, that computes the function:
         $$f_{T + T'}(w^{(1)}, \ldots, w^{(N)}) = f_{T}(w^{(1)}, \ldots, w^{(N)}) + f_{T'}(w^{(1)}, \ldots, w^{(N)})$$
         is parametrized as follows:
          $$\langle\begin{pmatrix}
              \alpha \\ \alpha'
          \end{pmatrix} , \{\begin{pmatrix}
               A_{\sigma_{1}, \ldots , \sigma_{N}} & \mathbf{O} \\
               \mathbf{O} & A'_{\sigma_{1}, \ldots, \sigma_{N}}
          \end{pmatrix} \}_{(\sigma_{1}, \ldots, \sigma_{N}) \in \Sigma_{1} \times \ldots \times \Sigma_{N}}, \begin{pmatrix}
              \beta \\ \beta' 
          \end{pmatrix} \rangle$$
          The running time of the addition operation is $O(|\Sigma|^{N} \cdot (\texttt{size}(T) + \texttt{size}(T') ))$. The size of the resulting N-Alphabet WA is equal to $O(\texttt{size}(T) + \texttt{size}(T'))$.
          \item \emph{Multiplication by a scalar.} Let $T = \langle\alpha, \{A_{\sigma_{1},\ldots,\sigma_{N}}\}_{(\sigma_{1}, \ldots, \sigma_{N}) \in \Sigma_{1} \times \ldots \times \Sigma_{N}}, \beta\rangle$ be an N-Alphabet WA over $\Sigma_{1} \times \ldots \times \Sigma_{N}$, and a real number $C> 0$, the N-Alphabet WA, denoted $C \cdot T$ that computes the function $f_{C \cdot T}(w^{(1)}, \ldots, w^{(N)}) = C \cdot f_{T}(w^{(1)}, \ldots, w^{(N)})$ is parametrized as: $\langle C \cdot \alpha, \{ A_{\sigma_{1},  \ldots, \sigma_{N}}\}_{(\sigma_{1}, \ldots, \sigma_{N}) \in \Sigma_{1} \times \ldots \times \Sigma_{N}}, \beta \rangle$. It is easy to see that the construction of the N-Alphabet WA $C \cdot T$ runs in $O(1)$ time, and has size equal to the size of $T$. 
     \end{itemize}
  \begin{table}[ht]
  \footnotesize
  	\setlength{\tabcolsep}{0.8em}
    \centering
            \caption{Operations on N-Alphabet WAs, along with their time complexity and output size. The ``In 1'' and ``In 2'' (respectively ``Out'') columns indicate the number of alphabets in the input N-Alphabet WAs for each operation. The ``Time'' column specifies the time complexity of executing the operation, and the ``Output size'' column denotes the size of the resulting N-Alphabet WA after the operation is applied. By convention, a value of $0$ in the ``Output'' and ``Output size'' columns indicates a scalar result.
            }
    \begin{tabular}{|c|c|c|c|c|c|}
        \hline
         & \textbf{In 1} & \textbf{In 2} & \textbf{Out} & \textbf{Time} & \textbf{Output size} \\ \hline
        Addition ($+$) & $N$ & $N$ & $N$ & $O(\max\limits_{i \in [N]} |\Sigma_{i}|^{N} \cdot (\texttt{size}(\text{in}_{1}) + \texttt{size}(\text{in}_{2}))$ & $O(\texttt{size}(\text{in}_{1}) + \texttt{size}(\text{in}_{2}))$ \\ \hline
        Scalar Multiplication & $N$ & $0$ & $N$ & $O(1)$ & $O(\texttt{size}(\text{in}_{1})$  \\ \hline
        $\Pi_{0}$ & $1$ & - & $0$ & $O(|\Sigma_{1}| \cdot \texttt{size}(\text{in}_{1})^{2} \cdot n)$ & $0$ \\ \hline
        $\Pi_{1}$ & $1$ & $1$ & $0$ & $O(|\Sigma_{1}| \cdot (| \texttt{size}(\text{in}_{1}) \cdot \texttt{size}(\text{in}_{2}))^{2} \cdot n)$ \footnote{We assume that the operations $\Pi_{0}$ and $\Pi_{1}$ are applied to WAs whose support is equal to $\Sigma^{n}$. This assumption is sufficient for the purpose of our work. In this case, the running time complexity of $\Pi_{0}$ and $\Pi_{1}$ depends on the parameter $n$.} & $0$ \\ \hline
        $\Pi_{i}$ ($i \geq 2$) & $1$ & $N$ & $N-1$ &$O(\max\limits_{i \in [N]} |\Sigma_{i}|^{N} \cdot \texttt{size}(\text{in}_{1}) \cdot \texttt{size}(\text{in}_{2}))$ &  $O(\texttt{size}(\text{in}_{1}) \cdot \texttt{size}(\text{in}_{2}))$ \\ \hline
        $\otimes$ & $N$ & $N$ & $N$ &$O(\max\limits_{i \in [N]} |\Sigma_{i}|^{N} \cdot \texttt{size}(\text{in}_{1}) \cdot \texttt{size}(\text{in}_{2}))$ &  $O(\texttt{size}(\text{in}_{1}) \cdot \texttt{size}(\text{in}_{2}))$ \\ \hline
    \end{tabular}
    \label{app:fig:operationswas}
\end{table}

     A summary of the running time complexity and the size of outputted WAs by all operations over $N$-Alphabet encountered in this work can be found in Table \ref{app:fig:operationswas}. 

\subsection{Proof of proposition \ref{prop:efficentoperations}}
Recall the statement of Proposition \ref{prop:efficentoperations}:

\begin{unumberedproposition}
       Assume that $N = O(1)$. Then, the projection and the Kronecker product operations between $N$-Alphabet WAs can be computed in polynomial time.
\end{unumberedproposition}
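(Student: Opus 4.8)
The plan is to exhibit explicit WA constructions for both operations, verify their correctness using the mixed-product property of the matrix Kronecker product (Property~\ref{app:eq:mixedproduct}), and then bound the construction cost. Write the input WAs as $A = \langle \alpha_A, \{A_{\sigma_1,\ldots,\sigma_N}\}, \beta_A\rangle$ and $B = \langle \alpha_B, \{B_{\sigma_1,\ldots,\sigma_N}\}, \beta_B\rangle$.

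For the Kronecker product $A \otimes B$, I would take as candidate the $N$-Alphabet WA with initial vector $\alpha_A \otimes \alpha_B$, final vector $\beta_A \otimes \beta_B$, and transition matrices $A_{\sigma_1,\ldots,\sigma_N} \otimes B_{\sigma_1,\ldots,\sigma_N}$ (matrix Kronecker products). Correctness follows from applying Property~\ref{app:eq:mixedproduct} position by position along a run of length $L$, giving
$$\prod_{k=1}^{L}\bigl(A_{w_k^{(1)}\ldots w_k^{(N)}} \otimes B_{w_k^{(1)}\ldots w_k^{(N)}}\bigr) = \Bigl(\prod_{k=1}^{L} A_{w_k^{(1)}\ldots w_k^{(N)}}\Bigr) \otimes \Bigl(\prod_{k=1}^{L} B_{w_k^{(1)}\ldots w_k^{(N)}}\Bigr),$$
together with the identity $(u\otimes v)^{T}(M\otimes M')(p\otimes q) = (u^{T}Mp)\cdot(v^{T}M'q)$, which makes the computed value factorize exactly as $f_A \cdot f_B$.

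For the projection $\Pi_i(A,T)$ of a $1$-Alphabet WA $A=\langle \alpha_A,\{A_\sigma\}_{\sigma\in\Sigma_i},\beta_A\rangle$ onto an $N$-Alphabet WA $T=\langle\alpha_T,\{T_{\ldots}\},\beta_T\rangle$, I would first rewrite, for each fixed string $w\in\Sigma_i^{L}$ and fixed remaining coordinates, the scalar product $f_A(w)\cdot f_T(\ldots,w,\ldots)$ in the Kronecker form above, and then sum over $w$. The crucial step is to push the sum over $\Sigma_i^{L}$ inside the product: since the factor at position $k$ depends on the summed string only through its $k$-th symbol, distributivity gives
$$\sum_{w\in\Sigma_i^{L}}\ \prod_{k=1}^{L}\Bigl(A_{w_k}\otimes T_{\ldots,w_k,\ldots}\Bigr) = \prod_{k=1}^{L}\ \sum_{\sigma\in\Sigma_i}\Bigl(A_{\sigma}\otimes T_{\ldots,\sigma,\ldots}\Bigr).$$
This yields an $(N-1)$-Alphabet WA with initial vector $\alpha_A\otimes\alpha_T$, final vector $\beta_A\otimes\beta_T$, and, for each tuple of remaining symbols $(\tau_1,\ldots,\tau_{i-1},\tau_{i+1},\ldots,\tau_N)$, transition matrix $\sum_{\sigma\in\Sigma_i} A_\sigma \otimes T_{\tau_1,\ldots,\sigma,\ldots,\tau_N}$ (with $\sigma$ placed in coordinate $i$). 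The cases $N\in\{0,1\}$ collapse to the scalar conventions already stated after Definition~\ref{def:projectionoperation}.

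For the complexity bound, in both constructions the output state dimension is $\texttt{size}(A)\cdot\texttt{size}(\cdot)$, so each transition matrix has $(\texttt{size}(A)\cdot\texttt{size}(\cdot))^{2}$ entries and is assembled from at most $|\Sigma_i|$ Kronecker products and additions, each polynomial in the input sizes. The number of transition matrices to build is $\prod_{j}|\Sigma_j|\le(\max_j|\Sigma_j|)^{N}$; here the hypothesis $N=O(1)$ is exactly what keeps this factor polynomial rather than exponential, so the whole construction runs in polynomial time. The main obstacle is the correctness of the projection, specifically justifying that the summation over $\Sigma_i^{L}$ commutes into the matrix product; I expect this to need a short induction on $L$ (or an explicit distributivity argument), relying on the fact that each summed symbol enters exactly one factor.
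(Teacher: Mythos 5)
Your proposal is correct and follows essentially the same route as the paper's proof: the identical explicit constructions (initial/final vectors as Kronecker products of the input vectors, transition matrices $A_{\sigma_1,\ldots,\sigma_N}\otimes B_{\sigma_1,\ldots,\sigma_N}$ for the product and $\sum_{\sigma\in\Sigma_i}A_\sigma\otimes T_{\ldots,\sigma,\ldots}$ for the projection), correctness via the mixed-product property, and the same distributivity step that commutes the sum over $\Sigma_i^{L}$ with the matrix product. The paper carries out that last step as a direct algebraic identity rather than an induction, and your complexity accounting (the $(\max_j|\Sigma_j|)^{N}$ count of transition matrices tamed by $N=O(1)$) matches what the paper records in its table of operation costs.
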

 The following result provides an implicit construction of these two operators which implicitly induces the result of Proposition \ref{prop:efficentoperations}: 

\begin{proposition}
Let $N$ be an integer, and $\{\Sigma_{i}\}_{i \in [N]}$, a collection of finite alphabets. We have: 
\begin{enumerate}
 \item The projection operation: Fix an integer $i \in [N]$. Let $A = \langle \alpha, \{A_{\sigma}\}_{\sigma \in \Sigma}, \beta \rangle$ be a WA over $\Sigma_{i}$,  $T = (\alpha', \{A'_{\sigma_{1},\ldots, \sigma_{N}}\}_{(\sigma_{1}, \ldots, \sigma_{N}) \in \Sigma_{1} \times \ldots \times \Sigma_{N}}, \beta')$ be an N-Alphabet WA over $\Sigma_{1} \times \ldots \times \Sigma_{N}$, and $A$ be a WA over $\Sigma_{i}$. The projection of $A$ over $T$ at index $i$, denoted $\Pi_{i}(A,T)$, is parametrized as:
     \begin{align*}
     \Pi_{i}(A,T) :=&  \langle\Sigma_{1} \times \ldots \times \Sigma_{i-1} \times \Sigma_{i} \times \ldots \times \Sigma_{N}, \alpha \otimes \alpha' , \\
     & \{ \sum\limits_{\sigma_{i} \in \Sigma_{i}} A_{\sigma_{i}} \otimes A'_{\sigma_{1}, \ldots, \sigma_{i-1}, \sigma_{i+1}, \ldots, \sigma_{N}} \}_{(\sigma_{1}, \ldots, \sigma_{i-1},\sigma_{i}, \ldots, \sigma_{N}) \in \Sigma_{1} \times \ldots \times \Sigma_{i-1} \times \Sigma_{i+1} \times \ldots \times \Sigma_{N}} ,\beta \otimes \beta'\rangle
     \end{align*}

  \item The Kronecker product operation: Let $T = \langle\alpha, \{A_{\sigma_{1},\ldots, \sigma_{N}}\}_{(\sigma_{1}, \ldots, \sigma_{N}) \in \Sigma_{1} \times \ldots \times \Sigma_{N}}, \beta\rangle$ and $T' = \langle\alpha', \{A'_{\sigma_{1},\ldots, \sigma_{N}}\}_{(\sigma_{1}, \ldots, \sigma_{N}) \in \Sigma_{1} \times \ldots \times \Sigma_{N}}, \beta'\rangle$ be two N-Alphabet WAs over $\Sigma_{1} \times \ldots \times \Sigma_{N}$. The Kronecker product between $T$ and $T'$, $T \otimes T'$, is parametrized as:
  $$T \otimes T' = \langle\alpha \otimes \alpha', \sum\limits_{\sigma \in \Sigma} A_{\sigma_{1}, \ldots, \sigma_{N}} \otimes A'_{\sigma_{1}, \ldots, \sigma_{N}}, \beta \otimes \beta'\rangle$$
\end{enumerate}
\end{proposition}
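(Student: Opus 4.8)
The plan is to verify that each explicit parametrization computes exactly the function prescribed in Definition~\ref{def:projectionoperation}, and then to read off the size and time bounds in order to recover Proposition~\ref{prop:efficentoperations}. The single algebraic engine behind both verifications is the mixed-product property (Property~\ref{app:eq:mixedproduct}), which lets Kronecker products commute past ordinary matrix products.

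For the Kronecker product operation, I would start from the function computed by the claimed WA,
$$f_{T \otimes T'}(w^{(1)}, \ldots, w^{(N)}) = (\alpha \otimes \alpha')^{T} \Big(\prod_{j=1}^{L} A_{w_j^{(1)}, \ldots, w_j^{(N)}} \otimes A'_{w_j^{(1)}, \ldots, w_j^{(N)}}\Big)(\beta \otimes \beta').$$
The key step is to collapse the product of Kronecker products into a single Kronecker product of two ordinary matrix products by applying the mixed-product property $L$ times, i.e. $\prod_{j=1}^{L}(A_{w_j}\otimes A'_{w_j})=(\prod_{j=1}^{L}A_{w_j})\otimes(\prod_{j=1}^{L}A'_{w_j})$ (writing $w_j$ for the symbol tuple at position $j$). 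One further application absorbs the boundary vectors and gives $f_{T\otimes T'}(w^{(1)},\ldots,w^{(N)}) = (\alpha^{T}\prod_j A_{w_j}\,\beta)\otimes({\alpha'}^{T}\prod_j A'_{w_j}\,\beta')$, which, being a Kronecker product of two scalars, is exactly $f_T \cdot f_{T'}$ as required.

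For the projection operation the argument is parallel but needs one additional combinatorial move. After writing $f_{\Pi_i(A,T)}$ as $(\alpha\otimes\alpha')^{T}\big(\prod_{j=1}^{L} B_j\big)(\beta\otimes\beta')$, where each factor $B_j = \sum_{\sigma_i\in\Sigma_i} A_{\sigma_i}\otimes A'_{\ldots,\sigma_i,\ldots}$ is a sum of $|\Sigma_i|$ terms, I would distribute the product over the $L$ internal sums to obtain a single sum ranging over all tuples $(\sigma_i^{(1)},\ldots,\sigma_i^{(L)})\in\Sigma_i^{L}$, i.e. over all words $w\in\Sigma_i^{L}$. Applying the mixed-product collapse inside each summand and absorbing the boundary vectors as before yields $\sum_{w\in\Sigma_i^{L}} f_A(w)\cdot f_T(\ldots,w,\ldots)$, matching the definition. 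The main obstacle here is purely bookkeeping: keeping the multi-index structure straight — which alphabet coordinate is being summed out versus which are carried through as the free arguments of the resulting $(N-1)$-alphabet WA — and confirming that the distributivity expansion reindexes precisely as a sum over $\Sigma_i^{L}$.

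Finally, the complexity claim follows by inspection of the constructed objects. The boundary vectors have dimension $\texttt{size}(A)\cdot\texttt{size}(T)$, and each transition matrix is a sum of at most $|\Sigma_i|$ Kronecker products of matrices of that dimension; the number of transition matrices to build is $\prod_{k\neq i}|\Sigma_k|\le(\max_k|\Sigma_k|)^{N}$ for the projection (respectively $\prod_k|\Sigma_k|$ for the Kronecker product). Since $N=O(1)$, these counts are polynomial in $\max_k|\Sigma_k|$, and each matrix is assembled in polynomial time, which gives the poly-time bound of Proposition~\ref{prop:efficentoperations} and matches the entries tabulated in Table~\ref{app:fig:operationswas}.
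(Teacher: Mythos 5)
Your proposal is correct and follows essentially the same route as the paper's proof: both arguments hinge on the mixed-product property of the Kronecker product together with exchanging the product over positions with the sum over $\Sigma_i^{L}$, the only difference being that you verify the parametrization forward (from the constructed WA to the target function) while the paper derives it backward (from the target function to the constructed WA) — the same chain of equalities read in the opposite direction. The complexity accounting you give likewise matches the paper's treatment in Proposition~\ref{prop:efficentoperations} and Table~\ref{app:fig:operationswas}.
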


\begin{proof}
    Let $N$ be an integer, and $\{\Sigma_{i}\}_{i \in [N]}$, a collection of finite alphabets.
    \begin{enumerate}
     \item For the projection operation: Fix $i \in [N]$. Let $T = \langle\alpha', \{A'_{\sigma_{1},\ldots, \sigma_{N}}\}_{(\sigma_{1}, \ldots, \sigma_{N}) \in \Sigma_{1} \times \ldots \times \Sigma_{N}}, \beta'\rangle$ be an N-Alphabet WA over $\Sigma_{1} \times \ldots \times \Sigma_{N}$, and let $A$ be a WA over $\Sigma_{i}$. Let there be some $(w^{(1)}, \ldots, w^{(N)}) \in \Sigma_{1}^{*} \times \ldots \Sigma_{N}^{*}$, such that $|w^{(1)}| = \ldots = |w^{(N)}| = L$. We have:

    \begin{align*}
        f_{\Pi_{i}(A,T)}(w^{(1)}, \ldots, w^{(i-1)}, w^{(i+1)}, w^{(N)}) &= \sum\limits_{w \in \Sigma_{i}^{L}} f_{A}(w) \cdot f_{T}(w^{(1)}, \ldots, w^{(i-1)}, w, w^{(i+1)}, w^{(N)}) \\
        &= \sum\limits_{w \in \Sigma_{i}^{L}} \left( \alpha^{T} \cdot \prod\limits_{j=1}^{L} A_{w_{j}} \cdot \beta \right) \cdot \left( \alpha'^{T} \cdot \prod\limits_{j=1}^{L} A'_{w_{j}^{(1)}, \ldots, w_{j}^{(i)}, \ldots w_{j}^{(N)}} \cdot \beta' \right)  \\
        &= \sum\limits_{w \in \Sigma_{i}^{L}} (\alpha \otimes \alpha')^{T} \cdot \left[ \prod\limits_{j=1}^{L} A_{w_{j}} \otimes A'_{w_{j}^{(1)}, \ldots, w_{j}^{(i)} \ldots w_{j}^{(N)}} \right] \cdot (\beta \otimes \beta') \\
        &= (\alpha \otimes \alpha')^{T} \cdot \prod\limits_{j=1}^{L} \left( \sum\limits_{\sigma \in \Sigma_{i}} A_{\sigma} \otimes A'_{w_{j}^{(1)}, \ldots, \sigma, w_{j}^{(N)}} \right) \cdot (\beta \otimes \beta')
    \end{align*}
    where the third equality is obtained using the mixed-product property of the Kronecker product between matrices.
    \item The Kronecker product operation: Let $T = \langle\alpha, \{A_{\sigma_{1},\ldots, \sigma_{N}}\}_{(\sigma_{1}, \ldots, \sigma_{N}) \in \Sigma_{1} \times \ldots \times \Sigma_{N}}, \beta\rangle$ and $T' = \langle\alpha', \{A'_{\sigma_{1},\ldots, \sigma_{N}}\}_{(\sigma_{1}, \ldots, \sigma_{N}) \in \Sigma_{1} \times \ldots \times \Sigma_{N}}, \beta'\rangle$ be two N-Alphabets WA over $\Sigma_{1} \times \ldots \times \Sigma_{N}$. Let $(w^{(1)}, \ldots, w^{(N)}) \in \Sigma_{1}^{*} \times \ldots \Sigma_{N}^{*}$ such that $|w^{(1)}| = \ldots = |w^{(N)}| = L$. We have:
    \begin{align*}
        f_{T \otimes T'}(w^{(1)}, \ldots, w^{(N)}) &= f_{T}(w^{(1)}, \ldots, w^{(N)}) \cdot f_{T'}(w^{(1)}, \ldots, w^{(N)}) \\
        &= (\alpha^{T} \cdot \prod\limits_{j=1}^{L} A_{w_{j}^{(1)}, \ldots w_{j}^{N}} \cdot \beta) \cdot (\alpha'^{T} \cdot \prod\limits_{j=1}^{L} A'_{w_{j}^{(1)}, \ldots w_{j}^{N}} \cdot \beta') \\
        &=  (\alpha \otimes \alpha')^{T} \cdot \prod\limits_{j=1}^{L} \left( A_{w_{j}^{(1)}, \ldots w_{j}^{N}} \otimes A'_{w_{j}^{(1)}, \ldots w_{j}^{N}} \right) \cdot (\beta \otimes \beta')
    \end{align*}
    where the last equality is obtained using the mixed-product property of the Kronecker product between matrices.
    \end{enumerate}
\end{proof}

\subsection{Proof of Lemma \ref{lemma:shapasoperations}}

In this segment, we provide the proof of the main lemma of section \ref{sec:tractable}:

\begin{unumberedlemma}
Fix a finite alphabet $\Sigma$. Let $f$ be a WA over $\Sigma$, and consider a sequence $(w, w^{\text{reff}}) \in \Sigma^{*} \times \Sigma^{}$ (representing an input and a basline $\x, \x^{\text{reff}} \in \mathcal{X}$) such that $|w| = |w^{\text{reff}}|$. Let $i \in [|w|]$ be an integer, and $\mathcal{D}_P$ be a distribution modeled by an HMM over $\Sigma$. Then:
         {\small 
            \begin{align*}
         \phi_i
         (f,w,i,\mathcal{D}_P) = \quad\quad\quad\quad\quad\quad\quad\quad\quad\quad\quad\quad\quad\quad\quad\quad \\ \Pi_{1} (A_{w,i}, \Pi_{2}(\mathcal{D}_P, \Pi_{3}(f,T_{w,i}) 
          - \Pi_{3}(f,T_{w}) ) ); \quad\quad \\
            \Phi_i(f,i,n,\mathcal{D}_P) = \quad\quad\quad\quad\quad\quad\quad\quad\quad\quad\quad\quad\quad\quad\quad\quad \\ 
            \Pi_{0} ( \Pi_{2}(\mathcal{D}_P, A_{i,n} \otimes \Pi_{2}(\mathcal{D}_P, 
             \Pi_{3}(f,T_{i}) - \Pi_{3}(f,T)))); \quad\\
            \phi_b(f,w,i,w^{\text{reff}}) = \quad\quad\quad\quad\quad\quad\quad\quad\quad\quad\quad\quad\quad\quad\quad\quad \\ \Pi_{1} (A_{w,i}, \Pi_{2}(f_{w^{\text{reff}}}, \Pi_{3}(f,T_{w,i}) 
          - \Pi_{3}(f,T_{w})));\quad \\
            \Phi_b(f,i,n,w^{\text{reff}},\mathcal{D}_P) = \quad\quad\quad\quad\quad\quad\quad\quad\quad\quad\quad\quad\quad\quad \\ \Pi_{0} ( \Pi_{2}(\mathcal{D}_P, A_{i,n} \otimes \Pi_{2}(f_{w^{\text{reff}}} ,
            \Pi_{3}(f,T_{i}) - \Pi_{3}(f,T)) ))
               \end{align*}
             } where:
    \begin{itemize}
        \item $A_{w,i}$ is a 1-Alphabet WA over $\Sigma_{\#}$ implementing the uniform distribution over coalitions excluding the feature $i$ (i.e., $f_{A_{w,i}} = \mathcal{P}_{i}^{w}$);
        \item $T_{w}$ is a 3-Alphabet \emph{WA} over $\Sigma_{\#} \times \Sigma \times \Sigma$ implementing the function: $ g_{w}(p,w',u) := I(\texttt{do}(p,w',w) = u)$.
        \item $T_{w,i}$ is a 3-Alphabet \emph{WA} over $\Sigma_{\#} \times \Sigma \times \Sigma$ implementing the function: $            g_{w,i}(p,w',u) := I(\texttt{do}(\texttt{swap}(p,w_{i},i),w',w) = u)$.
        \item $T$ is a 4-Alphabet \emph{WA} over $\Sigma_{\#} \times \Sigma \times \Sigma \times \Sigma$ given as: $g(p,w',u,w) := g_{w}(p,w',u)$.
        \item $T_{i}$ is a 4-Alphabet \emph{WA} over $\Sigma_{\#} \times \Sigma \times \Sigma \times \Sigma$ given as: $ g_{i}(p,w',u,w) := g_{w,i}(p,w',u)$.
        \item $A_{i,n}$ is a 2-Alphabet \emph{WA} over $\Sigma_{\#} \times \Sigma$ implementing the function:
         $g_{i,n}(p,w) := I(p \in \mathcal{L}_{i}^{w}) \cdot \mathcal{P}_{i}^{w}(p)$,
         where $|w| = |p| = n$.
       \item $f_{w^{\text{reff}}}$ is an \emph{HMM} such that the probability of generating $w^{\text{reff}}$ as a prefix is equal to $1$.
    \end{itemize}

\end{unumberedlemma}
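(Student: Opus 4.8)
The plan is to start from the characterization of interventional SHAP established in the preceding lemma, which expresses $\phi_i$ (and, via the fixed-baseline substitution, $\phi_b$) as a nested expectation, and then to realize each averaging/summation layer as one of the N-Alphabet WA operations from Definition \ref{def:projectionoperation} (projection $\Pi_i$, Kronecker product $\otimes$, and the addition/subtraction closure), verifying at each stage that the composed automaton computes exactly the intended intermediate function. The four identities are then proved by peeling off these layers from the inside out, so the whole argument reduces to correctly aligning alphabet indices with the semantics of $A_{w,i}$, $T_w$, $T_{w,i}$, $T$, $T_i$, $A_{i,n}$, and $f_{w^{\text{reff}}}$.

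For local interventional SHAP I would first expand $v_i(f,w,p,\mathcal{D}_P) = \sum_{w'} \mathcal{D}_P(w')\, f(\texttt{do}(p,w',w))$ and rewrite $f(\texttt{do}(p,w',w)) = \sum_u f(u)\, g_w(p,w',u)$ using the indicator semantics $g_w(p,w',u) = I(\texttt{do}(p,w',w)=u)$ of $T_w$. By the definition of the projection operator, $\Pi_3(f,T_w)$ sums out the $u$-alphabet weighted by $f$, yielding a $2$-alphabet WA over $(p,w')$ computing $f(\texttt{do}(p,w',w))$; applying $\Pi_2(\mathcal{D}_P,\cdot)$ then averages out $w'$ against $\mathcal{D}_P$, producing a $1$-alphabet WA over $p$ equal to $v_i(f,w,p,\mathcal{D}_P)$. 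The analogous chain with $T_{w,i}$ produces $v_i(f,w,\texttt{swap}(p,w_i,i),\mathcal{D}_P)$, so the difference $\Pi_3(f,T_{w,i}) - \Pi_3(f,T_w)$ followed by $\Pi_2(\mathcal{D}_P,\cdot)$ yields the marginal contribution as a function of $p$, and $\Pi_1(A_{w,i},\cdot)$ averages this over patterns with the Shapley weight $\mathcal{P}_i^w$ implemented by $A_{w,i}$, giving the first identity. The baseline identity $\phi_b$ follows from the same argument: since $f_{w^{\text{reff}}}$ is an HMM placing all its mass on $w^{\text{reff}}$, substituting it for $\mathcal{D}_P$ in the inner $\Pi_2$ collapses the expectation to $f(\texttt{do}(p,w^{\text{reff}},w))$, as required.

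For the global identities the explained input $w$ must itself be averaged over, so it can no longer be hard-coded into the automata; this is the crux of the argument. I would internalize $w$ as an additional alphabet, which is exactly why $T$ and $T_i$ are $4$-alphabet WAs over $(p,w',u,w)$ with $g(p,w',u,w)=g_w(p,w',u)$ and $g_i(p,w',u,w)=g_{w,i}(p,w',u)$, and why $A_{i,n}$ is the $2$-alphabet WA over $(p,w)$ carrying the weight $I(p\in\mathcal{L}_i^w)\,\mathcal{P}_i^w(p)$. Projecting out $u$ with $\Pi_3(f,\cdot)$ and then $w'$ with $\Pi_2(\mathcal{D}_P,\cdot)$ leaves a $2$-alphabet WA over $(p,w)$ computing the marginal contribution now parameterised by both $p$ and $w$; taking the Kronecker product with $A_{i,n}$ multiplies in the Shapley weight, the outer $\Pi_2(\mathcal{D}_P,\cdot)$ averages out $w$, and the outermost $\Pi_0$ sums the remaining $p$-alphabet, so the composition evaluates to $\mathbb{E}_{w\sim\mathcal{D}_P}\big[\sum_p \mathcal{P}_i^w(p)(\cdots)\big] = \mathbb{E}_{w\sim\mathcal{D}_P}[\phi_i(f,w,i,\mathcal{D}_P)] = \Phi_i$. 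The global baseline identity is identical, replacing the inner $\Pi_2(\mathcal{D}_P,\cdot)$ over $w'$ by $\Pi_2(f_{w^{\text{reff}}},\cdot)$ while keeping the outer $\Pi_2(\mathcal{D}_P,\cdot)$ over $w$ intact.

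The main obstacle is the bookkeeping in the global case: one must track precisely which alphabet each operator acts on, since two distinct $\Pi_2$ projections appear — the inner one averaging the reference draw (the $w'$-alphabet) inside the value function, and the outer one averaging the explained input $w$ — and they must be applied in the correct order, separated by the Kronecker product with $A_{i,n}$ that injects the Shapley coefficient. A secondary subtlety is checking that the indicator $I(p\in\mathcal{L}_i^w)$ built into $A_{i,n}$ restricts the support so that the combined multiplicative weight matches $\mathcal{P}_i^w$ exactly, ensuring no spurious patterns contribute. Since the semantics of each individual operator is immediate from Definition \ref{def:projectionoperation} and the closure properties, once the alphabet indices are correctly aligned the four identities follow by direct substitution.
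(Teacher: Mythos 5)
Your proposal is correct and follows essentially the same route as the paper's own proof: expanding the value function via the indicator semantics of $T_w$, $T_{w,i}$ (resp.\ $T$, $T_i$), peeling off the $u$-alphabet with $\Pi_3(f,\cdot)$, the reference draw with $\Pi_2$, and the pattern weight with $\Pi_1$ (resp.\ $\otimes\,A_{i,n}$ followed by the outer $\Pi_2$ and $\Pi_0$), with the baseline identities obtained by substituting $f_{w^{\text{reff}}}$ for $\mathcal{D}_P$ in the inner projection. The bookkeeping subtleties you flag (the two distinct $\Pi_2$ layers in the global case and the support restriction in $A_{i,n}$) are exactly the points the paper's derivation handles, so no gap remains.
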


\begin{proof}

We will prove the complexity results specifically for the cases involving either local or global \emph{Interventional} SHAP. The corresponding proof for local and global Baseline SHAP can be derived by following the same approach as in the interventional case, with the sole modification of replacing $\mathcal{D}_{p}$ with the HMM that models the empirical distribution induced by the reference instance $w^{\text{reff}}$.



Fix a finite alphabet $\Sigma$. Let $f$ be a WA over $\Sigma$, $w \in \Sigma^{*}$ a sequence, $i \in [|w|]$ an integer, and $\mathcal{D}_{P}$ an HMM. Define $A_{w,i}$ as a WA and $T_{w},~T_{w,i}$ as two 3-Alphabet WAs as stated in the lemma. We divide the following proof into two parts. The first part addresses the local interventional SHAP version, while the second part covers the global version.

    \begin{enumerate}
        \item For \emph{local Interventional SHAP} we have that:
        \begin{align}
            \phi_i(f,w,i,\mathcal{D}_P) &= \mathbb{E}_{p \sim \mathcal{P}_{i}^{w}} \left[ V_{I}(w,\texttt{swap}(p,w_{i},i), \mathcal{D}_P) - 
 V_{I}(w,p,\mathcal{D}_P) \right] \nonumber \\ 
 &= \sum\limits_{p \in \Sigma_{\#}^{|w|}} f_{A_{w,i}}(p) \left[ \sum\limits_{w' \in \Sigma^{w}} \mathcal{D}_P(w') \cdot \left[ f(\texttt{do}(\texttt{swap}(p,w'_{i},i),w',w)) - f(\texttt{do}(p,w',w)\right] \right] \label{eq:locishap}
        \end{align}
        
 Note that for any $p \in \Sigma_{\#}^{|w|}$ and $(w',u) \in \Sigma^{|w|} \times \Sigma^{|w|}$, we have:
 \begin{equation} \label{eq:obswi}
     f(\texttt{do}(\texttt{swap}(p,w'_{i},i),w',w) = \sum\limits_{u \in \Sigma^{|w|}} f(u) \cdot g_{w,i}(p,w',u) = f_{\Pi_{3}(f,T_{w,i})}(p,w')  
 \end{equation}
 and,
 \begin{equation} \label{eq:obsw}
     f(\texttt{do}(p,w',w)) = \sum\limits_{u \in \Sigma^{|w|}} f(u) \cdot g_{w}(p,w',w) = f_{\Pi_{3}(f,T_{w})}(p,w')
 \end{equation}
 where $g_{w,i}$ and $g_{w}$ are defined implicitly in the body of the lemma statement.

 By plugging equations \eqref{eq:obswi} and \eqref{eq:obsw} in Equation \eqref{eq:locishap}, we obtain:
 \begin{align*}
     \phi_i(f,w,i, \mathcal{D}_P) &=  \sum\limits_{p \in \Sigma_{\#}^{|w|}} f_{A_{w,i}}(p) \left[ \sum\limits_{w' \in \Sigma^{|w|}} \mathcal{D}_P(w') \cdot [ f_{\Pi_{3}(M,T_{w,i})}(p,w') - f_{\Pi_{3}}(M, T_{w}) (p,w') ] \right]
 \end{align*}
 To ease exposition, we employ the symbol $\Tilde{T}$ to refer to the intermediary 2-Alphabet WA over $\Sigma_{\#} \times \Sigma$ defined as:
 $$\Tilde{T} \myeq \Pi_{3}(M,T_{w,i}) - \Pi_{3}(M,T_{w})$$
 Then, we have:
\begin{align*}
    \phi_i(f,w,i, \mathcal{D}_P) &= \sum\limits_{p \in \Sigma_{\#}^{|w|}} f_{A_{w,i}}(p) \left[ \mathcal{D}_P(w') \cdot f_{\Tilde{T}}(p,w') \right] \\
    &= \sum\limits_{p \in \Sigma_{\#}^{|w|}} f_{A_{w,i}}(p) \cdot f_{\Pi_{2}(\mathcal{D}_P,\Tilde{T})}(p) \\
    &= \Pi_{1}(A_{w,i}, \Pi_{2}(\mathcal{D}_P,\Tilde{T}))
\end{align*}
  \item For \emph{Global Interventional SHAP}, the proof follows the same structure as that of the Local version. We hence have that:
 \begin{align}
     \phi_i(f,i,n,P) &= \sum\limits_{w \in \Sigma^{n}} \mathcal{D}_P(w) \cdot \phi_i(f,w,i, \mathcal{D}_P) \nonumber \\
     &= \sum\limits_{w \in \Sigma^{n}}  \mathcal{D}_P(w) \sum\limits_{p \in \Sigma_{\#}}^{|w|} f_{A_{w,i}}(p) \left[ \sum\limits_{w' \in \Sigma^{w}} \mathcal{D}_P(w') \cdot \left[ f(\texttt{do}(\texttt{swap}(p,w'_{i},i),w',w)) - f(\texttt{do}(p,w',w)\right] \right] \label{eq:gloishapwa}
 \end{align}

   Note that for any $p \in \Sigma_{\#}^{n}$ and $(w',u,w) \in \Sigma^{n} \times \Sigma^{n} \times \Sigma^{n}$, we have:
 \begin{equation} \label{eq:obsi}
     f(\texttt{do}(\texttt{swap}(p,w'_{i},i),w',w) = \sum\limits_{u \in \Sigma^{n}} f(u) \cdot g_{i}(p,w',u,w) = f_{\Pi_{3}(M,T_{i})}(p,w',w)  
 \end{equation}
 and,
 \begin{equation} \label{eq:obs}
     f(\texttt{do}(p,w',w)) = \sum\limits_{u \in \Sigma^{|w|}} f(u) \cdot g(p,w',u,w) = f_{\Pi_{3}(M,T)}(p,w',w)
 \end{equation}
 where $g_{i}$ and $g$ are functions defined implicitly in the lemma statement.

 By, again, plugging equations \eqref{eq:obsi} and \eqref{eq:obs} into the equation \ref{eq:gloishapwa}, we obtain:
 \begin{align*}
     \phi_i(f,i,n,P) &= \sum\limits_{w \in \Sigma^{n}} \mathcal{D}_P(w) \cdot \sum\limits_{p \in \Sigma_{\#}^{n}}  f_{A_{i,n}}(p,w) \left[ \sum\limits_{w' \in \Sigma^{n}} \mathcal{D}_P(w') \cdot f_{\Tilde{T}}(p,w',w) \right] \\
     &= \sum\limits_{w \in \Sigma^{n}}  \mathcal{D}_P(w) \cdot \sum\limits_{p \in \Sigma_{\#}^{n}} f_{A_{i,n}}(p,w) \cdot f_{\Pi_{2}(\mathcal{D}_P,\Tilde{T})}(p,w) \\
     &= \sum\limits_{w \in \Sigma^{n}}  \mathcal{D}_P(w) \sum\limits_{p \in \Sigma_{\#}^{n}} f_{A_{i,n} \otimes \Pi_{2}(\mathcal{D}_P,\Tilde{T})}
     (p,w) \\
     &= \sum\limits_{p \in \Sigma_{\#}^{n}} f_{\Pi_{2}(\mathcal{D}_P, A_{i,n} \otimes \Pi_{2}(\mathcal{D}_P,\Tilde{T}))}(p) \\
     &= \Pi_{0} \left( \Pi_{2}(\mathcal{D}_P, A_{i,n} \otimes \Pi_{2}(\mathcal{D}_P,\Tilde{T}))\right)     
  \end{align*} 
    \end{enumerate}
\end{proof}

\subsection{Proof of proposition \ref{prop:nletterwaconstruction}}
In this segment, we shall provide a constructive proof of all machines defined implicitly in Lemma \ref{lemma:shapasoperations}. Formally, we shall prove the following:

\begin{unumberedproposition}
 The N-Alphabet WAs $A_{w,i}$, $T_{w}$, $T_{w,i}$, $T$, $T_{i}$, $A_{i,n}$ and the HMM $f_{w^{\text{reff}}}$ can be constructed in polynomial time with respect to $|w|$ and $|\Sigma|$.
\end{unumberedproposition}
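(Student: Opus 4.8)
The plan is to construct each of the seven machines explicitly and bound its size and construction time by a polynomial in $n := |w|$ and $|\Sigma|$. The machines fall into two conceptual families: the \emph{indicator} automata $T_{w}, T_{w,i}, T, T_{i}$, which encode the semantics of the \texttt{do} and \texttt{swap} operators, and the \emph{coefficient} automata $A_{w,i}, A_{i,n}$, which reproduce the Shapley weighting $\mathcal{P}_{i}^{w}$; the HMM $f_{w^{\text{reff}}}$ is handled separately. The one structural subtlety running through all constructions is that an $N$-Alphabet WA in Definition~\ref{def:nletterwa} is \emph{stationary}: it applies the same transition matrices $A_{\sigma_{1},\dots,\sigma_{N}}$ at every position. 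Consequently, whenever the intended per-position behaviour is non-uniform --- because it depends on a fixed symbol $w_{j}$ or on whether the current index equals the distinguished feature $i$ --- I would encode the position index (and, where needed, a running count) into the state space. The whole proof therefore reduces to checking that these bookkeeping state spaces stay polynomial.

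For the indicator automata I would first observe that $I(\texttt{do}(p,w',w)=u)$ factorises over positions: it equals $\prod_{j} c_{j}$, where $c_{j} = I(u_{j} = w'_{j})$ if $p_{j} = \#$ and $c_{j} = I(u_{j} = w_{j})$ otherwise. Hence each machine can be built as a left-to-right automaton whose weight along a run is exactly this product (a transition factor of $0$ kills any violating run). For $T$, the value $w_{j}$ is supplied by the fourth input coordinate, so the local factor depends only on the symbol tuple and not on the index; a single-state (size $1$) WA with scalar transition weights suffices. For $T_{w}$ and $T_{w,i}$ the string $w$ is fixed, so $w_{j}$ varies with the position; I would track the position with a chain of $O(n)$ states $q_{1} \to q_{2} \to \dots \to q_{n+1}$, emitting the correct local factor at each step. $T_{i}$ is the $4$-alphabet analogue of $T$ in which only index $i$ is special (the \texttt{swap} forces $u_{i} = w_{i}$ irrespective of $p_{i}$), so it is $T$ augmented with an $O(n)$-state counter that detects position $i$ and installs the modified factor there. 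In every case the transition tensor has $O(|\Sigma|^{N})$ entries with $N \le 4$ constant, so construction is polynomial.

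For the coefficient automata the idea is to read the pattern and maintain the number $k$ of $\#$ symbols seen so far in the state (a counter over $\{0,\dots,n\}$), while enforcing the admissibility constraints: consistency $p_{j} \in \{\#, w_{j}\}$ and the requirement $p_{i} = \#$ (so that $i \notin S$). For $A_{w,i}$, where $w$ is fixed, the state is the pair (position, $\#$-count), giving $O(n^{2})$ states; for $A_{i,n}$, where $w$ is the second input coordinate, the consistency check $p_{j} \in \{\#, w_{j}\}$ becomes position-independent, but I still track position to locate $i$ and track the $\#$-count, again $O(n^{2})$ states. The Shapley weight $\mathcal{P}_{i}^{w}(p) = \frac{(k-1)!\,(n-k)!}{n!}$ depends on $p$ only through $k$, so I would place it into the final vector $\beta$, indexed by the terminal $\#$-count; since $n!$ has $O(n\log n)$ bits, these rational entries are representable in polynomial space. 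Finally, $f_{w^{\text{reff}}}$ is the deterministic ``line'' HMM on $n+1$ hidden states in which state $j$ emits $w^{\text{reff}}_{j}$ and advances to state $j+1$ with probability $1$, so that $w^{\text{reff}}$ is generated as a prefix with probability $1$; this is clearly polynomial, and per the WA parametrisation of HMMs it is presented as the corresponding $1$-Alphabet WA.

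The main obstacle I anticipate is not any single construction but the careful verification that the positionwise factorisation faithfully matches the combined \texttt{do}/\texttt{swap} semantics at the special index $i$ --- in particular that the \texttt{swap} in $T_{w,i}$ and $T_{i}$ correctly overrides $p_{i}$ --- and that the admissibility and $\#$-counting logic in $A_{w,i}, A_{i,n}$ is consistent with the support conditions $w \in L_{p}$ and $p_{i} = \#$ built into $\mathcal{P}_{i}^{w}$ and $\mathcal{L}_{i}^{w}$. Keeping the factorial entries of $\beta$ aligned with the correct terminal state, and confirming that stationarity does not secretly blow up the state count beyond the $O(n^{2})$ bookkeeping, are the two places where I would be most careful.
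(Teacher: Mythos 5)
Your proposal is correct, and for the indicator machines it follows the same route as the paper's proof: $T$ is a single-state WA whose self-loop weight encodes the local predicate $(\sigma_1=\#\wedge\sigma_3=\sigma_2)\vee(\sigma_1\neq\#\wedge\sigma_3=\sigma_4)$; $T_w$, $T_{w,i}$ and $T_i$ are position chains of $O(|w|)$ states with a modified factor at index $i$ enforcing $u_i=w_i$ (the \texttt{swap}); and $f_{w^{\text{reff}}}$ is the deterministic line HMM. Where you genuinely depart from the paper is in the coefficient automata. The paper builds $A_{w,i}$ by decomposing $\mathcal{P}_i^{w}=\frac{1}{|w|}\sum_{k}\mathcal{P}_{i,k}^{(w)}$, constructing for each $k$ a DFA on (position, $\#$-count) states that accepts exactly the admissible patterns with $k$ placeholders, rescaling each by $1/\bigl(|w|\cdot|\mathcal{L}_{i,k}^{(w)}|\bigr)$, and summing via closure of WAs under addition and scalar multiplication, which yields size $O(|w|^3)$; for $A_{i,n}$ it then takes a Kronecker product of a consistency-checking 2-Alphabet DFA with a weight WA, yielding size $O(|w|^4)$. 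You instead build, in each case, a single deterministic automaton on (position, $\#$-count) states and load the Shapley weight $\frac{(k-1)!\,(n-k)!}{n!}$ directly into the final vector $\beta$, indexed by the terminal $\#$-count. This is sound because your automaton admits at most one nonzero run per input, so the $\beta$-entry multiplies exactly the right indicator; it buys you smaller machines ($O(|w|^2)$ states for both $A_{w,i}$ and $A_{i,n}$ versus the paper's $O(|w|^3)$ and $O(|w|^4)$) and avoids both the normalization-and-addition machinery and the Kronecker product step, at the cost of arguing determinism explicitly. Your observation that the factorial entries fit in polynomially many bits is a point the paper leaves implicit. The only detail you leave unspecified is the behaviour of $f_{w^{\text{reff}}}$ after position $|w|$: since the HMM defines a distribution over $\Sigma^{\infty}$, its last state needs an emission distribution and a self-loop (the paper uses uniform emission); any choice works because only the prefix probability matters, so this is cosmetic rather than a gap.
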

\begin{table}[ht]
    \centering
        \caption{The summary of complexity results (in terms of the length of the sequence to explain $w$ and the size of the alphabet $|\Sigma|$) for the construction of models in proposition \ref{prop:nletterwaconstruction}}
    \begin{tabular}{|c|c|c|c|}
        \hline
         & \textbf{Running Time complexity} & \textbf{Output Size} & \textbf{Output's Alphabet size ($N$)}  \\ \hline
        $A_{w,i}$ & $O(|w|^{3})$ & $O(|w|^{3})$ & $1$  \\ \hline
        $T_{w,i}$ &  $O(|\Sigma|^{3} \cdot |w|)$ & $|w|$ & $3$  \\ \hline
        $T_{w}$ &$O(|\Sigma|^{3} \cdot |w|)$ & $|w|$ & $3$  \\ \hline
        $T_{i}$ & $O(|w|)$ & $O(|w|)$ & $4$  \\ \hline
        $T$ & $O(1)$ & $O(1)$ & $4$ \\ \hline
        $A_{i,n}$ &  $O(|\Sigma|^{2} \cdot |w|^{4})$ & $O(|w|^{4})$ & $2$  \\ \hline
        $f_{w^{reff}}$ & $O(|w|)$ & $O(|w|)$ & $1$  \\ \hline
    \end{tabular}
    \label{app:fig:prop6}
\end{table}
We split the proof of this proposition into four sub-sections. The first sub-section is dedicated to the construction of $A_{w,i}$ and $A_{i,n}$. The second sub-section is dedicated to the construction of $T_{w,i}$ and $T_{i}$. The third sub-section is dedicated to the construction of $T_{i}$ and $T$. The final subsection treats the construction of $f_{w^{reff}}$. The running time of all these constructions as well as the size of their respective outputs are summarized in Table \ref{app:fig:prop6}.  

\subsubsection{The construction of $A_{w,i}$ and $A_{i,n}$} 
Recall $A_{w,i}$ is a WA over $\Sigma_{\#}$ that implements the probability distribution: 
\begin{equation} \label{app:eq:piw}
\mathcal{P}_{i}^{(w)}(p) \myeq \frac{1}{|w|} \sum\limits_{k=1}^{|w|} \mathcal{P}_{i,k}^{(w)}(p)
\end{equation}
where $\mathcal{P}_{i,k}^{(w)}$ is the uniform distribution over patterns belonging to the following set:
$$\mathcal{L}_{i,k}^{(w)} \myeq \{ p \in \Sigma_{\#}^{|w|}: ~ w \in L_{p} \land |p|_{\#} = k \land p_{i} = \# \}$$
The 2-Alphabet WA $A_{i,n}$ can be seen as a global version of $A_{w,i}$ where the sequence $w$ becomes a part of the input of the automaton. Next, we shall provide the construction for $A_{w,i}$. The construction of $A_{i,n}$ is somewhat similar to that of $A_{w,i}$ and will be discussed later on in this section.

\paragraph{The construction of $A_{w,i}$.} Algorithm \ref{app:alg:Awi} provides the pseudo-code for constructing $A_{w,i}$. By noting that the target probability distribution $\mathcal{P}_{i}^{(w)}$ is a linear combination of the set of functions $ \mathcal{F} = \{\mathcal{P}_{i,k}^{(w)}\}$ (Equation \ref{app:eq:piw}), the strategy of our construction  consists at iteratively constructing a sequence of WAs $\{A_{i,k}^{(w)}\}_{k \in [|w|]}$, each of which implements a function $F \in \mathcal{F}$. Thanks to the closure of WAs under linear algebra operations (addition, and multplication with a scalar) and the tractability of implementing them, one can recover the target WA $A_{w,i}$ using linear combinations of the collection of WAs $ \mathcal{A} = \{A_{i,k}^{(w)}\}_{k \in [|w|]}$. 

\begin{algorithm}
\caption{Construction of $A_{w,i}$}
\label{app:alg:Awi}
\begin{algorithmic}[1]
\REQUIRE A sequence $w \in \Sigma^{*}$, An integer $i \in [|w|]$
\ENSURE A WA $A_{w,i}$  
\STATE Initialize $A_{w,i}$ as the empty WA
\FOR{$k = 1 \ldots |w|$}{
  \STATE Construct a DFA $\bar{A}_{i,k}^{|w|}$ that accepts the language $\mathcal{L}_{i,k}^{(w)}$
   \STATE $A_{w,i} \leftarrow A_{w,i} + \frac{1}{|w| \cdot |\mathcal{L}_{i,k}^{|w|}} \cdot \bar{A}_{i,k}^{|w|}$
}
\ENDFOR
\RETURN $A_{w,i}$
\end{algorithmic}
\end{algorithm}

The missing link to complete the construction is to show how each element in the set $\mathcal{A}$ is constructed. For a given $(i,k) \in [|w|]^{2}$, the strategy consists at constructing a DFA that accepts the language $\mathcal{L}_{i,k}^{(w)}$ (denoted $\bar{A}_{i,k}^{|w|}$ in Line 3 of Algorithm~\ref{app:alg:Awi})). Since the function implemented by $A_{i,k}^{(w)}$ represents the uniform probability distribution over patterns in $\mathcal{L}_{i,k}^{(w)}$, then $A_{i,k}^{(w)}$ can be recovered by simply normalizing the DFA $\bar{A}_{i,k}^{(w)}$ by the quantity $\frac{1}{|\mathcal{L}_{i,k}^{(w)}|}$.  

Given a sequence $w \in \Sigma^{*}$ and $(i,k) \in [|w|]^{2}$, the construction of the DFA, $\bar{A}_{i,k}^{(w)}$, is given as follows:
\begin{itemize}
    \item \textbf{The state space:} $Q = [|w|+1] \times \{0, \ldots, |w|\}$ (For a given state $(l,e) \in [|w|] \times [|w|]$, the element $l$ tracks the position of the running pattern, and $e$ computes the number of $\{\#\}$ symbols of the running pattern.
    \item \textbf{Initial State:} $(1,0)$
    \item \textbf{The transition function:} For a given state $(l,e) \in [|w|] \times \{0, \ldots, |w|\}$ and a symbol $\sigma \in \Sigma_{\#}$, we have:
    $$\delta((l,e),\sigma) = \begin{cases}
     (l+1, e+1) & \text{if} ~~\sigma = \# \\
     (l+1,e) & \text{if} ~~\sigma \in \Sigma \land w_{l} = \sigma \land l \neq i
    \end{cases}
    $$
    \item \textbf{The final state:} $F = (|w|+1 , k)$
\end{itemize}

\paragraph{Complexity.} The complexity of constructing $\bar{A}_{i,k}^{(w)}$ for a given $k \in [|w|]$ is equal to $O(|w|^{2})$. Consequently, taking into account the iterative procedure in Algorithm \ref{app:alg:Awi}, this latter algorithm runs in $O(|w|^{3})$. In addition, the size of the resulting $A_{w,i}$ is also $O(|w|^{3})$. 

\paragraph{The construction of $A_{i,n}$.} As previously noted, the 2-Alphabet WA can be interpreted as the global equivalent of $A_{w,i}$. Formally, for an integer $n > 0$ and $i \in [n]$, the 2-Alphabet WA $A_{i,n}$ realizes the function $g_{i,n}$ over $\Sigma_{\#} \times \Sigma$ as defined below:
\begin{equation} \label{app:eq:gin}
g_{i,n}(p,w) = I(p \in \mathcal{L}_{i}^{w}) \cdot \mathcal{P}_{i}^{w}(p)
\end{equation}

In light of Equation \eqref{app:eq:gin}, the construction of $A_{i,n}$ aligns to the following three step procedure:
\begin{enumerate}
    \item Construct a 2-Alphabet DFA $A'_{w,i}$ over $\Sigma_{\#} \times \Sigma$ that accepts the language $I(p \in \mathcal{L}_{i}^{w})$
    \item Construct a 2-Alphabet WA $\Tilde{A}_{w,i}$ over $\Sigma_{\#} \times \Sigma$ such that: $f_{\Tilde{A}_{w,i}}(p,w) = f_{A_{w,i}}(p)$ (Note that $f_{\Tilde{A}_{w,i}}$ is independent of its second argument)
    \item Return $A'_{w,i} \otimes A_{w,i}$ 
\end{enumerate}

The derivation of the 2-Alphabet WA $\Tilde{A}_{w,i}$ from $A_{w,i}$, whose construction is detailed in Algorithm \ref{app:alg:Awi}, is straightforward. It remains to demonstrate how to construct the 2-Alphabet DFA $A'_{w,i}$ (Step 1).

\paragraph{The construction of $A'_{w,i}$.} Recall that $\mathcal{L}_{i}^{(w)} \myeq \bigcup\limits_{k=1}^{|w|} \mathcal{L}_{i,k}^{(w)}$. Constructing a 2-Alphabet DFA that accepts the language $I(p \in \mathcal{L}_{i}^{w})$ is relatively simple: It involves verifying the conditions $w \in L_{p}$ and $p_{i} = \#$ for the running sequence $(p,w) \in \Sigma_{\#}^{*} \times \Sigma^{*}$. The construction is provided as follows:

\begin{itemize}
    \item \textbf{The state space:} $Q = [|w|]$
    \item \textbf{The initial state:} The state $1$
    \item \textbf{The transition function:} For a state $q \in Q$ and $(\sigma,\sigma') \in \Sigma_{\#} \times \Sigma$, then we have that $\delta(q, (\sigma, \sigma')) = q+1$ holds if and only if the predicate: 
     \begin{equation} \label{app:eq:predicate}
      \left(q \neq i \land (\sigma = \#) \lor (\sigma = \sigma') \right) \lor (q = i \land \sigma = \# )
     \end{equation}
     is true.

In essence, the predicate defined in Equation~\eqref{app:eq:predicate} captures the idea that, for a given position $q$ in the current pair of sequences $(p,w) \in \Sigma_{\#}^{*} \times \Sigma^{*}$, either $w_{q} \in L_{p_{q}}$ when $q \neq i$ (ensuring the condition $w \in L_p$), or $p_q = \#$ when $q = i$ (ensuring the condition $p_i = \#$).
     
    \item \textbf{The final state:} The state $|w|$
\end{itemize}

\paragraph{Complexity.} The construction of $A'_{w,i}$ requires $O(|w|)$ running time, and its size is equal to $O(|w|)$. The running time complexity for the construction of $A_{i,n}$ is dictated by the application of the Kronecker product between the 2-Alphabet WAs $A'_{w,i} \otimes A_{w,i}$ (Step 3 of the procedure outlined above). Given the complexity of computing the Kronecker product between N-Alphabet WAs (see Table \ref{app:fig:operationswas}) and the sizes of $A'_{w,i}$ and $A_{w,i}$, the overall time complexity is equal to $O(|\Sigma|^{2} \cdot |w|^{4})$. The size of $A_{i,n}$ is equal to $O(|w|^{4})$.  

\subsubsection{The construction of $T_{w,i}$ and $T_{w}$} 

The constructions of $T_{w,i}$ and $T_{w}$ are highly similar. Consequently, this section will mainly concentrate on the complete construction of $T_{w,i}$, as it introduces an additional challenge with the inclusion of the \texttt{swap} operation in the function implemented by this 3-Alphabet WA. A brief discussion on the construction of $T_{i}$ will follow at the end of this section, based on the approach used for $T_{w,i}$.


Recall that $T_{w,i}$ is a 3-Alphabet DFA over $\Sigma_{\#} \times \Sigma \times \Sigma$ that implements the function: 

$$g_{w,i}(p,w',u) = I(\texttt{do}(\texttt{swap}(p,w_{i},i),w',w)= u)$$
for a triplet $(p,w',u) \in \Sigma_{\#}^{*} \times \Sigma^{*} \times \Sigma^{*}$, for which $|p| = |w'| = |u|$.

To ease exposition, we introduce the following predicate:

\begin{equation} \label{app:eq:predicateTwi}
 \Phi(\sigma_{1}, \sigma_{2}, \sigma_{3}, \sigma_{4}) \myeq  \left( \sigma_{1} = \# \land \sigma_{3} = \sigma_{2} \right) \lor \left(\sigma_{1} \neq \# \land \sigma_{3} = \sigma_{4} \right) 
\end{equation}

where $(\sigma_{1}, \sigma_{2}, \sigma_{3}, \sigma_{4}) \in \Sigma_{\#} \times \Sigma \times \Sigma \times \Sigma$. 

The construction of $T_{w,i}$ is given as follows:
\begin{itemize}
    \item \textbf{The state space:} $Q = [|w| + 1]$
    \item \textbf{The initial state:} $q_{init} = 1$
    \item \textbf{The transition function:} For a state $q \in Q$, and a tuple of symbols $(\sigma_{1}, \sigma_{2}, \sigma_{3}) \in \Sigma_{\#} \times \Sigma \times \Sigma$, we have $\delta(q, (\sigma_{1}, \sigma_{2}, \sigma_{3})) = q+1$ if and only if the predicate:
    \begin{equation} \label{app:eq:Twi}
    \left[q \neq i \land \Phi(\sigma_{1},\sigma_{2}, \sigma_{3}, w_{q}) \right] \lor \left[q = i \land \sigma_{3} = w_{q} \right]
    \end{equation}
    is true.
    \item \textbf{The final state:} $|w| + 1$.
\end{itemize}

\textbf{Note.} As previously mentioned, the construction of the 3-Alphabet WA, $T_{w}$, is quite similar to that of $T_{w,i}$. To obtain a comparable construction of $T_{w}$, one can easily adjust the predicate defined in equation~\ref{app:eq:Twi} to $\Phi(\sigma, w_{q}, \sigma_{3}, \sigma_{4})$.

\subsubsection{The construction of $T_{i}$ and $T$.} 
The 4-Alphabet WAs $T_{i}$ and $T$ can be seen as the global counterparts of $T_{w,i}$ and $T_{w}$, respectively. In addition, their constructions are somewhat simpler than the latter.

\paragraph{The construction of $T_{i}$.} For a given $i \in \mathbb{N}$, the 4-Alphabet WA $T_{i}$ is of size $i$, and constructed as follows: 
\begin{itemize}
     \item \textbf{The state space:} $Q = [i+1]$
    \item \textbf{The initial state:} $q_{init} = 1$
    \item \textbf{The transition function:} For a state $q \in Q$, and $(\sigma_{1}, \sigma_{2}, \sigma_{3}, \sigma_{4}) \in \Sigma_{\#} \times \Sigma \times \Sigma \times \Sigma$, we have:
   $$\delta(q, (\sigma_{1}, \sigma_{2}, \sigma_{3}, \sigma_{4})) = 
   \begin{cases}
       q+1 & \text{if} ~~ [q < i \land \Phi(\sigma_{1}, \sigma_{2}, \sigma_{3}, \sigma_{4})] \lor [q=i \land \sigma_{3} = \sigma_{4}] \\
        i+1 &  \text{if} ~~ q = i+1 \land \Phi(\sigma_{1}, \sigma_{2}, \sigma_{3}, \sigma_{4})
   \end{cases}
   $$
   \item \textbf{The final state:} $i+1$
\end{itemize}

\paragraph{The construction of $T$.} The 4-Alphabet WA $T$ is an automaton with a single state, formally defined as follows:
\begin{itemize}
    \item \textbf{The state space:} $Q = \{1\}$,
    \item \textbf{The initial state:} $q_{init} = 1$.
    \item \textbf{The transition function:} For $(\sigma_{1}, \sigma_{2}, \sigma_{3}, \sigma_{4}) \in \Sigma_{\#} \times \Sigma \times \Sigma \times \Sigma$, we have that:
    $\delta(1, (\sigma_{1}, \sigma_{2}, \sigma_{3}, \sigma_{4})) = 1$ holds, if and only if the predicate $\Phi(\sigma_{1}, \sigma_{2}, \sigma_{3}, \sigma_{4})$ is true. 
\end{itemize}

\subsubsection{The construction of $f_{w^{reff}}$}
Given a sequence $w^{ref} \in \Sigma^{*}$, the machine $f_{w^{reff}}$ is defined as an HMM such that the probability of generating the sequence $w^{ref}$ as a prefix is equal to 1. The set of HMMs that meet this condition is infinite. In our case, we opt for an easy construction of an HMM, denoted $f_{w^{reff}}$, belonging to this set.

The state space of $f_{w^{reff}}$ is given as $Q = [|w_{ref}| + 1]$ states. Similarly to the constructions above, each state is associated with a position within the emitted sequence of the HMM. Its functioning mechanism can be described using the following recursive procedure: 
\begin{itemize}
    \item The HMM starts from the state $q = 1$ with probability $1$. The probability of emitting the symbol $w^{ref}_{1}$ and transitioning to the state $q= 2$ is equal to $1$.
    \item For $q \in [|w^{ref}|]$, the probability of generating the symbol $w^{ref}_{q}$ and transitioning to the state $q+1$ is equal to $1$.
\end{itemize}
When the HMM reaches the state $|w^{ref}| + 1$, it generates a random symbol in $\Sigma$ and remains at state $|w^{ref}| + 1$ with probability 1. One can readily verify (by a straightforward induction argument) that this HMM generates infinite sequences prefixed by the sequence $w^{ref}$ with probability 1. 

\paragraph{Complexity.} The running time of this construction is equal to $O(|w^{ref}|)$, and the size of the obtained HMM $f_{w^{reff}}$ is equal to $O(|w^{ref}|)$.
 \section{From Sequential Models to Non-Sequential Models: Reductions and Inter-inclusions} \label{app:reductiontree}

In Section~\ref{subsec:tree2WA} of the main paper, we demonstrated how the tractability result for computing various SHAP variants for WAs extends to several non-sequential models, including Ensemble Trees for Regression and Linear Regression Models (Theorem~\ref{cor:reductions}). In this section, we will present a complete and rigorous proof of this connection, along with additional theoretical insights into the relationships between these models and distributions. Specifically:


\begin{enumerate}
\item 
In the first subsection, we establish the proof of Theorem~\ref{cor:reductions}. This proof is based on several reductions, including those from linear regression models and ensemble trees to weighted automata, as well as from empirical distributions to the family $\overrightarrow{\texttt{HMM}}$.

\item In the second subsection, we present additional reductions that, while not explicitly used to prove the complexity results stated in the article, showcase the expressive power of the HMM class in modeling various families of distributions relevant to SHAP computations. These include distributions represented by Naive Bayes models and Markovian distributions. Moreover, we will highlight certain complexity results, listed in Table~\ref{fig:summaryresults}, that were not directly mentioned in the main text but are illuminated by these reduction findings.


\end{enumerate} 

\subsection{Proof of Theorem \ref{cor:reductions}}

Recall the statement of Theorem \ref{cor:reductions}:

\begin{unumberedtheorem}
        Let $\mathbb{S}:= \{ \emph{\texttt{LOC}}, \emph{\texttt{GLO}} \}$, $\mathbb{V} := \{\texttt{\emph{B}}, \texttt{\emph{I}}\}$, $\mathbb{P}:= \{ \texttt{\emph{EMP}}, \overrightarrow{\emph{\texttt{HMM}}} \} $, and $\mathbb{F}:= \{\emph{\texttt{DT}}, \emph{\texttt{ENS-DT}}_{\emph{\texttt{R}}}, \emph{\texttt{Lin}}_{\emph{\texttt{R}}}\}$. Then, for any \emph{\texttt{S}} $\in \mathbb{S}$, $\texttt{\emph{V}} \in \mathbb{V}$, $\texttt{\emph{P}} \in \mathbb{P}$, and $\emph{\texttt{F}}\in \mathbb{F}$ the problem  $\emph{\texttt{S-V-SHAP}}(\emph{\texttt{F}}, \texttt{\emph{P}})$ can be solved in polynomial time.
\end{unumberedtheorem}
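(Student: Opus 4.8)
The plan is to reduce every instance of $\texttt{S-V-SHAP}(\texttt{F},\texttt{P})$ with $\texttt{F}\in\mathbb{F}$ and $\texttt{P}\in\mathbb{P}$ to the corresponding sequential problem over weighted automata under HMMs, already shown tractable in Theorem~\ref{thm:shapwa}. The reduction splits into two independent components: a \emph{model-side} reduction converting each $\texttt{F}\in\{\texttt{DT},\texttt{ENS-DT}_{\texttt{R}},\texttt{Lin}_{\texttt{R}}\}$ into an equivalent WA of polynomial size, and a \emph{distribution-side} reduction converting each $\texttt{P}\in\{\texttt{EMP},\overrightarrow{\texttt{HMM}}\}$ into an HMM in the 1-Alphabet WA parametrization of Section~\ref{sec:background}. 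Throughout, a non-sequential input $(x_1,\ldots,x_n)\in\mathbb{D}^n$ is read as a length-$n$ sequence over the alphabet $\Sigma:=\mathbb{D}$.

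\textbf{Model side.} The central tool is the closure of WAs under addition and scalar multiplication, which are poly-time computable and yield WAs of size linear in the inputs (Table~\ref{app:fig:operationswas}). For a decision tree I would write $f_T(x)=\sum_{\ell}\mathrm{val}(\ell)\cdot\chi_\ell(x)$, summing over the polynomially many leaves $\ell$, where $\chi_\ell$ is the indicator that $x$ is consistent with the root-to-leaf path of $\ell$. Each $\chi_\ell$ is realized by a small ``chain'' WA with $n+1$ states that, reading position $i$, contributes factor $1$ when feature $i$ is either untested on the path or tested with the matching value, and factor $0$ otherwise, with final weight $\mathrm{val}(\ell)$; summing these per-leaf WAs by closure gives a poly-size WA for $f_T$. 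A regression ensemble $f_{\mathcal{T}}=\sum_i w_i f_{T_i}$ is then a scalar-weighted sum of such tree-WAs, again poly-size by closure, and a linear regression model $f=\sum_i\sum_d w_{i,d}\,I(x_i=d)+b$ is a sum of single-position indicator WAs plus a constant WA.

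\textbf{Distribution side and conclusion.} An $\overrightarrow{\texttt{HMM}}$ is, by Definition~\ref{def:hmmnonsequentialdata}, already in the WA/HMM parametrization accepted by Theorem~\ref{thm:shapwa}, up to the position permutation $\pi$, which I would absorb by relabeling positions consistently in both the model-WA and the HMM; for empirical distributions I would invoke $\texttt{EMP}\preceq_P\overrightarrow{\texttt{HMM}}$ (Appendix~\ref{app:reductiontree}), encoding a dataset by a latent ``sample selector'' state that deterministically emits the corresponding data point. Since the interventional and baseline value functions $v_i,v_b$ (and hence $\phi_i,\phi_b,\Phi_i,\Phi_b$) depend on the model only through the real-valued function it computes and on the distribution only through the induced probabilities, and both reductions preserve these exactly, each instance of $\texttt{S-V-SHAP}(\texttt{F},\texttt{P})$ equals the corresponding instance of $\texttt{S-V-SHAP}(\texttt{WA},\texttt{HMM})$ on the constructed objects, which is poly-time by Theorem~\ref{thm:shapwa} for every $\texttt{S}\in\{\texttt{LOC},\texttt{GLO}\}$ and $\texttt{V}\in\{\texttt{B},\texttt{I}\}$ (with the local baseline case using a reference sequence in place of a distribution).

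\textbf{Main obstacle.} I expect the principal difficulty to be the decision-tree-to-WA construction: the WA must commit to a fixed positional reading order while the tree tests features in path-dependent orders, so it is the per-leaf decomposition---rather than a direct simulation of the tree's branching---that keeps the construction simultaneously correct and polynomial. A secondary care point is verifying that the permutation $\pi$ of the $\overrightarrow{\texttt{HMM}}$ can be absorbed without disturbing the semantics of the $\texttt{do}$ and $\texttt{swap}$ operations underlying the interventional and baseline value functions, so that the reduced instance is \emph{exactly} equivalent rather than merely approximately so.
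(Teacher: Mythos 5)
Your proposal is correct and takes essentially the same route as the paper: the paper proves the theorem via exactly this two-sided reduction --- $\texttt{DT}, \texttt{ENS-DT}_{\texttt{R}}, \texttt{LIN}_{\texttt{R}} \preceq_{P} \texttt{WA}$ (using closure of WAs under addition and scalar multiplication) together with $\texttt{EMP} \preceq_{P} \overrightarrow{\texttt{HMM}} \preceq_{P} \texttt{HMM}$ --- followed by an invocation of Theorem~\ref{thm:shapwa}. The only substantive differences are that the paper imports the decision-tree-to-WA conversion from prior work rather than giving your per-leaf chain construction, and that it discharges the $\overrightarrow{\texttt{HMM}} \preceq_{P} \texttt{HMM}$ step with an explicit product-state HMM (the position is tracked in the hidden state, with a uniformly-emitting absorbing state after step $n$) --- a construction you do in fact need, since the position-dependent matrices $T_{i}, O_{i}$ mean an $\overrightarrow{\texttt{HMM}}$ is not literally in the stationary HMM parametrization, permutation aside.
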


The proof of this theorem will rely on four inter-model polynomial reductions:

\begin{unumberedclaim}
The following statements hold true:
\begin{enumerate}
    \item $\overrightarrow{\texttt{\emph{HMM}}} \preceq_{\emph{P}} 
\texttt{\emph{HMM}}$
   \item $\texttt{\emph{EMP}} \preceq_{\emph{P}} \overrightarrow{\texttt{\emph{HMM}}}$ 
   \item $\texttt{\emph{ENS-DT}}_{\texttt{\emph{R}}}\preceq_{\emph{P}} \texttt{\emph{WA}}$
   \item $\texttt{\emph{LIN}}_{\texttt{\emph{R}}} \preceq_{\emph{P}} \texttt{\emph{WA}}$
\end{enumerate}
\end{unumberedclaim}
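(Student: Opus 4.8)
The plan is to prove the four reductions separately and then note how they combine (with Theorem~\ref{thm:shapwa}) to yield the theorem. They split naturally into two \emph{distribution} reductions ($\overrightarrow{\texttt{HMM}} \preceq_P \texttt{HMM}$ and $\texttt{EMP} \preceq_P \overrightarrow{\texttt{HMM}}$), which I would handle by state-space constructions, and two \emph{model} reductions ($\texttt{ENS-DT}_{\texttt{R}} \preceq_P \texttt{WA}$ and $\texttt{LIN}_{\texttt{R}} \preceq_P \texttt{WA}$), which I would handle by expressing the target function as a weighted sum of simple indicators and invoking the closure of WAs under addition and scalar multiplication. Observe that a single decision tree is the one-tree, unit-weight special case of an $\texttt{ENS-DT}_{\texttt{R}}$, so the ensemble reduction already subsumes $\texttt{DT}$.

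For $\texttt{EMP} \preceq_P \overrightarrow{\texttt{HMM}}$, given a dataset $\mathcal{D}=\{x_1,\dots,x_M\}$ I would build a mixture $\overrightarrow{\texttt{HMM}}$ with latent dimension $N=M$: the initial vector $\alpha$ places mass $1/M$ on each of the $M$ ``tracks'', each transition matrix $T_i$ is the identity (a chosen track never changes), and each emission matrix $O_i$ emits deterministically the $i$-th coordinate of that track's sample, with $\pi$ the identity. Then $P_M(x)=\sum_t (1/M)\,I(x_t=x)=P_{\mathcal D}(x)$, and the model has size polynomial in $M$ and $n$. For $\overrightarrow{\texttt{HMM}} \preceq_P \texttt{HMM}$, the only mismatches between a $\overrightarrow{\texttt{HMM}}$ (Definition~\ref{def:hmmnonsequentialdata}) and the \emph{stationary} sequential HMM induced by the $1$-Alphabet WA formalism of Definition~\ref{def:nletterwa} are (i) the position-dependence of $T_i,O_i$ and (ii) the feature reordering by $\pi$. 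I would remove (i) by augmenting the hidden state with a deterministically advancing position counter $k\in[n]$, so the effective per-step transition/emission becomes position-independent; and remove (ii) by threading $\pi$ as a consistent relabeling of the read order, which the accompanying model WA can absorb. Both steps blow the state space up by at most a factor of $n$, hence remain polynomial.

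For the model reductions, the unifying device is to write the computed function as a weighted sum of indicators of conjunctive constraints on fixed positions, each realized by a small stationary WA reading the feature vector $x_1\cdots x_n$ as a length-$n$ sequence over $\mathbb{D}$. For $\texttt{LIN}_{\texttt{R}}$ I would use the renormalized form $f(x)=\sum_i\sum_d w_{i,d}\,I(x_i=d)+b$: each term $w_{i,d}\,I(x_i=d)$ is an $(n{+}1)$-state WA that advances a position counter and outputs $w_{i,d}$ iff the symbol at position $i$ equals $d$, while $b$ is the WA assigning $b$ to every sequence; summing these via WA addition and scalar multiplication gives a poly-size WA. For $\texttt{ENS-DT}_{\texttt{R}}$ I would first convert each tree $T_j$ by decomposing $f_{T_j}=\sum_{\ell} c_\ell\, I\!\left(\bigwedge_{k} x_k=d_k\right)$, where the conjunction ranges over the features queried on the root-to-leaf path to $\ell$; each such conjunction-indicator is again an $(n{+}1)$-state WA with the position index folded into the state, and the tree is their weighted sum. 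Finally $f_{\mathcal T}=\sum_i w_i f_{T_i}$ is one further weighted sum of the per-tree WAs. Since the number of leaves is bounded by tree size and each component has $O(n)$ states, the total construction is polynomial.

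The most delicate step I expect is reduction (3): a decision tree may query features in different orders along different paths, whereas a WA in the sense of Definition~\ref{def:nletterwa} is stationary and reads positions in a fixed left-to-right order. The leaf-decomposition is exactly what circumvents this, replacing path-dependent querying by an additive form in which each summand constrains a \emph{fixed} set of positions, verifiable by a fixed-order automaton that stores the current position in its state. The remaining work is bookkeeping: confirming that the number of summands and the per-summand state count keep everything polynomial, and (for reduction (1)) checking that threading $\pi$ through the state-augmented stationary HMM reproduces $P_M$ exactly. Once the four reductions are established, the theorem follows by composing the model reduction into $\texttt{WA}$ with the distribution chain $\texttt{EMP}\preceq_P\overrightarrow{\texttt{HMM}}\preceq_P\texttt{HMM}$ and invoking the poly-time algorithm of Theorem~\ref{thm:shapwa} for WAs under HMM distributions.
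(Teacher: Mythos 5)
Your proposal is correct, and on part (1) it essentially coincides with the paper's proof: the paper also removes non-stationarity by augmenting the hidden state with a deterministically advancing position counter (state space $[M]\times[n+1]$), indexes the per-step transition/emission matrices by that counter, folds $\pi$ into the emission indexing, and adds an absorbing ``dummy'' state that emits uniformly after step $n$ --- this last padding step is the one piece of bookkeeping you should state explicitly, since the sequential HMM must define probabilities for arbitrarily long sequences while your counter only ranges over $[n]$.

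On parts (2)--(4) you take genuinely different routes. For (2), the paper sequentializes $\mathcal{D}$ and builds the classical prefix-tree probabilistic automaton from grammatical inference (states are the prefixes occurring in the dataset, transitions carry probabilities $N(w\sigma)/N(w)$, emissions are deterministic), whereas you build a flat mixture with one latent ``track'' per sample, identity transitions, and deterministic emissions; both are polynomial, and yours is arguably easier to verify since $P_M(x)=\frac{1}{M}\sum_{t} I(x_t=x)$ is immediate from the diagonal structure of the matrices $A_{i,x}$. For (3), the paper does not re-derive the tree-to-automaton step at all: it cites the DT-to-WA conversion of \cite{marzouk24a} (Proposition~\ref{app:prop:DT2WA}) and then invokes closure of WAs under addition and scalar multiplication for the ensemble, exactly as you do; your leaf decomposition $f_T=\sum_{\ell} c_\ell\, I\bigl(\bigwedge_k x_k=d_k\bigr)$ with one $(n{+}1)$-state position-counter WA per leaf is a self-contained proof of that cited step, and your remark that this decomposition neutralizes path-dependent variable orderings is precisely the right observation. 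For (4), your construction is not only different but more careful than the paper's sketch: the paper builds a single chain of $n+1$ states where the edge from state $i$ to $i+1$ carries weight $w_{i,\sigma}$ on symbol $\sigma$, and since a WA multiplies weights along a path and this chain has a unique accepting path, that automaton computes $\prod_i w_{i,x_i}$ rather than $\sum_i w_{i,x_i}$; obtaining the sum requires multiple accepting paths, which is exactly what your addition of $n\cdot m+1$ indicator WAs (equivalently, a two-track ``pick one position'' automaton) provides. Your final composition of the model reductions with the distribution chain $\texttt{EMP}\preceq_P\overrightarrow{\texttt{HMM}}\preceq_P\texttt{HMM}$ and Theorem~\ref{thm:shapwa} matches how the paper assembles Theorem~\ref{cor:reductions}.
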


Theorem \ref{cor:reductions} is a direct consequence of the following four claims. This section is structured into four parts, each dedicated to proving one of these claims. Without loss of generality and to simplify the notation, we will assume throughout that all models operate on binary inputs. However, it is important to note that all of our results can be extended to the setting where each input is defined over $k$ discrete values.


\subsubsection{Claim 1: $\overrightarrow{\texttt{\text{HMM}}}$ is polynomially reducible to $\texttt{\text{HMM}}$}

Consider a model $\ar{\tt{\text{M}}} = \langle \pi, \alpha, \{T_{i}\}_{i \in [n]}, \{O_{i}\}_{i \in [n]}\rangle$, where the size of $\ar{\text{M}}$ is denoted as $m$. This model $\ar{\text{M}}$ encodes a probability distribution over the hypercube $\{0,1\}^{n}$. The goal is to prove that a model $M \in \ttt{\text{HMM}}$, which satisfies the following condition:

\begin{equation} \label{app:eq
} P_{\ar{\text{M}}}(x_{1}, \ldots, x_{n}) = P_{\text{M}}^{(n)}(x_{1}, \ldots, x_{n}), \end{equation}

can be constructed in polynomial time with respect to the size of $\ar{\text{M}}$. Intuitively, this condition asserts that the probability of generating a sequence $x = x_{1}, \ldots, x_{n}$ of length $n$ using $M$ is the same as the probability of generating $(x_{1}, \ldots, x_{n})$ using $\ar{\text{M}}$.







The construction aims to build an HMM $M$ that simulates $\ar{\text{M}}$ up to position $n$. Afterward, $M$ transitions into a dummy hidden state where it stays stuck permanently, emitting symbols uniformly at random. To simulate $\ar{\text{M}}$ within the support $\{0,1\}^{M}$, $M$ must keep track of both the state reached by $\ar{\text{M}}$ after emitting a prefix $x_{\pi{1}}, \ldots x_{\pi{j}}$ for a given $j \in [n]$, and the position reached in the sequence. Once the $n$-th symbol has been emitted, the HMM $\text{M}$ transitions into a dummy state, which emits symbols uniformly at random and remains in that state indefinitely. The detailed construction is provided as follows:


\begin{itemize}
    \item \textbf{The state space:} $Q = [M] \times [n+1]$
    \item \textbf{State initialization:} The HMM $M$ begins generating from the state $(i,1)$ for $i \in [M]$ with a probability of $\alpha(i)$.
    
    \item \textbf{The transition dynamics:} For $(i,k,j) \in [M]^{2} \times [N]$, the HMM $M$ transitions from state $(i,j)$ to state $(k,j+1)$ with a probability of $T_{j}[i,k]$. Additionally, for any $i \in [M]$, we have $T[(i,n+1), (i,n+1)] = 1$ (meaning the HMM remains in the dummy state $n+1$ indefinitely).
    \item \textbf{Symbol emission:} At a state $(i,j) \in [M] \times [n]$, the probability of emitting the symbol $\sigma$ is equal to $O_{\pi(i)}[i,\sigma]$. On the other, for any state $(i,n+1)$ where $i \in [M]$, the HMM $M$ generates the symbol $\sigma$ uniformly at random. 
\end{itemize}

\subsubsection{Claim 2: \texttt{EMP} is polynomially reducible to $\overrightarrow{\texttt{HMM}}$}

The purpose of this subsection is to demonstrate that the class of Empirical distributions can be polynomially reduced to the \texttt{HMM} family. This claim has been referenced multiple times throughout the main article, where it played a key role in deriving complexity results for computing SHAP variants across various ML models, including decision trees, linear regression models, and tree ensemble regression models. Specifically, it shows that computing these variants under distributions modeled by HMMs is at least as difficult as computing them under empirical distributions. 




The strategy for reducing empirical distributions to those modeled by the $\overrightarrow{\text{HMM}}$ family involves two steps:

\begin{enumerate}
    \item The sequentialization step, which aims to transform the vectors in $\mathcal{D}$ into a dataset of binary sequences, referred to as $\texttt{SEQ}(\mathcal{D})$.
    \item The construction of a  model in $\overrightarrow{\text{HMM}}$ that encodes the empirical distribution induced by $\texttt{SEQ}(\mathcal{D})$.  
\end{enumerate}

We will now provide a detailed explanation of each step:

\emph{Step 1 (The sequentialization step):} The sequentialization step is fairly straightforward and has been utilized in \cite{marzouk24a} to transform decision trees into equivalent WAs. The sequentialization operation, denoted as $\texttt{SEQ}(.,.)$, is defined as follows:
$$\texttt{SEQ}(\overrightarrow{x}, \pi) = x_{\pi(1)} \ldots x_{\pi(N)}$$

where $\pi$ is a permutation. Essentially, the $\texttt{SEQ}(.,.)$ operation transforms a given vector in $\{0,1\}^{N}$ into a binary string, with the order of the feature variables determined by the permutation $\pi$. From here on, without loss of generality, we assume $\pi$ to be the identity permutation.


\emph{Step 2 (The construction of the HMM):} 
Applying the sequentialization step to the dataset $\mathcal{D}$ results in a \emph{sequentialized} dataset consisting of $M$ binary strings, denoted as $\texttt{SEQ}(\mathcal{D})$. The goal of the second step in the reduction is to construct a model in $\overrightarrow{HMM}$ that models the empirical distribution induced by $\texttt{SEQ}(\mathcal{D})$. While this construction is well-known in the literature of Grammatical Inference (refer to
La Higuera book on Grammatical Inference), we will provide the details of this construction below. 


For a given sequence $w$, we denote the number of occurences of $w$ as a prefix in the dataset $P(\mathcal{D})$ as:
$$N(w) \myeq \#|\{x \in \texttt{SEQ}(\mathcal{D}): ~ \exists (s) \in \{0,1\}^{*}: x = ws\}|$$

We also define the set of prefixes appearing in $\texttt{SEQ}(\mathcal{D})$ as the set of all sequences $w \in \{0,1\}^{*}$ such that $N(w) \neq 0$. This set shall be denoted as $\text{P}(\mathcal{D})$. Note that the set $\text{P}$ is prefix-closed \footnote{A set of sequences $S$ is prefix-closed if the set of prefixes of any sequence in $S$ is also in $S$}.

A (stationary) model in $\overrightarrow{\text{HMM}}$ that encodes the empirical distribution induced by the dataset $\texttt{SEQ}(\mathcal{D})$ is a probabilistic finite state automaton parametrized as follows: 
\begin{enumerate}
    \item \textbf{The permutation:} The identity permutation.
    \item \textbf{The state space:} $Q = \text{P}(\mathcal{D})$.
    \item \textbf{The initial state vector:} Given $\sigma \in \{0,1\}$, the probability of starting from the state $\sigma \in \text{P}(\mathcal{D})$ is equal to $\frac{N(\sigma)}{|\mathcal{D}|}$.
    \item \textbf{The transition dynamics:} For a state $w \sigma \in Q$, where $w \in Q$ \footnote{Thanks to the prefix-closedness property of the set $\text{P}(\mathcal{D})$, if $w\sigma$ is in $Q$, then $w$ is also in $Q$.} and $\sigma \in \Sigma$,  the probability of transitioning from the state $w$ to the state $w\sigma$ is equal to $\frac{N(w \sigma)}{N(w)}$. 
    \item \textbf{The symbol emission:} For any state $w \sigma \in Q$, where $w \in Q$ and $\sigma \in \{0,1\}$, the probability of generating the symbol $\sigma$ from the state $w\sigma$ is equal to $1$.
 \end{enumerate}

One can readily prove that the resulting model computes the empirical distribution induced by $\text{P}(\mathcal{D})$. Indeed, by construction, the probability of generating a binary sequence $w$ by the model is equal to the probability of generating the sequence of states $w_{1}, w_{1:2}, \ldots, w_{1:n-1} w$. The probability of generating this state sequence is equal to:
$$\frac{N(w_{1})}{|\mathcal{D}|} \prod\limits_{i=1}^{|w|} \frac{N(w_{1:i+1})}{N(w_{1:i})} = \frac{N(w)}{|\mathcal{D}|}$$

\subsubsection{Claim 3: \texttt{DT} and $\texttt{\text{ENS-DT}}_{\texttt{\text{R}}}$ are polynomially reducible to \texttt{WA}}

The construction of an equivalent WA for either a decision tree or a tree ensemble used in regression tasks (i.e., a model in \texttt{DT} or a model in $\texttt{\text{ENS-DT}}_{\text{R}}$) is built upon the following result provided in \cite{marzouk24a} for the \texttt{DT} family:


\begin{proposition} \label{app:prop:DT2WA}
    There exists a polynomial-time algorithm that takes as input a decision Tree $T \in \texttt{\emph{DT}}$ over the binary feature set $X = \{X_{1}, \ldots, X_{n} \}$ and outputs a WA $A$ such that:
     $$f_{A}(x_{1}\ldots x_{n}) = f_{T}(x_{1}, \ldots, x_{n} )$$
\end{proposition}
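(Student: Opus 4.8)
The plan is to exploit the fact that a decision tree computes a function with a simple \emph{disjunctive} structure over its leaves, together with the polynomial-time closure of WAs over $\Sigma = \{0,1\}$ under addition and scalar multiplication (recalled in the discussion of linear-algebra operations on WAs above). Concretely, each leaf $\ell$ of $T$ is reached exactly by those inputs consistent with the literals tested along the root-to-$\ell$ path; since each variable occurs at most once on a path, this path induces a partial assignment $\rho_\ell : J_\ell \to \{0,1\}$ on some subset $J_\ell \subseteq [n]$. Writing $c_\ell$ for the label of $\ell$, one has the pointwise identity
\[ f_T(x_1,\ldots,x_n) = \sum_{\ell} c_\ell \prod_{j \in J_\ell} I(x_j = \rho_\ell(j)), \]
because every input reaches exactly one leaf. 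It therefore suffices to realise each summand by a small WA and add them.

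First I would build, for each leaf $\ell$, a WA $A_\ell = \langle \alpha, \{A^\ell_0, A^\ell_1\}, \beta\rangle$ that reads the features in the fixed order $x_1, \ldots, x_n$ and accumulates $c_\ell$ iff the input agrees with $\rho_\ell$. I take $n+1$ states acting as a position counter, with $\alpha$ the first canonical basis vector and $\beta$ equal to $c_\ell$ times the last. For $\sigma \in \{0,1\}$ I set $(A^\ell_\sigma)_{i,i+1} = 1$ when either $i \notin J_\ell$ (the feature is not tested, so the automaton simply advances) or $i \in J_\ell$ and $\rho_\ell(i) = \sigma$ (the tested feature matches), and $0$ otherwise; all remaining entries are $0$. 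Telescoping the matrix product then gives $f_{A_\ell}(x) = c_\ell \prod_{j \in J_\ell} I(x_j = \rho_\ell(j))$, since $\alpha^{T} \big(\prod_i A^\ell_{x_i}\big) \beta$ equals $c_\ell$ exactly when every traversed transition has weight $1$, and collapses to $0$ as soon as one constrained feature disagrees.

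Finally I would output $A = \sum_\ell A_\ell$ via the direct-sum addition construction for 1-alphabet WAs, which yields $f_A = f_T$ pointwise. For the complexity bound: the number of leaves is at most the number of edges $|T|$, each $A_\ell$ has $n+1$ states, and iterated addition is polynomial, so $\texttt{size}(A) = O(n \cdot |T|)$ and the whole construction runs in polynomial time in $|T|$ and $n$. The only genuine subtlety — and the point I expect to require the most care in the write-up — is that a decision tree may test features in \emph{different orders} along different paths and may skip features entirely; the per-leaf decomposition sidesteps this cleanly, since each leaf is summarised by a partial assignment that the chain WA verifies while scanning the features in the single canonical order $1,\ldots,n$, independently of the tree's internal branching order. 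The extension to $\texttt{ENS-DT}_{\texttt{R}}$ is then immediate: a regression ensemble is a weighted sum of decision-tree outputs, so one converts each constituent tree as above and takes the corresponding weighted sum of WAs, again appealing to the polynomial-time closure of WAs under scalar multiplication and addition.
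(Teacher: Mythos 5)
Your proof is correct, but it is genuinely different from what the paper does: the paper does not prove this proposition in-house at all --- it imports the construction from the cited prior work (\citealp{marzouk24a}), and only records its complexity, namely that the resulting WA has size $O(|T|)$ and is built in time $O(|T|)$, before using closure of WAs under linear combinations to lift the result to $\texttt{ENS-DT}_{\texttt{R}}$. Your argument, by contrast, is self-contained: the pointwise identity $f_T(x) = \sum_{\ell} c_\ell \prod_{j \in J_\ell} I\left(x_j = \rho_\ell(j)\right)$ (valid because the root-to-leaf paths partition the input space), the $(n+1)$-state chain WA realising each summand via super-diagonal transition matrices, and the direct-sum addition are all sound, and you correctly isolate the one real subtlety --- that different paths may test features in different orders, which the per-leaf partial-assignment view neutralises by always scanning features in the canonical order. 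What the two approaches buy is a size trade-off: your construction yields a WA of size $O(n \cdot |T|)$ (one $(n+1)$-state chain per leaf), whereas the construction the paper cites achieves size linear in $|T|$; since the proposition only asserts polynomial-time constructibility, this gap is immaterial here, though it would matter if one cared about the constant-degree polynomial factors in the downstream SHAP algorithms of Theorem~\ref{thm:shapwa}. Your closing remark on regression ensembles coincides with the paper's own argument, which derives the $\texttt{ENS-DT}_{\texttt{R}}$ case from the DT case by exactly the same appeal to polynomial-time closure under addition and scalar multiplication.
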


Since an ensemble of decision trees used for regression tasks is a linear combination of decision trees, and the family of WAs is closed under linear combination operations, with these operations being implementable in polynomial time as demonstrated in section~\ref{app:sec:ter}, Proposition~\ref{app:prop:DT2WA} consequently implies the existence of a polynomial-time algorithm that produces a WA equivalent to a given tree ensemble model used for regression.


\paragraph{Complexity.} The algorithm for constructing a WA equivalent to a given DT $T$ runs in $O(|T|)$, where $|T|$ denotes the number of edges in the DT $T$. The size of the resulting WA is $O(|T|)$ (See \cite{marzouk24a} for more details). Consequently, the overall algorithm for constructing an equivalent regression tree ensemble consisting of $m$ DTs $\{T_{i}\}_{i \in [m]}$ operates in $O(m \cdot \max\limits_{i \in [m]} |T_{i}|)$. The size of the ensemble is also $O(m \cdot \max\limits_{i \in [m]} |T_{i}|)$.


\subsubsection{Claim 4: $\texttt{\text{Lin}}_{\texttt{\text{R}}}$ is polynomially reducible to \texttt{WA}.}

Let $M := \langle \{w_{i,d}\}_{i \in [n], d \in \mathbb{D}_{i}}, b\rangle$ be a linear regression model over the finite set $\mathbb{D} = [m_{1}] \times \ldots [m_{n}]$ where $\{m_{i}\}_{i \in [n]}$, computing the function (as defined earlier):


$$f(x_{1}, \ldots , x_{n}) = \sum\limits_{i=1}^{n} \sum\limits_{d \in \mathbb{D}_{i}} w_{i,d} \cdot I(x_{i}=d) + b$$

For the purposes of our reduction to WAs, we will assume that: $m_{1} = \ldots = m_{n} = m$.
The conversion of a linear regression model $M$ over the input space $\mathbb{D} = [m_{1}] \times \ldots [m_{n}]$ to an equivalent one over $[m]^{n}$ can be done as follows: We set $m$ to be $\max\limits_{i\in [n]} m_{i}$, and the parameters $\{\tilde{w}_{i,d}\}_{i \in [n], d \in [m]}$ for the new model are set as:


$$\tilde{w}_{i,d} = \begin{cases}
    w_{i,d} & \text{if} ~~ d \leq m_{i} \\
    0 & \text{otherwise}
\end{cases}$$

Observe that this procedure operates in $O(\max\limits_{i \in [n]} m_{i})$ time.

\paragraph{Reduction strategy.} For a given linear regression model $M := \langle\{w_{i,d}\}_{i \in [n], d \in [m]},b\rangle$ over $[m]^{n}$, we construct a WA over the alphabet $\Sigma = [m]$ with $n+1$ states. The initial state vector has value $1$ for state $1$, and $0$ for all other states. For $i \in [n]$, the transition from state $i$ to state $i+1$ is assigned a weight of $w_{i,\sigma}$ when processing the symbol $\sigma$ from this state. Figure~\ref{app:fig:lin2wa} illustrates an example of the graphical representation of the resulting WA from a given linear regression tree.


The additive intercept parameter $b$ can be incorporated into this constructed model by simply adding the resulting WA from the procedure described above to a trivial single-state WA that outputs the value $b$ for all binary strings (see Section~\ref{app:sec:ter} for more details on the addition operation between two WAs).

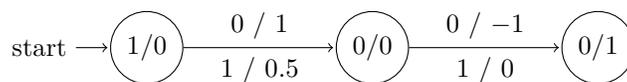
\begin{figure}[h]
    \centering
    \begin{tikzpicture}[shorten >=1pt, node distance=3cm, on grid, auto]

      \node[state, initial] (q_1) {$1/0$};
      \node[state] (q_2) [right=of q_1] {$0/0$};
      \node[state] (q_3) [right=of q_2] {$0/1$};

      \path[->] (q_1) edge node[above] {0 / $1$} 
                         node[below] {1 / $0.5$} (q_2);
      \path[->] (q_2) edge node[above] {0 / $-1$} 
                         node[below] {1 / $0$} (q_3);

    \end{tikzpicture}
    \caption{A construction of a WA equivalent to a linear regression over $2$ binary features with the following weights: $w_{1,0} = 1,~w_{1,1}=0.5,~w_{2,0}=-1,~w_{2,1}=0$. The notation $x/y$ within the state nodes represents the initial weight of the state ($x$) and the final weight of the state ($y$).
    }
    \label{app:fig:lin2wa}
\end{figure}

\subsection{Additional reduction results and implications on SHAP computation}
Some of the complexity results presented in Table~\ref{fig:summaryresults} were not explicitly discussed in the main article. Instead, additional polynomial-time reductions between Hidden Markov Models and other families of distributions are needed to derive them. For completeness, we include in the first part of this subsection proofs of these reduction relationships. Following that, we outline the implications of these relationships on the complexity of certain SHAP configurations shown in Table~\ref{fig:summaryresults}.


\subsubsection{Additional reduction results}

An interesting aspect of Hidden Markov Models is that they provide a unifying probabilistic modeling framework encompassing several families of distributions discussed in the literature on SHAP computation. Specifically, the class \texttt{MARKOV} (and, \textit{de facto}, the family \texttt{IND}, which is trivially a subclass of \texttt{MARKOV}) as well as \texttt{NB} (more formally, $\texttt{NB} \preceq_{P} \ar{\texttt{\text{HMM}}}$) can be polynomially reduced to the class of Hidden Markov Models.


\paragraph{Claim 1: \texttt{NB} is  polynomially reducible to $\overrightarrow{\texttt{HMM}}$.}

Let $M \in \texttt{NB}$ be a model over $n$ binary features with a hidden variable $Y$ that takes values from the discrete set $[m]$. The goal is to construct, in polynomial time, a model $M' \in \ar{\texttt{\text{HMM}}}$ that is equivalent to $M$, i.e.:


$$\forall (x_{1}, \ldots, x_{n}) \in \{0,1\}^{n}:~~P_{M}(x_{1}, \ldots, x_{n}) = P_{M'}(x_{1}, \ldots , x_{n})$$

A key observation underlying the reduction is that a naive Bayes Model can be viewed as a specific instance of a model in the family $\ar{\texttt{\text{HMM}}}$, where the state space coincides with the domain of the hidden variable in the naive Bayes model. The distinct feature of this model is that it remains in the same state throughout the entire generation process, with this state being determined at the initialization phase according to the probability distribution.


Formally, let $M = \langle\pi, \{P_{i}\}_{i \in [n]}\rangle$ be a naive bayes model over $n$ observed RVs such that the domain value of its hidden state variable $Y$ is equal to $[m]$.  The construction of a model $M' \in \ar{\texttt{\text{HMM}}} $ is given as follows:
\begin{itemize}
    \item \textbf{The permutation:} The identity permutation,
    \item \textbf{The state space:} $[m]$.
    \item \textbf{The state initialization:} $M'$ starts generating from the state $i \in [m]$ with probability equal to $\pi[m]$.
    \item \textbf{The transition dynamics:} $M'$ transitions from a given state $j$ to the same state $j$ with probability equal to $1$.
    \item \textbf{The symbol emission:} At position $i$, the model $M'$ emits the symbol $\sigma \in \{0,1\}$ from the state $j$ with probability equal to $P_{i}$. 
\end{itemize}

\paragraph{Claim 2: \texttt{MARKOV} (resp. $\ar{\texttt{\text{MARKOV}}}$) is polynomially reducible to \texttt{HMM} (resp. $\ar{\texttt{\text{HMM}}}$).} The class \texttt{HMM} (resp. $\ar{\texttt{\text{HMM}}}$) can be viewed as a generalization of the class \texttt{MARKOV} (resp. $\ar{\texttt{\text{MARKOV}}}$) for handling partially observable stochastic processes: Markovian distributions represent fully observable processes, whereas HMMs represent partially observable ones. Converting a Markovian distribution into a distribution modeled by an HMM involves encoding the Markovian dynamics into the hidden state dynamics of the HMM. The HMM then trivially emits the corresponding symbol of the reached hidden state at each position.


For completeness, the following provides a formal description of how \texttt{MARKOV} is polynomially reducible to \texttt{HMM}. The derivation of a polynomial-time reduction from $\ar{\texttt{\text{MARKOV}}}$ to $\ar{\texttt{\text{HMM}}}$ can be achieved using a similar construction. Let $M := \langle \pi, T\rangle $ be a model in \texttt{MARKOV} over the alphabet $\Sigma$. The description of a model $M' \in \texttt{\text{HMM}}$, equivalent to $M$, is obtained as follows:


\begin{itemize}
    \item \textbf{The state space:} The alphabet $\Sigma$
    \item \textbf{The state initialization:} $M'$ starts by generating a state $\sigma \in \Sigma$ with probability equal to $\pi(\sigma)$.
    \item \textbf{The transition dynamics:} From a hidden state $\sigma \in \Sigma$, the probability of transitioning to the state $\sigma' \in \Sigma$ is equal to $T[\sigma, \sigma']$. 
    \item \textbf{The symbol emission:} For any symbol $\sigma \in \Sigma$, $M'$ emits a symbol $\sigma \in \Sigma$ from the hidden state $\sigma$ with probability equal to $1$.
\end{itemize}

\subsubsection{Corollaries on SHAP computational problems} 

The reduction relationships among independent distributions, Markovian distributions, and those modeled by HMMs (i.e., \texttt{IND} $\preceq_{P}$ \texttt{MARKOV} $\preceq_{P}$ \texttt{HMM} and $\ar{\texttt{IND}}$ $\preceq_{P}$ $\ar{\texttt{MARKOV}}$ $\preceq_{P}$ $\ar{\texttt{HMM}}$), as well as the relationship between empirical and HMM-modeled distributions (i.e., \texttt{EMP} $\preceq_{P}$ $\ar{\texttt{HMM}}$ $\preceq_{P}$ \texttt{HMM}), naturally leads to a corollary regarding the relationships between different SHAP variants:

\begin{corollary}
    For any family of sequential (resp. non-sequential) models $\mathbb{M}$ (resp. $\ar{\mathbb{M}}$), $\mathbb{S} = \{ \texttt{\emph{LOC}}, \texttt{\emph{GLO}} \}$, we have:
    \begin{equation}\label{app:eq:markov2hmm1}
        \mathbb{S}-\mathbb{V}-\texttt{\emph{SHAP}}(\mathbb{M}, \texttt{\emph{IND}}) \preceq_{P} \mathbb{S}-\mathbb{V}-\texttt{\emph{SHAP}}(\mathbb{M}, \texttt{\emph{MARKOV}}) \preceq_{P} \mathbb{S}-\mathbb{V}-\texttt{\emph{SHAP}}(\mathbb{M}, \texttt{\emph{HMM}})
    \end{equation}
    \begin{equation}\label{app:eq:markov2hmm2}
        \mathbb{S}-\mathbb{V}-\texttt{\emph{SHAP}}(\ar{\mathbb{M}},
    \ar{\texttt{\emph{IND}}}) \preceq_{P} \mathbb{S}-\mathbb{V}-\texttt{\emph{SHAP}}(\ar{\mathbb{M}}, \ar{\texttt{\emph{MARKOV}}}) \preceq_{P} \mathbb{S}-\mathbb{V}-\texttt{\emph{SHAP}}(\ar{\mathbb{M}}, \ar{\texttt{\emph{HMM}}})
    \end{equation}
    \begin{equation}\label{app:eq:markov2hmm1}
        \mathbb{S}-\mathbb{V}-\texttt{\emph{SHAP}}(\mathbb{M}, \texttt{\emph{EMP}}) \preceq_{P} \mathbb{S}-\mathbb{V}-\texttt{\emph{SHAP}}(\mathbb{M}, \texttt{\emph{HMM}}) \ \ ; \      \    \mathbb{S}-\mathbb{V}-\texttt{\emph{SHAP}}(\ar{\mathbb{M}}, \texttt{\emph{EMP}}) \preceq_{P} \mathbb{S}-\mathbb{V}-\texttt{\emph{SHAP}}(\ar{\mathbb{M}}, \ar{\texttt{\emph{HMM}}})
    \end{equation}
\end{corollary}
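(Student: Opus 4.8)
The plan is to recognize all three displayed chains as instances of a single generic principle: \emph{any} polynomial-time reduction between two distribution classes lifts verbatim to a polynomial-time reduction between the corresponding SHAP problems, because every value function involved depends on its distribution argument only through the probabilities the distribution assigns to events. First I would isolate this as a lifting lemma. Fix a model family $\mathbb{M}$ (sequential or not), a scope $\texttt{S}\in\mathbb{S}$, and a variant $\texttt{V}\in\mathbb{V}$, and suppose $\mathcal{P}_1 \preceq_{P} \mathcal{P}_2$ in the sense already used in the excerpt: there is a poly-time map sending each $P\in\mathcal{P}_1$ to some $P'\in\mathcal{P}_2$ computing the \emph{identical} probability distribution (for sequential models, matching prefix probabilities $P_M^{(n)}$ for the relevant length $n$). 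Given an input instance $\langle M,P,i,\x\rangle$ (or $\langle M,P,i,n\rangle$ in the global/sequential case) of $\texttt{S-V-SHAP}(\mathbb{M},\mathcal{P}_1)$, the reduction simply copies $M$, $i$, $\x$, $n$ unchanged and replaces $P$ by its image $P'$. This is a polynomial-time map, since the distribution construction is poly-time by hypothesis and every other component is carried through by the identity.

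The second step is to verify that this map preserves the output value. Inspecting the definitions of $v_c$, $v_i$, and (for the global scope) the outer expectation $\mathbb{E}_{\x\sim\mathcal{D}_p}[\phi(f,\x,i)]$, each quantity is a finite sum of values of $f$ weighted by probabilities that $\mathcal{D}_p$ assigns to concrete events (conditional expectations, interventional expectations, or the outer aggregation). Because $P$ and $P'$ assign identical probabilities to every event over the finite support, the local attributions $\phi_c,\phi_i$ and the global attributions $\Phi_c,\Phi_i,\Phi_b$ take \emph{numerically identical} values under $P$ and $P'$. Hence the reduction is answer-preserving; for the decision versions the comparison threshold $\epsilon$ is passed through unchanged and the yes/no answer is therefore preserved as well.

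Finally I would instantiate the lemma along the distribution reductions already established in this section of the appendix. For the first chain I use $\texttt{IND}\subseteq\texttt{MARKOV}$ (an independent distribution is a Markov chain whose transition rows are constant), which gives $\texttt{IND}\preceq_{P}\texttt{MARKOV}$ for free, together with $\texttt{MARKOV}\preceq_{P}\texttt{HMM}$ from Claim~2; transitivity of $\preceq_{P}$ then yields the full chain. The non-sequential chain is identical using $\ar{\texttt{IND}}\subseteq\ar{\texttt{MARKOV}}$ and $\ar{\texttt{MARKOV}}\preceq_{P}\ar{\texttt{HMM}}$. The empirical cases follow directly from $\texttt{EMP}\preceq_{P}\ar{\texttt{HMM}}\preceq_{P}\texttt{HMM}$ (Claims 1 and 2 of Theorem~\ref{cor:reductions}). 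The main bookkeeping obstacle, rather than any real difficulty, is the care required at the boundary cases: one must state ``equivalent distribution'' correctly for sequential models (equal prefix probabilities at the analyzed length, not merely equal marginals), and one must note that \emph{local} baseline SHAP takes no distribution argument at all, so for that single combination the statement is vacuously true and the lifting lemma is invoked only for the remaining $(\texttt{S},\texttt{V})$ pairs where the distribution genuinely appears in the input.
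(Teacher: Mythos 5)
Your proposal is correct and takes essentially the same approach as the paper: the paper likewise obtains the corollary by combining the distribution-class reductions ($\texttt{IND}$ trivially a subclass of $\texttt{MARKOV}$, $\texttt{MARKOV} \preceq_{P} \texttt{HMM}$ and $\ar{\texttt{MARKOV}} \preceq_{P} \ar{\texttt{HMM}}$, and $\texttt{EMP} \preceq_{P} \ar{\texttt{HMM}} \preceq_{P} \texttt{HMM}$ from the appendix claims) with the observation that replacing a distribution by an equivalent one from a richer class leaves every SHAP value, and hence every answer, unchanged. The only difference is presentational: the paper treats this lifting step as immediate (it ``naturally leads to'' the corollary), whereas you state and verify it as an explicit answer-preserving lemma and flag the vacuous $\texttt{LOC-B-SHAP}$ case.
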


This corollary, along with previous complexity results for computing conditional SHAP values~\citep{vander21, huangupdates, arenas23}, and the fact that conditional and interventional SHAP variants coincide under independent distributions~\citep{sundararajan20b}, presents a wide range of complexity results on SHAP computational problems that can be derived as corollaries:

\begin{corollary} All of the following complexity results hold:
\begin{enumerate} 
    \item For the class of models $\{\texttt{\emph{DT}}, \texttt{\emph{ENS-DT}}_{\texttt{\emph{R}}}, \texttt{\emph{LIN}}_{\texttt{\emph{R}}}\}$: The computational problems $\mathbb{S}-\mathbb{V}-\texttt{\emph{SHAP}}(\ar{\mathbb{M}},
    \ar{\texttt{\emph{IND}}})$ for  $\mathbb{V}\in\texttt{\emph{\{B,V\}}}$ can be solved in polynomial time. The computational problem $\texttt{\emph{GLO}}-\texttt{\emph{C}}-\texttt{\emph{SHAP}}(\ar{\mathbb{M}}, 
    \ar{\texttt{\emph{IND}}})$ can be solved in polynomial time.
    The computational problem $\texttt{\emph{GLO}}-\texttt{\emph{C}}-\texttt{\emph{SHAP}}(\ar{\mathbb{M}}, 
    \ar{\texttt{\emph{HMM}}})$ is $\#$P-Hard.
    
    \item For \texttt{\emph{WA}}:  The computational problems         $\mathbb{S}-\mathbb{V}-$\texttt{\emph{SHAP(WA,}}$\mathbb{P}$\texttt{\emph{)}} for $\mathbb{S}\in\texttt{\emph{\{LOC,GLO\}}}$, $\mathbb{P}\in\texttt{\emph{\{IND,EMP\}}}$, and $\mathbb{V}\in\texttt{\emph{\{B,V\}}}$ can be solved in polynomial time. The computational problem $\texttt{\emph{GLO}}-\texttt{\emph{C}}-\texttt{\emph{SHAP}}(\texttt{\emph{WA}}, 
    \texttt{\emph{IND}})$ can be solved in polynomial time. The computational problem $\texttt{\emph{LOC}}-\texttt{\emph{C}}-\texttt{\emph{SHAP}}(\texttt{\emph{WA}}, 
    \texttt{\emph{EMP}})$ is NP-Hard. The computational problem $\texttt{\emph{LOC}}-\texttt{\emph{C}}-\texttt{\emph{SHAP}}(\texttt{\emph{WA}}, 
    \texttt{\emph{HMM}})$ is $\#$P-Hard.
    \item For $\texttt{\emph{ENS-DT}}_{\texttt{\emph{C}}}$: The computational problem $\texttt{\emph{LOC}}-\texttt{\emph{C}}-\texttt{\emph{SHAP}}\texttt{\emph{(ENS-DT}}_{\texttt{\emph{C}}}, \texttt{\emph{EMP}}$\texttt{\emph{)}} is NP-Hard. The computational problem $\texttt{\emph{LOC}}-\texttt{\emph{C}}-\texttt{\emph{SHAP}}\texttt{\emph{(ENS-DT}}_{\texttt{\emph{C}}},\ar{\texttt{\emph{HMM}}}$\texttt{\emph{)}} is $\#$P-Hard.

    \item For \texttt{\emph{NN-SIGMOID}}: The computational problem $\texttt{\emph{LOC}}-\texttt{\emph{C}}-\texttt{\emph{SHAP}}\texttt{\emph{(NN-SIGMOID,}}$$\ar{\texttt{\emph{HMM}}}$\texttt{\emph{)}} is NP-Hard.

    \item For \texttt{\emph{RNN-ReLU}}: The computational problem $\texttt{\emph{LOC}}-\texttt{\emph{C}}-\texttt{\emph{SHAP}}\texttt{\emph{(RNN-ReLU,}}$$\ar{\texttt{\emph{IND}}}$\texttt{\emph{)}} is NP-Hard.

\end{enumerate}
\end{corollary}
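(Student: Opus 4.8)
The plan is to derive every item mechanically from four already-available ingredients: the inter-distributional reductions of the preceding corollary (including $\texttt{NB}\preceq_{P}\ar{\texttt{HMM}}$ and $\texttt{MARKOV}\preceq_{P}\texttt{HMM}$), the positive bounds of Theorem~\ref{cor:reductions} and Theorem~\ref{thm:shapwa}, the negative bounds of Theorem~\ref{thm:relushap} and Theorem~\ref{thm:intractable} together with the prior conditional-SHAP hardness results of~\citep{vander21,huangupdates,arenas23}, and the identity $v_{c}=v_{i}$ (hence $\phi_{c}=\phi_{i}$ and $\Phi_{c}=\Phi_{i}$) that holds under independence~\citep{sundararajan20b}. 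Throughout I will use that a model reduction $\mathcal{M}_{1}\preceq_{P}\mathcal{M}_{2}$ (resp.\ a distribution reduction $\mathcal{P}_{1}\preceq_{P}\mathcal{P}_{2}$) transports any hardness bound for $\texttt{SHAP}(\mathcal{M}_{1},\cdot)$ to $\texttt{SHAP}(\mathcal{M}_{2},\cdot)$ (resp.\ any hardness bound for $\texttt{SHAP}(\cdot,\mathcal{P}_{1})$ to $\texttt{SHAP}(\cdot,\mathcal{P}_{2})$), and conversely transports tractability in the opposite direction.

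\textbf{Tractability.} For $\ar{\mathbb{M}}=\{\texttt{DT},\texttt{ENS-DT}_{\texttt{R}},\texttt{LIN}_{\texttt{R}}\}$ and for \texttt{WA}, each positive claim follows because the relevant distribution class lies $\preceq_{P}$-below one already handled: since $\ar{\texttt{IND}}\preceq_{P}\ar{\texttt{HMM}}$ and $\texttt{IND},\texttt{EMP}\preceq_{P}\texttt{HMM}$, the poly-time bounds of Theorems~\ref{cor:reductions} and~\ref{thm:shapwa} for the baseline and interventional variants descend to $\ar{\texttt{IND}}$, $\texttt{IND}$ and $\texttt{EMP}$. The conditional tractability statements under independence are then immediate from $\Phi_{c}=\Phi_{i}$: $\texttt{GLO-C-SHAP}(\ar{\mathbb{M}},\ar{\texttt{IND}})$ coincides with the poly-time problem $\texttt{GLO-I-SHAP}(\ar{\mathbb{M}},\ar{\texttt{IND}})$, and likewise $\texttt{GLO-C-SHAP}(\texttt{WA},\texttt{IND})$ coincides with $\texttt{GLO-I-SHAP}(\texttt{WA},\texttt{IND})$.

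\textbf{Local hardness.} The four local conditional items each pair one base hardness fact with a single $\preceq_{P}$ step. For \texttt{RNN-ReLU} (item~5), $\phi_{c}=\phi_{i}$ under independence identifies $\texttt{LOC-C-SHAP}(\texttt{RNN-ReLU},\ar{\texttt{IND}})$ with $\texttt{LOC-I-SHAP}(\texttt{RNN-ReLU},\ar{\texttt{IND}})$, which is NP-hard by Theorem~\ref{thm:relushap}. For \texttt{NN-SIGMOID} (item~4), I start from NP-hardness of local conditional SHAP for sigmoidal networks under independence~\citep{vander21} and lift it along $\ar{\texttt{IND}}\preceq_{P}\ar{\texttt{HMM}}$. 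For \texttt{WA} (item~2) I transport the known conditional-SHAP hardness for decision trees along $\texttt{DT}\preceq_{P}\texttt{WA}$ (the reduction underlying Theorem~\ref{cor:reductions}): the NP-hardness under $\texttt{EMP}$ yields $\texttt{LOC-C-SHAP}(\texttt{WA},\texttt{EMP})$, while the $\#$P-hardness of conditional SHAP under a naive-Bayes reference, pushed through $\texttt{NB}\preceq_{P}\ar{\texttt{HMM}}\preceq_{P}\texttt{HMM}$, yields $\texttt{LOC-C-SHAP}(\texttt{WA},\texttt{HMM})$. For $\texttt{ENS-DT}_{\texttt{C}}$ (item~3) the same tree-level bounds apply verbatim, since a single classification tree is a degenerate classification ensemble, so the $\texttt{EMP}$ (NP) and $\ar{\texttt{HMM}}$ ($\#$P) hardness lift directly to $\texttt{ENS-DT}_{\texttt{C}}$.

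\textbf{Global hardness and the main obstacle.} It remains to establish $\#$P-hardness of the \emph{global} problem $\texttt{GLO-C-SHAP}(\ar{\mathbb{M}},\ar{\texttt{HMM}})$ in item~1. Here the naive local-to-global device fails: collapsing the data-generating distribution to a point mass at the explained instance $\x$ forces the reference and averaging roles of $\mathcal{D}_{p}$ to coincide, so $\z_{S}=\x_{S}$ holds trivially, every conditional value equals $f(\x)$, and the Shapley score vanishes. The hard part is therefore to produce a global reduction that \emph{decouples} these two roles of the distribution — e.g.\ an $\ar{\texttt{HMM}}$ that averages over a trivial family of query points while retaining a rich conditioning structure — or, alternatively, to run the $\#$P counting reduction (via $\texttt{NB}\preceq_{P}\ar{\texttt{HMM}}$) directly against the global aggregate. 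Verifying that the cited tree- and network-level hardness instances are stated for exactly the model/distribution pairs used here, and making this decoupling construction precise, are the only non-mechanical steps; everything else is composition of the reductions above with the proven tractability and hardness theorems.
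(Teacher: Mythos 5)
Your proposal follows the paper's own proof essentially verbatim for every claim except the last one of item~1. The tractability claims are obtained exactly as in the paper: pull the HMM-tractability results (Theorems~\ref{thm:shapwa} and~\ref{cor:reductions}) down along $\texttt{IND} \preceq_{P} \texttt{HMM}$, $\ar{\texttt{IND}} \preceq_{P} \ar{\texttt{HMM}}$ and $\texttt{EMP} \preceq_{P} \texttt{HMM}$, then invoke $\phi_c=\phi_i$ (and $\Phi_c=\Phi_i$) under independence. The local conditional hardness claims are likewise the same compositions the paper performs: Theorem~\ref{thm:relushap} plus the independence coincidence for \texttt{RNN-ReLU}; the hardness of \cite{vander21} for sigmoidal networks lifted along $\ar{\texttt{IND}} \preceq_{P} \ar{\texttt{HMM}}$; and the decision-tree hardness results (NP under \texttt{EMP}, $\#$P under \texttt{NB}) pushed through $\texttt{DT} \preceq_{P} \texttt{WA}$ and $\texttt{NB} \preceq_{P} \ar{\texttt{HMM}} \preceq_{P} \texttt{HMM}$. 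The only routing difference is item~3 under $\ar{\texttt{HMM}}$: you embed a single tree as a degenerate ensemble and reuse the NB-based $\#$P-hardness, whereas the paper cites the $\#$P-hardness of conditional SHAP for $\texttt{ENS-DT}_{\texttt{C}}$ under \texttt{IND} from \cite{huangupdates} and lifts it along $\texttt{IND} \preceq_{P} \ar{\texttt{HMM}}$; both compositions are sound (the paper itself uses $\texttt{DT} \preceq_{P} \texttt{ENS-DT}_{\texttt{C}}$ for the \texttt{EMP} case), modulo the small care needed so that a one-tree ensemble under the step-function voting rule actually reproduces the tree.

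The genuine divergence is the claim you leave open: $\#$P-hardness of $\texttt{GLO-C-SHAP}(\ar{\mathbb{M}}, \ar{\texttt{HMM}})$. You are right that the point-mass device behind Proposition~\ref{prop:hardnessrelation} degenerates for conditional SHAP (averaging and conditioning collapse onto the same point mass, so every local score vanishes), and you stop without producing the decoupling construction. But the paper never produces it either: its entire proof of this claim is one sentence transporting the \emph{local} NB hardness of \cite{vander21} along $\texttt{NB} \preceq_{P} \ar{\texttt{HMM}}$, with no local-to-global argument at all; consistently, the paper's summary table records this very entry as ``$\#$P-H (L)'', i.e.\ a local statement. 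So relative to the statement as literally written your proof is incomplete, but the missing step is one the paper itself does not carry out; read as the local claim that the paper's table and proof actually support, your composition of \cite{vander21} with $\texttt{NB} \preceq_{P} \ar{\texttt{HMM}}$ closes it immediately, and your discussion of why the global version is not a mechanical corollary is a more careful treatment of this point than the paper's own.
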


\begin{proof}

We will explain the results of each part of the corollaries separately: \begin{enumerate}
 \item For the class of models $\{\texttt{DT}, \texttt{ENS-DT}_{\texttt{R}}, \texttt{LIN}_{\texttt{R}}\}$: The complexity results for these model families under independent distributions, for both baseline and interventional SHAP, are derived from our findings on the tractability of these models under HMM-modeled distributions and our proof that \texttt{EMP} $\preceq_{P}$ $\ar{\texttt{HMM}}$. The complexity results for the global and conditional forms under independent distributions stem from our tractability results for interventional SHAP in this setup, as well as the fact that interventional and conditional SHAP coincide under independent distributions~\citep{sundararajan20b}. Lastly, the $\#$P-Hardness of the conditional variant under HMM-modeled distributions follows from the hardness of generating this form of explanation under Naive Bayes modeled distributions~\citep{vander21}, and our proof that $\ar{\texttt{NB}}$ $\preceq_{P}$ $\ar{\texttt{HMM}}$.
\item For \texttt{WA}, the tractability results for local and global baseline, as well as interventional SHAP under independent and empirical distributions, follow from our primary complexity findings for this family of models, which demonstrate tractability over HMM-modeled distributions, and our proof that both \texttt{EMP} $\preceq_{P}$ \texttt{HMM} and \texttt{IND} $\preceq_{P}$ \texttt{HMM}. The tractability of global interventional SHAP under independent distributions also follows from these results, along with the fact that interventional and conditional SHAP coincide under independent distributions~\citep{sundararajan20b}. The NP-hardness of conditional SHAP under empirical distributions is derived from the complexity of this setting for decision tree classifiers~\citep{vander21} and our proof that \texttt{DT} $\preceq_{P}$ \texttt{WA}. Finally, the \#P-hardness of conditional SHAP under HMM-modeled distributions results from the hardness of this setting for decision trees, as discussed earlier, and our proof that \texttt{DT} $\preceq_{P}$ \texttt{WA}.

\item For $\texttt{ENS-DT}_{\texttt{C}}$: The complexity results for local conditional SHAP under empirical distributions are derived from the hardness results provided for decision trees under Naive Bayes-modeled distributions~\citep{vander21}, along with the facts that \texttt{DT} $\preceq_{P}$ $\texttt{ENS-DT}_{\texttt{C}}$ and $\ar{\texttt{NB}}$ $\preceq_{P}$ $\texttt{EMP}$. The result on the complexity of conditional SHAP under hidden Markov distributions follows as a corollary of the results in~\cite{huangupdates} regarding the hardness of computing conditional SHAP under independent distributions, combined with our result showing that $\texttt{IND}$ $\preceq_{P}$ $\texttt{EMP}$.

\item For \texttt{NN-SIGMOID}: The complexity results for conditional SHAP under HMM distributions follow as a corollary of the hardness results presented in~\citep{vander21} for computing conditional SHAP values for sigmoidal neural networks under independent distributions, along with our proof that $\texttt{IND}$ $\preceq_{P}$ $\ar{\texttt{HMM}}$.
\item For \texttt{RNN-ReLU}: The complexity results for conditional SHAP under independent distributions stem from our main complexity result for this family of models, which established the hardness for interventional SHAP. Additionally, interventional and conditional SHAP coincide under independent distributions~\citep{sundararajan20b}.

\end{enumerate}

\end{proof}

\section{On the NP-Hardness of computing local Interventional SHAP for RNN-ReLus under independent distributions} 
\label{app:ISHAPRNN}
The objective of this segment is to prove Lemma \ref{lemma:emptyrnnrelu} from section \ref{subsec:rnnreluhard} of the main paper. This lemma played a crucial role in proving the primary result of this section, which demonstrates the intractability of the problem $\texttt{LOC-I-SHAP}(\texttt{RNN-ReLu}, \texttt{IND})$.  The lemma is stated as follows:

\begin{unumberedlemma}
    The problem \texttt{\emph{EMPTY}}\texttt{\emph{(RNN-ReLu)}} is NP-Hard. 
\end{unumberedlemma}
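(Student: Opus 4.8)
The plan is to reduce the closest string problem (CSP), which is NP-hard by~\citep{li2000closest}, to \texttt{EMPTY}(\texttt{RNN-ReLu}). Given a CSP instance consisting of strings $S = \{w_1,\ldots,w_m\}$ with $|w_i| = n$ over a finite alphabet $\Sigma$ and a threshold $k$, I would construct in polynomial time an \texttt{RNN-ReLu} $f$ over $\Sigma$ together with the integer $n$, such that for every $w' \in \Sigma^n$ we have $f(w') = 1$ if and only if $w'$ is a closest string, i.e. $d_H(w', w_i) \le k$ for all $i \in [m]$. Under this correspondence the accepting set $\{w' \in \Sigma^n : f(w') = 1\}$ is \emph{non-empty} exactly when the CSP instance is a YES-instance, so a polynomial-time procedure deciding emptiness of the \texttt{RNN-ReLu} over $\Sigma^n$ would decide CSP (by negating its answer); this establishes the hardness of \texttt{EMPTY}(\texttt{RNN-ReLu}).

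The core of the construction is to make the recurrent network compute the $m$ Hamming distances $d_H(w', w_i)$ as it streams $w'$ from left to right, and then test the conjunction of the threshold conditions. I would let the hidden state $h$ carry three blocks: a \emph{position register} holding a one-hot vector $\mathbf{e}_t$ over positions $\{0,\ldots,n\}$, a \emph{scratch block} of coordinates indexed by $(\sigma,p)\in\Sigma\times[n]$, and $m$ nonnegative \emph{distance counters} $d_1,\ldots,d_m$. The position register is advanced by a fixed shift permutation folded into $W$, which is symbol-independent and therefore legal for the architecture $h_{w'\sigma} = \mathrm{ReLu}(W h_{w'} + v_{\sigma})$.

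The main obstacle is that the increment to counter $d_i$ at step $t$ is $I\!\left(w'_t \ne (w_i)_t\right)$, which depends jointly on the read symbol $\sigma$ \emph{and} the current position, whereas the transition $W$ is shared across all time steps and $v_\sigma$ contributes only an additive, state-independent term. I would resolve this bilinear dependence with a one-step delay gadget. On reading $\sigma$ at position $t$, the transition copies $\mathbf{e}_t$ into every symbol-block of the scratch via $W$, while $v_\sigma$ subtracts a large constant $M$ from all blocks except block $\sigma$; since $\mathrm{ReLu}\!\left((\mathbf{e}_t)_p - M\right) = 0$ and $(\mathbf{e}_t)_p \in \{0,1\}$, this leaves $s_{\sigma',p} = (\mathbf{e}_t)_p$ only for $\sigma' = \sigma$. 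At the following step the scratch encodes ``symbol $\sigma$ was read at position $p$'', so the update $d_i \mathrel{+}= \sum_{\sigma,p} s_{\sigma,p}\, I\!\left(\sigma \ne (w_i)_{p+1}\right)$ is now \emph{linear} in the state with fixed coefficients and can be absorbed into $W$. Finally, after one padding step, the output layer computes $g = \sum_{i} \mathrm{ReLu}(d_i - k)$ and sets $f(w') = I(-g \ge 0)$, so that $O^{T} h \ge 0$ holds precisely when every $d_i \le k$, realizing the required conjunction.

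To finish, I would verify correctness by a straightforward induction on the prefix length showing the counters equal the partial Hamming distances, and check that the hidden dimension is $O(|\Sigma|\cdot n + m)$ with all weights of polynomial bit-size and all gating constants $M$ bounded by a polynomial in $n$, so the whole reduction runs in polynomial time. I expect the position-dependent comparison described above to be the only genuinely delicate point; the position shift, the threshold gadget, and the polynomial-size bookkeeping are routine once that gadget is in place.
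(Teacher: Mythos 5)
Your high-level route is identical to the paper's: reduce the closest string problem to \texttt{EMPTY}(\texttt{RNN-ReLu}) by constructing, in polynomial time, an RNN that accepts exactly the strings $w'$ with $d_H(w',w_i)\le k$ for all $i\in[m]$, so that non-emptiness over the given support coincides with the CSP instance being a YES-instance. The gap lies in the final step of your gadget. Because of the one-step delay you introduce to handle the position--symbol product, the contribution of the symbol read at step $t$ only reaches the counters $d_i$ at step $t+1$; hence after the $n$ input symbols are consumed, every counter is still missing the last position's contribution (it sits in the scratch). Worse, the architecture's output layer is a plain linear threshold, $f_R(w)=I(O^T h_w \ge 0)$: it cannot compute $g=\sum_i \mathrm{ReLu}(d_i-k)$, and the acceptance condition $\bigwedge_i (d_i\le k)$ is a conjunction of $m$ linear threshold tests on the final state, which is not expressible as a single halfspace --- for instance, no linear functional with bias can accept the counter vector $(1,1)$ while rejecting $(2,0)$ and $(0,2)$, since summing the two rejection inequalities contradicts the acceptance inequality. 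So the ReLUs $\mathrm{ReLu}(d_i-k)$ must be produced \emph{inside the recurrence}, which costs exactly the extra time step you call a ``padding step''; but you output the integer $n$ as the support length, and over $\Sigma^n$ the RNN runs for exactly $n$ steps, so no padding step exists. As written, the reduction does not go through.

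The flaw is repairable while keeping your gadget: output $n+1$ as the length parameter (the length is part of the \texttt{EMPTY} instance, and a dummy last symbol does not affect emptiness of the intended set), let the recurrence at that extra step compute $g_i=\mathrm{ReLu}\bigl(d_i + \langle\text{scratch contribution}\rangle - k\bigr)$ using a constant-one coordinate to supply the bias $-k$, and let the output layer perform the linear test $-\sum_i g_i \ge 0$. It is instructive to compare with the paper's construction, which needs no extra step at all: rather than a position register plus counters, it lets the running distance to $w_i$ travel along the coordinates of the state (at time $t$, coordinate $t$ of the $i$-th block holds $d_H(w'_{1:t},(w_i)_{1:t})$) and bakes the reference string into the embeddings via $v^{w_i}_{\sigma}[l]=I\bigl((w_i)_l\ne\sigma\bigr)$, so the position-dependent comparison is added in the very step the symbol is read; the bias $-k$ enters through the row of $W$ computing the last distance coordinate (fed by a constant-one neuron), so after exactly $n$ steps the state already holds $\mathrm{ReLu}(d_H(w',w_i)-k)$ for every $i$ and a linear-threshold output suffices. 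That synchronous encoding is precisely what your delay gadget loses, and it is the reason the paper's reduction fits within the prescribed input length.
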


The proof is performed by reduction from the closest string problem (\texttt{CSP}). Formally, the \texttt{CSP} problem is given as follows:

$\bullet$ \textbf{Problem:} \texttt{CSP} \\
    \textbf{Instance:} A collection of strings $S = \{w_{i}\}_{i \in [m]}$, whose length is equal to $n$, and an integer $k > 0$. \\ 
      \textbf{Output:} Does there exist a string $w' \in \Sigma^{n}$, such that for any $w_{i} \in S$, we have $d_{H}(w_{i}, w') \leq k$?  \\ 
        where $d_{H}(.,.)$ is the Hamming distance given as: $d_{H}(w,w') := \sum\limits_{i=1}^{ [|w|]} \mathrm{1}_{w_{j}}(w'_{j})$. \\
     
The \texttt{CSP} problem is known to be NP-Hard \citep{li2000closest}. Our reduction approach involves constructing, in polynomial time, an RNN that accepts only closest string solutions for a given input instance $(S,k)$ of the \texttt{CSP} problem. Consequently, the \texttt{CSP} produces a \textit{Yes} answer for the input instance if and only if the constructed RNN-ReLU is empty on the support $\Sigma^{n}$, which directly leads to the result of Lemma~\ref{lemma:emptyrnnrelu}. We divide the proof of this reduction strategy into two parts:

     \begin{itemize}
         \item Given an arbitrary string $w$ and an integer $k > 0$, we present a construction of an RNN-ReLu that simulates the computation of the Hamming distance for $w$. The outcome, indicating whether the Hamming distance exceeds the threshold $k$, is encoded in the activation of a specific neuron within the constructed RNN-ReLu. This procedure will be referred to as $\texttt{CONSTRUCT}(w,k)$.
         \item Concatenate the RNN-ReLUs generated by the procedure $\texttt{CONSTRUCT}(.,.)$ into a single unified RNN cell. Then, assign an appropriate output matrix that accomplishes the desired objective of the construction.
     \end{itemize}
     
     \paragraph{The procedure \texttt{CONSTRUCT}($w,k$).}  Let $w \in \{0,1\}^{n}$ be a fixed reference string and $k > 0$ be an integer. The procedure $\texttt{CONSTRUCT}(w,k)$ returns an RNN cell of dimension $|w| + 1$ that satisfies the properties discussed earlier. The parametrization of $\texttt{CONSTRUCT}(w,k)$ is defined as follows:

\begin{itemize}
\item \textbf{The initial state vector.} $h_{init} = [1, 0 \ldots, 0 ]^{T} \in \mathbb{R}^{n+1}$.
 \item \textbf{The transition matrix.} he transition matrix is dependent on $k$ and is expressed as:
\begin{equation}\label{transition}
W_{k} = \begin{pmatrix}
   0 & 0 & .& . & . & 0 & 0\\
   1 & 0 & . & . & . & 0  & 0\\ 
   0 & 1 & . & . & . & 0  & 0\\ 
    . & . & . & . & . & .  & .\\
   . & . & . & . & . & .  & . \\
   0 & 0 & .& . & . & 0 & -k \\ 
   0 & 0 & .. & . & . & 0 &  1 
\end{pmatrix} \in \mathbb{R}^{(n+1) \times (n+1)}
\end{equation}
\item \textbf{The embedding vectors.} Embedding vectors are dependent on the reference string $w$. To prevent any confusion, we will use the superscript $w$ for indexing. The embedding vectors are constructed as follows: \\ For a symbol $\sigma \in \{0,1\}$  and $l \in [n]$, we define $v_{\sigma}^{w_{i}}[l] = 1 $ if $w_{l} \neq \sigma$. All other elements of $v_{\sigma}^{w}$ are set to $0$.
\end{itemize}
The following proposition formally demonstrates that this construction possesses the desired property:
\begin{proposition}\label{app:prop:reluproperties}
Let $w \in \{0,1\}^{n}$ be an arbitrary string, and $k > 0$ be an arbitrary integer. The procedure $\emph{\texttt{CONSTRUCT}}(w,k)$ outputs an RNN cell $\langle h_{init}, W_{k}, \{v_{\sigma}^{(w)}\}_{\sigma \in \{0,1\}}\rangle$, which satisfies the following properties: 
\begin{enumerate}
    \item For any string $w' \in \{0,1\}^{s}$, where $ s < n$, we have that $h_{w}[s] = d_{H}(w,w')$,
    \item For a string $w' \in \{0,1\}^{n}$, it holds that: 
\begin{equation}\label{eq:levsimulate}        
h_{w'}[n] = \emph{\text{ReLu}}(d_{H}(w, w') - k)
    \end{equation}
\end{enumerate}
\end{proposition}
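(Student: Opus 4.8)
The plan is to prove both properties simultaneously by induction on the length $s$ of the prefix of $w'$ processed so far, tracking the \emph{entire} hidden state vector produced after reading $w'_{1:s}$, which I denote $h^{(s)} := h_{w'_{1:s}}$ (so $h^{(0)} = h_{init}$ and $h^{(s)} = \text{ReLu}(W_{k} h^{(s-1)} + v^{(w)}_{w'_{s}})$). The governing invariant states that for every $s \in \{1, \ldots, n-1\}$ the $s$-th coordinate satisfies $h^{(s)}[s] = d_{H}(w_{1:s}, w'_{1:s}) = \sum_{j=1}^{s} I(w_{j} \neq w'_{j})$; that is, the running Hamming distance has \emph{climbed the subdiagonal} to coordinate $s$, while an auxiliary coordinate maintains the constant that will later activate the threshold term $-k$. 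The crucial structural fact is that all quantities produced along this active diagonal are non-negative, so there the ReLu acts as the identity, which is exactly what lets the recursion accumulate the indicator sums exactly.

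For the base case I would compute $h^{(1)}$ directly: since $h^{(0)} = e_{1}$, the product $W_{k} h^{(0)}$ is the first column of $W_{k}$, which by the subdiagonal structure deposits the seed in coordinate $2$ and leaves coordinate $1$ at $0$; adding $v^{(w)}_{w'_{1}}$ and applying ReLu yields $h^{(1)}[1] = I(w_{1} \neq w'_{1}) = d_{H}(w_{1:1}, w'_{1:1})$. For the inductive step with $2 \le s \le n-1$, the entry $W_{k}[s,s-1] = 1$ shifts the clean value $h^{(s-1)}[s-1]$ into coordinate $s$, the embedding contributes precisely $v^{(w)}_{w'_{s}}[s] = I(w_{s} \neq w'_{s})$, and non-negativity makes the ReLu transparent, so $h^{(s)}[s] = h^{(s-1)}[s-1] + I(w_{s} \neq w'_{s}) = d_{H}(w_{1:s}, w'_{1:s})$, closing the induction and giving Property 1.

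Property 2 is the final-step refinement of the same computation, performed at $s = n$. The subdiagonal again shifts $h^{(n-1)}[n-1] = d_{H}(w_{1:n-1}, w'_{1:n-1})$ into coordinate $n$, but now the special last-column entry $W_{k}[n,n+1] = -k$ contributes $-k$ times the maintained constant, so the pre-activation at coordinate $n$ equals $d_{H}(w_{1:n-1}, w'_{1:n-1}) + I(w_{n} \neq w'_{n}) - k = d_{H}(w, w') - k$. The final ReLu then gives $h^{(n)}[n] = \text{ReLu}(d_{H}(w, w') - k)$, as required; here the clipping $\max(0, \cdot)$ of the ReLu is essential and is exactly what encodes the ``$\le k$'' threshold of the closest-string instance.

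The main obstacle, and the step I would treat most carefully, is the bookkeeping of the coordinates \emph{off} the active diagonal. After reading $s$ symbols the coordinates with index greater than $s$ hold stale, polluted values (the seed plus accumulated mismatch bits), and one must verify these never contaminate the accumulator: coordinate $s$ at step $s$ is always recomputed from coordinate $s-1$ of the previous state through the subdiagonal shift, so any stale content in coordinate $s$ is overwritten before that coordinate becomes active. Equally delicate is establishing the \emph{cleanliness and timing} of the auxiliary constant feeding the $-k$ term; I would confirm that the isolating self-loop $W_{k}[n+1,n+1] = 1$, together with $v^{(w)}_{\sigma}[n+1] = 0$, keeps that coordinate uncorrupted and equal to the required value precisely at the step where coordinate $n$ is finalized, so that the threshold fires once and at the correct moment.
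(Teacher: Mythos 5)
Your strategy is the same as the paper's: induction on the prefix length with the invariant that coordinate $s$ of the hidden state after reading $s$ symbols equals the running Hamming distance $d_{H}(w_{1:s},w'_{1:s})$, using non-negativity to make the ReLu transparent along the active diagonal, and treating the last step separately so that the $-k$ entry and the final ReLu produce Equation~\eqref{eq:levsimulate}. Your handling of Property 1 is correct, and your observation that stale values in coordinate $s$ are overwritten by the subdiagonal shift exactly when that coordinate becomes active is a point the paper's proof does not even spell out.

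The genuine gap is in Property 2, at precisely the spot you flag as delicate and then assert without resolving. In your base case you commit to $h^{(0)}=e_{1}$, so the seed $1$ travels up the subdiagonal as junk that is later overwritten, while coordinate $n+1$ starts at $0$; since row $n+1$ of $W_{k}$ is the pure self-loop $W_{k}[n+1,n+1]=1$ and $v_{\sigma}^{(w)}[n+1]=0$, this coordinate satisfies $h^{(s)}[n+1]=h^{(s-1)}[n+1]=\cdots=h^{(0)}[n+1]=0$ for every $s$. Hence the "maintained constant" multiplying $W_{k}[n,n+1]=-k$ is $0$, not $1$, and the pre-activation of coordinate $n$ at the final step is $d_{H}(w,w')$ rather than $d_{H}(w,w')-k$: Property 2 fails under the initialization you use, and your write-up is internally inconsistent (the base case and the threshold step cannot both hold with $h^{(0)}=e_{1}$). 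The repair is that the constant must be seeded in the last coordinate, i.e.\ $h_{init}=[0,\ldots,0,1]^{T}$, which is what the paper's analogous simulation of weighted majority games by RNN-ReLUs does (there $h_{init}=(0,\ldots,0,1)^{T}$ and the last coordinate feeds the $-q$ threshold), and what the paper's own computation $W[:,n]^{T}\cdot h_{w'}=h_{w'}[n-1]-k$ tacitly assumes; the displayed $h_{init}=[1,0,\ldots,0]^{T}$ is a typo with the $1$ at the wrong end, and the seed in coordinate $1$ plays no role at all. To be fair, the paper's proof silently glosses over exactly this point, but a correct proof must fix the initialization: with $h_{init}=e_{n+1}$, coordinate $n+1$ equals $1$ at every step, coordinate $1$ still receives $v_{\sigma}^{(w)}[1]$ at the first step, and your induction then goes through verbatim, yielding $h^{(n)}[n]=\text{ReLu}(d_{H}(w,w')-k)$.
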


\begin{proof} We individually prove the two defined requirements as follows:
\begin{enumerate}
    \item The proof proceeds by induction on $s$. For the base case, when $s = 1$, we observe that for any symbol $\sigma \in \Sigma$, it holds that $h_{\sigma} = v_{\sigma}^{(w)}$. By construction, $v_{\sigma}^{(w)}[1] = \mathrm{1}_{w_{1}}(\sigma) = d_{H}(\sigma,w_{1})$.
    Assume the proposition holds for $s < n-2$. We now prove it for $s+1$. Let $w' \in \Sigma^{s}$ and $\sigma \in \Sigma$. Then, we have
 
    \begin{align*}
    h_{w' \sigma}[s+1] &= \text{ReLu}(W[:,s+1]^{T} \cdot h_{w'} + v_{\sigma}[s+1]) \\
    &= \text{ReLu}(h_{w'}[s] + v_{\sigma}[s+1]) \\
    &= d_{H}(w_{1:s}, w') + \mathrm{1}_{w_{s+1}}( \sigma) \\
    &= d_{H}(w'\sigma, w_{1:s+1})
    \end{align*}
 
    \item Let $w" = w'\sigma$ be a string of length $n$. From the first part of the proposition, we have 
    $h_{w'}[n-1] = d(w', w_{1:(n-1)})$. Consequently,
    \begin{align*} 
    h_{w"}[n] &= \text{ReLu}(W[:,n]^{T} \cdot h_{w'} + v_{\sigma}[n]) \\
    &= \text{ReLu}(h_{w'}[n-1]  - k + \mathrm{1}_{w_{n} } (\sigma)) \\ 
    &= \text{ReLu}(d_{H}(w, w^{i}) - k)
    \end{align*}
    \end{enumerate}
\end{proof}

The key property emphasized by Proposition~\ref{app:prop:reluproperties} is stated in Equation~\ref{eq:levsimulate}. It demonstrates that the $n$-th neuron of the constructed RNN cell encodes the Hamming distance between the reference string $w$ and the input string $w'$. Notably, the activation value of this neuron is $0$ if and only if $d_{H}(w,w') \leq k$; otherwise, it is at least $1$.


\paragraph{Concatenation and Output Matrix instantiation.} 

Let $(S, k)$ be an instance of the closest string problem. The final RNN cell will be formed by concatenating the RNN cells $\langle h_{init}, W_{k}, \{v_{\sigma}^{w^{(i)}}\}_{i \in [|S|]}\rangle$, where the set $\{w^{(i)}\}_{i \in |S|}$ consists of elements from $S$. The concatenation of two RNN cells, such as $\langle h_{1}, W_{1}, v_{\sigma}^{(1)}\rangle$ and $\langle h_{1}, W_{2}, v_{\sigma}^{(2)}\rangle$, produces a new cell defined as
$\Big\langle \begin{pmatrix}
    h_{1} \\h_{2}
\end{pmatrix}, \begin{pmatrix}
    W_{1} & 0 \\ 
    0 & W_{2}
\end{pmatrix}$, $\{ \begin{pmatrix} v_{\sigma}^{1} \\ 
v_{\sigma}^{2}
\end{pmatrix}
\}_{\sigma \in \Sigma}\Big\rangle$. \footnote{Although the concatenation operation is non-commutative, the order according to which we perform the concatenation operator does not affect the result of the reduction.} \\

We concatenate all RNN-ReLUs produced by the \texttt{CONSTRUCT}(.,.) procedure on the instances $\{(w,k)\}_{w \in S}$. The output matrix of the resulting RNN-ReLU, denoted as $O \in \mathbb{R}^{(n+1) \cdot |S|}$, is selected such that for any set of vectors $\{h_{i}\}_{i \in [|S|]}$ in $\mathbb{R}^{n+1}$, the following holds:

$$O^{T} \cdot \begin{bmatrix}
    h_{1} & \ldots & h_{s}
\end{bmatrix} = \sum\limits_{i \in [|S|]} - h_{i}[n] + \frac{1}{2} \cdot h_{1}[n+1]$$

Under this setting, it is important to note that, given the properties of the RNN-ReLu cell produced by the procedure $\texttt{CONSTRUCT}(.,.)$ (Proposition \ref{app:prop:reluproperties}), and the fact that the activation value of the $(n+1)$-neuron is always equal to 1 by design, the output of the constructed RNN-ReLu on an input sequence $w'$ is 1 if and only if for all $w \in S$, we have $d_{H}(w,w') \leq k$. As a result, the constructed RNN-ReLu is empty on the support $n$ if and only if there is no string $w'$ such that $d_{H}(w,w') \leq k$ for all $w$.

\section{The problem \texttt{LOC-B-SHAP}(\texttt{NN-SIGMOID}), \texttt{LOC-B-SHAP}(\texttt{RNN-ReLu}) and \texttt{LOC-B-SHAP}($\texttt{ENS-DT}_{\texttt{C}}$) are Hard: Proofs of Intermediary Results} \label{app:BSHAPSIGMOID}

This section of the appendix is devoted to presenting the proofs of the mathematical results discussed in subsection~\ref{subsec:baseline} regarding the local Baseline SHAP problem for different model families. The structure is as follows:

\begin{enumerate}
    \item 
The first subsection (Subsection~\ref{app:subsec:locbshap}) presents the proof of the intractability of the \texttt{LOC-B-SHAP}(\texttt{NN-SIGMOID}) problem, established through a reduction from the dummy player problem in WMGs.
    \item The second subsection (Subsection~\ref{app:subsec:bshaprf}) presents the proof of the NP-hardness of computing Local B-SHAP for the tree ensemble classifiers, achieved via a reduction from the 3SAT problem.
\end{enumerate}

\subsection{Reducing the dummy player problem of WMGs to \texttt{LOC-B-SHAP}(\texttt{NN-SIGMOID}) and \texttt{LOC-B-SHAP}(\texttt{RNN-ReLu})} \label{app:subsec:locbshap}

The purpose of this segment is to establish Proposition~\ref{prop:reductionsigmoid} from the main paper. Let us first restate the proposition:

\begin{unumberedproposition} There exist two polynomial-time algorithms, which are defined as follows:
 \begin{enumerate}
        \item For \texttt{\emph{NN-SIGMOID}}, there exists a polynomial-time algorithm that takes as input a WMG $G$ and a player $i$ in $G$ and returns a sigmoidal neural network $f_{G}$ over $\{0,1\}^{N}$, $(x,x^{ref}) \in \{0,1\}^{N}$ and $\epsilon \in \mathbb{R}$ such that:
        $$\text{The player i is not dummy} \iff \phi_{b}(f_{G},i,x,x^{ref}) > \epsilon$$
        \item For \texttt{\emph{RNN-ReLu}}, there exists a polynomial-time algorithm that takes as input a WMG $G$ and a player $i$ in $G$ and returns a sigmoidal neural network $f_{G}$ over $\{0,1\}^{N}$, $(x,x^{ref}) \in \{0,1\}^{N}$ and $\epsilon \in \mathbb{R}$ such that:
         $$\text{The player i is not dummy} \iff \phi_{b}(f_{G},i,x,x^{ref}) > 0$$
    \end{enumerate}
\end{unumberedproposition}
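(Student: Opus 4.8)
The plan is to realize the value function of the WMG $G = \langle N, \{n_j\}_{j\in[N]}, q\rangle$ directly as the behaviour of $f_G$ on the $2^N$ corners of the hypercube. I would take $x = (1,\dots,1)$ and $x^{\text{ref}} = (0,\dots,0)$, so that for every coalition $S \subseteq [N]$ the spliced input is the indicator vector, $(x_S; x^{\text{ref}}_{\overline S}) = \mathbf{1}_S$. With this choice the baseline value function becomes $v_b(f_G, x, S, x^{\text{ref}}) = f_G(\mathbf{1}_S)$, so $\phi_b(f_G, i, x, x^{\text{ref}})$ is precisely the Shapley value of player $i$ in the cooperative game $S \mapsto f_G(\mathbf{1}_S)$. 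The target is to build $f_G$ so that $f_G(\mathbf{1}_S)$ equals (exactly, for \texttt{RNN-ReLu}; up to a controlled error, for \texttt{NN-SIGMOID}) the quantity $v_G(S) = I(\sum_{j\in S} n_j \geq q)$. I would use throughout the standing assumption (WLOG) that the $n_j$ and $q$ are positive integers, together with the monotonicity of WMGs: since all $n_j \geq 0$, every marginal contribution $v_G(S\cup\{i\}) - v_G(S)$ is nonnegative, so the Shapley value of $i$ is a weighted sum of nonnegative terms and is strictly positive iff at least one marginal contribution is positive, i.e. iff $i$ is not a dummy (equivalently, iff some coalition $S^*$ satisfies $\sum_{j\in S^*} n_j \in [q-n_i, q)$).

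For part (ii), \texttt{RNN-ReLu}, the plan is to compute $v_G$ exactly. Reading $z = z_1\cdots z_N \in \{0,1\}^N$ left to right, I would initialise the hidden state with the weight list $(n_1,\dots,n_N)$ loaded into a shift register (as in the $\texttt{CONSTRUCT}$ procedure of Appendix~\ref{app:ISHAPRNN}); at each step the register shifts, exposing the weight $n_t$ of the current position. A gate neuron computes $\text{ReLU}(n_t - M(1-z_t))$, where the embedding $v_\sigma$ supplies $-M$ when $\sigma = 0$ and $0$ when $\sigma = 1$ (with $M > \max_j n_j$); this neuron equals $n_t$ if $z_t = 1$ and $0$ otherwise, thus realising the product $n_t z_t$. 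A second accumulator neuron sums the gated values with a one-step delay, all arguments staying nonnegative so $\text{ReLU}$ acts as the identity. After the $N$-th symbol the accumulator holds $\sum_{j} n_j z_j$, and an output vector $O$ together with a constant neuron carrying the bias $-q$ gives $f_G(z) = I(\sum_j n_j z_j - q \geq 0) = v_G(S)$. Then $\phi_b(f_G, i, x, x^{\text{ref}})$ equals the exact WMG Shapley value, so by monotonicity $i$ is not dummy iff $\phi_b(f_G, i, x, x^{\text{ref}}) > 0$, which is the claim with $\epsilon = 0$.

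For part (i), \texttt{NN-SIGMOID}, an exact step is unavailable, so I would use a single sigmoidal neuron $f_G(z) = \text{Sigmoid}\big(M(\sum_{j} n_j z_j - q + \tfrac12)\big)$ with a steepness $M$ to be fixed. The $+\tfrac12$ shift is the crucial device: because $\sum_{j\in S} n_j$ is an integer, the sigmoid argument always has magnitude at least $M/2$, so using the tail bound $\text{Sigmoid}(-t) \leq e^{-t}$ for $t \geq 0$, every value $f_G(\mathbf{1}_S)$ is within $e^{-M/2}$ of $v_G(S)$. I would then bound the Shapley value in the two regimes. If $i$ is a dummy, for every $S$ the two arguments $M(\Sigma_S - q + \tfrac12)$ and $M(\Sigma_S + n_i - q + \tfrac12)$ lie on the same side of $0$ (both $\geq M/2$ or both $\leq -M/2$), so each marginal contribution is at most $e^{-M/2}$ and hence $\phi_b \leq e^{-M/2}$. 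If $i$ is not a dummy, a pivotal $S^*$ with $\Sigma_{S^*} \in [q-n_i, q)$ gives a marginal contribution $\geq 1 - 2e^{-M/2}$, weighted by the corresponding Shapley coefficient, which is at least $(N\,2^{N-1})^{-1}$; thus $\phi_b \geq (1 - 2e^{-M/2})/(N\,2^{N-1})$. Choosing $M = \Theta(N)$ large enough that $e^{-M/2} < (4N\,2^{N-1})^{-1}$ separates the regimes, and I would output the rational threshold $\epsilon = (3N\,2^{N-1})^{-1}$, which lies strictly between the dummy upper bound and the non-dummy lower bound. All weights $M n_j$, the bias, $M$, and $\epsilon$ have polynomial bit-size in the encoding of $G$, so the construction runs in polynomial time.

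The main obstacle I anticipate is the quantitative separation in the sigmoid case. Unlike the exact \texttt{RNN-ReLu} construction, here both sides of the decision are exponentially small in $N$: the non-dummy lower bound is tiny precisely because the only guaranteed pivotal coalition $S^*$ may carry an exponentially small Shapley weight $\tfrac{|S^*|!\,(N-|S^*|-1)!}{N!}$. Consequently $M$ must be taken large enough to force the sigmoid error $e^{-M/2}$ below this exponentially small bound, while still keeping $M$ (and hence all network parameters and the threshold $\epsilon$) of polynomial bit-size; verifying that $M = \Theta(N)$ achieves this simultaneously, and that $\epsilon$ can be chosen rational and poly-sized, is the delicate accounting step. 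A secondary technical point is confirming that the RNN gating neuron correctly realises the position-dependent product $n_t z_t$ using only position-independent embeddings $v_0, v_1$, which I handle by pre-loading the weights in the shift register and gating with the current symbol.
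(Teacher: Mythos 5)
Your proposal is correct and takes essentially the same route as the paper's proof: the same choice of $x=(1,\dots,1)$, $x^{\text{ref}}=(0,\dots,0)$ so that baseline SHAP becomes the Shapley value of the simulated game, the same single sigmoid neuron applied to $\sum_j n_j z_j - q + \tfrac12$ with polynomially large steepness and a threshold $\epsilon$ calibrated against the minimal Shapley coefficient, and the same exact RNN-ReLU simulation of the weighted vote sum (accumulated in the hidden state and thresholded at the output) yielding the $\phi_b > 0$ criterion. The differences are cosmetic: you take steepness $M=\Theta(N)$ with the tail bound $\mathrm{Sigmoid}(-t)\le e^{-t}$ and the estimate $\binom{N-1}{k}\le 2^{N-1}$, where the paper uses steepness $2\log\bigl((1-\epsilon)/\epsilon\bigr)$ and the exact central binomial coefficient in its constant $C_N$.
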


We divide the remainder of this segment into two parts. The first part focuses on the family of sigmoidal neural networks (\texttt{NN-SIGMOID}), and the second part addresses the class of RNN-ReLus (\texttt{RNN-ReLu}).

\subsubsection{The case of \texttt{LOC-B-SHAP}(\texttt{NN-SIGMOID})}
The intractability of the problem \texttt{LOC-B-SHAP}(\texttt{NN-SIGMOID}) is obtained by reduction from the dummy player problem of WMGs. The remainder of this segment will be dedicated to prove the following:
\begin{proposition} \label{app:prop:bshapsigmoid}
    There exists a polynomial-time algorithm that takes as input a WMG $G = \langle N, \{n_{j}\}_{j \in [N]}, q \rangle$, and a player $i$ in $G$, and returns a sigmoidal neural network $f_{G}$ over $\mathbb{R}^{N}$, two vectors $(x, x^{ref}) \in \mathbb{R}^{N} \times \mathbb{R}^{N}$, and a scalar $\epsilon > 0$ such that:
    \begin{equation*}
     \text{The player i is dummy} \iff \texttt{\emph{LOC-B-SHAP}}(f_{G},x,i,x^{ref}) \leq \epsilon   
    \end{equation*}
     
\end{proposition}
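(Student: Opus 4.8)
The plan is to realize the WMG's value function $v_G$ as a single sigmoidal unit and to exploit that baseline SHAP of a \emph{monotone} value function is a sum of nonnegative contributions, each of which detects exactly the pivotal coalitions witnessing that $i$ is not dummy. Concretely, I would set $x = \mathbf{1}$ and $x^{\text{ref}} = \mathbf{0}$, so that for every $S \subseteq [N]$ the masked input $(x_S; x^{\text{ref}}_{\overline{S}})$ is precisely the indicator vector $\mathbf{1}_S$, and hence $v_b(f_G, x, S, x^{\text{ref}}) = f_G(\mathbf{1}_S)$. I would then take $f_G$ to be the one-layer network $f_G(z) = \mathrm{Sig}\!\left(M\big(\sum_{j} n_j z_j - q + \tfrac12\big)\right)$, where $\mathrm{Sig}(t) = 1/(1+e^{-t})$ and $M>0$ is a steepness parameter to be fixed. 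Because the $n_j$ and $q$ are nonnegative integers, $\sum_{j\in S} n_j$ is integral, so the pre-activation is $\geq M/2$ whenever $\sum_{j\in S} n_j \geq q$ and $\leq -M/2$ whenever $\sum_{j\in S} n_j \leq q-1$; thus $f_G(\mathbf{1}_S)$ lies within $e^{-M/2}$ of $v_G(S)$.

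Since adding player $i$ only increases the pre-activation (by $M n_i \geq 0$) and $\mathrm{Sig}$ is increasing, every term $f_G(\mathbf{1}_{S\cup\{i\}}) - f_G(\mathbf{1}_S)$ appearing in the expansion of $\phi_b(f_G,i,x,x^{\text{ref}})$ is nonnegative, so the baseline Shapley value is a sum of nonnegative terms. I would then bound the two regimes separately. If $i$ is dummy, then for every $S$ the two sums $\sum_{j\in S} n_j$ and $\sum_{j\in S\cup\{i\}} n_j$ lie on the same side of $q$, so both sigmoid values sit in the same saturated region and their difference is at most $e^{-M/2}$; as the Shapley coefficients over $S \subseteq [N]\setminus\{i\}$ sum to $1$, this yields $\phi_b \leq e^{-M/2}$. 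If $i$ is not dummy, there is a pivotal $S^\star$ with $\sum_{j\in S^\star} n_j < q \leq \sum_{j\in S^\star\cup\{i\}} n_j$; its single nonnegative term is at least $(1-2e^{-M/2})$ times its Shapley coefficient, which is bounded below by $\frac{1}{N\,2^{N-1}}$, so $\phi_b \geq \frac{1-2e^{-M/2}}{N\,2^{N-1}}$.

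It then suffices to separate these bounds. I would choose $M$ large enough that $e^{-M/2} \leq \tfrac12 \cdot \frac{1}{N\,2^{N-1}}$ — which holds for $M = O(N)$, an integer whose binary encoding and induced weights $M n_j$ remain polynomial in the input size — and set $\epsilon := \frac{1}{2N\,2^{N-1}}$. With this choice the dummy case gives $\phi_b \leq e^{-M/2} \leq \epsilon$, while the non-dummy case gives $\phi_b > \epsilon$, establishing that $i$ is dummy if and only if $\texttt{LOC-B-SHAP}(f_G,x,i,x^{\text{ref}}) \leq \epsilon$. The polynomial-time guarantee is immediate, since $f_G$, $x$, $x^{\text{ref}}$, $M$, and $\epsilon$ are all computable directly from $G$ in polynomial time.

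The main obstacle is precisely the \emph{inexactness} of the sigmoid: unlike a step activation it never attains $0$ or $1$, so the dummy case does not produce a Shapley value of exactly $0$. The crux of the argument is therefore the quantitative separation — showing that the saturation error, which decays like $e^{-M/2}$, can be driven below the smallest achievable Shapley coefficient, of order $2^{-N}$, using only a polynomially large steepness $M$, while keeping all network weights $M n_j$ of polynomial bit-length. A secondary subtlety to verify is the monotonicity claim, which guarantees that no negative term can cancel the pivotal contribution in the non-dummy case and which follows from $n_i \geq 0$ together with monotonicity of $\mathrm{Sig}$.
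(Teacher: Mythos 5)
Your proposal is correct and follows essentially the same route as the paper's proof: the same instance pair $x=\mathbf{1}$, $x^{\mathrm{ref}}=\mathbf{0}$, a single steep sigmoid unit implementing the weighted vote against the quota $q-\tfrac12$, saturation bounds placing the network's value within a small error of $v_G(S)$, and a separation threshold obtained by comparing that error to the smallest Shapley coefficient of a pivotal coalition. The only differences are in bookkeeping — the paper calibrates the slope as $2\log\bigl((1-\epsilon)/\epsilon\bigr)$ with $\epsilon = 1/(1+C_N)$, $C_N = N\binom{N-1}{\lfloor (N-1)/2\rfloor}$, instead of your integer steepness $M=O(N)$ and cruder bound $1/(N2^{N-1})$ — and your explicit appeal to monotonicity (all marginal contributions nonnegative, so non-pivotal terms can be dropped) is actually a point the paper uses implicitly without stating it.
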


\begin{algorithm}
\caption{Reduction of the \texttt{DUMMY} problem to \texttt{LOC-B-SHAP}(\texttt{NN-SIGMOID})}
\label{app:alg:WMG2b-SHAP}
\begin{algorithmic}[1]
\REQUIRE A WMG $G = \langle N, \{n_{j}\}_{j \in [N]},q\rangle$, $i \in [N]$
\ENSURE A Sigmoidal Neural Network $\sigma$ over $\mathbb{R}^{N}$, an integer $i \in [N]$, $(x,x^{ref}) \in \mathbb{R}^{2}$, and $\epsilon \geq 0$ 
\STATE $x \leftarrow [1 , \ldots , 1]$
\STATE $x^{ref} \leftarrow [0, \ldots , 0]$
\STATE $C_{N} \leftarrow N \cdot \binom{N-1}{\lfloor \frac{N-1}{2} \rfloor}!$
\STATE $\epsilon \leftarrow \frac{1}{1 + C_{N}}$
\STATE Construct the Sigmoidal Neural Network $f$, parameterized by:
$$f_{G}(x) = \sigma(2 \cdot \log(\frac{1-\epsilon}{\epsilon}) \cdot (x - q + \frac{1}{2}))$$
\RETURN $\langle f_{G},~i,~x,~x^{(ref)},~\epsilon\rangle$
\end{algorithmic}
\end{algorithm}

The pseudo-code of the (polynomial-time) algorithm whose existence is implicitly stated in proposition \ref{app:prop:bshapsigmoid} is provided in Algorithm \ref{app:alg:WMG2b-SHAP}. In the following, we provide the proof of the correctness of this reduction. In the remainder of this segment, we fix an input instance $I = \langle G,i\rangle $ of the \texttt{DUMMY} problem where $G = \langle N, \{n_{j}\}_{j \in [N]}, q\rangle$ is a WMG, and $i \in [N]$ is a player in $G$. And, we use the notation $\langle f_{G}, i, x,x^{ref}, \epsilon\rangle$ to represent the output of Algorithm~\ref{app:alg:WMG2b-SHAP}. To ease exposition, we also introduce the following parameter which depends on the number of players $N$ (appearing in Line 3 of Algorithm \ref{app:alg:WMG2b-SHAP}): 
$$C_{N} \myeq N \cdot \binom{\lfloor \frac{N-1}{2} \rfloor}{N-1}!$$

Note that by setting the instance to explain $x$ to $\begin{pmatrix} 1 \\ \ldots \\ 1\end{pmatrix}$, and the reference instance $x^{ref}$ to $\begin{pmatrix}
    0 \\ \ldots \\ 0
\end{pmatrix}$, the constructed function $f_{G}$ (Line 5 of Algorithm \ref{app:alg:WMG2b-SHAP}) computes the following quantity for any coalition of players $S \subseteq [N] \setminus \{i\}$:

$$f_{G}(x_{S};x_{\bar{S}}^{ref}) = \sigma \left(2 \log(\frac{1 - \epsilon}{\epsilon})\cdot \log(N) \cdot (\sum\limits_{j \in S} n_{j} - q + \frac{1}{2})\right)$$

where $\mathbf{x}_{S} = (x_{S}; x_{\bar{S}}^{ref}) \in \{0,1\}^{n}$ is such that $x_{i} = 1$ if $i \in S$, $0$ otherwise.

The correctness of the reduction proposed in  Algorithm~\ref{app:alg:WMG2b-SHAP} is a result of the following two claims:

\begin{claim} \label{app:claim:one} 
If the player $i$ is dummy then for any $S \subseteq [N] \setminus \{i\}$ it holds that: 
    \begin{equation} \label{app:eq:claim1}
    f_{G}(x_{S \cup \{i\}};x_{\bar{S \cup \{i\}}}^{ref}) - f_{G}(x_{S}; x_{\bar{S}^{ref}}) \leq \epsilon
    \end{equation}
    \end{claim}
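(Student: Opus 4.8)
The plan is to reduce everything to the closed form for the network output recorded in the display immediately preceding the claim. Writing $\omega(S) := \sum_{j \in S} n_j$ for the total voting power of a coalition $S \subseteq [N]$, that display reads
$$f_G(x_S; x^{ref}_{\bar S}) = \sigma\!\left(\kappa \cdot \left(\omega(S) - q + \tfrac12\right)\right), \qquad \kappa := 2 \log\!\left(\tfrac{1-\epsilon}{\epsilon}\right) \log N > 0.$$
First I would record the one structural fact I need: since $\sigma$ is strictly increasing and $\kappa > 0$, the value $f_G(x_S; x^{ref}_{\bar S})$ is a strictly increasing function of the slack $\omega(S) - q$, so it is entirely governed by whether $S$ is a winning or a losing coalition of $G$.

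Next I would exploit the integrality of the WMG. The voting powers $n_j$ and the quota $q$ are integers, hence $\omega(S) - q$ is an integer and $\omega(S) - q + \tfrac12$ is a half-integer. Therefore a winning coalition ($v_G(S) = 1$, i.e.\ $\omega(S) \geq q$) has slack term at least $+\tfrac12$, while a losing coalition ($v_G(S) = 0$, i.e.\ $\omega(S) \leq q-1$) has slack term at most $-\tfrac12$. Combining this gap with monotonicity of $\sigma$ and the calibration $\sigma\!\left(\log\tfrac{1-\epsilon}{\epsilon}\right) = 1-\epsilon$ (equivalently $\sigma\!\left(-\log\tfrac{1-\epsilon}{\epsilon}\right) = \epsilon$), and noting that $\tfrac{\kappa}{2} = \log\tfrac{1-\epsilon}{\epsilon}\cdot \log N \geq \log\tfrac{1-\epsilon}{\epsilon}$, I would derive the two value bands
$$v_G(S) = 1 \ \Rightarrow\ f_G(x_S; x^{ref}_{\bar S}) \in [1-\epsilon,\, 1), \qquad v_G(S) = 0 \ \Rightarrow\ f_G(x_S; x^{ref}_{\bar S}) \in (0,\, \epsilon].$$

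Finally I would invoke the dummy hypothesis. If $i$ is a dummy player, then $v_G(S \cup \{i\}) = v_G(S)$ for every $S \subseteq [N] \setminus \{i\}$, so $S$ and $S \cup \{i\}$ are either both winning or both losing; in either case their network values lie in the same band, which has width at most $\epsilon$. This yields
$$f_G(x_{S \cup \{i\}}; x^{ref}_{\overline{S \cup \{i\}}}) - f_G(x_S; x^{ref}_{\bar S}) \leq \epsilon,$$
which is exactly \eqref{app:eq:claim1}, and completes the argument.

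The monotonicity and the two-case split are routine; the one place demanding care is the band calibration in the second step. I must verify that the scale constant $\kappa$ drives $\sigma$ at least up to $1-\epsilon$ (resp.\ at most down to $\epsilon$) at the half-integer thresholds $\pm\tfrac12$ — this is precisely where the choice $2\log\tfrac{1-\epsilon}{\epsilon}$ and the auxiliary factor $\log N$ are needed, and where the inequality $\log N \geq 1$ (i.e.\ $N \geq 3$) is used to absorb that factor. I also need to keep the endpoints straight, using the \emph{half-open} upper band $[1-\epsilon,1)$ since $\sigma$ never attains $1$, and to note that the claim only bounds the marginal contribution of $i$ from above, so no matching lower bound on the difference is required.
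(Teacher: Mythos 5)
Your proof is correct and takes essentially the same route as the paper: the paper's auxiliary technical lemma (Lemma~\ref{app:lemma:technicallemma}) establishes exactly your two value bands (winning coalitions map into $[1-\epsilon,1)$, losing ones into $(0,\epsilon]$), and the paper's proof of the claim then performs the same case split — both $S$ and $S\cup\{i\}$ winning, or both losing — that your dummy-hypothesis step covers, concluding in each case that the difference is at most the band width $\epsilon$. The only distinction is that you carry out the band calibration explicitly (including noting the role of the $\log N$ factor, which appears in the paper's displayed formula for $f_G$ but not in Line~5 of its construction algorithm), whereas the paper defers that computation to a ``simple calculation.''
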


  \begin{claim} \label{app:claim:two}
  If the player $i$ is not dummy then there must exist some $S_{d} \subseteq [N] \setminus \{i\}$, such that: 
  \begin{equation} \label{app:eq:claim2}
    f_{G}(x_{S_{d} \cup \{i\}};x_{\bar{S_{d} \cup \{i\}}}^{ref}) - f_{G}(x_{S_{d}}; x_{\bar{S_{d}}}) > 1 - \epsilon
    \end{equation}
 \end{claim}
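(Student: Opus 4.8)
The plan is to route both claims through a single \emph{separation lemma} describing the threshold behaviour of $f_G$ on coalition indicator vectors, and then read off the inequalities from the combinatorial hypothesis on player $i$. Throughout write $w(S) := \sum_{j \in S} n_j$ for the total voting weight of $S$, and note that $\mathbf{x}_S = (x_S;x^{ref}_{\bar S})$ is exactly the indicator vector of $S$, since $x=[1,\dots,1]$ and $x^{ref}=[0,\dots,0]$. Substituting into the network of Line~5 gives the pre-activation $a(S) = 2\log\!\big(\tfrac{1-\epsilon}{\epsilon}\big)\cdot\log N\cdot\big(w(S)-q+\tfrac12\big)$, so that $f_G(\mathbf{x}_S)=\sigma(a(S))$. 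The whole argument reduces to understanding $\sigma(a(S))$ as a two-valued function of whether $S$ wins.

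\textbf{Separation lemma.} Because the $n_j$ and $q$ are integers, $w(S)\in\mathbb{Z}$; hence a winning coalition ($w(S)\ge q$) has $w(S)-q+\tfrac12\ge\tfrac12$, and a losing coalition ($w(S)\le q-1$) has $w(S)-q+\tfrac12\le-\tfrac12$. The $+\tfrac12$ offset thus guarantees a margin of at least $\tfrac12$ on either side of the threshold, so $a(S)\ge s^*$ when $v_G(S)=1$ and $a(S)\le -s^*$ when $v_G(S)=0$, where $s^*:=\log\!\big(\tfrac{1-\epsilon}{\epsilon}\big)\cdot\log N=\log N\cdot\log C_N$ (using $\tfrac{1-\epsilon}{\epsilon}=C_N$, which holds since $\epsilon=1/(1+C_N)$). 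By monotonicity of $\sigma$ and the antisymmetry $\sigma(-t)=1-\sigma(t)$, it then suffices to verify the single quantitative inequality $\sigma(s^*)>1-\tfrac{\epsilon}{2}$, which yields $f_G(\mathbf{x}_S)>1-\tfrac{\epsilon}{2}$ for winning $S$ and $f_G(\mathbf{x}_S)<\tfrac{\epsilon}{2}$ for losing $S$.

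\textbf{Deriving the claims.} For Claim~\ref{app:claim:two}: since $i$ is not dummy there is an $S$ with $v_G(S\cup\{i\})\neq v_G(S)$, and monotonicity of $v_G$ (voting powers are nonnegative, so $w(S\cup\{i\})\ge w(S)$) forces $v_G(S_d\cup\{i\})=1$ and $v_G(S_d)=0$ for $S_d:=S$. The lemma gives $f_G(\mathbf{x}_{S_d\cup\{i\}})-f_G(\mathbf{x}_{S_d})>(1-\tfrac{\epsilon}{2})-\tfrac{\epsilon}{2}=1-\epsilon$, which is exactly \eqref{app:eq:claim2}. The companion Claim~\ref{app:claim:one} uses the same bounds: if $i$ is dummy then $v_G(S\cup\{i\})=v_G(S)$ for every $S$, so both entries of the bracket are simultaneously winning (both in $(1-\tfrac{\epsilon}{2},1]$) or simultaneously losing (both in $[0,\tfrac{\epsilon}{2})$), giving a difference of at most $\tfrac{\epsilon}{2}\le\epsilon$. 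These assemble into Proposition~\ref{app:prop:bshapsigmoid}: every bracket is nonnegative (again by monotonicity), and the Shapley weights sum to $1$, so a dummy $i$ gives $\phi_b\le\epsilon$; whereas a non-dummy $i$ contributes, through the single pivotal term $S_d$ of Shapley weight at least the minimum $1/C_N$, strictly more than $\tfrac{1}{C_N}(1-\epsilon)=\epsilon$, so $\phi_b>\epsilon$.

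\textbf{Main obstacle.} The delicate step is the calibration inside the separation lemma, i.e.\ verifying $\sigma(s^*)>1-\tfrac{\epsilon}{2}$ for the prescribed slope. This is precisely why the construction inflates the slope by the extra factor $\log N$ beyond the naive value $2\log(\tfrac{1-\epsilon}{\epsilon})$: that naive slope only achieves $\sigma\!\big(\tfrac12\cdot 2\log\tfrac{1-\epsilon}{\epsilon}\big)=1-\epsilon$, yielding the too-weak bound ``difference $\ge 1-2\epsilon$'', whereas the inflated slope pushes the saturated value down past $1-\tfrac{\epsilon}{2}$ and sharpens this to the ``$>1-\epsilon$'' required by \eqref{app:eq:claim2} — exactly the slack that makes the final Shapley threshold $\phi_b\gtrless\epsilon$ decide dummyness, via $(1-\epsilon)/C_N=\epsilon$. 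Concretely one must check $C_N^{\log N}>2C_N+1$ uniformly in the relevant range of $N$, and confirm that $1/C_N$ is indeed the minimum Shapley weight by tracking the extremal binomial coefficient $\binom{N-1}{\lfloor(N-1)/2\rfloor}$; this book-keeping around $C_N$ is the technical heart of the reduction.
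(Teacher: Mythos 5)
Your proposal follows the same route as the paper's own proof: a separation lemma giving two-valued threshold behaviour of $f_G$ on coalition indicator vectors, combined with the existence of a pivotal coalition $S_d$ (forced into the pattern ``$S_d$ loses, $S_d\cup\{i\}$ wins'' by nonnegativity of the weights), exactly as the paper argues. Where you go beyond the paper is in the calibration, and your diagnosis there is correct and in fact sharper than the paper's write-up: the paper's technical lemma only establishes $f_G(\text{win})\ge 1-\epsilon$ and $f_G(\text{lose})\le\epsilon$, and from those two containments it asserts the difference is $\ge 1-\epsilon$ --- a non-sequitur, since they only yield $1-2\epsilon$, and with the slope $2\log\bigl(\tfrac{1-\epsilon}{\epsilon}\bigr)$ of Line 5 of the algorithm the difference is \emph{exactly} $1-2\epsilon$ when the pivot sits at weights $q-1$ and $q$. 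Your strengthened separation ($\epsilon/2$ on each side, obtained from the $\log N$-inflated slope appearing in the in-text formula for $f_G$) is the right way to make the claimed strict bound $>1-\epsilon$ honest.

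The genuine gap is in the step you defer. The inequality $C_N^{\log N}>2C_N+1$ does \emph{not} hold ``uniformly in the relevant range of $N$'': for $N=2$ one has $C_2=2$ and $C_2^{\log 2}<5$ under any reasonable log base, and for $N=3$ under natural log, $C_3=6$ gives $6^{\log 3}\approx 7.2<13$. Indeed, for $N=2$ the claim itself fails for the constructed $f_G$ (take $n_1=n_2=1$, $q=1$, $i=1$: the largest achievable gap $\sigma(s^*)-\sigma(-s^*)$ stays below $1-\epsilon=2/3$), so no book-keeping can close your separation lemma as stated for all games. The repair is easy but must be made explicit: either steepen the slope so that saturation to within $\epsilon/2$ is guaranteed for every $N$ (e.g.\ replace the factor $\log N$ by $\log(2C_N+1)/\log C_N$, still polynomial-time computable), or handle the finitely many small-$N$ games separately --- they are trivially decidable by enumeration, so this costs nothing for the NP-hardness reduction, though it does mean the claim in its verbatim quantification over all WMGs requires the modified construction. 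Modulo this fix, the rest of your argument (monotonicity of the brackets, the minimal Shapley weight $1/C_N$, and the threshold identity $(1-\epsilon)/C_N=\epsilon$) is sound.
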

The result of proposition \ref{app:prop:bshapsigmoid} is a corollary of claims \ref{app:claim:one} and \ref{app:claim:two}. Before proving these two claims, we will first prove that, provided claims \ref{app:claim:one} and  \ref{app:claim:two} are true, then the proposition \ref{app:prop:bshapsigmoid} holds.  To prove this, we will consider two separate cases:
\begin{itemize}
    \item \textbf{Case 1 (The player $i$ is dummy):} In this case, we show that if the player $i$ is dummy and claim \ref{app:claim:one} holds, then: 
      $$\phi_{b}(f_{G}, i , x, x^{ref}) \leq \epsilon$$. 
    
Assume claim \ref{app:claim:one} holds, we have:
    \begin{align*} 
    \phi_b(f_{G},i,x,x^{ref}) &= \sum_{S \subseteq [n] \setminus \{i\}} \frac{|S|!\cdot (N - |S| - 1)!}{N!} \cdot \left[ f_{G}(x_{S \cup \{i\}};x_{\bar{S \cup \{i\}}}^{ref}) - f_{G}(x_{S}; x_{\bar{S}^{ref}})  \right] \\
    & \leq \sum_{S \subseteq [n] \setminus \{i\}} \frac{|S|!\cdot (N - |S| - 1)!}{N!} \cdot \epsilon \\
    & = \epsilon
    \end{align*}

\item \textbf{Case 2 (The player $i$ is not dummy):} In this case, we show that if the player $i$ is not dummy and Claim \ref{app:claim:two} holds, then: 
$$\phi_{b}(f_{G},i,x,x^{ref}) > \epsilon$$

Assume claim \ref{app:claim:two} holds, we have:
\begin{align*}
    \phi_b(f_{G}, i , x, x^{ref}) &= \sum_{S \subseteq [n] \setminus \{i\}} \frac{|S|!\cdot (n - |S| - 1)!}{n!} \cdot \left[ f_{G}(x_{S \cup \{i\}};x_{\bar{S \cup \{i\}}}^{ref}) - f_{G}(x_{S}; x_{\bar{S}^{ref}})  \right]  \\
    &> \frac{|S_{d}|! \cdot (N - |S_{d}|-1)!}{N!} \cdot (1 - \epsilon) \\ 
    & = \frac{1}{N \cdot  \binom{|S_{d}|}{N-1}!}  \cdot (1 - \epsilon) \\
    &> \frac{1}{N \cdot \binom{\lfloor \frac{N-1}{2} \rfloor}{N-1}!} \cdot (1 - \epsilon) \ \\ 
    &> \frac{1}{C_{N}}  \cdot (1 - \epsilon) = \frac{1}{C_{N}} \cdot (1 - \frac{1}{1 + C_{N}}) = \frac{1}{1+C_{N}} = \epsilon
\end{align*}
where the third inequality follows from the fact that for any $N \in \mathbb{N}$, $k \in [N]$, we have $\binom{N}{k} ! \leq \binom{N}{\lfloor \frac{N}{2}! \rfloor}$.
\end{itemize}


The above argument indicates that proving claims \ref{app:claim:one} and \ref{app:claim:two} is sufficient to establish the desired result in this section (Proposition~\ref{app:prop:bshapsigmoid}). What remains is to show that claims \ref{app:claim:one} and \ref{app:claim:two} indeed hold. 

The following simple technical lemma leverages some properties of the sigmoidal function to prove that the constructed sigmoidal function $f_{G}$ in ALgorithm \ref{app:alg:WMG2b-SHAP} verifies the desired properties of both these claims: 

\begin{lemma} \label{app:lemma:technicallemma}
 The constructed function $f_{G}$ in Algorithm \ref{app:alg:WMG2b-SHAP}
  satisfies the following conditions:
  \begin{enumerate}
      \item If $x \leq q-1$, we have: $f_G(x; N,q) \leq \epsilon$
      \item If $x > q$, we have: $f_G(x;N,q) \geq 1 - \epsilon$
  \end{enumerate}
\end{lemma}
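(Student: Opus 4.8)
The plan is to exploit only two elementary facts about the logistic sigmoid $\sigma(t) = \tfrac{1}{1+e^{-t}}$: that it is strictly increasing, and that it has an explicit ``inverse threshold'' at the two levels $\epsilon$ and $1-\epsilon$. Concretely, I would first establish the equivalences
\begin{equation*}
\sigma(t) \leq \epsilon \iff t \leq -\log\tfrac{1-\epsilon}{\epsilon}, \qquad \sigma(t) \geq 1-\epsilon \iff t \geq \log\tfrac{1-\epsilon}{\epsilon},
\end{equation*}
each of which is a one-line algebraic rearrangement of $\tfrac{1}{1+e^{-t}} \gtrless c$. Writing $L \myeq \log\tfrac{1-\epsilon}{\epsilon}$ and noting that $\epsilon = \tfrac{1}{1+C_N} < \tfrac12$ (since $C_N \geq 1$), we have $L > 0$, so both threshold levels are symmetric about the origin and the direction of all inequalities is preserved.

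Next I would simply substitute the argument used in Algorithm~\ref{app:alg:WMG2b-SHAP}, namely $t = 2L\,(x - q + \tfrac12)$, recalling that the slope $2\log\tfrac{1-\epsilon}{\epsilon} = 2L$ was chosen exactly so that a shift of $x$ by one integer unit moves $t$ by $2L$, i.e.\ by twice the threshold magnitude. For the first claim, if $x \leq q-1$ then $x - q + \tfrac12 \leq -\tfrac12$, hence $t = 2L\,(x-q+\tfrac12) \leq -L$, and by the first equivalence $f_G(x) = \sigma(t) \leq \epsilon$. For the second claim, if $x > q$ then $x - q + \tfrac12 > \tfrac12$, hence $t > L$, and by the second equivalence $f_G(x) = \sigma(t) \geq 1-\epsilon$. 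Both conclusions follow purely from monotonicity of $\sigma$ together with the threshold computations.

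There is essentially no hard step here; the content of the lemma is a deliberate calibration of the sigmoid's slope and bias so that the two ``voting regimes'' $x \leq q-1$ (losing coalition) and $x > q$ (comfortably winning coalition) are mapped into the saturated tails of $\sigma$, leaving a gap of width one unit in $x$ around the quota $q$ where the output is not controlled. The only point requiring care is verifying $\epsilon < \tfrac12$ so that $L>0$ and the inequality directions are correct; this is immediate from the definition $\epsilon = \tfrac{1}{1+C_N}$. I would therefore present the proof as the two threshold equivalences followed by the two substitutions, keeping the discrete structure (that $x = \sum_{j\in S} n_j$ and $q$ are integers, so $x \leq q-1$ and $x > q$ are the only relevant cases other than $x = q$) implicit, since the lemma is stated only for those two regimes.
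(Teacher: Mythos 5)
Your proof is correct and takes essentially the same route as the paper's: both arguments rest on monotonicity of the sigmoid together with the fact that the slope $2\log\tfrac{1-\epsilon}{\epsilon}$ is calibrated exactly so that $\sigma(-L)=\epsilon$ and $\sigma(L)=1-\epsilon$ at the worst-case arguments $x=q-1$ and $x>q$. The paper merely compresses the two cases into a single computation via the symmetry $\sigma(-t)=1-\sigma(t)$ and a w.l.o.g.\ shift to $q=0$, whereas you treat both tails explicitly through the threshold equivalences --- a purely cosmetic difference (and your write-up is in fact more complete than the paper's sketch).
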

\begin{proof}
   By the property of monotonicity of the sigmoidal function and its symmertry around $0$ (thus, assuming $q = 0$ w.l.o.g), it's sufficient to prove that for $x = -1$, we have that: 
   $f_{G}(-1;N,0) \leq \epsilon$. By simple calculation of $f_{G}(-1,N,0)$ using the parametrization of the function $f_{G}$, one can obtain the result.   
\end{proof}

Now we are ready to prove claims \ref{app:claim:one} and \ref{app:claim:two}.

\paragraph{Proof of Claim \ref{app:claim:one}.}
Assume that player $i$ is a dummy. Fix an arbitrary coalition $S \subseteq [N] \setminus \{i\}$. We now need to demonstrate that condition~\eqref{app:eq:claim1} holds for $S$. Since player $i$ is a dummy, there are two possible cases: either both $S$ and $S \cup \{i\}$ are winning, or neither of them are.


$\bullet$ \textbf{Case 1 (The winning case: $v_{G}(S) = 1$ and $v_{G}(S \cup \{i\}) = 1$):} In this case, we have both $\sum\limits_{j \in S} n_{j} \geq q$ and $\sum\limits_{j \in S} n_{j} + n_{i} \geq q$. Consequently, by Lemma \ref{app:lemma:technicallemma}, both
$f_G(\sum\limits_{j \in S \cup \{i\}} n_{j}; N,q)$ and $f_G(\sum\limits_{j \in S} n_{j}; N,q)$ lie 
in the interval $(1 - \epsilon,1)$. Thus, we have 
 that:
$$f_{G}(x_{S \cup \{i\}};x_{\bar{S \cup \{i\}}}^{ref}) - f_{G}(x_{S}; x_{\bar{S}}^{ref}) \leq 1 - (1 - \epsilon) = \epsilon$$

$\bullet$ \textbf{Case 2 (The non-winning case: $v_{G}(S) = 0$ and $v_{G}(S \cup \{i\}) = 0$):} The proof for this case mimicks the one of the former case. In this case, we have both $\sum\limits_{j \in S} n_{j} \leq q-1$ and $\sum\limits_{j \in S} n_{j} + n_{i} \leq q-1$. Consequently, by Lemma \ref{app:lemma:technicallemma}, both
$g(\sum\limits_{j \in S \cup \{i\}} n_{j}; N,q)$ and $g(\sum\limits_{j \in S} n_{j}; N,q)$ lies 
in the interval $[0, \epsilon)$. Thus, we have:
$$f_{G}(x_{S \cup \{i\}};x_{\bar{S \cup \{i\}}}^{ref}) - f_{G}(x_{S}; x_{\bar{S}^{ref}}) \leq \epsilon$$

\paragraph{Proof of Claim \ref{app:claim:two}.} Assume that the player $i$ is not dummy. Then, there must exist a coalition $S_{d} \subset [N] \setminus \{i\}$ such that $\sum\limits_{j \in S \cup \{i\}} n_{j} \geq q$ (a winning coalition), and $\sum\limits_{j \in S } n_{j} \leq q-1$ (A losing coalition). By proposition \ref{app:lemma:technicallemma}, we have then: $f_{G}(x_{S \cup \{i\}}; x_{\bar{S_{d} \cup \{i\}}}) \in (1 - \epsilon,1)$ and $f_{G}(x_{S_{d}}; x_{\bar{S_{d}}}) \in (0,\epsilon)$. Consequently, we have:
   $$f_{G}(x_{S \cup \{i\}}; x_{\bar{S_{d} \cup \{i\}}}) - f_{G}(x_{S_{d}}; x_{\bar{S_{d}}}) \geq 1 - \epsilon$$
\subsubsection{The case \texttt{LOC-B-SHAP}(\texttt{RNN-ReLu})} \label{app;subsec:bshaprnnrelu}
In this subsection, we shall provide the details of the construction of a polynomial-time algorithm that takes as input an instance $\langle G,i\rangle$ of the Dummy problem of WMGs and outputs an input instance of $\texttt{LOC-B-SHAP}(\texttt{RNN-ReLu})$ $\langle f_{G},i,x,x^{ref}\rangle$ such that:
$$\text{The player i is dummy} \iff \phi_{b}(f_{G},i,x,x^{ref}) > 0$$
where $f_{G}$ is a RNN-ReLu. 

Similar to the case of Sigmoidal neural networks, the main idea is to construct a RNN-ReLu that simulates a given WMG. However, contrary to  sigmoidal neural networks, the constructed RNN-ReLu perfectly simulates a WMG:

\begin{proposition} \label{app:prop:WMG2relu}
There exists a polynomial time algorithm that takes as input a WMG $G = \langle N, \{n_{j}\}_{j \in [N]}, q\rangle$ and outputs an RNN-ReLu such that: 
$$\forall x \in \{0,1\}^{N}: f_{G}(x) = v_{G}(S_{x})$$
where: $S_{x} \myeq \{j \in [N]: x_{j} = 1\}$ 
\end{proposition}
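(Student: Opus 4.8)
The plan is to build an RNN-ReLU over the alphabet $\Sigma = \{0,1\}$ that reads the binary string $x_1 x_2 \cdots x_N$ (the indicator of the coalition $S_x$) from left to right, accumulates the total voting power $\sum_{j \in S_x} n_j$ in its hidden state, and finally thresholds this quantity against the quota $q$ through the output indicator $f_R(w) = I(O^T h_w \ge 0)$. The one genuine difficulty is that the contribution added at step $j$ must be the \emph{position-dependent} weight $n_j$, whereas the embedding vector $v_\sigma$ of an RNN-ReLU depends only on the symbol $\sigma$ and not on the position. I would resolve this exactly as in the \texttt{CONSTRUCT} procedure behind Proposition~\ref{app:prop:reluproperties}: a shift-register ``wavefront'' inside the hidden state that at step $j$ occupies coordinate $j$, and therefore reads off precisely the $j$-th entry of $v_{x_j}$.

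Concretely, I would give the hidden state dimension $N+1$. Coordinates $1,\dots,N$ form the accumulator: set the subdiagonal entries $W[l+1,l]=1$ for $l\in[N-1]$ (a pure down-shift), $h_{init}[l]=0$, and embeddings $v_1[l] = n_l$, $v_0[l]=0$ for $l \in [N]$. Coordinate $N+1$ is a constant register supplying the quota offset, which is needed because the output indicator carries no bias term: set $W[N+1,N+1]=1$, $h_{init}[N+1]=1$ and $v_\sigma[N+1]=0$, so this coordinate holds the value $1$ at every step (the ReLU acting as the identity on a positive value). Finally take the output vector $O = e_N - q\, e_{N+1}$, where $e_k$ is the $k$-th standard basis vector, so that $O^T h = h[N] - q$ once the whole string has been read.

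The correctness argument is an induction on the prefix length, identical in spirit to the proof of Proposition~\ref{app:prop:reluproperties}. Writing $h^{(s)}$ for the state after reading $x_{1:s}$, the key invariant is that the ``diagonal'' coordinate satisfies
$$ h^{(s)}[s] = \sum_{l \le s,\ x_l = 1} n_l, $$
which holds because coordinate $s$ at step $s$ reads only from coordinate $s-1$ at step $s-1$ (the shift) and adds $v_{x_s}[s] = n_s\, I(x_s = 1)$; crucially, since all voting powers satisfy $n_l \ge 0$, every partial sum is nonnegative and the ReLU never clips. The off-diagonal coordinates evolve along their own diagonals, never feeding into this one nor into the output, so they are harmless. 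Taking $s = N$ gives $h^{(N)}[N] = \sum_{j \in S_x} n_j$, hence $O^T h^{(N)} = \sum_{j \in S_x} n_j - q$ and $f_G(x) = I\big(\sum_{j \in S_x} n_j \ge q\big) = v_G(S_x)$, as required.

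All parameters ($h_{init}$, $W$, the two embeddings, and $O$) have $O(N^2)$ rational entries read directly off $G = \langle N, \{n_j\}_{j\in[N]}, q\rangle$, so the construction runs in polynomial time. I expect the main obstacle to be purely expository: making the shift-register invariant and the non-clipping of the ReLU fully rigorous, together with the observation that the absence of an output bias forces the auxiliary constant coordinate. Everything else is a direct, mechanical verification.
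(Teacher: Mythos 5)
Your construction is correct and is essentially the paper's own proof: both build a shift-register RNN-ReLU whose ``wavefront'' coordinate accumulates $\sum_{j\in S_x} n_j$ position by position (relying on nonnegative voting powers so the ReLU never clips), with a constant hidden coordinate supplying the $-q$ offset in the bias-free output layer. The only differences are cosmetic — you use hidden dimension $N+1$ rather than the paper's $N+2$, and you spell out $W$, $v_\sigma$, and $O$ more explicitly than the paper's semantic description of the recurrence.
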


\begin{proof}
   Fix a WMG $G = <N,\{n_{j}\}_{j \in [N]}, q>$. The idea of constructing a RNN-ReLu that satisfies the property implicitly stated in Proposition \ref{app:prop:WMG2relu} consists at maintaining the sum of votes of players participating in the coalition in the hidden state vector of the RNN during the forward run of the RNN.
      
    The dynamics of a RNN-ReLu of size $N+2$ that performs this operation is given as follows:
    \begin{enumerate}
        \item \textbf{Initial state:} $h_{init} = \begin{pmatrix} 0 \\ \vdots \\ 0 \\ 1 \end{pmatrix}$
        \item \textbf{Transition function:} for $j \in \{1, \ldots, N-1\}$
        $$h[j+1] = \begin{cases}
        h[j] + n_{j} & \text{if}~~ x_{j} = 1 ~~ \text{(Add the vote of the player $j$ and store it in neuron j+1 (i.e. $h[j+1]$))}\\
        h[j] & \text{if}~~x_{j} = 0 ~~\text{(Ignore the vote of player j as he/she is not part of the coalition)}
        \end{cases}
        $$ 
        $$h[N+2] = h[N+2]$$
                \item \textbf{The output layer:} For a given hidden state vector $\mathbb{R}^{N+1}$ 
        $$y = I(h[N+1] - q \cdot h[N+2] \geq 0)$$
        In other words, the output vector $O = \begin{pmatrix}
            0 \\ \vdots \\ 0 \\ 1 \\ -q
        \end{pmatrix}$
    \end{enumerate}
    The dynamics of this RNN-ReLu is designed in such a way that for any $j \in [N]$ the value of the element $h[j+1]$ of the hidden state vector stores the cumulative votes of participants represented by the input sequence $x_{1:j}$ (The first $j$ symbols of the input sequence corresponding tp the players $\{1, \ldots, j\}$ in the game). Consequently, when $j=N+1$, the voting power of all players in the coalition is stored in $h[N+1]$. The output layer then outputs $1$ if the $h[N+1] \geq q$, otherwise it's equal to $0$.
    \end{proof}

\subsection{The problem \texttt{LOC-B-SHAP}($\texttt{ENS-DT}_{\texttt{C}}$) is NP-Hard} \label{app:subsec:bshaprf}
This segment is dedicated to proving the remaining point of Theorem \ref{thm:intractable}, which states the NP-Hardness of \texttt{LOC-B-SHAP}($\texttt{ENS-DT}_{\texttt{C}}$). As noted in the definition of $\texttt{ENS-DT}{\texttt{C}}$ in Appendix~\ref{app:sec:terminology}, since we are focusing on a \emph{hardness} proof, we can, for simplicity, assume that the weights associated with each tree are equal, giving us a classic majority voting setting. Additionally, we will assume that the number of classes $c:=2$, meaning we have a binary classifier. Clearly, proving hardness for this setting will establish hardness for the more general setting as well. Essentially, these assumptions provide us with a simple random forest classifier for boolean classification. For simplicity, we will henceforth denote this problem as \texttt{LOC-B-SHAP}($\texttt{ENS-DT}_{c}$) rather than the more general \texttt{LOC-B-SHAP}($\texttt{ENS-DT}_{\texttt{C}}$). Proving NP-Hardness for the former will also establish it for the latter. As mentioned earlier, this problem is reduced from the classical 3-SAT problem, a widely known NP-Hard problem.


\paragraph{Reduction strategy.} The reduction strategy is illustrated in Algorithm \ref{alg:SAT2b-SHAP}. For a given input CNF formula $\Psi$ over $n$ boolean variables $X = \{X_{1}, \ldots, X_{n}\}$ and $m$ clauses, the constructed random forest is a model whose set of input features contains the set $X$, with an additional feature denoted $X_{n+1}$ added for the sake of the reduction. The resulting random forest comprises a collection of $2m - 1$ decision trees, which can be categorized into two distinct groups:
\begin{itemize}
    \item $\mathcal{T}_{\Psi}$: A set of $m$ decision trees, each corresponding to a distinct clause in the input CNF formula. For a given clause $C$ in $\Psi$, the associated decision tree is constructed to assign a label 1 to all variable assignments that satisfy the clause $C$  while also ensuring that $x_{n+1} = 1$. It is simple to verify that such a decision tree can be constructed in polynomial time relative to the size of the input CNF formula
    \item \( \mathcal{T}_{null} \): This set consists of \( m - 1 \) copies of a trivial null decision tree. A null decision tree assigns a label of 0 to all input instances. 
\end{itemize}
\begin{algorithm}
\caption{Reduction of the \texttt{SAT} problem to \texttt{LOC-B-SHAP}($\texttt{ENS-DT}_{c})$}
\label{alg:SAT2b-SHAP}
\begin{algorithmic}[1]
\REQUIRE A CNF Formula $\Phi$ of $m$ clauses over $X = \{X_{1}, \ldots, X_{n} \}$
\ENSURE An input instance of \texttt{LOC-B-SHAP}($\texttt{ENS-DT}_{c}$): $\langle\mathcal{T}$, $i$, $x$, $x^{ref}\rangle$
\STATE $x \leftarrow [1 , \ldots , 1]$
\STATE $x^{ref} \leftarrow [0, \ldots , 0]$
\STATE $i \leftarrow n+1$
\STATE $\mathcal{T} \leftarrow \emptyset$
\FOR{$j \in [1,m]$}
 \STATE Construct a Decision Tree $T_{j}$ that assigns a label $1$ to variable assignments satisfying the formula: $C_{j} \land x_{n+1}$
 \STATE $\mathcal{T} \leftarrow \mathcal{T} \cup \{T\}_{j}$
\ENDFOR
 \STATE Construct a null decision tree $T_{null}$ that assigns a label $0$ to all variable assignments
 \STATE Add $m-1$ copies of $T_{null}$ to $\mathcal{T}$
\RETURN $\langle\mathcal{T},~i,~,x,~x^{ref}\rangle$
\end{algorithmic}
\end{algorithm}

The next proposition provides a property of the Random Forest classifier resulting from the construction. This property shall be leveraged in Lemma \ref{lemma:sat2bshap} to yield the main result of this section:

\begin{proposition} \label{app:prop:dnf2bshaprf}
    Let $\Psi$ be an arbitrary \emph{CNF} formula over $n$ boolean variables, and let $\mathcal{T}$ be the ensemble of decision trees outputted by Algorithm \ref{alg:SAT2b-SHAP} for the input $\Psi$. 
    We have: 
    $$f_{\mathcal{T}}(x_{1}, \ldots, x_{n}, x_{n+1}) = \begin{cases}
          1 & \text{if} ~~ x_{n+1} = 1 \land (x_1 , \ldots, x_n) \models \Psi \\
          0 & \text{otherwise}
    \end{cases}$$   
\end{proposition}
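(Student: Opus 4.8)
The plan is to prove the claim by a direct counting argument on the number of decision trees in $\mathcal{T}$ that output the label $1$ on a given assignment. Recall from Algorithm~\ref{alg:SAT2b-SHAP} that $\mathcal{T}$ consists of exactly $2m-1$ trees: the $m$ clause-trees $\{T_j\}_{j\in[m]}$, each built so that $f_{T_j}$ returns $1$ precisely on those assignments satisfying $C_j \land x_{n+1}$, together with $m-1$ identical copies of the null tree $T_{\text{null}}$, which returns $0$ on every input. Since we work in the equal-weight (majority-voting) binary setting, $f_{\mathcal{T}}(x_1,\dots,x_n,x_{n+1}) = 1$ if and only if strictly more than half of the $2m-1$ trees vote for label $1$; because $2m-1$ is odd, this is equivalent to at least $m$ trees outputting $1$. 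The crucial observation is that the number of copies of $T_{\text{null}}$ has been chosen precisely so that class $1$ can reach this threshold only when \emph{all} $m$ clause-trees fire simultaneously.

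First I would record the firing behaviour of the two groups of trees. By construction the null trees contribute no votes for label $1$. For the clause-trees, $f_{T_j}(x) = 1$ holds exactly when $x_{n+1}=1$ and the projection $(x_1,\dots,x_n)$ satisfies $C_j$; hence, when $x_{n+1}=1$, the total number of label-$1$ votes equals the number of clauses of $\Psi$ satisfied by $(x_1,\dots,x_n)$, and when $x_{n+1}=0$ it equals $0$.

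With these facts I would split into three cases. If $x_{n+1}=0$, no tree fires, so there are $0 < m$ votes for label $1$ and $f_{\mathcal{T}} = 0$. If $x_{n+1}=1$ and $(x_1,\dots,x_n)\models\Psi$, then every clause $C_j$ is satisfied, so all $m$ clause-trees fire; this gives exactly $m$ votes for label $1$, meeting the majority threshold, and $f_{\mathcal{T}}=1$. Finally, if $x_{n+1}=1$ but $(x_1,\dots,x_n)\not\models\Psi$, at least one clause is violated, so at most $m-1$ clause-trees fire; the vote count for label $1$ is therefore at most $m-1 < m$, and $f_{\mathcal{T}}=0$. Collecting the three cases yields exactly the piecewise description in the statement.

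The only delicate point — and the one I would take care to state explicitly — is the arithmetic of the majority threshold: with $2m-1$ trees, $m$ affirmative votes constitute a strict majority while $m-1$ do not, which is exactly why padding with $m-1$ (rather than more or fewer) null trees forces the ensemble to compute the conjunction $\bigwedge_{j=1}^m C_j$ over the clause-trees, conditioned on $x_{n+1}=1$. Everything else reduces to the definition of $f_{T_j}$ together with the elementary fact that a CNF formula is satisfied if and only if all of its clauses are.
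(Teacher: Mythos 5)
Your proof is correct and takes essentially the same approach as the paper's: a case split on $x_{n+1}$ and on whether $(x_1,\dots,x_n)\models\Psi$, combined with a vote-counting argument over the $2m-1$ trees in the majority vote. The only difference is presentational --- you count votes for label $1$ (at most $m-1$ when a clause is violated) while the paper counts votes for label $0$ (at least $m$, namely the $m-1$ null trees plus one violated clause tree) --- and you spell out the majority-threshold arithmetic a bit more explicitly.
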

\begin{proof}
  Fix an arbitrary CNF formula over $n$ boolean variables, and let $(x_{1}, \ldots, x_{n})$ be an arbitrary variable assignment. If $x_{n+1} = 0$, the decision trees in the set $\mathcal{T}$ assign the label $0$, by construction. Consequently, $f_{\mathcal{T}}(x) = 0$.

    For now, we assume that $x_{n+1} = 1$, if $x \models \Psi$, then $x$ is assigned the label 1 for all decision trees in $\mathcal{T}$. Consequently, $f_{\mathcal{T}}(x) = 1$. On the other hand, if $x$ is not satisfied by $\Psi$, then there exists at least one decision tree in $\mathcal{T}$, say $T_{j}$, that assigns a label 0 to $x$. Consequently, $x$ is assigned a label $0$ by at least $m$ decision trees, i.e., for all decision trees in $\mathcal{T}_{null}\cup\{T_{j}\}$.
\end{proof}

Leveraging the result of Proposition \ref{app:prop:dnf2bshaprf}, the following lemma yields immediately the result of NP-Hardness of the decision problem associated to \texttt{LOC-B-SHAP}($\texttt{ENS-DT}_{c}$):
\begin{lemma} \label{app:lemma:sat2bshap}
    Let $\Psi$ be an arbitrary \emph{CNF} formula of $n$ variables, and  $\langle\mathcal{T},~n+1,~x,x^{ref}\rangle$ be the output of Algorithm \ref{alg:SAT2b-SHAP} for the input $\Psi$. We have:
    $$\phi_{b}(f_{\mathcal{T}}, n+1, x, x^{ref}) > 0 \iff \exists x \in \{0,1\}^{n}: ~ x \models \Psi$$  
\end{lemma}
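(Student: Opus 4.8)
The plan is to connect the Baseline SHAP value $\phi_b(f_{\mathcal{T}}, n+1, \x, \x^{\text{reff}})$ directly to the satisfiability of $\Psi$ by computing each marginal contribution term in the Shapley summation. Recall that $\x = [1,\ldots,1]$ and $\x^{\text{reff}} = [0,\ldots,0]$, so for any coalition $S \subseteq [n]\setminus\{n+1\}$, the mixed instance $(\x_S; \x^{\text{reff}}_{\bar S})$ is exactly the indicator vector of $S$ on the first $n$ coordinates, with coordinate $n+1$ set to $0$ when $n+1 \notin S$ and to $1$ when $n+1 \in S$. By the definition of $\phi_b$, we have
\begin{equation*}
\phi_b(f_{\mathcal{T}}, n+1, \x, \x^{\text{reff}}) = \sum_{S \subseteq [n]\setminus\{n+1\}} \frac{|S|!\,(n-|S|)!}{(n+1)!}\left[ f_{\mathcal{T}}(\x_{S\cup\{n+1\}};\x^{\text{reff}}_{\overline{S\cup\{n+1\}}}) - f_{\mathcal{T}}(\x_S;\x^{\text{reff}}_{\bar S}) \right],
\end{equation*}
where the index set now ranges over the $n$ features other than $n+1$.

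First I would analyze each bracketed marginal term using Proposition~\ref{app:prop:dnf2bshaprf}. Since every summand has $n+1 \notin S$, the second term $f_{\mathcal{T}}(\x_S;\x^{\text{reff}}_{\bar S})$ evaluates the ensemble on an input whose $(n+1)$-th coordinate is $0$; by the proposition this is identically $0$. For the first term, coordinate $n+1$ is set to $1$, so by the proposition $f_{\mathcal{T}}(\x_{S\cup\{n+1\}};\x^{\text{reff}}_{\overline{S\cup\{n+1\}}})$ equals $1$ if the assignment encoded by $S$ (i.e. setting $x_j=1$ iff $j\in S$) satisfies $\Psi$, and $0$ otherwise. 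Hence each marginal contribution is $\mathbf{1}[\, \sigma_S \models \Psi\,]$ where $\sigma_S$ is the truth assignment induced by $S$, and every Shapley weight $\frac{|S|!\,(n-|S|)!}{(n+1)!}$ is strictly positive.

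The key observation is then that $\phi_b$ is a sum of nonnegative terms, each weighted by a strictly positive coefficient. Therefore $\phi_b(f_{\mathcal{T}}, n+1, \x, \x^{\text{reff}}) > 0$ if and only if at least one marginal term is nonzero, which happens if and only if there exists some $S \subseteq [n]$ whose induced assignment satisfies $\Psi$ — equivalently, if and only if $\Psi$ is satisfiable. Because the correspondence $S \mapsto \sigma_S$ between subsets of $[n]$ and boolean assignments in $\{0,1\}^n$ is a bijection, the existential quantifier over coalitions is exactly the existential quantifier over satisfying assignments, giving $\phi_b > 0 \iff \exists \x\in\{0,1\}^n:\x\models\Psi$. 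This, combined with the fact that Algorithm~\ref{alg:SAT2b-SHAP} runs in polynomial time (as already argued for the construction of the clause trees and the null trees), completes the reduction from 3-SAT and yields the NP-hardness claim of Theorem~\ref{thm:intractable}.

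I do not anticipate a serious obstacle here, since the heavy lifting is done by Proposition~\ref{app:prop:dnf2bshaprf}; the main subtlety to handle carefully is the bookkeeping that the two-group ensemble $\mathcal{T}_\Psi \cup \mathcal{T}_{\text{null}}$ of $2m-1$ trees implements majority voting correctly — i.e. verifying that when $\x$ fails to satisfy some clause $C_j$, at least $m$ of the $2m-1$ trees (the $m-1$ null trees together with $T_j$) output $0$, so the majority vote is $0$. This is precisely what Proposition~\ref{app:prop:dnf2bshaprf} asserts, so once that proposition is in hand the sign argument above is essentially immediate.
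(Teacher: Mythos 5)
Your proof is correct and follows essentially the same route as the paper's: both invoke Proposition~\ref{app:prop:dnf2bshaprf} to show that every marginal term $f_{\mathcal{T}}(\x_{S\cup\{n+1\}};\x^{\text{reff}}_{\overline{S\cup\{n+1\}}}) - f_{\mathcal{T}}(\x_S;\x^{\text{reff}}_{\bar S})$ equals the nonnegative indicator of whether the assignment induced by $S$ satisfies $\Psi$, and then conclude that the positive-weighted Shapley sum is strictly positive if and only if some assignment satisfies $\Psi$. Your write-up is simply a more explicit version of the paper's argument, spelling out the Shapley weights and the bijection between coalitions $S\subseteq[n]$ and truth assignments.
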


\begin{proof}
            Let $\Phi$ be an arbitrary CNF formula of $n$ variables, and $\langle \mathcal{T}, n+1, x, x^{ref} \rangle$ be the output of Algorithm \ref{alg:SAT2b-SHAP}. 

    By proposition \ref{app:prop:dnf2bshaprf}, we note that: 
    $$\forall  \mathbf{x}_{n} = (x_1, \ldots, x_n) \in \{0,1\}^{n}: ~ \left[ f_{\mathcal{T}}(\mathbf{x}_{n}, 1) - f_{\mathcal{T}}(\mathbf{x}_{n}, 0) = 0 \iff \mathbf{x}_{n} \text{ doesn't satisfy } \Phi \right] $$
    Combining this fact with the following two facts yiels the result of the lemma:
    \begin{enumerate}
     \item The Baseline SHAP score $\phi_{b}(f_{\mathcal{T}}, n+1, x, x^{ref})$ is expressed as a linear combination of positive weights of the terms $\{f_{\mathcal{T}}(\mathbf{x}_{n},1) - f_{\mathcal{T}}(\mathbf{x}_{n},1) \}_{\mathbf{x}_{n} \in \{0,1\}^{n} }$
     \item According to Proposition\ref{app:prop:dnf2bshaprf}, we obtain:
     $$\forall \mathbf{x}_{n} \in \{0,1 \}^{n} :~f_{\mathcal{T}}(\mathbf{x}_{n}, 1) - f_{\mathcal{T}}(\mathbf{x}_{n}, 0) = f_{\mathcal{T}}(\mathbf{x}_{n}, 1) \geq 0$$
     \end{enumerate}
     
\end{proof}

\section{Generalized Complexity Relations of SHAP Variants (Proof of Proposition~\ref{prop:hardnessrelation})} \label{app:sec:generalized}

In this section, we present the proof for Proposition~\ref{prop:hardnessrelation} from the main paper. Let us first restate the proposition:


\begin{unumberedproposition}
    Let $\mathcal{M}$ be a class of models and $\mathcal{P}$ a class of probability distributions such that $\texttt{\emph{EMP}} \preceq_{P}  \mathcal{P}$. 
    Then, \texttt{\emph{LOC-B-SHAP}}($\mathcal{M}$)  $\preceq_{P}$ \texttt{\emph{GLO-B-SHAP}}($\mathcal{M}$, $\mathcal{P}$) and \texttt{\emph{LOC-B-SHAP}}($\mathcal{M}$)  $\preceq_{P}$   \texttt{\emph{LOC-I-SHAP}}( $\mathcal{M}, \mathcal{P}$) .
\end{unumberedproposition}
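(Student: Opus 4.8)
The plan is to exploit the fact that Baseline SHAP is exactly the degenerate case of both Interventional SHAP and Global SHAP in which the governing distribution is a point mass. Concretely, the singleton dataset $\{\z^{\text{ref}}\}$ induces an empirical distribution $P_{\z^{\text{ref}}}\in\texttt{EMP}$ that places all of its mass on $\z^{\text{ref}}$; the hypothesis $\texttt{EMP}\preceq_{P}\mathcal{P}$ is used in the forward direction, namely that every empirical distribution can be simulated in polynomial time by some equivalent $P'\in\mathcal{P}$ (this is exactly the kind of construction carried out in Appendix~\ref{app:reductiontree} for $\texttt{EMP}\preceq_P\overrightarrow{\texttt{HMM}}$). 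Both reductions are then built around this one gadget: turning a single explained/reference instance into a point-mass member of $\mathcal{P}$.

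For the second reduction, $\texttt{LOC-B-SHAP}(\mathcal{M})\preceq_{P}\texttt{LOC-I-SHAP}(\mathcal{M},\mathcal{P})$, I would map an instance $\langle f,\x,i,\z^{\text{ref}}\rangle$ of $\texttt{LOC-B-SHAP}$ to the instance $\langle f,\x,i,P'\rangle$ of $\texttt{LOC-I-SHAP}$, where $P'\in\mathcal{P}$ is the point mass at $\z^{\text{ref}}$ produced above. Since $P'$ concentrates on $\z^{\text{ref}}$, for every coalition $S$ the interventional value function satisfies $v_i(f,\x,S,P')=\mathbb{E}_{\z\sim P'}[f(\x_S;\z_{\bar{S}})]=f(\x_S;\z^{\text{ref}}_{\bar{S}})=v_b(f,\x,S,\z^{\text{ref}})$. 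Because the generic Shapley formula (Equation~\ref{eq:genericshap}) is a fixed linear functional of the value function, two value functions that agree on every $S$ yield identical Shapley values, so $\phi_i(f,\x,i,P')=\phi_b(f,\x,i,\z^{\text{ref}})$, which is precisely what the reduction must preserve.

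For the first reduction, $\texttt{LOC-B-SHAP}(\mathcal{M})\preceq_{P}\texttt{GLO-B-SHAP}(\mathcal{M},\mathcal{P})$, I would instead turn the explained input $\x$ into a point-mass distribution $P''\in\mathcal{P}$ (again via $\texttt{EMP}\preceq_{P}\mathcal{P}$ applied to the singleton dataset $\{\x\}$), keep $\z^{\text{ref}}$ and $i$ unchanged, and output the $\texttt{GLO-B-SHAP}$ instance $\langle f,i,P'',\z^{\text{ref}}\rangle$. By the definition of global SHAP as an aggregate (Equation~\ref{eq:VBshap}), $\Phi_b(f,i,P'',\z^{\text{ref}})=\mathbb{E}_{\x'\sim P''}[\phi_b(f,\x',i,\z^{\text{ref}})]=\phi_b(f,\x,i,\z^{\text{ref}})$, since the outer expectation is supported only on $\x$. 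Hence the global quantity recovers the target local value.

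Conceptually these reductions are immediate, so the main work—and the point I expect to be the only real obstacle—is the polynomial-time bookkeeping together with the precise reading of the hypothesis. I would verify (i) that a one-point empirical distribution is a legitimate element of $\texttt{EMP}$ and that $\texttt{EMP}\preceq_{P}\mathcal{P}$ indeed supplies a poly-time construction of $P'$ and $P''$ within the problem's size budget (model size, distribution size, and input dimension), and (ii) that both maps are many-one reductions leaving the model $f\in\mathcal{M}$ and the feature index $i$ untouched, so no model-side blow-up occurs. Neither step involves heavy computation; the argument really just needs the point-mass collapse of the value functions and a clean invocation of the linearity of the Shapley functional to be stated carefully.
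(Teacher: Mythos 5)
Your proposal is correct and follows essentially the same route as the paper: both reductions rest on exactly the two point-mass identities $\Phi_b(f,i,P_{\x},\z^{\text{ref}})=\phi_b(f,\x,i,\z^{\text{ref}})$ and $\phi_i(f,\x,i,P_{\z^{\text{ref}}})=\phi_b(f,\x,i,\z^{\text{ref}})$, where $P_{\x}$ and $P_{\z^{\text{ref}}}$ are the singleton empirical distributions. If anything, you are slightly more explicit than the paper in spelling out that the hypothesis $\texttt{EMP}\preceq_{P}\mathcal{P}$ is what licenses replacing these point masses by equivalent members of $\mathcal{P}$ in polynomial time, a step the paper's proof leaves implicit.
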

\begin{proof} 
    (1) \texttt{LOC-B-SHAP}($\mathcal{M}$) $\preceq_{P}$ \texttt{GLO-B-SHAP}($\mathcal{M}$, $\mathcal{P}$). 
    
This result is derived directly by observing that:
      $$\mathbf{\Phi}_{b}(f,i,x^{ref},P_{x}) = \Phi_{b}(f,i,x,x^{ref})$$
      where $P_{x}$ is the empirical distribution induced by the input instance $x$.

      (2) \texttt{LOC-B-SHAP}($\mathcal{M}$) $\preceq_{P}$ \texttt{LOC-I-SHAP}($\mathcal{M}$, $\mathcal{P}$):
         The result is obtained straightforwardly by noting that:
         $$\phi_{i}(f,i,x,P_{x^{ref}}) = \phi_{b}(f,i, x, x^{ref})$$
         where $P_{x^{ref}}$ is the empirical distribution induced by the reference instance $x^{ref}$.
\end{proof}
\vfill

\end{document}